\crefname{equation}{Eq.}{Eqs.}
\lstdefinestyle{mystyle}{
    backgroundcolor=\color{gray!10},   
    commentstyle=\color{green},      
    keywordstyle=\color{black},       
    numberstyle=\tiny\color{gray},   
    stringstyle=\color{red},         
    basicstyle=\ttfamily\footnotesize, 
    breakatwhitespace=false,         
    breaklines=true,                 
    captionpos=b,                    
    keepspaces=true,                 
    numbers=none,                    
    numbersep=1pt,                   
    showspaces=false,                
    showstringspaces=false,          
    showtabs=false,                  
    tabsize=4                        
}
\titleformat{\subsubsection}[block]{\bfseries}{\thesubsubsection}{1em}{}
\theoremstyle{plain}
\newtheorem{theorem}{Theorem}
\newtheorem{lemma}{Lemma}
\newtheorem{corollary}{Corollary}
\theoremstyle{definition}
\newtheorem{definition}{Definition}
\newtheorem{assumption}{Assumption}
\theoremstyle{remark}
\newtheorem{remark}{Remark}
\DeclareMathOperator*{\argmax}{arg\,max}
\newcommand{\Mod}[1]{\ (\mathrm{mod}\ #1)}
\newcommand{\s}{&\mathrel{\phantom{=}}}
\def\term{NVLDB\xspace}
\def\util{u\xspace}
\def\link{g\xspace}
\def\arm{k\xspace}
\def\gauss{\mathcal{N}\xspace}
\def\loss{\mathcal{L}}
\def\stt{k_{t,1}\xspace}
\def\ndt{k_{t,2}\xspace}
\def\sti{k_{i,1}\xspace}
\def\ndi{k_{i,2}\xspace}
\def\opt{k_{t}^{*}\xspace}
\def\bigo{\mathcal{O}\xspace}
\def\bigol{\widetilde{\mathcal{O}}\xspace}
\def\nn{f\xspace}
\def\th{\theta\xspace}
\def\ww{W\xspace}
\def\rep{\phi\xspace}
\def\dim{d\xspace}
\def\len{L\xspace}
\def\wid{m\xspace}
\def\stdt{\sigma_t\xspace}
\def\estdt{\widehat{\sigma}_t\xspace}
\def\estdi{\widehat{\sigma}_i\xspace}
\def\wt{\zeta_{t}\xspace}
\def\wi{\zeta_{i\xspace}}
\def\cwct{a_{t}\xspace} 
\def\cs{\mathcal{C}\xspace} 
\def\xs{\mathcal{X}\xspace} 
\def\xb{G\xspace}
\def\thb{J\xspace}
\def\utilu{\alpha_{\util}\xspace}
\def\linkb{\beta_{\link}\xspace}
\def\linku{\alpha_{\link}\xspace}
\def\linkl{\kappa_{\link}\xspace}
\def\specmin{\lambda_{m}\xspace}
\def\repl{\kappa_{\rep}\xspace}
\def\ntk{\mathbf{H}\xspace}
\def\gs{n\xspace}
\def\lr{\eta\xspace}
\def\ff{\mathcal{Z}}
\def\gg{\mathcal{G}}
\def\epi{\mathcal{H}}
\def\lt{\left}
\def\rt{\right}
\def\no{\nonumber}
\def\diff{\|\mathbf{\util} - \widetilde{\mathbf{\util}}\|_{\ntk^{-1}}\xspace}
\begin{document}

%
\runningtitle{Neural Variance-aware Dueling Bandits with Deep Representation and Shallow Exploration}

\runningauthor{Oh, Park, Paik}

\twocolumn[

\aistatstitle{Neural Variance-aware Dueling Bandits \\ with Deep Representation
and Shallow Exploration}

\aistatsauthor{ Youngmin Oh$^{\ast}$ \And Jinje Park \And Taejin Paik }

\aistatsaddress{ InfiniTree \And Samsung Electronics \And Samsung Electronics } ]
\renewcommand{\thefootnote}{\fnsymbol{footnote}}
\footnotetext[1]{Corresponding author: \texttt{youngmin0.oh@gmail.com}}
\renewcommand{\thefootnote}{\arabic{footnote}}
\begin{abstract}


We introduce the first variance-aware algorithms for contextual dueling bandits that leverage shallow exploration strategies with neural networks for nonlinear utility approximation. A key theoretical challenge is the absence of a closed-form estimator, which led prior work to require an extremely large network width \( m \) (i.e., \( m = \widetilde{\Omega}(T^{14}) \)). We address this constraint with a novel analytical approach that combines iterative self-improvement with spectral analysis. Our analysis significantly reduces the network width requirement to \( m = \widetilde{\Omega}(T^{6}) \), and shows that our algorithms achieve a sublinear regret of 
$
\widetilde{\mathcal{O}}\left(d\sqrt{\sum_{t=1}^{T} \sigma_t^2} + \sqrt{dT}\right)
$
under both UCB and TS frameworks. Empirical results show that the proposed algorithms are not only computationally efficient and exhibit sublinear regret in practical settings, but also achieve state-of-the-art performance on both synthetic and real-world tasks.\footnote{Official code is available at \url{https://github.com/youngmin0oh/NVLDB-AISTATS2026}.}

\end{abstract}


%
%


\section{INTRODUCTION}

The contextual dueling bandits (CDBs) ~\citep{dudik2015contextual,firstduel} extend the standard contextual multi-armed bandit (CMAB) framework to handle pairwise preference feedback instead of numerical rewards. In each round, a learner observes a context, selects two arms to compare, and receives a preference outcome. This outcome is commonly sampled by a Bernoulli distribution, where the probability is a function of the arms' latent utilities under models such as Bradley--Terry--Luce~\citep{btmodel,btmodel2}.

Many existing studies on the CDB framework assume a linear utility function~\citep{lst, saha2021adversarial, saha2021optimal, feelgood, bui2024variance}.
However, the linearity assumption imposes a significant limitation on its ability to model real-world preferences, as real-world user preferences often exhibit complex, nonlinear interactions among features that a simple linear model cannot fully capture.

To address this limitation, \citet{ndb} recently proposed neural dueling bandit algorithms based on the Upper Confidence Bound (UCB) and Thompson Sampling (TS) frameworks. These algorithms employ a neural network to approximate a general nonlinear utility function and are proven to achieve sublinear cumulative average regret when the network width is sufficiently large. Despite these theoretical advancements, their approach requires constructing a Gram matrix from the gradients of all learnable parameters, which incurs substantial computational overhead and poses practical challenges for large-scale applications.

\begin{table*}[t]
\centering
\caption{Comparison of Bandit Algorithms: Regret, Assumptions, and Explicit Form of $\theta_t$}\vspace{0.05in}
\label{tab:bandit}
\begin{tabular}{lccc}
\toprule
\footnotesize \textbf{Name} & \multicolumn{1}{c}{\footnotesize \textbf{Regret}} & \textbf{\footnotesize Assumption of $\wid$} & \textbf{\footnotesize Explicit form } \\
 & \multicolumn{1}{c}{(\footnotesize without the term $\wid$)} & 
 \multicolumn{1}{c}{\footnotesize for sublinear regrets}& {\footnotesize of $\theta$} \\
\midrule
\footnotesize Neural Bandits~\citep{zhou2020neural} & \footnotesize $\bigol(d\sqrt{T})$ &\footnotesize $m = \widetilde{\Omega}(T^7)$ &\footnotesize None \vspace{0.05in}\\
\footnotesize Neural Linear Bandits~\citep{shallow} &\footnotesize $\bigol(d\sqrt{T})$ & \footnotesize  $m = \widetilde{\Omega}(T^3)$ & $ \footnotesize  \hat{\theta}_{t-1} = \mathbf{V}_{t-1}^{-1} \mathbf{b}_{t-1}$ \\
\footnotesize Neural-$\sigma^2$-LinearUCB~\citep{bui2024variance} & \footnotesize $\bigol\lt(\sqrt{dT}+d\sqrt{\sum_{t=1}^{T}\sigma_t^2}\rt)$ &\footnotesize $m = \widetilde{\Omega}(T^3)$  & \footnotesize  $\hat{\theta}_{t-1} = \mathbf{V}_{t-1}^{-1} \mathbf{b}_{t-1}$ \\
\midrule
\footnotesize Neural Dueling Bandits~\citep{ndb} &\footnotesize $\bigol(d\sqrt{T})$ &\footnotesize $m = \widetilde{\Omega}(T^{14})$ & \footnotesize None \vspace{0.05in}\\
\footnotesize Our Method (variance-agnostic) &\footnotesize $\bigol(d\sqrt{T})$ &\footnotesize $m = \widetilde{\Omega}(T^6)$ & \footnotesize None \\
\footnotesize \footnotesize Our Method (variance-aware) & \footnotesize $\bigol\lt(\sqrt{dT}+d\sqrt{\sum_{t=1}^{T}\sigma_t^2}\rt)$ &\footnotesize $m = \widetilde{\Omega}(T^6)$ & \footnotesize None \\
\bottomrule
\end{tabular}
\end{table*}

In the CMAB literature, computational challenges have been mitigated by adopting shallow exploration strategies, which construct Gram matrices using only the gradients of the final layer~\citep{bui2024variance, shallow}. Separately, variance-aware methods have been actively studied in linear CMABs and dueling bandits~\citep{variance-aware, zhou2022computationally, zhou2021nearlyvar, zhang2021improvedvar, zhang2022horizon} and were recently extended to neural bandits~\citep{bui2024variance}. However, to the best of our knowledge, the combination of variance-aware approaches with shallow exploration remains unexplored in the context of dueling bandits.

{\color{black}A key observation motivating variance-awareness in the dueling setting is that \emph{harder comparisons produce noisier feedback}: the Bernoulli variance $\link(\Delta\util_t)(1-\link(\Delta\util_t))$ is maximized when $\Delta\util_t \approx 0$---precisely when the comparison is hardest. Our variance-based weighting acts as an \emph{information filter}, down-weighting uncertain observations from small-margin pairs to achieve tighter, variance-dependent regret bounds.}

Motivated by these gaps, we propose a method we term \textbf{Neural Variance-Aware Linear Dueling Bandits} (\term). Our approach constructs the Gram matrix using only the gradients of the final layer's parameters, thereby improving computational efficiency compared to the method of \citet{ndb}. To achieve variance-awareness, \term estimates the variance of the underlying Bernoulli distribution, using the neural network as a nonlinear utility approximator. Furthermore, \term supports various arm selection strategies---including one asymmetric and two symmetric variants---within both the UCB and TS frameworks.

Under a significantly weaker condition on the neural network width $\wid$ than that required by previous work~\citep{ndb}, we prove that, under standard assumptions, the expected cumulative average regret for each of our arm selection strategies is bounded by
$
\bigol\left(d \sqrt{\sum_{t=1}^T \sigma_t^2} + d\sqrt{T}\right).
$
Here, $d$ is the contextual dimension, and $\sigma_t^2$ represents the variance of the Bernoulli preference model at round $t$.
We also establish that variance-agnostic variants of our algorithm achieve a sublinear regret bound of $\bigol\left( d \sqrt{T} \right)$ under the same standard assumptions.

A fundamental challenge that distinguishes the theoretical analysis of CDB from the CMAB setting is the \textbf{absence of a closed-form solution for the parameter estimate $\hat{\theta}_{t-1}$}. 
Analyses in the CMABs under shallow exploration literature (e.g., \citep{bui2024variance}) hinge on an explicit estimator, typically of the form $\hat{\theta}_{t-1} = \mathbf{V}_{t-1}^{-1} \mathbf{b}_{t-1}$. This closed-form results in a time-independent bound on the estimator's norm, $\|\hat{\theta}_{t-1}\|_2$, which is crucial for proving sublinear regret of neural bandits with a moderately sized neural network.

In the dueling bandit setting, however, no such solution exists, even with shallow exploration.
The absence of this analytical expression forced prior work~\citep{ndb} to assume an extremely large neural network width of $m = \widetilde{\Omega}\lt(T^{14}\rt)$ to guarantee sublinear cumulative regret. 
Such a strong requirement is inconsistent with results from related bandit literature~\citep{zhou2020neural, bui2024variance, shallow}.


In this paper, we overcome this critical limitation by introducing a novel analytical technique that combines a bootstrap argument~\citep{gilbarg1977elliptic} (iterative self-improvement) with spectral analysis to prove that the estimator norm $\|\hat{\theta}_{t-1}\|_2$ is bounded by a time-independent constant, even without an explicit form for the estimator, with a network width of $m = \widetilde{\Omega}(T^6)$. Consequently, we prove \textbf{sublinear regret} for \textbf{both} Upper Confidence Bound (UCB) and Thompson Sampling (TS) arm selections under standard assumptions that are customary in the neural (linear) bandit literature~\citep{shallow, bui2024variance}. Although the width condition is a stronger assumption compared to neural bandits under shallow exploration~\citep{bui2024variance, shallow}, the gap is originated from the absence of the closed form of $\th_{t-1}$ even under the shallow exploration in the case of dueling bandits. Nevertheless, this represents a significant improvement over the $m=\widetilde{\Omega}(T^{14})$ required by previous work~\citep{ndb}. A brief comparison with prior results is presented in Table~\ref{tab:bandit}. We demonstrate that these results are obtained under standard assumptions that are customary in the neural (linear) bandit literature. We provide further discussion in Section~\ref{sec:ra}, and defer the full proofs to the supplementary material.

Our contributions are summarized as follows:
\begin{enumerate}
\item We propose \term, the first framework for neural variance-aware linear dueling bandits with a shallow exploration strategy that significantly reduces computational overhead compared to full-network approaches~\citep{ndb}.

\item We provide a rigorous theoretical analysis proving that, under standard assumptions, \term achieves sublinear cumulative regret. This guarantee holds for both its variance-aware and variance-agnostic variants.

\item Our analysis establishes a required network width of $m=\widetilde{\Omega}(T^6)$. This is a substantial improvement over the $m=\widetilde{\Omega}(T^{14})$ required by prior work~\citep{ndb} and brings the requirement in line with that of standard neural bandits with shallow exploration.

\item Extensive experiments on synthetic and real-world datasets demonstrate that \term empirically achieves sublinear regret and consistently outperforms existing contextual dueling bandit methods.
\end{enumerate}

\section{RELATED RESULTS}
\paragraph{Linear Contextual Dueling Bandits.}
CDBs with a linear utility function have been the subject of active study in recent years~\citep{lst, saha2021optimal, variance-aware,  feelgood}. The UCB-based approaches remain effective in this formulation. Indeed, \citet{lst} and \citet{saha2021optimal} proposed UCB-based methods and showed that the cumulative regret is bounded by $\widetilde{\mathcal{O}}(d\sqrt{T})$ and  $\widetilde{\mathcal{O}}(\sqrt{dT})$. \citet{variance-aware} introduced a variance-aware action-elimination method and showed that the cumulative regret remains bounded. \citet{feelgood} utilized Thompson Sampling and proposed an algorithm called FGTS.CDB, showing that the cumulative regret is bounded by $\widetilde{\mathcal{O}}(d\sqrt{T})$. In addition, there exist lower bounds of order $\bigol(\sqrt{dT})$~\citep{saha2021optimal} for both the cumulative average and weak regret.

\paragraph{Neural Bandits.}  To the best of our knowledge, neural networks were first employed as nonlinear reward estimators in MABs~\citep{zhou2020neural, zhang2021neuralthompsonsampling, deb2023contextual, ban2021ee}.
\citet{ban2021ee} adds a second network for potential‑gain estimation but must retain all past checkpoints, whereas \citet{deb2023contextual} rely on perturbed‑prediction regression oracles that require several forward passes per round. \citet{zhou2020neural} and \citet{zhang2021neuralthompsonsampling} proposed neural UCB and TS algorithms, respectively, both achieving sublinear cumulative average regret with a sufficiently large network width and bounded effective dimension. To address computational challenges of previous results, \citet{shallow} introduced shallow exploration approach that leverages the gradients on the last layer's parameters to construct a Gram matrix. Building on these pioneering results, neural networks have been widely adopted in various bandit settings, such as combinatorial bandits~\citep{neuralcomb} and federated bandits~\citep{dai2022federated}. Moreover,  \citet{ndb} proposed 
both UCB and TS-based strategies utilizing neural networks to address contextual dueling bandits with nonlinear utility functions.

\paragraph{Variance-aware Linear Bandits.}
Variance-aware approaches for linear contextual bandits can be categorized into those that require knowledge of the true variance at each round \citep{zhou2021nearlyvar, zhou2021provablyvar} and those that do not \citep{zhao2023variance, kim2022improvedvar, zhang2021improvedvar}. Before the introduction of the method in~\citet{zhao2023variance},
methods that did not rely on the true variance often exhibited worse regret bounds compared to their variance-known counterparts. In contrast, \citet{zhao2023variance} achieves computational efficiency while matching the regret bounds of methods that assume variance knowledge, thus bridging the gap between these two lines of work. Such variance-aware approaches have also been applied to neural bandits~\citep{bui2024variance} and to contextual linear dueling bandits~\citep{variance-aware}, yielding sublinear cumulative average regret. However, to the best of our knowledge, there has been no prior work on neural variance-aware dueling bandits.
\section{PROBLEM SETTING}\label{sec:ps}

\paragraph{Contextual Dueling Bandits.} 

A learner observes a context set $X_{t}=\left\{x_{t,1},\cdots,x_{t,K}\right\}\subset \xs\subset \mathbb{R}^{d}$ at each round $0<t\leq T$, where $T$ is the total number of rounds, $K$ is the number of arms, and $\mathcal{X}$ is the overall context space. Then the learner chooses two contexts $\lt(x_{t,\stt}, x_{t,\ndt}\rt)$ in $X_t$ where $\stt, \ndt\in [K]$, which is equivalent to pull a tuple of arms $\lt(\stt, \ndt\rt)$.
As a result, the learner receives a binary preference feedback signal $o_t$. 
The feedback $o_t$ is sampled from a Bernoulli distribution $\mathcal{B}(p_t)$, where $p_t$ depends on the chosen pair of contexts. 
We assume that the probability $p_t$ is determined by a link function $\link:\mathbb{R} \rightarrow [0,1]$, depending on latent utilities.
The latent utility is defined by an unknown function $\util:\mathbb{R}^d \rightarrow \mathbb{R}$, which takes a context as input. In other words, $
    p_{t} = \mathbb{P}\lt(o_t=1 \mid x_{t,\stt},x_{t,\ndt}\rt)=\link\left(\Delta \util_{t}\right),$ where $\Delta \util_{t}:=\util\left(x_{t,\stt}\right)-\util\left(x_{t,\ndt}\right).$
We assume that the link function belongs to the exponential family \citep{lst}.
The exponential family contains various link functions, including the Bradley-Terry-Luce (BTL) model~\citep{btmodel, btmodel2}, the Thurstone-Mosteller model~\citep{thurstone2017law}, and the Exponential Noise model~\citep{noisemodel}.
{\color{black}Under this class, the Fisher Information and variance satisfy $I(\Delta\util_t)= c \cdot \sigma_t^2$ for a bounded constant $c>0$ (with $c=1$ for BTL, Thurstone-Mosteller, and Exponential Noise models)~\citep{mccullagh2019generalized}, so our inverse-variance weighting is equivalent to the inverse Fisher Information weighting~\citep{amari1998natural}.}
The gradient of the log-likelihood loss within this family has a particular form, which is utilized in the theoretical analysis of cumulative regret.

\paragraph{Utility Function.}
The linear utility function $\util$ has been extensively studied in contextual dueling bandits~\citep{saha2021optimal,lst, variance-aware, feelgood}. However, real-world utility structures are often far more complex. To address this, recent studies have explored nonlinear utility functions by approximating them with neural networks~\citep{ndb}. Building on this idea and following a similar approach to that of~\citet{shallow}, we assume that the utility function $\util$ can be expressed as an inner product between a feature map $\rep_*: \mathbb{R}^d \to \mathbb{R}^d$ and an unknown true parameter $\theta_* \in \mathbb{R}^d$, i.e.,
 \begin{align}
    \util\lt(x_{t,k}\rt)=\theta_{*}^\intercal \rep_*\lt(x_{t,k}\rt),\label{eq:kernel}
\end{align} 
which generalizes the linear setting when the feature map is the identity function.

\subsection{Objective}
The two most common types of instantaneous regret are average regret $r^{a}_t$ and weak regret $r^{w}_t$~\citep{saha2021optimal}:
\begin{align}
    r^{a}_t &= \frac{2\util\left(x_{t,\opt}\right)-\util\left(x_{t,\stt}\right)-\util\left(x_{t,\ndt}\right)}{2}, \nonumber\\
    r^{w}_t &= \util\left(x_{t,\opt}\right)-\max\left\{\util\left(x_{t,\stt}\right),\util\left(x_{t,\ndt}\right)\right\},\nonumber
\end{align} where $\opt = \arg\max_{\arm\in [K]} \util\left(x_{t,\arm}\right)$.  
Then we define the  cumulative average and weak regrets
as 
    $R^{a}_T = \sum_{t=1}^{T}r^{a}_t$ and 
    $R^{w}_T = \sum_{t=1}^{T}r^{w}_t\no
    $, respectively.
 An efficient algorithm for the stochastic contextual dueling bandits guarantees the sublinear increase of $R_t^a$ and $R_t^w$ on $T$, e.g., $\bigol(\sqrt{T}).$  
The weak and average cumulative regrets have individual advantages. For instance, the latter is more useful than the former when selecting the inferior action has significant side effects, e.g., clinical trials.  The objective of this paper is to minimize the expected cumulative average regret $R_T^a$ as \citep{lst}.
\section{METHOD}
\subsection{Nonlinear Utility Function Estimator} 

To achieve a sublinear expected cumulative average regret, it is crucial to accurately estimate the unknown utility function $\util$. While many existing works have relied on a linear structure~\citep{lst, feelgood, saha2021optimal, variance-aware}, i.e., $\util\left(x_{t,k}\right) = \th^\intercal_* x_{t,k}$, real-world utility functions are  often 
inherently nonlinear.  To approximate such nonlinear utility functions, we employ a fully connected neural network $\nn$ with the ReLU activation function $ \max\{x, 0\}$, as follows:
\begin{align}
    \nn\left(x;\th, \ww\right)&=\th^\intercal \rep\left(x;\ww\right),\label{eq:nn}\\ 
    \rep\left(x;\ww\right)&= \sqrt{\wid}\textrm{ReLU}\left(\ww^{L}(\cdots\textrm{ReLU}\left(\ww^{1} x\right)\cdots)\right), \no
\end{align}where $\ww = (\textrm{vec}(\ww^1), \cdots, \textrm{vec}(\ww^L))$ consists of learnable parameters as $\ww^{1} \in \mathbb{R}^{m\times d}$, $\ww^{l} \in \mathbb{R}^{m\times m}$ for $2 \leq l \leq L-1$, and $\ww^{L} \in \mathbb{R}^{d\times m}$. The parameter $\th \in \mathbb{R}^{d}$ is initialized as $\th = \th_{0}$, and the weight matrices are initialized as $\ww = \ww^i_{0}$ for $1 \leq i \leq L$, following the setup of prior neural bandit frameworks~\citep{shallow}. In detail, we initialize them as
\begin{align}
    \ww_0^l&=\begin{bmatrix} w & 0 \\ 0 & w \end{bmatrix}, \, \ww_0^L=
    \begin{pmatrix}
        \omega\\-\omega
    \end{pmatrix}^\intercal,\, \th_{0}\sim \gauss\left(0,\frac{1}{\dim}\right),\label{eq:init}
\end{align} for $1 \leq l \leq L-1$ by $w \sim \gauss\left(0, 4/\wid\right)$ and $\omega \sim \gauss\left(0, 2/\wid\right)$ where $\gauss$ denotes a Gaussian distribution. After receiving the binary signal $o_t$ from the arm selection $\lt(\stt, \ndt\rt)$ in each round $t$, the neural network $\nn$ is trained on historical data $\{x_{t,\sti}, x_{t,\ndi}, o_i\}_{i=1}^{t}$. The parameters $\th_{t}$ and $\ww_t$ are updated by minimizing the regularized log-likelihood loss $\loss_t$, defined as
\begin{align}       
\begin{aligned}
\loss_t\left(\th,\ww\right)&=-\sum_{i=1}^{t-1}\frac{\log \left(\link\lt(\left(-1\right)^{1-o_i}\Delta \nn_{i}\rt)\right)}{\wi^2} \\ \s + \frac{1}{2}\lambda\lt\|\th-\th_{0}\rt\|_2^2,
\end{aligned}\label{eq:total_loss}
\end{align} for some $\lambda>0$, where   
$
\Delta \nn_{i}=\nn\left(x_{i,\sti};\th, \ww\right)-\nn\left(x_{i,\ndi};\th, \ww\right)$ and
\begin{align}
\wi &= \left\{
\begin{aligned}
     &\max\{\estdi, \epsilon\} &\quad\textrm{variance-aware,}\\&1 &\quad\textrm{variance-agnostic,}
\end{aligned}\right. \label{eq:wi}
\end{align}
 with the estimated variance $|\widehat{\sigma}_{i}|^2=\link(\Delta\nn_{i})(1-\link(\Delta\nn_{i}))$ by $(\th_t, W_t)$ and a positive constant $\epsilon > 0$.

{\color{black}\begin{remark}[Role of Eqs.~\eqref{eq:total_loss}--\eqref{eq:wi}]
Eq.~\eqref{eq:total_loss} is a weighted negative log-likelihood~\citep{ueno2012weighted, dmochowski2010maximum}, the standard approach under heteroscedastic noise. The stability term $\epsilon$ in~\eqref{eq:wi} ensures bounded weights for the concentration inequalities (Lemma~\ref{lem:concentration_inequality}) and prevents loss divergence when the estimated variance approaches zero. Setting $\wi = 1$ reverts to a variance-agnostic approach with a looser $\bigol(d\sqrt{T})$ bound (Corollary~\ref{theorem:main_ucb2}).
\end{remark}}

\subsection{Arm Selection Methods.} 
\label{subsection:arm_strategy} 
We propose two types of methods: UCB-based~\citep{linucb} and TS-based~\citep{zhang2021neuralthompsonsampling} strategies. We assume access to a set of parameters $(\th_{t-1}, \ww_{t-1})$ learned from the previous round $t-1$, as well as the Gram matrix $V_{t-1}$:
\begin{align}
    V_{t-1} = \sum_{i=1}^{t-1}\frac{\Delta\rep_{i, \sti, \ndi}\Delta\rep_{i, \sti, \ndi}^\intercal}{\wi^2}+\lambda I,\label{eq:gram_matrix}
\end{align} where $\Delta\rep_{i, \arm, \arm'}:=\Delta\rep_{i, \arm, \arm', i}$. Here, 
$\Delta\rep_{i, \arm, \arm', j}:=\rep\lt(x_{i,\arm};\ww_{j-1}\rt)-\rep\lt(x_{i,\arm'};\ww_{j-1}\rt)$ for any $\arm, \arm'\in[K]$ and $i,j\in[T]$. The value $\wi$ in \cref{eq:gram_matrix} is identical to that in \cref{eq:total_loss}. 
Based on the Gram matrix, we propose three arm selection strategies: one asymmetric arm selection and two symmetric arm selections, applicable to both UCB- and TS-based methods.

\subsubsection{UCB-based Methods.}

\paragraph{Asymmetric Arm Selection (UCB-ASYM).} This  strategy selects the first arm $\stt$ greedily based on $\lt(\th_{t-1}, \ww_{t-1}\rt)$:
\begin{align}
    \stt = \argmax_{\arm\in [K]} \th_{t-1}^\intercal\rep(x_{t,k};\ww_{t-1}) \label{eq:ucb_uns_1st}.
\end{align} The second arm $\ndt$ is determined as the best competitor of $\stt$ with an exploration bonus, expressed as the upper confidence width  $\lt\|\Delta\rep_{t,\arm,\stt}\rt\|_{V_{t-1}^{-1}}$ as 
    $\ndt = \argmax_{\arm\in [K]} \th_{t-1}^\intercal\rep(x_{t,\arm};\ww_{t-1}) + \cwct\lt\|\Delta\rep_{t,\arm,\stt}\rt\|_{V_{t-1}^{-1}},\no$
where $\cwct > 0$ is a confidence width coefficient. 
The asymmetric arm selection strategy has been studied in~\citep{lst} and~\citep{ndb}, in the contexts of linear and nonlinear utility functions, respectively.

\paragraph{Optimistic Symmetric Arm Selection (UCB-OSYM).} In this strategy, the first and second arms, which maximize the sum of utilities including the exploration bonus, are selected as follows:
\begin{align}
    \stt, \ndt&=\argmax_{\arm,\arm'\in [K]\times[K]} \th_{t-1}^\intercal\rep_{t,\arm,\arm'}+\cwct\lt\|\Delta\rep_{t,\arm,\arm'}\rt\|_{V_{t-1}^{-1}},\nonumber
\end{align} with the confidence width coefficient $\cwct > 0$ and $\rep_{t,k,k'}=\rep_{t,k,k',t}$, where
    $\rep_{i,\arm,\arm',j}=\rep\lt(x_{i,\arm}; \ww_{j-1}\rt) + \rep\lt(x_{i,\arm'}; \ww_{j-1}\rt)$ for $i, j\in[T]$ and $\arm, \arm'\in[K].$
This strategy has been studied in the linear setting, as in \citep{variance-aware}.

\paragraph{Candidate-based Symmetric Arm Selection (UCB-CSYM).} This strategy first constructs a confidence set $\cs_t$ for each round $t$ for strong candidates as potential optimal arms: for the confidence width coefficient $\cwct>0$,
\begin{align}
    \cs_t = \lt\{\arm\mid \cwct\lt\|\Delta\rep_{t,\arm,\arm'}\rt\|_{V_{t-1}^{-1}}> \th_{t-1}^\intercal\Delta\rep_{t,\arm',\arm}, \forall \arm'\in[K]\rt\}.\no
\end{align}
A tuple of arms with the maximal exploration bonus is then selected {\color{black}from the confidence set $\cs_t$} as follows:
    $\stt, \ndt=\argmax_{\arm, \arm'
    \in{\color{black}\cs_t\times\cs_t}}\lt\|\Delta\rep_{t,\arm, \arm'}\rt\|_{V_{t-1}^{-1}}.\no$
This strategy type has been studied in the linear setting \citep{saha2021optimal}.

\subsubsection{Thompson Sampling Methods.}

\textbf{Asymmetric Arm Selection (TS-ASYM).} In this strategy, the first arm $\stt$ per round $t$ is greedily selected as in \cref{eq:ucb_uns_1st}.
Then, we sample each arm’s relative utility compared to the first arm from a Gaussian distribution with mean  
$\th_{t-1}^\intercal \Delta\rep_{t,\arm,\stt}$ and variance  
$\cwct^2 \left\| \Delta\rep_{t,\arm,\stt} \right\|_{V_{t-1}^{-1}}^2$,  
where $\cwct > 0$ is the confidence width coefficient, i.e.,
$
\widetilde{\util}_{t,\stt,\arm} \sim \gauss\lt(\th_{t-1}^\intercal\Delta\rep_{t,\arm,\stt}, \cwct^2\lt\|\Delta\rep_{t,\arm,\stt}\rt\|_{V_{t-1}^{-1}}^2\rt)$.
 Then, the second arm $\ndt$ in round $t$ is selected as $
    \ndt = \argmax_{\arm\in[K]}\widetilde{\util}_{t,\stt,\arm}.\nonumber
$
This strategy was studied in \citep{ndb}.

\textbf{Optimistic Symmetric Arm Selection (TS-OSYM).} 
This strategy constructs a Gaussian distribution, where the mean is defined as the sum of the estimated utilities of the pair of arms $\lt(\arm, \arm'\rt)$, and the variance is determined by the exploration bonus for the pair. Then, a value $\widetilde{\util}_{t,\arm,\arm'}$ is drawn from a Gaussian distribution as $
    \widetilde{\util}_{t,\arm,\arm'} \sim \gauss\left(\th_{t-1}^\intercal\rep_{t,\arm,\arm'}, \cwct^2\|\Delta\rep_{t,\arm,\arm'}\|^2_{V_{t-1}^{-1}}\right)\no
$ for $k,k'\in[K]$ where $\cwct > 0$ is the confidence width coefficient.  Then, the arms $\stt$ and $\ndt$ are finally selected as $
    \stt, \ndt = \argmax_{\arm,\arm'\in[K]\times[K]}\widetilde{\util}_{t,\arm,\arm'}.\no
$

\textbf{Candidate-based Symmetric Arm Selection (TS-CSYM).} In the same manner as the UCB-CSYM regime, a confidence set $\cs_t$ is constructed with the confidence width coefficient $\cwct>0$.
Then the pair $(\stt, \ndt)$ of arms is chosen from the set $\cs_t$ as $
(\stt, \ndt)=\argmax_{(i,j)\in C_t\times C_t}\widehat{\sigma}_{i,j},\no$ where
$$\widehat{\sigma}_{i,j}\sim\gauss\left(\|\Delta\rep_{t,i,j}\|^2_{V_{t-1}^{-1}}, \frac{\|\Delta\rep_{t,i,j}\|^4_{V_{t-1}^{-1}}}{4\log(Kt^2)}\right).$$


While TS-USYM aligns with the method in \citep{ndb}, our symmetric TS strategies (TS-OSYM and TS-CSYM) introduce a novel selection paradigm: instead of sequentially picking arms as in prior work~\citep{ndb, wu2016double}, both arms are simultaneously drawn from the same posterior. This shift not only differentiates our approach conceptually but also complicates the analysis, requiring more elaborate event constructions and careful use of concentration tools such as Azuma–Hoeffding.

{\color{black}\paragraph{Shallow Exploration.} Our approach constructs the Gram matrix $V_{t-1}$ using only the last-layer gradients ($\th\in\mathbb{R}^d$) instead of all parameters ($p=\bigol(m^2 L)$), reducing Gram matrix inversion from $\bigol(p^3)$ to $\bigol(d^3)$ and yielding ${\sim}28\times$ speedup over full-gradient methods. While treating hidden layers as a fixed feature extractor introduces a representation bias ($U_{t-1}^{-1}M_t$), our iterative self-improvement technique controls $\|\th_{t-1}\|_2$, ensuring this bias does not dominate the regret.}


\subsection{Neural Dueling Bandit Algorithms}
Building on the proposed UCB- and TS-based arm selection strategies, we develop \cref{alg:main}, which employs a neural network as a nonlinear estimator of the unknown utility function. At each round~$t$, the value~$\wt$ is defined exactly as in \cref{eq:wi} and incorporated into the loss function as shown in \cref{eq:total_loss}. While our method primarily relies on the estimated Bernoulli variance~$\estdt^2$, it can seamlessly accommodate the true variance~$\stdt^2$ when available from an oracle.



\begin{algorithm}[!h]
    \caption{Variance-aware Neural Dueling Bandits}
    \label{alg:main}
    \begin{algorithmic}[1]        
        \STATE Initialize the learnable parameters $\th_0,$ $\ww_0$ of a neural network $\nn$ defined in \cref{eq:init}; the Gram matrix $V_0 = \lambda I$ for some $\lambda>0$; the confidence interval coefficient $\cwct>0$.
        \STATE Set the episode length $\epi$ of $\nn$, the step size $\lr$, and the gradient steps $\gs$.
        \STATE Determine one of the arm selection strategies  in either UCB or Thompson sampling explained in \cref{subsection:arm_strategy}.
        \FOR{each round $t\in[T]$}
            \STATE Observe a context set $X_t=\left\{x_{t,1},\cdots,x_{t,K}\right\}\subset \xs.$            
            \STATE Choose $\stt, \ndt$ with $\cwct, X_t, V_{t-1}$ via the determined arm selection.
            \STATE Observe a preference (binary) signal $o_t$ by dueling of $x_{t,\stt}$ and $x_{t,\ndt}$.
            \IF{$t \equiv 0 \Mod{\epi}$}
                \STATE Update $(\th_t, \ww_t)$ by $\gs$ gradient steps via the loss $L_t(\th, \ww)$ in \cref{eq:total_loss} with the step size $\lr.$
                \STATE Update $\th_t$ again such that $\th_t$ minimizes \cref{eq:total_loss} while keeping $\ww_t$ fixed.
            \ENDIF
            \STATE Set $\wt$ by \cref{eq:wi} or by the oracle \STATE Update $V_t$ as $$V_t=V_{t-1} + \frac{\Delta\phi_{t,\stt,\ndt}^{~}\Delta\phi_{t,\stt,\ndt}^\intercal}{\wt^2}.$$
        \ENDFOR
    \end{algorithmic}
\end{algorithm}



\section{REGRET ANALYSIS}\label{sec:ra}
In this section, we provide theoretical results for the upper bounds of cumulative average regret for each arm selection method. 
To facilitate the analysis of our algorithm, we assume the following structure for the context set $\xs$, utility function $\util$ and the initial parameters $(\th_0, \ww_0)$ initialized by~\cref{eq:init}.

\begin{assumption} \label{assumption:utilbound}\label{assumption:linkbdd}\label{assumption:contextbdd}
For all $x_{t,\arm}$, the context vector is bounded as $\|x_{t,k}\|_2 = 1$ and $[x_{t,k}]_j=[x_{t,k}]_{j+\dim/2}$ for simplicity. The link function $\link$ is continuously differentiable,  takes values in $[-1, 1]$, and $\linkl$-Lipschitz, i.e.,  
$
|\link(x) - \link(x')| \leq \linkl |x - x'| \quad \text{for all } x, x' \in \mathbb{R}.
$  
{\color{black}Moreover, since the utility function is bounded and the contexts are bounded, the input to the link function remains within a compact effective domain. Within this bounded domain,} the derivative of $\link$ is strictly bounded away from zero:
$
0 < \linkb \leq \dot{\link}(x) \leq \linku,
$
for some positive constants $\linkb$ and $\linku$.  
Finally, the utility function is bounded in absolute value by 1, i.e., $|\util(x)| \leq 1$ for simplicity.
\end{assumption}
\begin{assumption}\label{assumption:phi_lipschitz}\label{assumption:thetastarbdd}
There exists a constant $\repl > 0$ such that
$
\left\|\frac{\partial \rep(x;\ww_{0})}{\partial\ww} - \frac{\partial \rep(x';\ww_{0})}{\partial\ww} \right\|_2 \leq \repl \left\|x - x'\right\|_2$, for all $x, x' \in \{x_{t,k}\}_{t\in[T], k\in[K]}.
$
The vector $\th_*$ in \cref{eq:kernel} is bounded in $L_2$ norm as $\|\th_*\|_2 \leq \thb$ for a constant $\thb > 0$.

\end{assumption}

To analyze the cumulative average regret, we define a neural tangent kernel (NTK) $\ntk$ as follows.

\begin{definition}
    $\ntk=\{\ntk_{i,j}\}_{i,j=1}^{TK} \in\mathbb{R}^{TK \times TK}$ is the neural tangent kernel (NTK) matrix  such that
    $
        \ntk_{i,j} = \frac{1}{2}\left(\widetilde{\Sigma}^L\lt(x_{t,k}, x_{t',k'}\rt)\right), \no 
    $ for all $t,t'\in[T]$ and $k,k'\in[K],$ where  for any $x,x'\in \mathbb{R}^{d},$
    \begin{align}
\begin{aligned}\widetilde{\Sigma}^0\lt(x,x'\rt)&=\Sigma^0\lt(x,x'\rt)=x^\intercal x',\\
\Lambda^l\lt(x,x'\rt) &= 
\begin{bmatrix}
\Sigma^{l-1}\lt(x,x\rt)\quad &\Sigma^{l-1}\lt(x,x'\rt)\\
\Sigma^{l-1}\lt(x',x\rt)\quad &\Sigma^{l-1}\lt(x',x'\rt)\\
\end{bmatrix},\\
\Sigma^l\lt(x,x'\rt)&=2\mathbb{E}_{(u,v)\sim \gauss(\textbf{0},\Lambda^{l-1}\lt(x,x'\rt))}\left[\sigma(u)\sigma(v)\right],\nonumber\\
\widetilde{\Sigma}^l\lt(x,x'\rt)&=2\widetilde{\Sigma}^{l-1}\lt(x',x'\rt)\mathbb{E}_{u,v}\left[\dot{\sigma}(u)\dot{\sigma}(v)\right] + \Sigma^{l}\lt(x,x'\rt).
\end{aligned}
    \end{align}
\end{definition}

\begin{assumption} \label{assumption:lambda_min}
     We assume that the minimum spectrum $\specmin\left(\ntk\right)$ is strictly positive.
\end{assumption}

Note that the assumptions listed above are standard assumptions commonly found in the literature on bandits~\citep{ndb, lst, shallow, bui2024variance}. We introduce the following notations: $\Delta\rep_{i,j}=\Delta\rep_{i,\sti, \ndi, j}$ for all $i,j\in[T]$ and 
$\mathbf{\util}=\left(\util\left(x_{1,1}\right),\cdots,\util\left(x_{T,K}\right)\right), \widetilde{\mathbf{\util}}=\left(\nn\left(x_{1,1};\th_0,\ww_0\right)\right),\cdots,\nn\left(x_{T,K};\th_{T-1},\ww_{T-1}\right).\nonumber$

{\color{black}We introduce two auxiliary quantities: the matrix $U_{t-1}$, a weighted Gram matrix using the link function derivative, and the vector $M_t$, capturing the bias from shallow exploration. The term $U_{t-1}^{-1}M_t$ explicitly accounts for this representation learning bias.}

Let
$(\theta_{t-1}, \ww_{t-1})$ be the parameters updated at round $t-1$
under the loss in~\cref{eq:total_loss} and
\begin{align}
    U_{t-1} &= \sum_{i=1}^{t-1}\dot{\link}(\ff_{i,i}(\overline{\th}_{i,t})) \frac{\Delta\rep_{i,i}\Delta\rep^{\intercal}_{i,i}}{\wi^2}+ \lambda' I, \no
    \\
    M_t &= \sum_{i=1}^{t-1}\left(\link(\th_{t-1}^\intercal\Delta\rep_{i,t}) - \link(\ff_{i,t}(\th_{t-1})) \right)\frac{\Delta\rep_{i,i}}{\wi^2},\no
\end{align} where $\lambda'=\linkb \lambda$ and  $\overline{\th}_{i,t}=c\th_{t-1} + (1-c)\th_*$ for some $c\in[0,1]$ satisfying
    $\link(\ff_{i,i}(\th_{t-1}))-\link(\ff_{i,i}(\th_*))=\dot{\link}(\ff_{i,i}(\overline{\th}_{i,t}))\Delta\rep_{i,i}^\intercal(\th_{t-1}-\th_*).$
Using the notations above, we prove a lemma that provides an upper bound on the confidence interval 
between $\th_{t-1}$ and $\th_{*}$, accounting for an additional bias term.

 \begin{lemma} \label{lem:concentration_inequality} 
 Let $\wid$ be a width of neural network as
$
    \wid=\textrm{poly}(\epi, \len,\linkb, \linku, \linkl, \lambda, \log(TK/\delta), 1/\delta, \|\mathbf{\util} - \widetilde{\mathbf{\util}}\|_{\ntk^{-1}})$ such that 
\begin{align}m=\widetilde{\Omega}(T^6), \label{eq:width}
\end{align} where $\widetilde{\Omega}$ is the asymptotic lower bound ignoring logarithm factors and all other parameters. 
    Then the following concentration inequality holds:
\begin{align}
&A_t:=\lt\|\th_{t-1}-\th_* + U_{t-1}^{-1} M_{t}\rt\|_{V_{t-1}}\label{eq:concentration_inequality_main}\\&=
\bigol\lt(\sqrt{d}+\epsilon^{-1}+\lt(\frac{d}{\epsilon^4}+d^3\rt)\frac{\sqrt{t}}{m^{1/6}}\|\mathbf{\util} - \widetilde{\mathbf{\util}}\|_{\ntk^{-1}}^{4/3}\rt)
\no
\end{align} ignoring all logarithmic and other terms.
\end{lemma} 
{\color{black}Intuitively, $A_t$ measures how far our parameter estimate $\th_{t-1}$ deviates from the true parameter $\th_*$, adjusted for the representation bias.} The last term $U_{t-1}^{-1} M_{t}$ of $A_t$ arises from the bias during representation learning. Furthermore, $\epsilon^{-1}$ cannot be made arbitrarily small while still ensuring reasonable bounds. Our proof employs a multi-step refinement strategy centered on the auxiliary function $\gg_t(\th)$:
$
\gg_t\left(\th\right)
= \sum_{i=1}^{t-1} \left[
\link\bigl(\ff_{i,i}\bigl(\th\bigr)\bigr)
- \link\bigl(\ff_{i,i}\bigl(\th_*\bigr)\bigr)
\right] \frac{\Delta\rep_{i,i}}{\wi^2}
+ \lambda\bigl(\th-\th_{0}\bigr),
$
where
$
\ff_{i,j}\left(\th\right)
= \th^\intercal \Delta\rep_{i,j}
+ \th_{0}^\intercal \nabla\Delta\rep_{i,1}\bigl(\ww_{*}-\ww_{0}\bigr).
$


The analysis proceeds iteratively. 
\textbf{First}, starting from the total loss~$\loss_t$ in \cref{eq:total_loss}, we establish an initial loose estimate 
$\|\th_{t-1}\|_2 = \mathcal{O}(\sqrt{t})$. Substituting this bound into the analysis of 
$\|\gg_t(\th_{t-1}) + M_t\|_{V_{t-1}^{-1}}$ yields an intermediate bound of order 
$\bigol(t/m^{1/6})$. Consequently, the resulting concentration inequality for~$A_t$ is limited by a term that scales with~$\sqrt{t}$ and cannot be improved by simply increasing the network width~$\wid$. 
\textbf{Second}, to address this limitation, we leverage the loose estimate together with a sufficiently large width~$\wid$ to obtain a sharper parameter bound of $\|\th_{t-1}\|_2$ depending only on $\dim$ and $\epsilon$. 
\textbf{Finally}, we substitute this improved estimate back into the analysis of 
$\|\gg_t(\th_{t-1}) + M_t\|_{V_{t-1}^{-1}}$ to derive a significantly tighter bound of order $\bigol(\sqrt{t}/m^{1/6})$ for the main concentration inequality in \cref{eq:concentration_inequality_main}. A high-level overview of the proof structure is illustrated in \cref{fig:theorem}, and the complete technical details are provided in \cref{sec:confidence_interval} of the supplementary material.

\begin{figure}[t!]
    \centering

    {
    \begin{tabular}{c}
    \includegraphics[width=0.9\columnwidth]{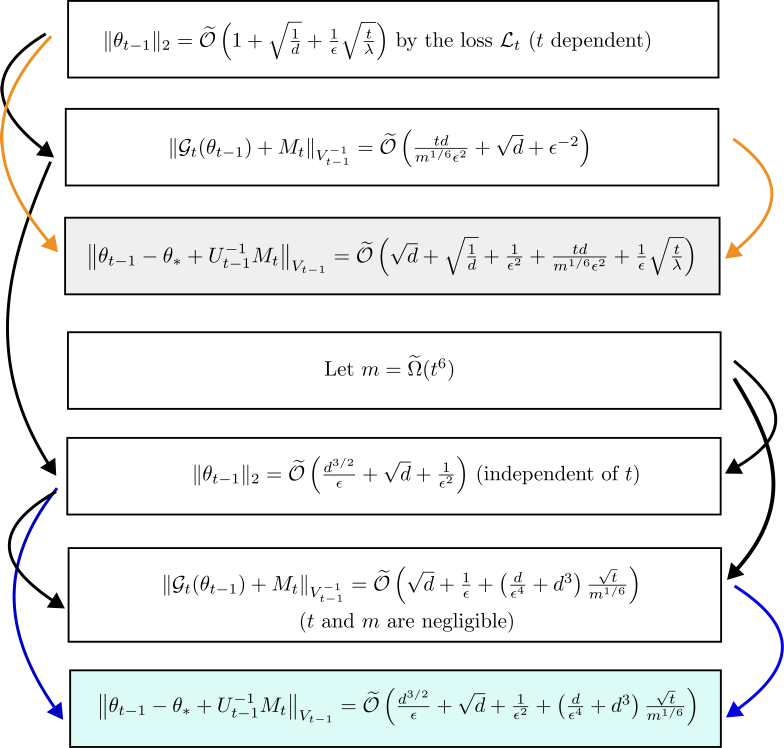}
    \end{tabular}}
    \caption{\small Illustration of the procedure in the proof of \cref{lem:concentration_inequality}. A naive analysis (\textcolor{orange}{orange line}) fails to tighten as $m$ increases (as $t$ increases). In contrast, our proposed analysis (\textcolor{black}{blue line}) establishes a bound by $\sqrt{t}/m^{1/6}$.
}
    \label{fig:theorem}
\end{figure}

Next, we establish a bound for each arm selection strategy, e.g., in the case of UCB-ASYM:
\begin{align}
    &\th_{t-1}^\intercal(2\Delta\rep_{t,\opt,\stt}+\Delta\rep_{t,\stt,\ndt}) \no\\&\leq \cwct\|\Delta\rep_{t,\stt,\ndt}\|_{V_{t-1}^{-1}}-\cwct\|\Delta\rep_{t,\opt,\stt}\|_{V_{t-1}^{-1}}\label{eq:ucb-arm1}
\end{align} 
where $\cwct>0$ is the confidence interval coefficient. To analyze the regret $r_t^a$ for the UCB methods, we decompose the regret term and apply our main concentration inequality from~\cref{lem:concentration_inequality}. This inequality is used in conjunction with strategy-specific bounds like~\cref{eq:ucb-arm1} and other theoretical tools, such as the elliptical potential lemma, to establish a final upper bound on the regret. The complete analysis is provided in~\cref{sec:UCB} of the supplementary material.


As a result, we can obtain the following result for variance-aware UCB-based algorithms.  
\begin{theorem}[Variance-aware UCB methods]\label{theorem:main_ucb}
    Let us assume \cref{assumption:contextbdd}-\cref{assumption:lambda_min}. Let the confidence width coefficient  $\cwct$ for the arm selection methods and the step size {\color{black}$\lr$} be chosen to satisfy the following:
    \begin{align}
        \cwct &= \frac{8}{\linkb}\left(\sqrt{\dim\log (\cwct')\log(4t^2/\delta)} + \frac{1}{\epsilon}\log(4t^2/\delta)\right) \no\\\s+ 2\sqrt{\lambda}\left(J
    +2\left(2+\sqrt{d^{-1}\log\left(\frac{1}{\delta}\right)}\right)\right)\no
,\\
        \lr &\leq C\left(\dim^2\wid \gs T^{6}\len^6\log(TK/\delta)\right)^{-1}\no
    \end{align} for some $C>0$ with
    $$
        \cwct'=\left(1+t\left(2\frac{\sqrt{\dim\log \gs\log(TK/\delta)}}{\sqrt{\dim\lambda}\epsilon}\right)^2\right).\nonumber
    $$ If the width $\wid$ of $\nn$ is sufficiently large as in \cref{lem:concentration_inequality}
    and $\epsilon=\bigo(1/\sqrt{d})$, then the cumulative average regret $R_T^a$ under any UCB-based arm selection method in \cref{subsection:arm_strategy} is upper-bounded as 
     \begin{align}
        R_T^a = \bigol\left(\sqrt{dT}+d\sqrt{\sum_{i=1}^{T}\sigma_i^2}+\frac{Td^{3}}{ m^{1/6}}\|\mathbf{\util} - \widetilde{\mathbf{\util}}\|_{\ntk^{-1}}^{4/3}\right).\label{eq:variance-aware}
    \end{align} ignoring all other hyperparameters with probability at least $1-\delta$ over the randomness of the initialization of the neural network $\nn$.
\end{theorem} The following result is about variance-agnostic UCB-based algorithms.
\begin{corollary}[Variance-agnostic UCB methods]\label{theorem:main_ucb2}
Assume the conditions in \cref{theorem:main_ucb} hold, but we fix $\wt=1$ (i.e., $\epsilon=1$) for all round $0\leq t \leq T$.  Then the cumulative regret $R^{a}_t$ under any UCB-based arm selection method in \cref{subsection:arm_strategy} is bounded as
    \begin{align}
        R_T^a = \bigol\left(d\sqrt{T} + \frac{Td^{3}}{ m^{1/6}}\|\mathbf{\util} - \widetilde{\mathbf{\util}}\|_{\ntk^{-1}}^{4/3}\right). \label{eq:variance-agnostic}
    \end{align} ignoring all other hyperparameters with probability at least $1-\delta$ over the randomness of learnable parameters in the neural network $\nn$
\end{corollary}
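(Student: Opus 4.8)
The plan is to re-run the cumulative-regret analysis established for \cref{theorem:main_ucb} (the detailed version appearing in the supplementary UCB section) under the substitution $\wt=1$ for every round, i.e.\ $\epsilon=1$, and to track how this simplifies each quantity that previously carried a variance weight. The variance weighting enters the argument in exactly three places: the confidence-width coefficient $\cwct$, the concentration inequality of \cref{lem:concentration_inequality}, and the variance-weighted elliptical potential used to sum the exploration bonuses. I would show that all three collapse cleanly once $\wi^2=1$, so that the weighted Gram matrix in \cref{eq:gram_matrix} reduces to the unweighted form $V_{t-1}=\sum_{i=1}^{t-1}\Delta\rep_{i,i}\Delta\rep_{i,i}^\intercal+\lambda I$, and that the resulting bound is exactly what the variance-aware bound of \cref{theorem:main_ucb} degrades to once one foregoes variance adaptivity and bounds each Bernoulli variance by its constant maximum, so that $\dim\sqrt{\sum_i\sigma_i^2}$ is replaced by $\dim\sqrt{T}$.

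First I would invoke \cref{lem:concentration_inequality} with $\epsilon=1$: the right-hand side of \cref{eq:concentration_inequality_main} becomes $\bigol(\sqrt{\dim}+\dim^3(t^{1/2}\wid^{-1/6}+t\wid^{-1/3}))$, since the $\epsilon^{-1}$, $\epsilon^{-4}$, and $\epsilon^{-2}$ factors all become $O(1)$ and the leading network-error coefficient coalesces to $\dim^3$. Consequently $\cwct$ loses its $\tfrac1\epsilon\log(4t^2/\delta)$ blow-up and reduces to $\cwct=\bigol(\sqrt{\dim})$ (the $2\sqrt{\lambda}\thb$ contribution being an additive constant in $T$ and $\dim$). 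Next I would decompose the instantaneous average regret $r_t^a$ for each of the three UCB arm-selection rules via the inequalities \cref{eq:ucb-arm1,eq:ucb-arm2}, which continue to hold verbatim under $\wi=1$, bounding $r_t^a$ by a sum of exploration terms $\cwct\|\Delta\rep_{t,\cdot,\cdot}\|_{V_{t-1}^{-1}}$ plus the representation-learning bias supplied by \cref{lem:concentration_inequality}.

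Summing over $t$, the decisive simplification is that with unit weights the Gram update is $V_t=V_{t-1}+\Delta\rep_{t,\stt,\ndt}\Delta\rep_{t,\stt,\ndt}^\intercal$, so the standard (unweighted) elliptical potential lemma yields $\sum_{t=1}^T\|\Delta\rep_{t,\cdot,\cdot}\|_{V_{t-1}^{-1}}^2=\bigol(\dim)$, using \cref{assumption:lambda_min} together with the NTK determinant concentration to control the effective dimension by $\bigol(\dim)$. Cauchy--Schwarz then gives $\sum_{t=1}^T\|\Delta\rep_{t,\cdot,\cdot}\|_{V_{t-1}^{-1}}=\bigol(\sqrt{\dim T})$, and multiplying by $\cwct=\bigol(\sqrt{\dim})$ produces the exploration regret $\bigol(\dim\sqrt{T})$, which absorbs the $\sqrt{\dim T}$ term of the variance-aware bound since $\sqrt{\dim T}\le \dim\sqrt{T}$. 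The accumulated bias term, propagated across the $T$ rounds and linked to the kernel through $\diff$, contributes $\bigol\bigl(\tfrac{T\dim^3}{\wid^{1/6}}\|\mathbf{\util} - \widetilde{\mathbf{\util}}\|_{\ntk^{-1}}^{4/3}\bigr)$, which together with the exploration regret matches \cref{eq:variance-agnostic}.

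The main obstacle I anticipate is confirming that setting $\epsilon=1$---which violates the hypothesis $\epsilon=\bigo(1/\sqrt{\dim})$ of \cref{theorem:main_ucb}---does not silently reintroduce $\epsilon$-dependence elsewhere in the chain of bounds: one must re-verify that the $\epsilon^{-4}$ and $\epsilon^{-2}$ contributions in \cref{lem:concentration_inequality} genuinely coalesce into the single $\dim^3$ coefficient, and that the width requirement $\wid=\mathrm{poly}(\cdots)$ still suffices for the bootstrap refinement of $\|\th_{t-1}\|_2$ to go through unchanged. A secondary subtlety is ensuring the unweighted effective dimension remains $\bigol(\dim)$ rather than degrading; this is precisely where \cref{assumption:lambda_min} and the NTK-based determinant bound are essential. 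The arm-selection equivalence across the three UCB variants needs no additional work, as \cref{eq:ucb-arm1,eq:ucb-arm2} are weight-independent.
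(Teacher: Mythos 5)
Your proposal is correct and follows essentially the same route as the paper: decompose $r_t^a$ via the arm-selection inequalities, apply \cref{lem:concentration_inequality} with $\epsilon=1$ so that $\cwct=\bigol(\sqrt{d})$, sum the exploration bonuses with the (now unweighted) elliptical potential lemma and Cauchy--Schwarz to get $\bigol(d\sqrt{T})$, and carry the representation-learning bias as $\bigol(Td^3 m^{-1/6}\diff^{4/3})$. One small clarification: the $\bigol(d)$ bound on $\sum_t\|\Delta\rep_{t,\stt,\ndt}\|_{V_{t-1}^{-1}}^2$ comes directly from the ordinary elliptical potential lemma applied to the $d$-dimensional last-layer features (whose norm is controlled by the width lemmas), not from any NTK determinant concentration, so no extra effective-dimension argument is needed there.
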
 

For our TS-based methods, we again leverage the concentration inequality from~\cref{lem:concentration_inequality} to define good event sets. The construction for the TS-ASYM variant follows the established approach of~\citep{ndb}. However, our other proposed TS methods introduce a new analytical challenge, as they involve the simultaneous stochastic selection of two arms. This feature, distinct from prior work like~\citep{zhang2021neuralthompsonsampling, ndb}, requires a more intricate construction of the event sets and a more delicate application of the Azuma-Hoeffding inequality. Despite this added complexity, our analysis successfully yields regret bounds for the TS algorithms that are comparable to their UCB-based counterparts. A detailed derivation is provided in~\cref{sec:ts} of the supplementary material.



\begin{figure*}[t!]
    \centering
    \begin{tabular}{ccc}
    \includegraphics[width=0.31\textwidth]{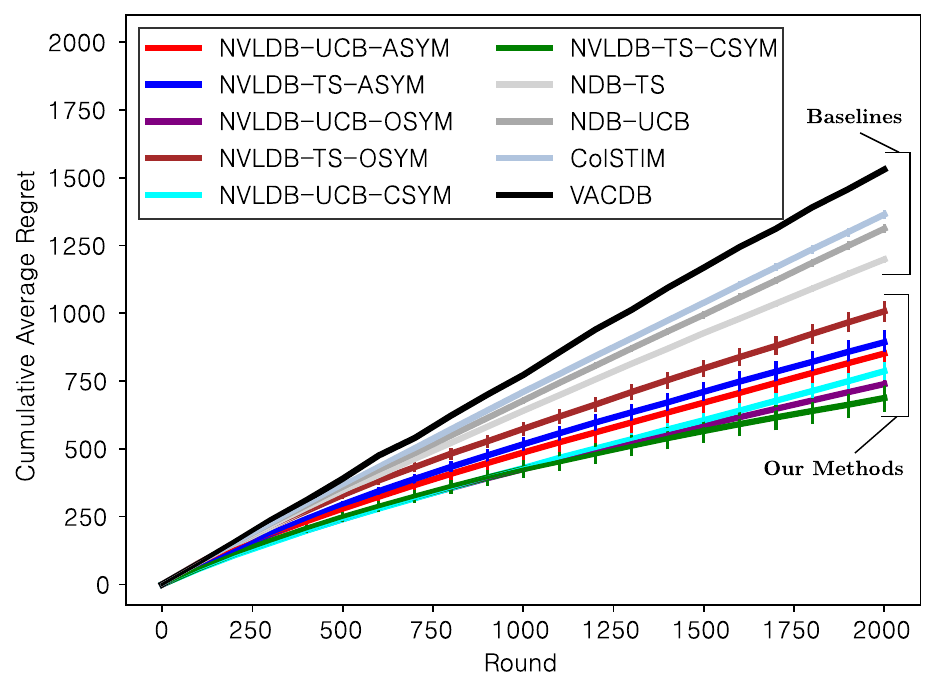}&
    \includegraphics[width=0.31\textwidth]{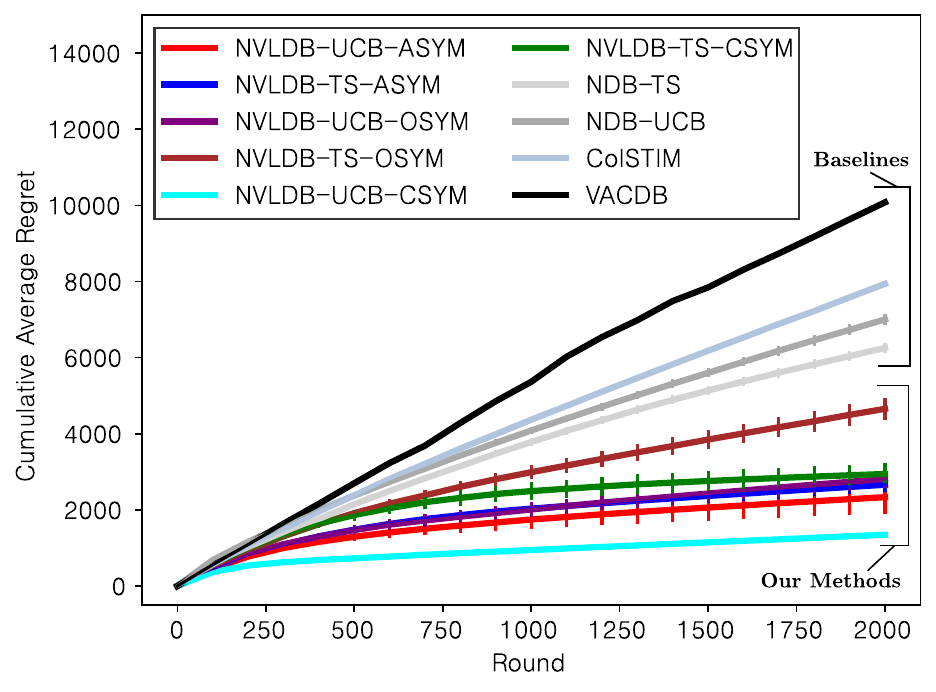}&
    \includegraphics[width=0.31\textwidth]{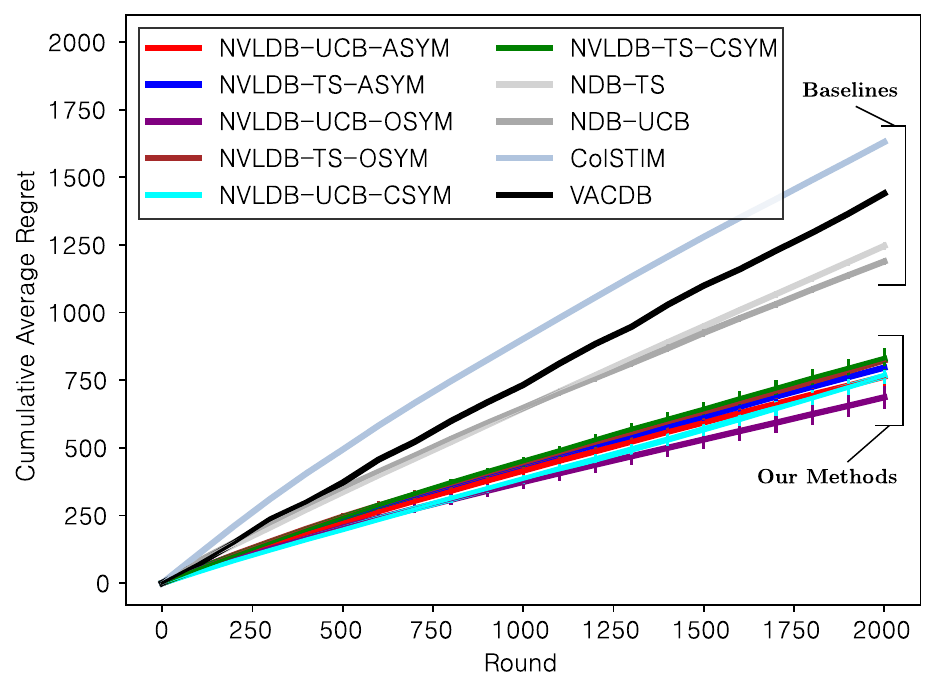}\\
    {\small (a): $\cos(3\Theta^\intercal x)$} &  {\small (b): $10(\Theta^\intercal x)^2$} & {\small (c): $x^\intercal\Theta\Theta^\intercal x$}
    \end{tabular}
    \caption{\small Comparison of cumulative average regret under (a)-(c): synthetic tasks with a context dimension of $\dim=5$ and a total of $K=5$ arms.
    Each experiment is repeated across 20 random seeds.
    \label{fig:main_result_all}}
\end{figure*}

\begin{theorem}[Variance-aware TS methods]\label{theorem:main_ts}
   Assume the conditions in \cref{theorem:main_ucb} hold. Then  the cumulative regret $R^{a}_t$ under any TS-based arm selection method in \cref{subsection:arm_strategy} is bounded as \cref{eq:variance-aware} with probability at least $1-\delta$ over the randomness of learnable parameters in the neural network $\nn$.
\end{theorem}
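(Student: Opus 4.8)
The plan is to reuse the UCB regret decomposition as much as possible and pay only for the extra stochasticity introduced by sampling, so that the bound of \cref{theorem:main_ucb} in \cref{eq:variance-aware} carries over. First I would fix a round $t$ and define two high-probability events conditioned on the history. The \emph{concentration} event uses \cref{lem:concentration_inequality} to guarantee that the plug-in estimate $\th_{t-1}^\intercal\Delta\rep_{t,\arm,\arm'}$ approximates the true utility gap $\util(x_{t,\arm})-\util(x_{t,\arm'})$ up to an additive error of order $\cwct\|\Delta\rep_{t,\arm,\arm'}\|_{V_{t-1}^{-1}}$ plus the representation-learning bias of order $\tfrac{Td^{3}}{\wid^{1/6}}\|\mathbf{\util} - \widetilde{\mathbf{\util}}\|_{\ntk^{-1}}^{4/3}$, invoking \cref{assumption:utilbound,assumption:phi_lipschitz}. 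The \emph{sampling} event guarantees, via a union bound over the $\bigo(K^2)$ arm pairs and the Gaussian tail, that every drawn value $\widetilde{\util}_{t,\arm,\arm'}$ stays within $\bigol(1)$ standard deviations of its mean, i.e.\ within $\bigol(\cwct\|\Delta\rep_{t,\arm,\arm'}\|_{V_{t-1}^{-1}})$. Both events hold with probability at least $1-\delta$ after a union bound over rounds.

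For TS-ASYM I would follow \citep{ndb}: since the first arm is greedy and only the second is sampled, a standard anti-concentration bound shows that with at least a constant probability the sampled relative utility of the true optimum $\opt$ exceeds its mean, so the selected pair is optimistic. On the intersection of the good events this turns the instantaneous regret into a multiple of the exploration width $\cwct\|\Delta\rep_{t,\stt,\ndt}\|_{V_{t-1}^{-1}}$, exactly as in the UCB analysis. The genuinely new difficulty is TS-OSYM and TS-CSYM, where both arms are drawn jointly rather than one being fixed deterministically. Here I would start from the symmetric decomposition \cref{eq:ucb-arm2} and separate the regret into a mean part, controlled by the same confidence widths, and a sampling-fluctuation part. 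Because neither arm is measurable given the history, the fluctuations form a martingale difference sequence indexed by $t$ whose increments are bounded by $\cwct\|\Delta\rep_{t,\stt,\ndt}\|_{V_{t-1}^{-1}}$ on the good events; applying the Azuma--Hoeffding inequality bounds their sum by $\bigol(\sqrt{T\log(1/\delta)}\,\max_{t}\cwct\|\Delta\rep_{t,\stt,\ndt}\|_{V_{t-1}^{-1}})$, which is of lower order and absorbed into the leading terms.

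Once the per-round regret is written as a weighted sum of confidence widths $\cwct\|\Delta\rep_{t,\stt,\ndt}\|_{V_{t-1}^{-1}}$, I would aggregate over $t$ through the elliptical potential lemma applied to the variance-scaled Gram matrix $V_{t-1}$ of \cref{eq:gram_matrix}. Since each update weights $\Delta\rep_{t,\stt,\ndt}$ by $1/\wt^{2}$ with $\wt=\max\{\estdt,\epsilon\}\asymp\sigma_t$, Cauchy--Schwarz against the potential sum gives $\sum_{t}\cwct\|\Delta\rep_{t,\stt,\ndt}\|_{V_{t-1}^{-1}}=\bigol(\cwct\sqrt{d\sum_{t}\sigma_t^2})$; substituting the prescribed $\cwct=\bigol(\sqrt{d}+\epsilon^{-1})$ and $\epsilon=\bigo(1/\sqrt{d})$ reproduces both the variance-dependent $d\sqrt{\sum_{t}\sigma_t^2}$ and the variance-independent $\sqrt{dT}$ terms, while the accumulated bias yields the $\tfrac{Td^{3}}{\wid^{1/6}}\|\mathbf{\util} - \widetilde{\mathbf{\util}}\|_{\ntk^{-1}}^{4/3}$ term, matching \cref{eq:variance-aware}. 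The main obstacle I anticipate is precisely the joint two-arm sampling in the symmetric variants: unlike TS-ASYM and all prior neural TS analyses, here the optimism-via-anti-concentration step and the martingale construction must be carried out for the pair simultaneously, and the sampling increments must be bounded uniformly so that Azuma--Hoeffding does not introduce an extra dimension factor that would spoil the $\bigol$ matching with the UCB bound.
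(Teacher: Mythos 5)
Your proposal follows essentially the same route as the paper's proof: defining the concentration event via \cref{lem:concentration_inequality} and a Gaussian-tail sampling event, using an anti-concentration/saturated-pair argument to show the selected (pair of) arms is optimistic with constant probability per round, assembling the per-round bound into a super-martingale controlled by Azuma--Hoeffding, and finally aggregating the confidence widths through the elliptic potential lemma on the variance-weighted Gram matrix to recover \cref{eq:variance-aware}. You also correctly single out the joint two-arm sampling in TS-OSYM/TS-CSYM as the genuinely new step, which is exactly where the paper extends the saturated-set construction to pairs; no substantive gap.
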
 
In the case of variance-agnostic TS algorithms, we have the following result.
\begin{corollary}[Variance-agnostic TS methods]\label{theorem:main_ts2}
Assume the conditions in \cref{theorem:main_ucb2} hold. Then the cumulative regret $R^{a}_t$ under any TS-based arm selection method in \cref{subsection:arm_strategy} is bounded as \eqref{eq:variance-agnostic} with probability at least $1-\delta$ over the randomness of learnable parameters in the neural network $\nn$.
\end{corollary}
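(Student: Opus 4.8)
The plan is to derive \cref{theorem:main_ts2} by specializing the variance-aware TS analysis of \cref{theorem:main_ts} to the regime $\wi = 1$. The starting observation is that fixing $\epsilon = 1$ already forces $\wi = \max\{\estdi, 1\} = 1$ at every round, since the estimated Bernoulli standard deviation obeys $\estdi \le 1/2$; thus running the variance-aware algorithm with $\epsilon = 1$ coincides with the variance-agnostic one, and the Gram matrix \cref{eq:gram_matrix} reduces to the unweighted form $V_{t-1} = \sum_{i=1}^{t-1}\Delta\rep_{i,\sti,\ndi}\Delta\rep_{i,\sti,\ndi}^\intercal + \lambda I$. I would therefore re-run the proof of \cref{theorem:main_ts}, tracking only how $\epsilon = 1$ propagates---exactly the reduction already used to obtain \cref{theorem:main_ucb2} from \cref{theorem:main_ucb}.

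The first concrete step is to apply \cref{lem:concentration_inequality} with $\epsilon = 1$. Every factor $\epsilon^{-1}$, $\epsilon^{-2}$, $\epsilon^{-4}$ on the right-hand side of \cref{eq:concentration_inequality_main} then collapses to a constant, so the bound becomes $\bigol(\sqrt{\dim})$ up to the width-dependent terms that vanish as $\wid \to \infty$. Accordingly, the confidence-width coefficient $\cwct$ imported from \cref{theorem:main_ucb} sheds its $\epsilon^{-1}\log(4t^2/\delta)$ contribution and reduces to $\cwct = \bigol(\sqrt{\dim})$. This is precisely the simplification used in \cref{theorem:main_ucb2}, and I would reuse it verbatim.

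Next I would reassemble the TS regret bound following \cref{theorem:main_ts}. The good-event sets are built as in the variance-aware TS argument: for TS-ASYM as in \citep{ndb}, and for TS-OSYM and TS-CSYM through the simultaneous two-arm sampling combined with the strategy-specific inequality \cref{eq:ucb-arm2} and the Azuma--Hoeffding inequality. The decisive point is that choosing $\epsilon = 1$ alters only the size of $\cwct$ and the concentration radius, not the martingale structure underlying these event sets, so this part carries over unchanged. After applying \cref{lem:concentration_inequality} and the bounds of the selected strategy, the regret decomposes into a leading term of order $\cwct \sum_{t=1}^{T}\|\Delta\rep_{t,\stt,\ndt}\|_{V_{t-1}^{-1}}$ plus the neural approximation error $\frac{Td^{3}}{ m^{1/6}}\|\mathbf{\util} - \widetilde{\mathbf{\util}}\|_{\ntk^{-1}}^{4/3}$.

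The reason the variance term $\dim\sqrt{\sum_{i=1}^{T}\sigma_i^2}$ of \cref{eq:variance-aware} disappears is that, with $\wi = 1$, there is no variance reweighting in $V_{t-1}$, so the elliptical potential lemma applies to the unweighted Gram matrix and gives $\sum_{t=1}^{T}\|\Delta\rep_{t,\stt,\ndt}\|_{V_{t-1}^{-1}}^2 = \bigo(\dim\log T)$. Cauchy--Schwarz then yields $\sum_{t=1}^{T}\|\Delta\rep_{t,\stt,\ndt}\|_{V_{t-1}^{-1}} = \bigo(\sqrt{\dim T})$, and multiplying by $\cwct = \bigol(\sqrt{\dim})$ produces the leading $\bigol(\dim\sqrt{T})$ term of \cref{eq:variance-agnostic}. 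I expect the main obstacle to be purely a matter of bookkeeping: confirming that the Azuma--Hoeffding step for the symmetric TS variants does not covertly reintroduce an $\epsilon$-dependence, which I would verify by checking that its martingale-difference bounds are controlled solely by $\cwct\|\Delta\rep_{t,\stt,\ndt}\|_{V_{t-1}^{-1}}$. The approximation-error term then passes through verbatim, completing the bound.
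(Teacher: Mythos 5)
Your proposal is correct and follows essentially the same route as the paper: the paper proves the variance-aware and variance-agnostic TS bounds simultaneously by building, for each arm-selection strategy, a super-martingale $Y_t=\sum_i (r_i^a I_{E_i}-c_i)$ with $c_t \lesssim \cwct\|\Delta\rep_{t,\stt,\ndt}\|_{V_{t-1}^{-1}}+\mathcal{R}(\wid)+t^{-2}$, then feeding it into a single lemma that applies Azuma--Hoeffding, Cauchy--Schwarz, and the elliptic potential lemma, with the variance-agnostic case obtained exactly as you describe by setting $\wi=1$ so that $\sum_t\wt^2=T$ and the weighted term collapses to $d\sqrt{T}$. Your check that the martingale-difference bounds carry no hidden $\epsilon$-dependence corresponds precisely to the paper's requirement $c_t'=\bigol(\sqrt{d})$ in that lemma, so nothing is missing.
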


\cref{theorem:main_ucb}--\cref{theorem:main_ts2} establish that our UCB and TS methods achieve the same order of cumulative regret, up to logarithmic factors. To specialize these results, we adopt the common assumption that $\|\widetilde{\mathbf{\util}} - \mathbf{\util}\|_{\ntk^{-1}} = \mathcal{O}(1)$, which is theoretically justified in~\citep{shallow, bui2024variance}.
Under this assumption and with a network width $\wid$ satisfying~\cref{eq:width}, we have two key settings:  In the \textbf{variance-aware} setting, $R_T^a$ is bounded by $\mathcal{O}\lt(\sqrt{dT} + d\sqrt{\sum_{i=1}^{T}\sigma_i^2}\rt)$, which matches the bound established by~\citet{bui2024variance}. In the \textbf{variance-agnostic} setting, $R_T^a$ is bounded by $\mathcal{O}(d\sqrt{T})$, aligning with the result derived by~\citet{shallow}.

A key distinction arises between the linear and neural bandit settings. In the linear case, where representation learning is unnecessary, variance-aware methods can achieve cumulative regret on the order of $\mathcal{O}(d)$ when variances are negligible~\citep{zhou2022computationally, variance-aware}. This is because the model parameters can be estimated without incurring additional representation errors. In contrast, since the neural setting fundamentally relies on representation learning, accumulated estimation errors are unavoidable. These errors appear as an additional bias term in the regret analysis, preventing the achievement of $\widetilde{\mathcal{O}}(d)$ regret. Consequently, while neural networks provide greater expressive power than linear models, they also introduce a fundamental challenge that does not arise in the linear setting.

{\color{black}If $\|\th_{t-1}\|_2 \leq C$ could be established without additional assumptions, the width requirement would reduce to $\widetilde{\Omega}(T^3)$, matching neural CMABs~\citep{shallow, bui2024variance}. Our experiments confirm robust performance with moderate widths ($m=100$) and scalability with larger widths and higher dimensions (see \cref{sec:width_sensitivity,sec:high_dim}).}

\section{EXPERIMENTS}\label{sec:exp}

To empirically validate \term, we consider the following synthetic nonlinear utility functions---cosine, square, and matrix---as shown in \cref{fig:main_result_all}, which have been widely studied in prior work~\citep{ndb, zhang2021neuralthompsonsampling, zhou2020neural, dai2022federated}. We evaluate each arm-selection strategy introduced in \cref{subsection:arm_strategy} under \term. {\color{black}We use a fully connected network with depth $\len=3$, width $\wid=100$, learning rate $10^{-3}$, regularization $\lambda=1.0$, Adam optimizer, gradient steps $\gs=20$, and episode length $\epi=1$. Full details are in the supplementary material.}

To benchmark their performance, we compare them with the following baselines: VACDB~\citep{variance-aware}, ColSTIM~\citep{lst}---linear dueling bandits with and without variance-awareness, respectively---and Neural Dueling Bandits (NDB)~\citep{ndb} with both UCB and TS methods. We used the official implementations without modification.

\cref{fig:main_result_all} shows that the proposed NVLDB variants--under both UCB and TS methods--consistently outperform the baseline algorithms in cumulative average regret across all three tasks (with context dimension $\dim=5$ and $K=5$) while exhibiting sublinear cumulative regret under nonlinear utility functions.

{\color{black}
\paragraph{Scalability and Robustness.} To verify scalability, we conducted additional experiments with higher dimensions ($d=100$, $m=100$, square utility) and wider networks ($m\in\{500, 1000\}$, $d=5$, square utility) over $T=10{,}000$ rounds. All NVLDB variants maintain sublinear regret (decreasing $R(T)/T$), confirming robustness well beyond the settings in prior work~\citep{ndb, bui2024variance}. For instance, with $d=100$, NVLDB-UCB-ASYM achieves $R(10000)/10000 = 6.05 \pm 0.74$; with wider networks ($m=500$), NVLDB-UCB-OSYM achieves $0.39 \pm 0.04$. Detailed results are provided in \cref{sec:width_sensitivity} and \cref{sec:high_dim}.

\paragraph{Variance-Aware vs.\ Variance-Agnostic.}
On the UCI Statlog dataset under a deterministic preference model ($P(i\succ j)=1$ for $i>j$, negligible variance), NVLDB variants significantly outperform their variance-agnostic counterparts (NLDB), achieving near-zero cumulative average regret (e.g., NVLDB-UCB-OSYM: $0.017$ vs.\ NLDB-UCB-OSYM: $0.199$ at $T=200$), confirming the theoretical advantage of variance-aware confidence sets.
}

Additional results including real-world tasks and variance-agnostic comparisons are in the supplementary material.

\section{CONCLUSION}
We introduce \term, a family of neural variance-aware linear dueling bandit algorithms. Our approach employs a neural network to approximate nonlinear utility functions and estimates uncertainty by constructing a Gram matrix from the gradients with respect to the learnable parameters of the last layer. These gradients are weighted by the estimated variance of the Bernoulli distribution, computed via exponential-family link functions. We theoretically prove that both \term and its variance-agnostic variant achieve sublinear cumulative average regret across various arm selection strategies under both the UCB and TS frameworks. To the best of our knowledge, this is the first work to introduce variance-aware neural dueling bandits with shallow exploration. Furthermore, we empirically validate our method against existing baselines and demonstrate its superior performance on widely used synthetic and real-world tasks.

{\color{black}Our key technical contribution is a bootstrap argument with spectral analysis that bounds $\|\hat{\theta}_{t-1}\|_2$ by a time-independent constant, reducing the width requirement from $\widetilde{\Omega}(T^{14})$~\citep{ndb} to $\widetilde{\Omega}(T^6)$. We also introduce novel TS-based symmetric strategies (TS-OSYM, TS-CSYM). Closing the gap to $\widetilde{\Omega}(T^3)$~\citep{shallow, bui2024variance} remains future work. Finally, our inverse-variance weighting is equivalent to inverse Fisher Information under the exponential family, validated by strong empirical gains in low-variance regimes.}

\newpage
\appendix
\onecolumn
\counterwithin{figure}{section}
\counterwithin{table}{section}
\counterwithin{theorem}{section}
\counterwithin{corollary}{section}
\counterwithin{lemma}{section}
\counterwithin{assumption}{section}
\counterwithin{definition}{section}
\numberwithin{equation}{section}  
\begin{center}{\bf {\LARGE Supplementary Material:}}
\end{center}
\begin{center}{\bf {\Large Neural Variance-aware Dueling Bandits \\ with Deep Representation
and Shallow Exploration}}
\end{center}
\section*{ORGANIZATION OF THE SUPPLEMENTARY MATERIAL}
This document serves as a technical appendix to the main paper. It provides additional technical details and further experimental results omitted from the main text to facilitate a deeper understanding of the proposed method. The remainder of this document is organized as follows.

\paragraph{\cref{sec:preliminary}.}
We present theoretical results on the width of neural networks and several useful inequalities, which are instrumental in establishing the sublinear cumulative regret bounds of our method, \term.

\paragraph{\cref{sec:confidence_interval}.}
We provide a theoretical analysis of the confidence intervals for neural dueling bandits with shallow exploration and deep representation. The resulting bounds play a central role in both the UCB and Thompson Sampling (TS) frameworks.

\paragraph{\cref{sec:UCB}.}
We prove sublinear cumulative regret bounds under the UCB-based arm selection strategies. First, we analyze each arm selection strategy within the UCB regime and derive key inequalities. We then combine these with the concentration bounds from \cref{sec:confidence_interval} to establish our main results under the UCB framework.

\paragraph{\cref{sec:ts}.}
Applying the concentration bounds from \cref{sec:confidence_interval}, we prove that the TS framework achieves sublinear cumulative regret bounds. Specifically, we demonstrate that this sublinearity holds for each TS-based arm selection strategy.

\paragraph{\cref{sec:addexp}.}
We describe the experimental setup, including pseudocode, and provide additional experimental results that further validate the effectiveness of \term.

\paragraph{Supplementary Zip File Contents}
The supplementary zip file contains the following items:
\begin{itemize}
\item \textbf{supplementary-material.pdf}: Includes proofs of the theoretical results, detailed descriptions of the experimental setup, and additional experimental results (e.g., on real-world datasets).
\item \textbf{full-paper.pdf}: A consolidated version of the manuscript, including the main paper, references, checklist, and supplementary material.
\item \textbf{code.zip}: Contains the implementation of our proposed method, \term.
\end{itemize}

\section{PRELIMINARIES}\label{sec:preliminary}
This section provides a collection of mathematical notations and preliminaries that form the foundation of our theoretical analysis. For clarity and to ensure a self-contained exposition, we explicitly state all necessary definitions and results---some newly derived, others drawn from existing literature---even if they repeat material from the main manuscript. These components, though some may seem elementary, are instrumental in constructing our proofs and establishing the final cumulative average regret bounds.

Throughout this paper, we suppress constant factors and dependencies on all hyperparameters within the $\bigol(\cdot)$ notation, explicitly indicating only the dependencies on the key parameters of interest, such as $T$, $\dim$, $\wid$, and $\frac{1}{\delta}$. Likewise, our asymptotic lower-bound notation, $\widetilde{\Omega}(\cdot)$, omits these dependencies when appropriate.

In the remaining sections, we will frequently use the  notation $\rep\lt(x;\ww\rt)$ under weights $\ww$ for $x\in\xs$, as the deep neural network representation. Here, $\xs$ is a context space. Then a neural network $\nn$ can be expressed as 
\begin{align}
    \nn\lt(x;\th, \ww\rt)=\th^\intercal\rep\lt(x;\ww\rt). \label{eq:app-nn}
\end{align} with the last layer's parameter $\th\in\mathbb{R}^{d}.$
We also define notations for $\rep$ already used in the main manuscript:
\begin{align}
\begin{aligned}
    \rep_{t,\arm,i}&=\rep\lt(x_{t,\arm}; \ww_{i-1}\rt)\\
    \Delta\rep_{i,\arm,\arm',j}&=\rep\lt(x_{i,\arm};\ww_{j-1}\rt)-\rep\lt(x_{i,\arm'};\ww_{j-1}\rt),\\
\Delta\rep_{i,\arm,\arm'}:=\Delta\rep_{i,\arm,\arm',i}&=\rep\lt(x_{i,\arm};\ww_{i-1}\rt)-\rep\lt(x_{i,\arm'};\ww_{i-1}\rt),\\
    \Delta\rep_{i,j} := \Delta\rep_{i,\sti,\ndi,j}&=\rep\lt(x_{i,\sti};\ww_{j-1}\rt)-\rep\lt(x_{i,\ndi};\ww_{j-1}\rt),\\    \Delta\rep_{i}:=\Delta\rep_{i,\sti,\ndi,i}&=\rep\lt(x_{i,\sti};\ww_{i-1})-\rep(x_{i,\ndi};\ww_{i-1}\rt),
\end{aligned} \label{eq:not}
\end{align} for $t, i, j \in [T] = \{1, \dots, T\}$ and $\arm, \arm' \in [K]$, where $K$ is the total number of arms, $\ww_{i-1}$ and $\ww_{j-1}$ denote the parameters obtained at rounds $i-1$ and $j-1$, respectively. Moreover, $(\sti, \ndi)\in[K]\times[K]$ denotes the pair of arms selected by the learner at round $i$. We use $\nabla$ to denote the gradient with respect to $\ww$, i.e., $\nabla = \nabla_{\ww}$. As we mentioned in the main manuscript, we denote $\ww_0$ and $\th_0$ as the initialization of $\ww$ and $\th$, respectively. In detail, for $1 \leq l \leq L-1$, where $\len$ is the depth of $\nn$,
we initialize the learnable parameters as
\begin{align}
    \ww_0^l&=\begin{bmatrix} w & 0 \\ 0 & w \end{bmatrix}, \quad \ww_0^L=
    \begin{pmatrix}
        \omega^\intercal\\-\omega^\intercal
    \end{pmatrix}^\intercal,\quad \th_{0}\sim \gauss\left(0,\frac{1}{\dim}\right),\label{eq:app-init}
\end{align} where $w \sim \gauss\left(0, 4/\wid\right)$ and $\omega \sim \gauss\left(0, 2/\wid\right)$. We also redefine a utility function $\util$ assuming
\begin{equation}
    \util\lt(x_{t,k}\rt) = \theta_{*}^{\intercal} \rep_{*}\lt(x_{t,k}\rt),
    \label{eq:app-kernel}
\end{equation}
where $\theta_{*} \in \mathbb{R}^{D}$ is the true parameter vector and $\rep_{*}: \mathcal{X} \rightarrow \mathbb{R}^{\
dim}$ is the true representation function, with $\dim$ being the dimension of the representation space. Finally,   a link function $\link$ belongs to an exponential family~\citep{lst}, e.g., the Bradley–Terry model~\citep{btmodel,btmodel2}: $\link(x)=\frac{1}{1+\exp(-x)}$. 

 We assume that $(\th_{t-1}, \ww_{t-1})$ denotes the learned parameters from the previous round $t-1$ by the loss:
\begin{align}       
\begin{aligned}
\loss_t\left(\th,\ww\right)&=-\sum_{i=1}^{t-1}\frac{\log \left(\link\lt(\left(-1\right)^{1-o_i}\Delta \nn_{i}\rt)\right)}{\wi^2}+ \frac{1}{2}\lambda\lt\|\th-\th_{0}\rt\|_2^2,
\end{aligned}\label{eq:app-total_loss}
\end{align} for some $\lambda>0$ and $\Delta f_i = f(x_{i,\sti};\th,\ww) - f(x_{i,\ndi};\th,\ww)$, as well as the Gram matrix $V_{t-1}$:
\begin{align}
    V_{t-1} = \sum_{i=1}^{t-1}\frac{\Delta\rep_{i, \sti, \ndi}\Delta\rep_{i, \sti, \ndi}^\intercal}{\wi^2}+\lambda I.\label{eq:app-gram_matrix}
\end{align} where
\begin{align}
\wi &= \left\{
\begin{aligned}
     &\max\{\estdi, \epsilon\} &\quad\textrm{variance-aware,}\\&1 &\quad\textrm{variance-agnostic,}
\end{aligned}\right. \label{eq:app-wi}
\end{align}

We restate our assumptions stated in the main manuscript for self-contained supplementary material.

\begin{assumption} 
 \label{assumption:app-utilbound}\label{assumption:app-linkbdd}\label{assumption:app-contextbdd}
For all $x_{t,\arm}$, the context vector is bounded as $\|x_{t,k}\|_2 = 1$ and $[x_{t,k}]_j=[x_{t,k}]_{j+\dim/2}$ for simplicity. The link function $\link$ is continuously differentiable,  takes values in $[-1, 1]$, and $\linkl$-Lipschitz, i.e.,  
$
|\link(x) - \link(x')| \leq \linkl |x - x'| \quad \text{for all } x, x' \in \mathbb{R}.
$  
Moreover, for all $x \in \xs$, the derivative of $\link$ is bounded as  
$
0 < \linkb \leq \dot{\link}(x) \leq \linku,
$  
for some positive constants $\linkb$ and $\linku$.  
Finally, the utility function is bounded in absolute value by 1, i.e., $|\util(x)| \leq 1$ for simplicity.
\end{assumption}

\begin{assumption}
\label{assumption:app-phi_lipschitz}\label{assumption:app-thetastarbdd}
There exists a constant $\repl > 0$ such that
$
\left\|\frac{\partial \rep(x;\ww_{0})}{\partial\ww} - \frac{\partial \rep(x';\ww_{0})}{\partial\ww} \right\|_2 \leq \repl \left\|x - x'\right\|_2$, for all $x, x' \in \{x_{t,k}\}_{t\in[T], k\in[K]}.
$
The vector $\th_*$ in \cref{eq:app-kernel} is bounded in $L_2$ norm as $\|\th_*\|_2 \leq \thb$ for a constant $\thb > 0$.
\end{assumption}

To analyze the cumulative average regret, we define a neural tangent kernel (NTK) $\ntk$ as follows.

\begin{definition}
    $\ntk=\{\ntk_{i,j}\}_{i,j=1}^{TK} \in\mathbb{R}^{TK \times TK}$ is the neural tangent kernel (NTK) matrix  such that
    $
        \ntk_{i,j} = \frac{1}{2}\left(\widetilde{\Sigma}^L\lt(x_{t,k}, x_{t',k'}\rt)\right), \no 
    $ for all $t,t'\in[T]$ and $k,k'\in[K],$ where  for any $x,x'\in \mathbb{R}^{d},$
    \begin{align}
\begin{aligned}\widetilde{\Sigma}^0\lt(x,x'\rt)&=\Sigma^0\lt(x,x'\rt)=x^\intercal x',\\
\Lambda^l\lt(x,x'\rt) &= 
\begin{bmatrix}
\Sigma^{l-1}\lt(x,x\rt)\quad &\Sigma^{l-1}\lt(x,x'\rt)\\
\Sigma^{l-1}\lt(x',x\rt)\quad &\Sigma^{l-1}\lt(x',x'\rt)\\
\end{bmatrix},\\
\Sigma^l\lt(x,x'\rt)&=2\mathbb{E}_{(u,v)\sim \gauss(\textbf{0},\Lambda^{l-1}\lt(x,x'\rt))}\left[\sigma(u)\sigma(v)\right],\nonumber\\
\widetilde{\Sigma}^l\lt(x,x'\rt)&=2\widetilde{\Sigma}^{l-1}\lt(x',x'\rt)\mathbb{E}_{u,v}\left[\dot{\sigma}(u)\dot{\sigma}(v)\right] + \Sigma^{l}\lt(x,x'\rt).
\end{aligned}
    \end{align}
\end{definition}

\begin{assumption} 
\label{assumption:app-lambda_min}
     We assume that the minimum spectrum $\specmin\left(\ntk\right)$ is strictly positive.
\end{assumption}

Due to \cref{eq:app-init}, $L_2$-boundedness for $\th_0$ is obtained.
\begin{lemma}\label{lemma:inittheta0} Let $\th_{0}\sim  \gauss(0,1/\dim)$ as in \eqref{eq:app-init}. Then $\th_{0}$ is $L_2$-bounded as follows:
\begin{align}
    \lt\|\th_{0}\rt\|_2 \leq 2\left(2+\sqrt{\dim^{-1}\log\left(\frac{1}{\delta}\right)}\right) \label{eq:theta0}\end{align} with probability at least $1-\delta$. 
\end{lemma}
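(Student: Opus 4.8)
The plan is to observe that $\th_{0}$ is nothing more than a rescaled standard Gaussian vector, so that the claim reduces to a textbook concentration bound on the Euclidean norm of a Gaussian. Since each coordinate of $\th_{0}$ is distributed as $N(0,1/\dim)$ by \cref{eq:app-init}, I would write $\th_{0} = \tfrac{1}{\sqrt{\dim}}Z$, where $Z=(Z_1,\dots,Z_{\dim})$ has i.i.d.\ entries $Z_j \sim N(0,1)$. Then $\|\th_{0}\|_2 = \|Z\|_2/\sqrt{\dim}$, and it suffices to establish an upper tail bound for $\|Z\|_2$, which concentrates around its mean $\sqrt{\dim}$.

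For the concentration step there are two equally routine routes. The first is Gaussian Lipschitz concentration: the map $z \mapsto \|z\|_2$ is $1$-Lipschitz, so the Borell--TIS inequality gives $\mathbb{P}\big(\|Z\|_2 \geq \mathbb{E}\|Z\|_2 + s\big) \leq \exp(-s^2/2)$, and combining with $\mathbb{E}\|Z\|_2 \leq \sqrt{\mathbb{E}\|Z\|_2^2} = \sqrt{\dim}$ (Jensen) yields $\mathbb{P}\big(\|Z\|_2 \geq \sqrt{\dim} + s\big) \leq \exp(-s^2/2)$. The second route notes that $\|Z\|_2^2 \sim \chi^2_{\dim}$ and applies the Laurent--Massart bound $\mathbb{P}\big(\|Z\|_2^2 \geq \dim + 2\sqrt{\dim x} + 2x\big) \leq e^{-x}$; either delivers the same conclusion.

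Taking the first route and setting $s = \sqrt{2\log(1/\delta)}$, I would obtain, with probability at least $1-\delta$, the bound $\|Z\|_2 \leq \sqrt{\dim} + \sqrt{2\log(1/\delta)}$. Dividing by $\sqrt{\dim}$ gives $\|\th_{0}\|_2 \leq 1 + \sqrt{2}\,\sqrt{\dim^{-1}\log(1/\delta)}$. It then remains only to verify that this is dominated by the stated bound $2\big(2 + \sqrt{\dim^{-1}\log(1/\delta)}\big) = 4 + 2\sqrt{\dim^{-1}\log(1/\delta)}$, which holds termwise since $1 \leq 4$ and $\sqrt{2} \leq 2$. (Via the Laurent--Massart route one instead bounds $\|\th_{0}\|_2^2 \leq 1 + 2\sqrt{\dim^{-1}\log(1/\delta)} + 2\dim^{-1}\log(1/\delta)$ and compares against the square of the target, again with ample slack.)

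I do not expect any genuine obstacle here: the result is an elementary sub-Gaussian tail estimate, and the only mild care required is matching the deliberately loose constants in the stated bound — the multiplicative factor $2$ and the additive $4$ leave so much room that essentially any standard norm-concentration inequality closes the gap. The role of \cref{lemma:inittheta0} is purely to furnish an $O(1)$-type deterministic-looking control on $\|\th_{0}\|_2$ for use downstream, so sharpness of constants is immaterial.
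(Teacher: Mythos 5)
Your proof is correct: the reduction to a standard Gaussian vector, the Lipschitz/Borell--TIS concentration of $\|Z\|_2$ around $\mathbb{E}\|Z\|_2 \leq \sqrt{\dim}$, and the termwise comparison $1+\sqrt{2}\sqrt{\dim^{-1}\log(1/\delta)} \leq 4 + 2\sqrt{\dim^{-1}\log(1/\delta)}$ all check out. The paper states this lemma without proof, and your argument is exactly the routine sub-Gaussian norm bound one would supply; the generous constants in the stated bound absorb any of the standard routes you mention.
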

Combining  \cref{assumption:app-thetastarbdd} and \eqref{eq:theta0}, it is also obvious that 
\begin{align}
    \|\th_*-\th_{0}\|_{V_{t-1}^{-1}}\leq \frac{1}{\sqrt{\lambda}}\left(J
    +2\left(2+\sqrt{d^{-1}\log\left(\frac{1}{\delta}\right)}\right)\right),\label{eq:fact5}
\end{align} for $V_{t-1}$ defined in \cref{eq:app-gram_matrix}, with probability at least $1-\delta.$

{\color{black}\textbf{Key neural network lemmas.} The following three lemmas (from prior work) are the building blocks for our analysis: Lemma~\ref{lemma:app-linear} establishes that the utility function can be linearized around $\ww_0$ with a controlled residual ($\ww_*$ exists close to $\ww_0$); Lemma~\ref{lemma:app-upper} bounds the parameter drift $\|\ww_t-\ww_0\|_2$ under gradient descent, ensuring the NTK regime holds; and Lemma~\ref{lemma:app-linear2} quantifies the linearization error of the representation $\rep$.}

We have the following three lemmas for a sufficiently wide network as \cref{eq:app-nn}.
\begin{lemma}[\citet{shallow}]\label{lemma:app-linear}
    Assume that \cref{assumption:app-lambda_min} holds. Let
$\ntk$ be the neural tangent kernel~\citep{jacot2018neural} and
    \begin{align}
        \mathbf{\util} &= \left(\util\left(x_{1,1}\right) ,\cdots,\util\left(x_{T,K}\right)\right)\in \mathbb{R}^{TK}\nonumber\\
        \widetilde{\mathbf{\util}} &= \left(\nn\left(x_{1,1};\th_{0},\ww_{0}\right),\cdots,\nn\left(x_{T,K};\th_{T-1},\ww_{T-1}\right)\right) \in \mathbb{R}^{TK}. \nonumber
    \end{align} 
    Then there exists $\ww_{*}$ such that 
    \begin{align}
        \lt\|\ww_*-\ww_0\rt\|_2 \leq \frac{1}{\sqrt{\wid}}\sqrt{\left(\mathbf{\util}-\widetilde{\mathbf{\util}}\right)^T \ntk^{-1} \left(\mathbf{\util}-\widetilde{\mathbf{\util}}\right)} = \frac{1}{\sqrt{\wid}}\lt\|\mathbf{\util} - \widetilde{\mathbf{\util}}\rt\|_{\ntk^{-1}},\no
    \end{align} and
    \begin{align}
        \util\left(x_{t,\arm}\right)=\th_*^\intercal \rep(x_{t,\arm};\ww_{t-1}) + \th_{0}^\intercal \left(\nabla \rep_{t,\arm,1}\right)\left(\ww_*-\ww_{0}\right),\no
    \end{align} for all $\arm\in[K]$ and $t\in [T]$.
\end{lemma}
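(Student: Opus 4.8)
The plan is to recognize this as the standard neural-tangent-kernel \emph{representability} statement and to prove it by minimum-norm interpolation against the gradient features taken at initialization. First I would introduce the last-layer-aware gradient feature $\psi(x_{t,k}) = \th_0^\intercal \nabla\rep_{t,k,1}$, i.e.\ the gradient of $\nn(x;\th_0,\ww) = \th_0^\intercal\rep(x;\ww)$ with respect to $\ww$ at $\ww_0$ (the same object defining the NTK), flattened into the parameter space of $\ww$. Stacking the $TK$ features over the contexts $\{x_{t,k}\}$ into a matrix $\Psi$ with rows $\psi(x_{t,k})^\intercal$, the correction term in the claimed identity is exactly $\langle \psi(x_{t,k}), \ww_* - \ww_0\rangle = \th_0^\intercal(\nabla\rep_{t,k,1})(\ww_* - \ww_0)$. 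Thus the whole lemma reduces to exhibiting a $\ww_*$ near $\ww_0$ for which $\Psi(\ww_* - \ww_0) = r$, where $r$ is the residual vector with round-$t$ block $\util(x_{t,k}) - \th_*^\intercal\rep(x_{t,k};\ww_{t-1})$, so that rearranging recovers $\util(x_{t,k}) = \th_*^\intercal\rep(x_{t,k};\ww_{t-1}) + \th_0^\intercal(\nabla\rep_{t,k,1})(\ww_* - \ww_0)$.

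Next I would establish that this interpolation problem is solvable and bound the solution norm through $\ntk$. The key input is that the empirical Gram matrix $\Psi\Psi^\intercal$ concentrates around $\wid\,\ntk$ (the width factor coming from the normalization $\rep = \sqrt{\wid}\,\sigma(\cdots)$ in \cref{eq:app-nn}) once $\wid$ is polynomially large. Combined with \cref{assumption:app-lambda_min}, which forces $\specmin(\ntk) > 0$, this makes $\Psi\Psi^\intercal$ positive definite, so the rows of $\Psi$ are linearly independent and the least-norm solution $\ww_* - \ww_0 = \Psi^\intercal(\Psi\Psi^\intercal)^{-1} r$ exists and satisfies $\Psi(\ww_* - \ww_0) = r$; this yields the claimed identity coordinatewise. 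For the norm I would use $\|\ww_* - \ww_0\|_2^2 = r^\intercal(\Psi\Psi^\intercal)^{-1} r \leq \frac{1}{\wid}\,r^\intercal\ntk^{-1}r\,(1+o(1))$, and then identify $r$ with $\mathbf{\util} - \widetilde{\mathbf{\util}}$ (up to the bookkeeping between $\th_*$ and the trained head $\th_{t-1}$, which is lower order and where the realizability assumption \cref{eq:app-kernel} enters), giving $\|\ww_* - \ww_0\|_2 \leq \frac{1}{\sqrt{\wid}}\|\mathbf{\util} - \widetilde{\mathbf{\util}}\|_{\ntk^{-1}}$.

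I expect the main obstacle to be the NTK concentration step: rigorously controlling the deviation of the finite-width gradient-feature Gram matrix from $\wid\,\ntk$ under the specific initialization in \cref{eq:app-init}, so that $(\Psi\Psi^\intercal)^{-1}$ both exists and is spectrally close to $\frac{1}{\wid}\ntk^{-1}$, keeping the multiplicative error at $(1+o(1))$. This is precisely where the over-parameterization estimates from the NTK literature are invoked and where the polynomial width requirement originates. A secondary subtlety is reconciling the main term, which uses the \emph{learned} representation $\rep(\cdot;\ww_{t-1})$, with gradient features taken at $\ww_0$; I would handle this by the fact that over-parameterized training keeps $\ww_{t-1}$ within an $\bigo(1/\sqrt{\wid})$ neighborhood of $\ww_0$, so the linearization around $\ww_0$ stays valid along the trajectory. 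Since the statement is quoted from \citet{shallow}, I would finally verify that our initialization and regularized loss match theirs closely enough that their estimate transfers, adapting only the constants that depend on the dueling-specific notation.
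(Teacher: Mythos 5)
The paper does not prove this lemma at all: it is imported verbatim from \citet{shallow} and used as a black box, so there is no in-paper argument to compare against. Your sketch is the standard minimum-norm NTK interpolation proof (gradient features at $\ww_0$, least-norm solution $\Psi^\intercal(\Psi\Psi^\intercal)^{-1}r$, and concentration of $\Psi\Psi^\intercal$ around $\wid\,\ntk$ under \cref{assumption:app-lambda_min}), which is exactly how the cited source establishes the result, so the approach is essentially the same as the one the paper relies on. The only step you should not dismiss as ``lower order'' is the identification of your residual $r$ (defined with the fixed head $\th_*$ and the learned representations $\rep(\cdot;\ww_{t-1})$) with $\mathbf{\util}-\widetilde{\mathbf{\util}}$ (defined with the trained heads $\th_{t-1}$); that equality is a definitional choice in the cited lemma rather than an approximation, and conflating the two without comment is the one place your write-up is loose.
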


\begin{lemma}[\citet{shallow}]\label{lemma:app-upper}
\textcolor{black}{Assume that \cref{assumption:app-utilbound} holds.} For any round index $t \in [T]$ in the $q$-th epoch, i.e., $t = (q - 1)\epi + i $  for some $i \in [\epi]$ where $\epi$ is the episode length. If the step size $\lr$ satisfies
\[
\lr \leq \frac{C_0}{\dim^2 \wid \gs T^{6} \len^6 \log(TK/\delta)},
\] for some $C_0>0$ and the width $\wid$ of the neural network satisfies 
\[
m \geq \max\{\len \log(TK/\delta), d\len^2 \log(\wid/\delta), \delta^{-6} \epi^{18} \len^{16} \log^3(TK)\},
\]
then, the following inequalities hold with probability at least $1 - \delta$:
\begin{align}
\begin{aligned}
\lt\|\ww_t - \ww_0\rt\|_2 &\leq \frac{\delta^{3/2}}{\wid^{1/2} T \gs^{9/2} \len^6 \log^3(\wid)},
\\
\lt\|\nabla \rep(x_{t,\arm}; \ww_0)\rt\|_F &\leq C_1 \sqrt{\dim\len\wid},
\\
\lt\|\rep(x; \ww_t)\rt\|_2 &\leq \sqrt{\dim \log(\gs) \log(TK/\delta)}, \quad \forall x\in \xs,
\end{aligned} \no
\end{align}
for all $t \in [T]$ and $\arm \in [K]$.
\begin{remark}
    \cref{lemma:app-upper} imposes a stricter requirement of $\lr=\bigol(T^{-6})$, in contrast to the $\lr=\bigol(T^{-5.5})$ stated in the original paper~\citep{shallow} (ignoring all other parameters). This necessity arises from the fact that the parameter estimate $\widehat{\theta}_{t-1}$ lacks a closed-form solution in the contextual dueling bandit setting. This is in contrast to the multi-armed bandit case, where a linear utility model typically admits such a closed form.
\end{remark}
\end{lemma}
\begin{lemma}[\citet{cao2019generalization}]\label{lemma:app-linear2}
Let $\ww, \ww' \in B(\ww_0, \omega)$ for some radius $\omega > 0$. Let the neural network $\nn$ be given as in \eqref{eq:app-nn} and let the width $\wid$ and the radius $\omega$  satisfy
\begin{align}
\wid &\geq C_0 \max\{\dim\len^2 \log(\wid/\delta), \omega^{-4/3}L^{-8/3} \log(TK) \log(\wid/(\omega \delta))\},\no\\
\omega &\leq C_1 \len^{-5} (\log \wid)^{-3/2},\no
\end{align} for some $C_0>0, C_1>0$. Then the following inequality holds:
\[
\lt|\rep(x; \ww) - \widehat{\rep}(x; \ww)\rt| \leq C_2 \omega^{4/3} \len^3 \dim^{-1/2} \sqrt{\wid \log \wid},
\] for all $x \in \{x_{t,\arm}\}_{t \in [T], k \in [K]}$ with probability at least $1 - \delta$, where $\widehat{\rep}(x; \ww)$ is the linearization of $\rep(x; \ww)$ at $\ww'$ as follows:
\[
\hat{\rep}(x; \ww) = \rep(x; \ww') + \lt\langle \nabla \rep(x; \ww'), \ww - \ww' \rt\rangle.
\]
\end{lemma}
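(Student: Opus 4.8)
The plan is to view $\rep(\cdot;\ww)$ as a function of the weights and to control the error of its first-order Taylor expansion about $\ww'$ through the \emph{stability of the Jacobian} $\nabla\rep$ over the ball $B(\ww_0,\omega)$. Since the ReLU network is piecewise-smooth and differentiable almost everywhere along the segment $\ww'+s(\ww-\ww')$, $s\in[0,1]$, the fundamental theorem of calculus gives the integral remainder
\[
\rep(x;\ww)-\widehat{\rep}(x;\ww)=\int_0^1\lt\langle \nabla\rep\lt(x;\ww'+s(\ww-\ww')\rt)-\nabla\rep(x;\ww'),\,\ww-\ww'\rt\rangle\,ds,
\]
so it suffices to bound $\sup_{\widetilde{\ww}\in B(\ww_0,\omega)}\lt\|\nabla\rep(x;\widetilde{\ww})-\nabla\rep(x;\ww')\rt\|$ and multiply by $\lt\|\ww-\ww'\rt\|_2\le 2\omega$. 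Because both $\widetilde{\ww}$ and $\ww'$ lie in $B(\ww_0,\omega)$, I would reduce every comparison to one against the initialization $\ww_0$ and invoke the stated radius and width conditions to keep all error terms controlled with probability $1-\delta$.

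The estimate on the Jacobian difference is built by a layerwise induction on $l=1,\dots,\len$, tracking the pre-activations $z^l$, the activations $h^l=\sigma(z^l)$, and the diagonal sign matrices $D^l=\mathrm{diag}(\dot{\sigma}(z^l))$. First, at initialization \cref{eq:app-init}, standard Gaussian concentration (using \cref{app-simplicity} and $\|x\|_2\le\xb$ from \cref{assumption:app-contextbdd}) yields forward magnitude bounds $\|h^l(x;\ww_0)\|_2=\bigo(1)$ at every layer; the third inequality of \cref{lemma:app-upper} already records the resulting $\|\rep(x;\ww_t)\|_2=\bigo(\sqrt{\dim\log\wid})$, and a perturbation argument extends this to $\|h^l(x;\widetilde{\ww})-h^l(x;\ww_0)\|_2=\bigo(\omega\,\mathrm{poly}(\len))$ for $\widetilde{\ww}\in B(\ww_0,\omega)$. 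Second, and at the heart of the argument, I bound the number of neurons whose sign flips between $\ww_0$ and $\widetilde{\ww}$: since the initialization pre-activations are approximately Gaussian with $\Theta(1)$ variance, an anti-concentration bound shows that at most an $\bigo(\omega^{2/3})$ fraction of the $\wid$ neurons per layer have $|z^l|$ small enough to flip under a perturbation of size $\omega$, so the sign patterns differ in at most $\bigo(\omega^{2/3}\wid\len)$ coordinates; balancing the flip-count against the perturbation magnitude is precisely what produces the $2/3$ exponent.

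Writing $\nabla_{\ww^l}\rep$ as the product of the weight matrices and sign matrices above and below layer $l$, I would then propagate the two error sources — changed weights (bounded by $\omega$) and flipped signs (bounded above) — through this product using the forward magnitude bounds, giving $\lt\|\nabla\rep(x;\widetilde{\ww})-\nabla\rep(x;\ww_0)\rt\|_F=\bigo\lt(\omega^{1/3}\,\mathrm{poly}(\len)\,\dim^{-1/2}\sqrt{\wid\log \wid}\rt)$, where the $\dim^{-1/2}$ traces back to the scaling of the last layer $\ww^L\in\mathbb{R}^{\dim\times\wid}$ and the output dimension. Combining this with the integral remainder and $\|\ww-\ww'\|_2\le 2\omega$ produces the claimed $\bigo(\omega^{4/3}\len^{3}\dim^{-1/2}\sqrt{\wid\log \wid})$ bound, and a union bound over the $TK$ contexts $\{x_{t,\arm}\}$ introduces the $\log(TK)$ factor appearing in the width condition.

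The main obstacle is the coupling of the sign-flip count with the Jacobian product: I must show that the two $\omega^{2/3}$-type contributions combine into exactly $\omega^{4/3}$ rather than degrading to $\omega^{2/3}$ or inflating to $\omega^{2}$. This is delicate because the flip events are not independent across layers, so the perturbed pre-activations at layer $l$ must be re-bounded conditionally on the events already controlled at layers $<l$; the width lower bound $\wid\ge C_0\,\omega^{-4/3}\len^{8/3}\log(TK)\log(\wid/(\omega\delta))$ is exactly what makes the anti-concentration estimate together with all forward and backward bounds hold simultaneously with probability $1-\delta$. The block-diagonal initialization in \cref{eq:app-init} and \cref{app-simplicity} streamline the base case of the induction by making the initial representation symmetric and well-conditioned, which I would exploit to keep the constants explicit.
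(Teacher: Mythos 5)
The paper offers no internal proof of this lemma to compare against: it is imported verbatim from \citet{cao2019generalization}, with the proof deferred entirely to that reference (and to the semi-smoothness analysis of Allen-Zhu et al.\ on which it builds). Your sketch correctly reconstructs that original argument---the integral Taylor remainder along the segment, a Jacobian-stability bound of order $\omega^{1/3}\sqrt{\wid\log\wid}$ obtained by balancing the per-layer sign-flip count $\bigo(\omega^{2/3}\wid)$ from anti-concentration against the forward perturbation magnitude, multiplied by $\|\ww-\ww'\|_2\le 2\omega$ to yield the $\omega^{4/3}$ rate, with a union bound over the $TK$ contexts---so it is essentially the same route as the cited proof, up to minor bookkeeping (the flipped fraction should carry an additional $\len$ factor, and \cref{lemma:app-upper} concerns the trained iterates $\ww_t$ rather than arbitrary points of $B(\ww_0,\omega)$, so the forward magnitude bounds need their own statement at initialization rather than that citation).
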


We now estimate the difference of $\rep(x;\ww)-\rep(x;\ww')$ for some $\ww$ and $\ww'$ using the results we introduced.

\begin{lemma}\label{lemma:app-diffphiw} Under the same conditions in \cref{lemma:app-linear2}, there exist constants $C_1, C_2>0$ such that
\begin{align}
\|\rep(x;\ww_{t-1})-\rep(x;\ww_{i-1})\|_2&\leq  C_1 \left(\wid^{-1/6}\len^3\dim^{1/2}\sqrt{\log \wid}\diff^{4/3} +  \frac{\sqrt{\dim}\delta^{3/2}}{T\gs^{9/2}\len^{11/2}\log^3\wid}\right),\no
\\
    \|\rep(x;\ww_{t-1})-\rep(x;\ww_{*})\|_2&\leq  C_2\lt(\lt(\wid^{-1/6}\len^3\dim^{1/2}\sqrt{\log \wid} + \dim^{1/2}L^{1/2}\rt)\diff^{4/3} \rt),\no
\end{align} with probability at least $1-\delta$.
\end{lemma}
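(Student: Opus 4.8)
The plan is to bound $\|\rep(x;\ww_{t-1})-\rep(x;\ww_{i-1})\|_2$ and $\|\rep(x;\ww_{t-1})-\rep(x;\ww_{*})\|_2$ by inserting the linearization $\widehat{\rep}$ at $\ww_0$ as an intermediary and combining the linearization error from \cref{lemma:app-linear2} with the first-order term controlled by the weight-displacement bounds from \cref{lemma:app-linear,lemma:app-upper}. First I would write the telescoping identity
\begin{align}
\rep(x;\ww_{t-1})-\rep(x;\ww_{i-1}) = \bigl(\rep(x;\ww_{t-1})-\widehat{\rep}(x;\ww_{t-1})\bigr) + \bigl(\widehat{\rep}(x;\ww_{t-1})-\widehat{\rep}(x;\ww_{i-1})\bigr) + \bigl(\widehat{\rep}(x;\ww_{i-1})-\rep(x;\ww_{i-1})\bigr),\nonumber
\end{align}
where each $\widehat{\rep}$ denotes the linearization at the common base point $\ww_0$. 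The two outer terms are bounded directly by \cref{lemma:app-linear2}, each contributing the linearization error $C\,\omega^{4/3}\len^3\dim^{-1/2}\sqrt{\wid\log\wid}$; the middle term is linear and equals $\langle\nabla\rep(x;\ww_0),\ww_{t-1}-\ww_{i-1}\rangle$, which I would bound via Cauchy–Schwarz by $\|\nabla\rep(x;\ww_0)\|_F\,\|\ww_{t-1}-\ww_{i-1}\|_2$.

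Next I would choose the radius $\omega$ appropriately. For the first inequality, where both $\ww_{t-1}$ and $\ww_{i-1}$ are trained weights, the relevant displacement is $\|\ww_t-\ww_0\|_2\leq \delta^{3/2}(\wid^{1/2}T\gs^{9/2}\len^6\log^3\wid)^{-1}$ from \cref{lemma:app-upper}, so I set $\omega$ to this scale. Substituting this $\omega$ into the linearization error and using $\|\nabla\rep(x;\ww_0)\|_F\leq C_1\sqrt{\dim\len\wid}$ together with the triangle-inequality bound $\|\ww_{t-1}-\ww_{i-1}\|_2\leq\|\ww_{t-1}-\ww_0\|_2+\|\ww_{i-1}-\ww_0\|_2$ produces the stated $\wid^{-1/6}$-rate term and the $\sqrt{\dim}\,\delta^{3/2}/(T\gs^{9/2}\len^{11/2}\log^3\wid)$ term, after tracking how the powers of $\wid$, $\len$, and $\dim$ collapse. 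For the second inequality, where $\ww_*$ is the target weight from \cref{lemma:app-linear} with $\|\ww_*-\ww_0\|_2\leq\wid^{-1/2}\|\mathbf{\util}-\widetilde{\mathbf{\util}}\|_{\ntk^{-1}}$, I would instead take $\omega$ at the scale $\wid^{-1/2}\|\mathbf{\util}-\widetilde{\mathbf{\util}}\|_{\ntk^{-1}}$, which makes the linear middle term yield the explicit $\sqrt{\dim\len}\,\|\mathbf{\util}-\widetilde{\mathbf{\util}}\|_{\ntk^{-1}}$ factor while the linearization error again collapses to the $\wid^{-1/6}$-rate term.

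The main obstacle I anticipate is bookkeeping: verifying that for each choice of $\omega$ the hypotheses of \cref{lemma:app-linear2} (the lower bound on $\wid$ and the upper bound $\omega\leq C_1\len^{-5}(\log\wid)^{-3/2}$) are actually satisfied under the width assumption $\wid=\textrm{poly}(T,\len,\dots)$, and then carefully simplifying the product $\omega^{4/3}\len^3\dim^{-1/2}\sqrt{\wid\log\wid}$ so that its $\wid$-dependence reduces to the advertised $\wid^{-1/6}$ rate. Concretely, with $\omega\sim\wid^{-1/2}$ the factor $\omega^{4/3}\sqrt{\wid}=\wid^{-2/3}\wid^{1/2}=\wid^{-1/6}$, which is exactly where the rate originates; I would need to confirm the $\len$ and $\dim$ exponents match after combining with $\|\nabla\rep(x;\ww_0)\|_F$. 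All remaining steps are routine triangle-inequality and Cauchy–Schwarz estimates, so the essential content is a disciplined accounting of constants and polynomial orders across the two regimes of $\omega$.
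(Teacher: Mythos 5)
Your proposal is correct and follows essentially the same route as the paper: the paper likewise exploits $\rep(x;\ww_0)=0$ to telescope through the linearization $\widehat{\rep}$ at $\ww_0$, bounds the two linearization errors via \cref{lemma:app-linear2}, and controls the linear middle term by Cauchy--Schwarz with the gradient-norm and weight-displacement bounds from \cref{lemma:app-linear,lemma:app-upper}, obtaining the $\wid^{-1/6}$ rate from exactly the computation $\omega^{4/3}\sqrt{\wid}=\wid^{-1/6}$ with $\omega=\bigol\lt(\wid^{-1/2}\|\mathbf{\util}-\widetilde{\mathbf{\util}}\|_{\ntk^{-1}}\rt)$. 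The only cosmetic difference is that the paper uses this single radius for all linearization errors rather than the smaller trained-weight displacement you suggest for the first inequality, which only makes your bound tighter and does not affect the stated conclusion.
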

\begin{proof}
    Let $\widehat{\rep}(x;\ww)$ be the linearization of $\rep(x;\ww)$ at $\ww'$ resulting from \cref{lemma:app-linear2} as $$
\widehat{\rep}(x;\ww) = \rep(x;W')+\langle\nabla_W\rep(x;\ww'),\ww-\ww'\rangle.
$$
    Since $\rep(x;\ww_{0})\equiv 0$ for all $x\in \xs$, we can compute $\rep(x;\ww_{t-1})-\rep(x;\ww_{i-1})$ as follows:
\begin{align}
\begin{aligned}
    \rep(x;\ww_{t-1})-\rep(x;\ww_{i-1}) &= \rep(x;\ww_{t-1}) - \rep(x;\ww_{0}) - (\rep(x;\ww_{i-1}) - \rep(x;\ww_{0}))
    \\&=\rep(x;\ww_{t-1}) - \widehat{\rep}(x;\ww_{t-1}) - (\rep(x;\ww_{i-1}) - \widehat{\rep}(x;\ww_{i-1}))\\ \s + \langle\nabla \rep(x;\ww_{0}),\ww_{t-1}-\ww_{0}\rangle - \langle\nabla \rep(x;\ww_{0}),\ww_{i-1}-\ww_{0}\rangle.
\end{aligned} \label{eq:A.3}
\end{align} Therefore, using \cref{eq:A.3}, we can show that there exist constants $C, C'>0$ such that
\begin{align}
    \lt\|\rep(x;\ww_{t-1})-\rep(x;\ww_{i-1})\rt\|_2
    &\leq C \left(w^{4/3}\len^3\dim^{1/2}\sqrt{\wid\log \wid} + \frac{\sqrt{\dim}\delta^{3/2}}{T\gs^{9/2}\len^{11/2}\log^3\wid}\right) \nonumber\\
    &\leq C' \left(\wid^{-1/6}\len^3\dim^{1/2}\sqrt{\log \wid}\|\mathbf{\util} - \widetilde{\mathbf{\util}}\|_{\ntk^{-1}} +  \frac{\sqrt{\dim}\delta^{3/2}}{T\gs^{9/2}\len^{11/2}\log^3\wid}\right),\nonumber
\end{align} where the first inequality follows from applying \cref{lemma:app-upper} and \cref{lemma:app-linear2} together 
with the Cauchy--Schwarz inequality. The second inequality is derived using the estimate 
\(w = \bigol\lt(m^{-\tfrac12}\,\|\util - \widetilde{\util}\|_{\ntk^{-1}}\rt)\). Similarly, we can show that there exists $C''>0$ such that 
\begin{align}
    \lt\|\rep(x;\ww_{t-1})-\rep(x;\ww_{*})\rt\|_2\leq C''\left(\lt(\wid^{-1/6}L^3d^{1/2}\sqrt{\log m} + d^{1/2}L^{1/2}\rt)\diff^{4/3}\right)\nonumber.
\end{align} This completes the proof.
\end{proof}
We also estimate the difference of $\rep(x;\ww)-\rep(x';\ww)$.
\begin{lemma}\label{lemma:diffphix}
Under the same conditions in \cref{lemma:app-linear2}, there exist  constants $C_1, C_2>0$ such that
\begin{align}
    &\lt\|\rep(x;\ww_{t-1})-\rep(x';\ww_{t-1})\rt\|_2\no \\ &\leq C_1\left(\lt\|\mathbf{\util} - \widetilde{\mathbf{\util}}\rt\|_{\ntk^{-1}}^{4/3}L^3d^{1/2}\sqrt{\log m}/(m^{1/6}) + 2G \repl \frac{\delta^{3/2}}{\sqrt{m}Tn^{9/2}\log^3m}\right)\label{eq:diffphi_xx'} \\&
    \lt\|\rep(x;\ww_{*})-\rep(x';\ww_{*})\rt\|_2\no\\&\leq C_2\left(\lt\|\mathbf{\util} - \widetilde{\mathbf{\util}}\rt\|_{\ntk^{-1}}^{4/3}L^3d^{1/2}\sqrt{\log m}/(m^{1/6}) + 2G \repl \frac{\|\mathbf{\util} - \widetilde{\mathbf{\util}}\|_{\ntk^{-1}}}{\sqrt{m}}\right)\label{eq:diffphi_xx''}
\end{align}
\end{lemma}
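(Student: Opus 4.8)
The plan is to follow the same template as the proof of \cref{lemma:app-diffphiw}, but now varying the input rather than the weights. I would again linearize $\rep$ about the initialization $\ww_{0}$. Writing $\widehat{\rep}(x;\ww)=\rep(x;\ww_{0})+\lt\langle \nabla\rep(x;\ww_{0}),\ww-\ww_{0}\rt\rangle$ and using $\rep(x;\ww_{0})\equiv 0$, I decompose
$$\rep(x;\ww_{t-1})-\rep(x';\ww_{t-1})=\underbrace{\lt(\rep(x;\ww_{t-1})-\widehat{\rep}(x;\ww_{t-1})\rt)-\lt(\rep(x';\ww_{t-1})-\widehat{\rep}(x';\ww_{t-1})\rt)}_{(\mathrm{I})}+\underbrace{\lt\langle \nabla\rep(x;\ww_{0})-\nabla\rep(x';\ww_{0}),\,\ww_{t-1}-\ww_{0}\rt\rangle}_{(\mathrm{II})}.$$
The crucial structural difference from \cref{lemma:app-diffphiw} is that, because the two terms share the weight $\ww_{t-1}$ but differ in the input, the first-order parts no longer collapse into a single inner product against a fixed Jacobian; instead they leave the Jacobian-difference term $(\mathrm{II})$, which must be controlled through the regularity of $\rep$ in $x$.

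For term $(\mathrm{I})$, I would invoke \cref{lemma:app-linear2} separately at $x$ and at $x'$ with linearization center $\ww_{0}$ and radius $\omega=\bigol\lt(\wid^{-1/2}\diff\rt)$, which bounds $\|\ww_{t-1}-\ww_{0}\|_2$ (and later $\|\ww_*-\ww_0\|_2$) via \cref{lemma:app-linear,lemma:app-upper}. Each linearization error is then $\bigol\lt(\omega^{4/3}\len^{3}\dim^{1/2}\sqrt{\wid\log\wid}\rt)$, and substituting $\omega^{4/3}\wid^{1/2}=\bigol\lt(\wid^{-1/6}\diff^{4/3}\rt)$ yields the first summand $\bigol\lt(\diff^{4/3}\len^{3}\dim^{1/2}\sqrt{\log\wid}\,/\,\wid^{1/6}\rt)$ in both \cref{eq:diffphi_xx',eq:diffphi_xx''}. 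I would also verify that the radius $\omega$ satisfies the smallness requirement $\omega\le C_{1}\len^{-5}(\log\wid)^{-3/2}$ of \cref{lemma:app-linear2} for sufficiently large $\wid$.

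The main work is term $(\mathrm{II})$, and this is where the genuinely new ingredient enters. By Cauchy--Schwarz, $\|(\mathrm{II})\|_{2}\le \|\nabla\rep(x;\ww_{0})-\nabla\rep(x';\ww_{0})\|_{2}\,\|\ww_{t-1}-\ww_{0}\|_{2}$. I would bound the Jacobian difference using the $\repl$-Lipschitz-in-$x$ property of \cref{assumption:app-phi_lipschitz}, giving $\|\nabla\rep(x;\ww_{0})-\nabla\rep(x';\ww_{0})\|_{2}\le \repl\|x-x'\|_{2}\le 2\xb\repl$, where $\|x-x'\|_{2}\le 2\xb$ follows from the context bound in \cref{assumption:app-contextbdd}. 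Multiplying by the weight displacement $\|\ww_{t-1}-\ww_{0}\|_{2}=\bigol\lt(\delta^{3/2}/(\wid^{1/2}T\gs^{9/2}\log^{3}\wid)\rt)$ from \cref{lemma:app-upper} produces the second summand of \cref{eq:diffphi_xx'}. For \cref{eq:diffphi_xx''} I repeat the argument with $\ww_{*}$ in place of $\ww_{t-1}$, now bounding $\|\ww_{*}-\ww_{0}\|_{2}\le \wid^{-1/2}\diff$ via \cref{lemma:app-linear}, which gives the $2\xb\repl\,\diff/\sqrt{\wid}$ term.

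I expect term $(\mathrm{II})$ to be the main obstacle, since it is the one that has no analogue in \cref{lemma:app-diffphiw}: one must verify that the Jacobian is Lipschitz in $x$ at the required scale (guaranteed by \cref{assumption:app-phi_lipschitz}) and that its product with the exceedingly small weight displacement is of the stated lower order. The remaining bookkeeping---absorbing constants and the $\bigol$ reductions of $\omega$---is routine, and combining the bounds on $(\mathrm{I})$ and $(\mathrm{II})$ by the triangle inequality completes the proof.
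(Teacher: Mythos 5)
Your proposal is correct and follows essentially the same route as the paper's proof: the identical decomposition into the two linearization errors (bounded via \cref{lemma:app-linear2} with $\omega=\bigol(\wid^{-1/2}\diff)$) plus the Jacobian-difference inner product (bounded via the $\repl$-Lipschitz assumption and the weight-displacement bounds of \cref{lemma:app-upper} and \cref{lemma:app-linear}). The only addition is your explicit check that $\omega$ meets the smallness requirement of \cref{lemma:app-linear2}, which the paper leaves implicit.
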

\begin{proof} Using \cref{lemma:app-linear2}, $\rep(x;\ww_{t-1})-\rep(x';\ww_{t-1})$ can be decomposed into
    \begin{align}
    \rep(x;\ww_{t-1})-\rep(x';\ww_{t-1}) &= \rep(x;\ww_{t-1}) - \rep(x;\ww_{0}) - (\rep(x';\ww_{t-1}) - \rep(x';\ww_{0}))\nonumber
    \\&=\rep(x;\ww_{t-1}) - \widehat{\rep}(x;\ww_{t-1}) - (\rep(x';\ww_{i-1}) - \widehat{\rep}(x';\ww_{i-1}))\nonumber\\ \s + \langle\nabla (\rep(x;\ww_{0})-\rep(x';\ww_{0})),\ww_{t-1}-\ww_{0}\rangle. \no
\end{align} due to $\rep(x;\ww_0)=0$ for any $x\in\xs$. Using \cref{assumption:app-phi_lipschitz} and \cref{lemma:app-upper}, we have
\begin{align}
    \langle\nabla (\rep(x;\ww_{0})-\rep(x';\ww_{0})),\ww_{t-1}-\ww_{0}\rangle\leq 2\xb \repl \frac{\delta^{3/2}}{\sqrt{m}Tn^{9/2}\log^3m}.\nonumber
\end{align} Furthermore, using
\cref{lemma:app-linear2} to estimate the difference of $\rep(x;\ww_{t-1})-\widehat{\rep}(x;\ww_{t-1}))$ and $\rep(x';\ww_{t-1})-\widehat{\rep}(x';\ww_{t-1}))$, we can show that
\begin{align}
    \|\rep(x;\ww_{t-1})-\rep(x';\ww_{t-1})\|_2\leq w^{4/3}L^3\dim^{1/2}\sqrt{\wid\log \wid} + 2\xb \repl \frac{\delta^{3/2}}{\sqrt{m}Tn^{9/2}\log^3m}.\label{eq:phix-phix'2}
\end{align} By substituting  $w=\bigol\lt(\|\frac{\diff}{\sqrt{m}}\rt)$ into \cref{eq:phix-phix'2}, we can show \cref{eq:diffphi_xx'}. The case of $\ww_{*}$, i.e., \cref{eq:diffphi_xx''}, is shown in a similar manner. This completes the proof.
\end{proof}

We also provide the following elliptic potential lemma.
\begin{lemma}[\citet{lattimore2020bandit}]\label{lemma:epl}
Let $\{y_t\}_{t=1}^{\infty}$ be a sequence in $\mathbb{R}^d$ and $\lambda > 0$. Assume $\|y_t\|_2 \leq B$ for all $t$ and $\lambda \geq \max\{1, B^2\}$ for some $B > 0$. Let $A_t = \lambda I + \sum_{s=1}^t y_s y_s^\intercal.$ Then, the following inequalities hold:
\begin{align}
    \det(A_t) &\leq (\lambda + t B^2 /\dim)^\dim, \no\\
    \sum_{t=1}^T \|y_t\|_{A_t^{-1}}^2 &\leq 2 \log \frac{\det(A_T)}{\det(\lambda I)} \leq 2\dim \log\left(1 + \frac{TB^2}{\lambda \dim}\right).\no
\end{align}
\end{lemma}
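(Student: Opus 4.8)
The plan is to establish the two displayed inequalities separately, proving the determinant bound directly and then bootstrapping it into the potential-sum bound via a telescoping log-determinant identity. First I would handle the determinant bound. Since $A_t$ is symmetric positive definite, write its eigenvalues as $\mu_1,\dots,\mu_{\dim}$, so that $\det(A_t)=\prod_i \mu_i$ and $\mathrm{tr}(A_t)=\sum_i \mu_i$. The trace satisfies $\mathrm{tr}(A_t)=\dim\lambda+\sum_{s=1}^{t}\|y_s\|_2^2\leq \dim\lambda+tB^2$ using $\|y_s\|_2\leq B$. The AM--GM inequality applied to the (nonnegative) eigenvalues then gives $\det(A_t)\leq(\mathrm{tr}(A_t)/\dim)^{\dim}\leq(\lambda+tB^2/\dim)^{\dim}$, which is exactly the first claim.

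For the potential sum, the central step is the rank-one update structure $A_t=A_{t-1}+y_ty_t^\intercal$ with $A_0=\lambda I$. The matrix determinant lemma yields $\det(A_t)=\det(A_{t-1})(1+y_t^\intercal A_{t-1}^{-1}y_t)=\det(A_{t-1})(1+\|y_t\|_{A_{t-1}^{-1}}^2)$. To connect this ratio to the norm that actually appears in the sum, I would apply the Sherman--Morrison formula $A_t^{-1}=A_{t-1}^{-1}-A_{t-1}^{-1}y_ty_t^\intercal A_{t-1}^{-1}/(1+\|y_t\|_{A_{t-1}^{-1}}^2)$. Setting $a_t:=\|y_t\|_{A_{t-1}^{-1}}^2$, a one-line computation gives $\|y_t\|_{A_t^{-1}}^2=a_t-a_t^2/(1+a_t)=a_t/(1+a_t)$, so each summand is explicitly expressed through $a_t$.

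The remaining work is an elementary scalar inequality converting the per-step increment into a log-determinant ratio. I would use $\tfrac{a_t}{1+a_t}\leq\log(1+a_t)$ for $a_t\geq 0$ (which follows from comparing derivatives, since $1/(1+a)^2\leq 1/(1+a)$ and both sides vanish at $a=0$); combined with $\log(1+a_t)=\log\bigl(\det(A_t)/\det(A_{t-1})\bigr)$, summing over $t=1,\dots,T$ telescopes to $\sum_{t}\|y_t\|_{A_t^{-1}}^2\leq\log\bigl(\det(A_T)/\det(\lambda I)\bigr)$. The hypothesis $\lambda\geq\max\{1,B^2\}$ ensures $a_t\leq\|y_t\|_2^2/\lambda_{\min}(A_{t-1})\leq B^2/\lambda\leq 1$, so if one prefers the Lattimore--Szepesv\'ari route through $\|y_t\|_{A_{t-1}^{-1}}^2$ one may instead invoke the looser bound $x\leq 2\log(1+x)$ valid on $[0,1]$, which is what produces the stated factor of $2$. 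Finally, substituting $\det(A_T)\leq(\lambda+TB^2/\dim)^{\dim}$ from the first part and $\det(\lambda I)=\lambda^{\dim}$ gives $\log\bigl(\det(A_T)/\det(\lambda I)\bigr)\leq\dim\log(1+TB^2/(\lambda\dim))$, yielding the closed form $2\dim\log(1+TB^2/(\lambda\dim))$.

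The main obstacle is modest, since this is a classical result: the only delicate bookkeeping is converting the sum of $\|y_t\|_{A_t^{-1}}^2$ (with the current, $y_t$-inclusive matrix $A_t$) into a telescoping sum of log-determinant ratios via Sherman--Morrison, and verifying that the normalization condition $\lambda\geq\max\{1,B^2\}$ is precisely what confines each $a_t$ to $[0,1]$ so that the factor-$2$ elementary inequality applies uniformly across all rounds.
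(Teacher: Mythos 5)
Your proof is correct and is essentially the standard argument from the cited source (the paper itself states this lemma without proof, deferring to \citet{lattimore2020bandit}): AM--GM applied to the eigenvalues via the trace bound for the determinant inequality, and the matrix-determinant-lemma telescoping for the potential sum. Your Sherman--Morrison step also correctly handles the one nonstandard detail here---the statement uses the $y_t$-\emph{inclusive} matrix $A_t$ rather than $A_{t-1}$---where $\|y_t\|_{A_t^{-1}}^2 = a_t/(1+a_t) \le \log(1+a_t)$ already gives the bound with factor $1$, so the stated factor $2$ (and the role of $\lambda \ge \max\{1,B^2\}$ in confining $a_t$ to $[0,1]$) is only needed for the $A_{t-1}$-version, exactly as you observe.
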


We define the filtration that is used to apply the Bernstein-type inequality in martingale regime.
\begin{definition}[Filtration]
Let $(\Omega, \mathcal{F}, \mathbb{P})$ be a probability space. A \emph{filtration} is a family 
$\{\mathcal{F}_t\}_{t \in I}$ of sub-$\sigma$-algebras of $\mathcal{F}$ indexed by a totally ordered set \(\mathcal{T}\) (often time), 
such that for all $s, t \in \mathcal{T}$ with $s \le t$, we have
\[
  \mathcal{F}_s \;\subseteq\; \mathcal{F}_t \;\subseteq\; \mathcal{F}.
\]
\end{definition}
The sub-$\sigma$-algebra \(\mathcal{F}_t\) can be interpreted as 
the information available up to time \(t\), and \(\mathcal{F}_s\subseteq \mathcal{F}_t\) 
reflects the fact that one cannot lose information over time. Using the filtration, we can obtain the following Bernstein-type inequality.

\begin{lemma}[\citet{zhou2021nearlyvar}]\label{lemma:berstein}
Let $\{\mathcal{F}_t\}_{t=1}^\infty$ be a filtration, and $\{y_t, r_t\}_{t \geq 1}$ a stochastic process such that $y_t \in \mathbb{R}^d$ is $\mathcal{F}_t$-measurable and $r_t \in \mathbb{R}$ is $\mathcal{F}_{t+1}$-measurable. Fix $R, G, \sigma, \lambda > 0$, and $\th^* \in \mathbb{R}^d$. For $t \geq 1$, suppose that $r_t, y_t$ satisfy
\begin{align*}
    |r_t| &\leq R, \\
    \mathbb{E}[r_t \mid \mathcal{F}_t] &= 0, \\
    \mathbb{E}[r_t^2 \mid \mathcal{F}_t] &\leq \sigma^2, \\
    \|y_t\|_2 &\leq \xb.
\end{align*}
Then, the following inequality holds with probability at least $1 - \delta$:
\[
\left\|
\sum_{i=1}^t y_i r_i
\right\|_{A_{t}^{-1}}\leq 8 \sigma \sqrt{d \log\left(1 + \frac{t G^2}{d \lambda}\right) \log\left(\frac{4t^2}{\delta}\right)} + 4 R \log\left(\frac{4t^2}{\delta}\right)
\]
 for any $t \geq 1$, where $A_t = \lambda I + \sum_{i=1}^{t-1} y_i y_i^\intercal$.

\end{lemma}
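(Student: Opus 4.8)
The plan is to prove this self-normalized Bernstein inequality by the \emph{method of mixtures} (pseudo-maximization), following the subgaussian template of \citet{lattimore2020bandit} but sharpening the underlying exponential supermartingale so that it exploits the conditional second-moment bound $\mathbb{E}[r_i^2\mid\mathcal{F}_i]\le\sigma^2$, not merely the range $|r_i|\le R$. Write $S_t=\sum_{i=1}^t y_i r_i$ and $A_t=\lambda I+\sum_{i=1}^{t-1}y_iy_i^\intercal$. The stated bound has a variance-dominated leading term scaling like $\sigma$ together with an additive range term scaling like $R$; the whole point of the Bernstein refinement over the classical Abbasi--Yadkori-type bound (which would only yield a leading factor of $R$) is that the dominant term carries $\sigma$, and this is exactly what propagates into the $d\sqrt{\sum_t\sigma_t^2}$ dependence of the regret.

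First I would construct the exponential supermartingale. Fixing $\xi\in\mathbb{R}^d$, I use the elementary inequality $e^x\le 1+x+x^2$ for $|x|\le 1$ together with $\mathbb{E}[r_i\mid\mathcal{F}_i]=0$ and $\mathbb{E}[r_i^2\mid\mathcal{F}_i]\le\sigma^2$ to obtain the conditional moment-generating-function bound $\mathbb{E}\bigl[\exp(\langle\xi,y_i\rangle r_i)\mid\mathcal{F}_i\bigr]\le\exp\bigl(\langle\xi,y_i\rangle^2\sigma^2\bigr)$, valid on the range $|\langle\xi,y_i\rangle|\,R\le 1$ (here $y_i$ is $\mathcal{F}_i$-measurable, so $\langle\xi,y_i\rangle$ is predictable). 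Consequently $M_t(\xi)=\exp\bigl(\langle\xi,S_t\rangle-\sigma^2\sum_{i=1}^t\langle\xi,y_i\rangle^2\bigr)$ is a nonnegative supermartingale with $M_0(\xi)=1$ whenever the range constraint holds for all $i\le t$. Integrating against a Gaussian mixing density $h(\xi)=\mathcal{N}(0,\rho^{-1}I)$ and applying Fubini, $\bar M_t=\int M_t(\xi)h(\xi)\,d\xi$ is again a nonnegative supermartingale with $\mathbb{E}[\bar M_0]=1$. Completing the square in the Gaussian integral, and choosing $\rho$ so that the resulting precision matrix equals $\sigma^2\bar A_t$ with $\bar A_t=\lambda I+\sum_{i\le t}y_iy_i^\intercal$, yields the closed form $\bar M_t=\sqrt{\det(\lambda I)/\det(\bar A_t)}\,\exp\bigl(\tfrac{1}{4\sigma^2}\|S_t\|_{\bar A_t^{-1}}^2\bigr)$.

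By Ville's maximal inequality, $\mathbb{P}(\bar M_t\ge 1/\delta)\le\delta$, which on the complementary event gives $\|S_t\|_{\bar A_t^{-1}}^2\le 4\sigma^2\bigl(\log\tfrac{1}{\delta}+\tfrac12\log\tfrac{\det(\bar A_t)}{\det(\lambda I)}\bigr)$. I would then invoke the elliptical potential lemma (\cref{lemma:epl}) with $B=G$ to replace the log-determinant by $\log\tfrac{\det(\bar A_t)}{\det(\lambda I)}\le d\log\bigl(1+tG^2/(d\lambda)\bigr)$, which is precisely the logarithmic factor in the target bound. The indexing discrepancy between the numerator (summed to $t$) and the matrix $A_t$ (summed to $t-1$) is reconciled by a Sherman--Morrison rank-one downdate: since $y_t^\intercal\bar A_t^{-1}y_t\le G^2/(G^2+\lambda)<1$, one has $\|S_t\|_{A_t^{-1}}^2=\|S_t\|_{\bar A_t^{-1}}^2+\tfrac{(y_t^\intercal\bar A_t^{-1}S_t)^2}{1-y_t^\intercal\bar A_t^{-1}y_t}$, and the extra term is lower order.

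The main obstacle is the range restriction $|\langle\xi,y_i\rangle|\,R\le 1$, which is exactly what separates the easy subgaussian case from the Bernstein case: the clean Gaussian mixture places mass on arbitrarily large $\xi$, where the variance-based moment-generating-function bound of the second step fails, so the pure mixture cannot carry $\sigma$ in the leading term for all of $\mathbb{R}^d$. I would resolve this by truncating each increment at the scale where the exponential tilt becomes large and controlling the truncated mass by a peeling/union-bound argument over a geometric grid of scales (equivalently, over the magnitude of the self-normalized statistic). The deterministic bound $|r_i|\le R$ governs the truncated contribution and produces the additive $4R\log(4t^2/\delta)$ term, while the number of grid levels together with a per-level failure budget allocated as $\delta/\mathrm{poly}(t)$ and summed accounts for the $\log(4t^2/\delta)$ factor appearing in \emph{both} terms (and converts the sum-of-logs from the mixture step into the product-of-logs form of the statement). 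Assembling the variance term from the mixture step with the range term from the truncation step and collecting constants yields the claimed inequality with probability at least $1-\delta$, matching the form established in \citet{zhou2021nearlyvar}.
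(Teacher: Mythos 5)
You should first know that the paper contains no proof of this lemma at all: it is imported verbatim from \citet{zhou2021nearlyvar} (their Bernstein-type self-normalized bound), and the proof there is structurally different from yours. Rather than a method of mixtures, \citet{zhou2021nearlyvar} expand $\|S_t\|_{\bar A_t^{-1}}^2$ recursively via $\bar A_t=\bar A_{t-1}+y_ty_t^\intercal$ and Sherman--Morrison, apply the \emph{scalar} Freedman inequality to the resulting cross-term martingale $\sum_i r_i\,y_i^\intercal \bar A_i^{-1}S_{i-1}$, control its predictable variance with the elliptic potential bound (the paper's \cref{lemma:epl}), and close the self-referential estimate by induction over $t$ with a $\delta/(2t^2)$ union bound; that is where the characteristic \emph{product} $\sigma\sqrt{d\log(1+tG^2/(d\lambda))\log(4t^2/\delta)}$ and the specific constants $8$ and $4$ come from. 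So your route is genuinely different --- and it has a genuine gap.

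Your diagnosis of the obstacle is exactly right: the Bernstein MGF bound $\mathbb{E}[\exp(\langle\xi,y_i\rangle r_i)\mid\mathcal{F}_i]\le\exp(\sigma^2\langle\xi,y_i\rangle^2)$ holds only on the deterministic ball $\|\xi\|_2\le 1/(GR)$, while the conjugate Gaussian mixture must integrate over all of $\mathbb{R}^d$ to produce the closed form $\sqrt{\det(\lambda I)/\det(\bar A_t)}\,\exp\bigl(\|S_t\|^2_{\bar A_t^{-1}}/(4\sigma^2)\bigr)$. But your proposed repair does not engage with this. ``Truncating each increment'' attacks the wrong object: the increments are already bounded by $R$; what is unbounded is the mixing variable $\xi$. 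Restricting the mixture to the ball destroys the closed form, and lower-bounding the truncated integral by the full one requires the maximizer $\xi_t^*=\bar A_t^{-1}S_t/(2\sigma^2)$ to lie well inside the ball --- i.e., an a priori bound on the very self-normalized quantity being proved. Breaking that circularity (per-level stopped supermartingales, how the number of peeling levels produces the $\log(4t^2/\delta)$ factor in \emph{both} terms, how the constants $8$ and $4$ emerge) is the actual content of the proof, and it is compressed into one speculative sentence in your sketch; arguments of this type do exist in the logistic-bandit literature, so the route is not hopeless, but as written the proof has a hole at precisely its hardest point. A secondary issue: your Sherman--Morrison downdate for the $t$ versus $t-1$ indexing gives only $\|S_t\|^2_{A_t^{-1}}\le(1+G^2/\lambda)\|S_t\|^2_{\bar A_t^{-1}}$, since $\|y_t\|^2_{\bar A_t^{-1}}\le G^2/(G^2+\lambda)$ is a constant, not $o(1)$; this multiplicative inflation is not ``lower order'' and would break the stated constants. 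In \citet{zhou2021nearlyvar} the Gram matrix includes the $t$-th rank-one term, so the discrepancy lies in the paper's restatement rather than being something your argument should absorb.
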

The following lemma provides an inequality for the product of $L_2$ boundedness for the product of a Gram matrix and extra term.
\begin{lemma}[\citet{shallow}]\label{lemma:extraterm}
Let $\{y_t\}_{t=1,2,\ldots}$ be a real-valued sequence such that $|y_t| \leq D$ for some constant $D > 0$.
Let $A_t = \lambda I + \sum_{s=1}^t y_s y_s^\intercal$ such that  $y_t \in \mathbb{R}^d$ and $\|y_t\|_2 \leq B$ for all $t \geq 1$ and some constants $\lambda, B > 0$. Then the following inequality holds:
\[
\left\| A_t^{-1} \sum_{s=1}^t \phi_s \zeta_s \right\|_2 \leq 2Dd, 
\] for all $t\geq 1.$
\end{lemma}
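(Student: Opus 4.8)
The plan is to bound the Euclidean norm of $v := A_t^{-1}\sum_{s=1}^t y_s\zeta_s$ (I read the summand $\phi_s\zeta_s$ as $y_s\zeta_s$, consistent with the definition of $A_t$) by exploiting the \emph{self-normalizing} structure of $A_t$, namely that $A_t$ itself contains the outer products $y_sy_s^\intercal$. First I would apply the triangle inequality together with $|\zeta_s|\le D$ to get $\|v\|_2 \le D\sum_{s=1}^t \|A_t^{-1}y_s\|_2$, reducing the problem to controlling each term $\|A_t^{-1}y_s\|_2$ and their sum. An equivalent and slightly cleaner route I would keep in reserve is the push-through identity $A_t^{-1}Y = Y(\lambda I + Y^\intercal Y)^{-1}$ with $Y=[y_1,\dots,y_t]$, which gives the exact expression $\|v\|_2^2 = \zeta^\intercal\big(B^{-1}-\lambda B^{-2}\big)\zeta$ for $B=\lambda I+Y^\intercal Y$; this form makes transparent where a $t$-dependence could enter and how the regularizer $\lambda$ damps it.

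Second, I would convert the ordinary norm $\|A_t^{-1}y_s\|_2$ into the self-normalized ($A_t^{-1}$-weighted) norm of $y_s$. Since $A_t\succeq\lambda I$ we have $A_t^{-1}\preceq\lambda^{-1}I$, and conjugating by $A_t^{-1/2}$ yields $A_t^{-2}\preceq\lambda^{-1}A_t^{-1}$, so that $\|A_t^{-1}y_s\|_2=\sqrt{y_s^\intercal A_t^{-2}y_s}\le\lambda^{-1/2}\|y_s\|_{A_t^{-1}}$. The leverage scores are then governed by the trace identity $\sum_{s=1}^t\|y_s\|_{A_t^{-1}}^2=\mathrm{tr}\big(A_t^{-1}\textstyle\sum_s y_sy_s^\intercal\big)=\mathrm{tr}\big(A_t^{-1}(A_t-\lambda I)\big)=d-\lambda\,\mathrm{tr}(A_t^{-1})\le d$, using $\sum_s y_sy_s^\intercal=A_t-\lambda I$ and positivity of $A_t^{-1}$; a Schur-complement argument additionally gives the pointwise bound $\|y_s\|_{A_t^{-1}}\le1$. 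These two facts are the quantitative heart of the estimate.

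Third, I would assemble the pieces to reach the stated $2Dd$, passing through coordinates via $\|v\|_2\le\|v\|_1=\sum_{j=1}^d |e_j^\intercal A_t^{-1}\sum_s y_s\zeta_s|$ and controlling each coordinate by $2D$ with the displays above, so that the dimension factor $d$ enters once. The step I expect to be the main obstacle is precisely making this estimate \emph{uniform in $t$}: a naive chaining of the triangle inequality with Cauchy--Schwarz over the $t$ summands introduces a spurious $\sqrt{t}$ factor, and the whole point is that the self-normalizing structure $\sum_s\|y_s\|_{A_t^{-1}}^2\le d$, together with the normalization $\lambda\ge\max\{1,B^2\}$ used in \cref{lemma:epl}, must be invoked to cancel this $t$-dependence and to collapse the remaining spectral constants into the single factor $2$. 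Verifying that these bounds hold simultaneously over all $s$ and combine into a genuinely $t$-free quantity is the crux; once it is in place, the surviving manipulations are routine applications of Cauchy--Schwarz and the spectral comparisons collected above.
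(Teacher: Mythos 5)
Your proposal assembles the right standard ingredients (the triangle inequality, $A_t^{-2}\preceq\lambda^{-1}A_t^{-1}$, the trace identity $\sum_{s=1}^t\|y_s\|_{A_t^{-1}}^2=d-\lambda\,\mathrm{tr}(A_t^{-1})\le d$, and the pointwise bound $\|y_s\|_{A_t^{-1}}\le1$), but it never actually closes the one step that matters. After the triangle inequality you need $\sum_{s=1}^t\|A_t^{-1}y_s\|_2\le 2d$ (or a coordinatewise analogue), whereas your trace identity only controls the sum of \emph{squares} $\sum_s\|y_s\|_{A_t^{-1}}^2$; passing from the latter to the former via Cauchy--Schwarz costs exactly the $\sqrt{t}$ you flag. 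You correctly identify this as ``the crux,'' but you then assert that the self-normalizing structure ``must be invoked to cancel this $t$-dependence'' without exhibiting the inequality that does so. That assertion is the entire content of the lemma, so as written the proposal is not a proof. The same objection applies to the fallback route $\|v\|_2\le\|v\|_1$ with each coordinate bounded by $2D$: nothing in your displays bounds $|e_j^\intercal A_t^{-1}\sum_s y_s\zeta_s|$ by a $t$-free quantity.

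Worse, the missing step cannot be supplied under the hypotheses as stated. Take $d=1$, $\lambda=1$, $B=1$, $y_s=t^{-1/2}$ for $s=1,\dots,t$, and $\zeta_s=D$. Then $A_t=1+\sum_{s=1}^t t^{-1}=2$ and
\[
\Bigl\| A_t^{-1}\sum_{s=1}^t y_s\zeta_s\Bigr\|_2=\tfrac{1}{2}\,t\cdot t^{-1/2}D=\tfrac{\sqrt{t}}{2}\,D,
\]
which exceeds $2Dd=2D$ for $t>16$, even though $|\zeta_s|\le D$, $\|y_s\|_2\le B$, and $\lambda=\max\{1,B^2\}$ (so the normalization you planned to invoke from \cref{lemma:epl} holds). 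In the same example $\sum_s\|A_t^{-1}y_s\|_2=\sqrt{t}/2$, confirming that the intermediate inequality you need is false in general: the sum-of-squares bound $\sum_s\|y_s\|_{A_t^{-1}}^2\le d$ genuinely does not imply a $t$-free bound on the sum of first powers. Note that the paper itself offers no proof to compare against --- the lemma is imported verbatim from \citet{shallow} --- so the resolution must lie in a hypothesis of the original source that is not reproduced here (e.g., a normalization of the $\|y_s\|_2$ or a different conclusion carrying $\|\zeta\|_2$ or $\sqrt{t}$). Any correct write-up has to identify and use that extra hypothesis explicitly; no rearrangement of the steps you list will yield $2Dd$ from the statement as given.
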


\section{PROOF OF CONFIDENCE INTERVAL LEMMA}\label{sec:confidence_interval}
Let 
$(\theta_{t-1}, \ww_{t-1})$ be the parameters updated at round $t-1$ 
under the loss in~\cref{eq:app-total_loss} and
\begin{align}
\begin{aligned}
    U_{t-1} &= \sum_{i=1}^{t-1}\dot{\link}(\ff_{i,i}(\overline{\th}_{i,t})) \frac{\Delta\rep_{i,i}\Delta\rep^{\intercal}_{i,i}}{\wi^2}+ \lambda' I, 
    \\
    M_t &= \sum_{i=1}^{t-1}\left(\link(\th_{t-1}^\intercal\Delta\rep_{i,t}) - \link(\ff_{i,t}(\th_{t-1})) \right)\frac{\Delta\rep_{i,i}}{\wi^2},
\end{aligned}\label{eq:diffg}
\end{align} where $\lambda'=\linkb \lambda$ and  $\overline{\th}_{i,t}=c\th_{t-1} + (1-c)\th_*$ for some $c\in[0,1]$ satisfying
    $\link(\ff_{i,i}(\th_{t-1}))-\link(\ff_{i,i}(\th_*))=\dot{\link}(\ff_{i,i}(\overline{\th}_{i,t}))\Delta\rep_{i,i}^\intercal(\th_{t-1}-\th_*).$
Using the notations above, we restate a lemma that provides an upper bound on the confidence interval 
between $\th_{t-1}$ and $\th_{*}$, accounting for an additional bias term.

 \begin{lemma} \label{lem:app-concentration_inequality} 
 Let $\wid$ be a width of neural network as
$
    \wid=\textrm{poly}(\epi, \len,\linkb, \linku, \linkl, \lambda, \log(TK/\delta), 1/\delta, \|\mathbf{\util} - \widetilde{\mathbf{\util}}\|_{\ntk^{-1}})$ such that 
\begin{align}m=\widetilde{\Omega}(T^6), \label{eq:app-width}
\end{align} where $\widetilde{\Omega}$ is the asymptotic lower bound ignoring logarithm factors and all other parameters. 
    Then the following concentration inequality holds:
\begin{align}
&A_t:=\lt\|\th_{t-1}-\th_* + U_{t-1}^{-1} M_{t}\rt\|_{V_{t-1}}\label{eq:app-concentration_inequality_main}\\&=
\bigol\lt(\sqrt{d}+\epsilon^{-1}+\lt(\frac{d}{\epsilon^4}+d^3\rt)\frac{\sqrt{t}}{m^{1/6}}\|\mathbf{\util} - \widetilde{\mathbf{\util}}\|_{\ntk^{-1}}^{4/3}\rt)
\no
\end{align} ignoring all logarithmic and other terms.
\end{lemma}

The existence of $\overline{\th}_{i,t}$ satisfying \cref{eq:diffg} in~\cref{lem:app-concentration_inequality} is true due to the mean-value theorem~\citep{rudin1964principles}. This lemma is identical to the lemma introduced in the main manuscript. To prove~\cref{lem:app-concentration_inequality}, we estimate the left-hand side by showing a loose upper bound for $\|\th_{t-1}\|_2$.

{\color{black}\textbf{Proof roadmap (Bootstrap / Iterative Self-Improvement).} The proof of \cref{lem:app-concentration_inequality} proceeds in three stages:
\begin{enumerate}
    \item \textbf{Loose bound (Lemma~\ref{lemma:app_theta_t_bound}):} Using only the loss function, we obtain $\|\th_{t-1}\|_2 = \mathcal{O}(\sqrt{t/\lambda}/\epsilon)$. This bound grows with time and is insufficient for sublinear regret.
    \item \textbf{First refinement (Lemma~\ref{lemma:gt+mt}):} Substituting the loose bound into the decomposition of $\|\gg_t(\th_{t-1})+M_t\|_{V_{t-1}^{-1}}$ yields an intermediate bound of order $\bigol(td/(\epsilon^2 m^{1/6}))$. For a sufficiently large width $m = \widetilde{\Omega}(t^6)$, this allows us to derive a \emph{time-independent} bound $\|\th_{t-1}\|_2 = \bigol(d^{3/2} + \epsilon^{-2})$ (Eq.~\ref{eq:tight_theta_t-1}).
    \item \textbf{Final refinement (Lemma~\ref{lemma:gt+mt2}):} Substituting the improved time-independent bound back yields the tight result $\|\gg_t(\th_{t-1})+M_t\|_{V_{t-1}^{-1}} = \bigol(\sqrt{t}/m^{1/6})$, which gives the final concentration inequality.
\end{enumerate}
This iterative strategy is necessary because dueling bandits lack a closed-form estimator---the key distinction from standard neural bandits where $\hat{\theta}_{t-1} = V_{t-1}^{-1}b_{t-1}$ trivially gives a time-independent norm bound.}

{\color{black}\textbf{Stage 1: Loose bound on $\|\th_{t-1}\|_2$.} The following lemma provides an initial, time-dependent bound by comparing the loss at $(\th_{t-1}, \ww_{t-1})$ with the loss at the initialization $(\th_0, \ww_0)$. Since the initialized representation $\rep(x;\ww_0)=0$, the initial loss reduces to $\frac{t}{\epsilon^2}\log 2$, giving a simple upper bound on $\|\th_{t-1}-\th_0\|_2$.}

\begin{lemma} \label{lemma:app_theta_t_bound}
For the given $\th_{0}\sim \gauss(0,1/\dim)$ and $\th_{t-1}$  updated by \eqref{eq:app-total_loss}, the following holds:
    \begin{align}
    \lt\|\th_{t-1}\rt\|_2 \leq 2\left(2+\sqrt{\dim^{-1}\log\left(\frac{1}{\delta}\right)}\right)  + \frac{1}{\epsilon}\sqrt{\frac{t}{\lambda}\log4}.\label{eq:app-theta_t_bounded1}
\end{align} 
\end{lemma}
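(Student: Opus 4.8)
The plan is to bound $\|\th_{t-1}\|_2$ by the triangle inequality $\|\th_{t-1}\|_2 \le \|\th_0\|_2 + \|\th_{t-1}-\th_0\|_2$, control the first summand directly by \cref{lemma:inittheta0}, and control the second by an energy (optimality) argument on the regularized log-likelihood loss \cref{eq:app-total_loss}. The target is to show $\|\th_{t-1}-\th_0\|_2 \le \tfrac{1}{\epsilon}\sqrt{\tfrac{t}{\lambda}\log 4}$, after which adding the \cref{lemma:inittheta0} bound on $\|\th_0\|_2$ gives the claim.

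For the second summand I would use that, after the final update (step~10 of \cref{alg:main}, which keeps $\ww_{t-1}$ fixed), $\th_{t-1}$ is the exact global minimizer of $\th \mapsto \loss_t(\th,\ww_{t-1})$; this map is convex in $\th$ because each per-sample term $-\log\link(\cdot)$ is convex for an exponential-family link evaluated at an argument linear in $\th$, plus a convex quadratic regularizer. Optimality then yields $\loss_t(\th_{t-1},\ww_{t-1}) \le \loss_t(\th_0,\ww_{t-1})$. Since $\link \le 1$ gives $-\log\link(\cdot)\ge 0$, the whole data-fit sum is nonnegative, so dropping it on the left produces $\tfrac{\lambda}{2}\|\th_{t-1}-\th_0\|_2^2 \le \loss_t(\th_0,\ww_{t-1})$. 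At the reference point $\th_0$ the regularizer vanishes, so it only remains to upper bound the data-fit term evaluated at $\th_0$.

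The crux is to show that this data-fit term is at most $(t-1)\log 2/\epsilon^2$. Two facts drive this: first, $\wi \ge \epsilon$ in the variance-aware setting, so $1/\wi^2 \le 1/\epsilon^2$; second, the argument $(-1)^{1-o_i}\th_0^\intercal\big(\rep(x_{i,\sti};\ww_{t-1})-\rep(x_{i,\ndi};\ww_{t-1})\big)$ is negligibly small. The second fact follows because the initialization \cref{eq:app-init} makes $\rep(x;\ww_0)\equiv 0$, while \cref{lemma:app-upper} forces $\|\ww_{t-1}-\ww_0\|_2$ to be of order $\wid^{-1/2}$ and $\|\nabla\rep(x;\ww_0)\|_F = \bigo(\sqrt{\dim\len\wid})$; a first-order expansion then makes $\|\rep(x;\ww_{t-1})\|_2 = \|\rep(x;\ww_{t-1})-\rep(x;\ww_0)\|_2$ vanish with the width. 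Combined with $\|\th_0\|_2 = \bigo(1)$ from \cref{lemma:inittheta0}, the argument of $\link$ is $o(1)$, so by the symmetric link value $\link(0)=1/2$ and continuity one gets $-\log\link(\cdot) \le \log 2 + o(1)$, and the $o(1)$ slack, summed over the $t-1$ terms, is absorbed by the sufficiently large width into the gap between $(t-1)\log 2$ and $t\log 2$.

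Plugging this in gives $\tfrac{\lambda}{2}\|\th_{t-1}-\th_0\|_2^2 \le \tfrac{t\log 2}{\epsilon^2}$, hence $\|\th_{t-1}-\th_0\|_2 \le \tfrac{1}{\epsilon}\sqrt{\tfrac{t}{\lambda}\log 4}$ using $2\log 2=\log 4$, completing the argument. I expect the main obstacle to be the second fact above: one must argue rigorously that evaluating the \emph{trained} representation at the \emph{untrained} parameter $\th_0$ yields an essentially-zero utility difference, so that the log-likelihood at the reference point equals the symmetric value $\log 2$ up to negligible width-dependent error. This is precisely what pins the clean constant $\log 4$ and, crucially, prevents the bound from inheriting the much larger worst-case magnitude $\|\rep(\cdot;\ww_{t-1})\|_2 = \bigo(\sqrt{\dim\log\gs\log(TK/\delta)})$ also available from \cref{lemma:app-upper}.
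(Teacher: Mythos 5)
Your proposal is correct and follows the same core strategy as the paper: triangle inequality, then an energy argument comparing the regularized loss at the learned parameters to the loss at a reference point where the link argument is (near) zero, yielding the per-term bound $\log 2/\epsilon^2$ and hence the $\tfrac{1}{\epsilon}\sqrt{\tfrac{t}{\lambda}\log 4}$ term. The one genuine difference is the choice of reference point. The paper compares $\loss_t(\th_{t-1},\ww_{t-1})$ directly to $\loss_t(\th_0,\ww_0)$: since $\rep(\cdot\,;\ww_0)\equiv 0$ by the initialization in \cref{eq:app-init}, the link argument is \emph{exactly} zero there, $-\log\link(0)=\log 2$ exactly, and no width-dependent approximation is needed at all. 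You instead compare to $\loss_t(\th_0,\ww_{t-1})$, which is cleanly justified by step~10 of \cref{alg:main} (exact minimization over $\th$ with $\ww_{t-1}$ fixed, a convex problem), but forces you into the approximation argument you correctly flag as the crux --- showing $\rep(\cdot\,;\ww_{t-1})\approx\rep(\cdot\,;\ww_0)=0$ and absorbing the accumulated $o(1)$ slack into the gap between $(t-1)\log 2$ and $t\log 2$. That argument is workable (via \cref{lemma:app-upper} and \cref{lemma:app-linear2}), though note the first-order term $\langle\nabla\rep(x;\ww_0),\ww_{t-1}-\ww_0\rangle$ does not literally vanish with the width $\wid$ (the $\sqrt{\wid}$ factors cancel); it is only small in the other parameters, so the absorption step needs slightly more care than ``sufficiently large width.'' In exchange, your route makes rigorous a step the paper leaves implicit: the paper's inequality $\loss_t(\th_{t-1},\ww_{t-1})\le\loss_t(\th_0,\ww_0)$ silently assumes the joint training procedure never increases the loss above its initial value, whereas your comparison point is guaranteed by the algorithm itself.
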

\begin{proof}
First, note that for any $x$, we have $\rep\bigl(x;\ww_0\bigr) = 0$ by \cref{eq:app-init}. Hence,
\[
\link\bigl(\th_0^\intercal \Delta\rep_{t,1}\bigr) 
\;=\; 
\link(0) 
\;=\; 
\frac{1}{2},
\] for a link function $\link$. 
By substituting the above into \eqref{eq:app-total_loss}, we obtain
\[
    \frac{\lambda}{2}\,\|\th_{t-1} - \th_0\|_2^2 
    \leq
    \loss_{t}\bigl(\th_{t-1}, \ww_{t-1}\bigr) 
    \leq
    \loss_{t}\bigl(\th_0, \ww_0\bigr) 
    \leq
    \frac{t}{\epsilon^2}\log 2.
\]
It follows that
\begin{align}
     \|\th_{t-1}\|_2 &\leq \|\th_{0}\|_2 + \frac{1}{\epsilon}\sqrt{\frac{t}{\lambda}\log4} \nonumber\\
    &\leq 2\left(2+\sqrt{\dim^{-1}\log\left(\frac{1}{\delta}\right)}\right)  + \frac{1}{\epsilon}\sqrt{\frac{t}{\lambda}\log4}. \no
\end{align}
where the second inequality uses \cref{lemma:inittheta0}. This completes the proof.
\end{proof}
{\color{black}\textbf{Auxiliary functions.} We now introduce the auxiliary function $\ff_{i,j}$ and $\gg_t$ that decompose the confidence interval into manageable components. The function $\ff_{i,j}(\th)$ linearizes the utility around the initial weights $\ww_0$, while $\gg_t(\th)$ aggregates the link function differences across all rounds---it plays the role of a ``gradient-like'' quantity whose norm controls the concentration inequality.}

We define the function $\ff_{i,j}:\mathbb{R}^{\dim}\rightarrow \mathbb{R}$ for $i, j \in [T]$:
\begin{align}
    \ff_{i,j}\left(\th\right)
    &= \th^\intercal \Delta\rep_{i,j} 
       + \th_{0}^\intercal \nabla\Delta\rep_{i,1}\bigl(\ww_{*}-\ww_{0}\bigr),
    \label{eq:F}
\end{align}
where $\Delta\rep_{i,1}$ and $\Delta\rep_{i,j}$ are given in~\cref{eq:not}. 
Using $\ff_{i,j}$, we additionally define the following auxiliary  function $\gg_t:\mathbb{R}^\dim\rightarrow\mathbb{R}^\dim$:
\begin{align}
    \gg_t\left(\th\right)
    &= \sum_{i=1}^{t-1}
       \link\bigl(\ff_{i,i}\bigl(\th\bigr)\bigr)
             - \link\bigl(\ff_{i,i}\bigl(\th_*\bigr)\bigr)
       \frac{\Delta\rep_{i,i}}{\wi^2}
       + \lambda\bigl(\th-\th_{0}\bigr).\no
\end{align}
We establish the relationship between this confidence interval 
and the function $\gg_t$. 

{\color{black}\textbf{Key decomposition.} The following lemma shows that the confidence interval $\|\th_{t-1}-\th_*+U_{t-1}^{-1}M_t\|_{V_{t-1}}$ is controlled by $\|\gg_t(\th_{t-1})+M_t\|_{V_{t-1}^{-1}}$. The core idea is that $\gg_t(\th_{t-1})-\gg_t(\th_*)+M_t = U_{t-1}(\th_{t-1}-\th_*+U_{t-1}^{-1}M_t)$ via the mean-value theorem, and $U_{t-1}\geq \linkb V_{t-1}$, so the $V_{t-1}$-norm of the left side lower-bounds the $V_{t-1}$-norm of the right side.}

 \begin{lemma} \label{lem:app-decom} Let $\theta_{*}$ be the parameter defined in~\cref{eq:app-kernel}, and let
$(\theta_{t-1}, \ww_{t-1})$ be the parameters updated at round $t-1$
under the loss in~\cref{eq:app-total_loss}. 
Then the following inequality holds:
\begin{align}
\lt\|\th_{t-1}-\th_* + U_{t-1}^{-1} M_{t}\rt\|_{V_{t-1}}&\leq \frac{1}{\linkb}
    \lt\|\gg_t\left(\th_{t-1}\right)+M_{t} - \lambda\left(\th_*-\th_{0}\right)\rt\|_{V_{t-1}^{-1}} \nonumber\\
    &\leq \frac{1}{\linkb}\left(\lambda\left\|\th_*-\th_{0}\right\|_{V_{t-1}^{-1}}+\lt\|\gg_t(\th_{t-1})+M_t\rt\|_{V_{t-1}^{-1}}\right).\nonumber
\end{align}
\end{lemma}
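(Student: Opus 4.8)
The plan is to treat this lemma as a purely deterministic, matrix-analytic identity: all the probabilistic and representation-drift content is deferred to the later task of bounding $\lt\|\gg_t(\th_{t-1})+M_t\rt\|_{V_{t-1}^{-1}}$, so here I only need to (i) linearize $\gg_t$ around $\th_*$ and (ii) transfer a $V_{t-1}$-norm into a $V_{t-1}^{-1}$-norm at the cost of a factor $1/\linkb$. The enabling observation is that the bias term $\th_0^\intercal\nabla\Delta\rep_{i,1}(\ww_*-\ww_0)$ appearing in $\ff_{i,i}$ does not depend on its first argument, so $\ff_{i,i}(\th_{t-1})-\ff_{i,i}(\th_*)=(\th_{t-1}-\th_*)^\intercal\Delta\rep_{i,i}$ and the mean-value identity \cref{eq:diffg} applies to each summand verbatim.

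First I would substitute \cref{eq:diffg} into every term of $\gg_t(\th_{t-1})$ and collect the resulting rank-one pieces, obtaining
\[
\gg_t(\th_{t-1})=\lt(\sum_{i=1}^{t-1}\dot{\link}(\ff_{i,i}(\overline{\th}_{i,t}))\frac{\Delta\rep_{i,i}\Delta\rep_{i,i}^\intercal}{\wi^2}\rt)(\th_{t-1}-\th_*)+\lambda(\th_{t-1}-\th_0).
\]
Subtracting $\lambda(\th_*-\th_0)$ converts the regularization difference into $\lambda(\th_{t-1}-\th_*)$, so the $\dot{\link}$-weighted outer-product sum together with the ridge term reassembles into $U_{t-1}$, giving the identity $\gg_t(\th_{t-1})-\lambda(\th_*-\th_0)=U_{t-1}(\th_{t-1}-\th_*)$. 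Adding $M_t$ and factoring $U_{t-1}$ out on the right yields
\[
\gg_t(\th_{t-1})+M_t-\lambda(\th_*-\th_0)=U_{t-1}\lt(\th_{t-1}-\th_*+U_{t-1}^{-1}M_t\rt),
\]
which already exhibits the left-hand-side vector of the lemma as $U_{t-1}^{-1}$ applied to the right-hand-side vector.

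The crux is the norm transfer. Because $\dot{\link}\ge\linkb$ on $\xs$ and the ridge constant of $U_{t-1}$ is chosen compatibly, the weighting dominates the corresponding terms of $V_{t-1}$, so $U_{t-1}\succeq\linkb V_{t-1}\succ0$. Writing $B=V_{t-1}^{-1/2}U_{t-1}V_{t-1}^{-1/2}\succeq\linkb I$ and conjugating, one checks $V_{t-1}^{-1/2}\lt(U_{t-1}V_{t-1}^{-1}U_{t-1}\rt)V_{t-1}^{-1/2}=B^2\succeq\linkb^2 I$, i.e.\ $U_{t-1}V_{t-1}^{-1}U_{t-1}\succeq\linkb^2 V_{t-1}$; this is the Courant--Fischer / operator-monotonicity step, valid because squaring preserves the Loewner order on positive-definite matrices. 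Applying it to $z:=\th_{t-1}-\th_*+U_{t-1}^{-1}M_t$ gives $\|z\|_{V_{t-1}}\le\linkb^{-1}\|U_{t-1}z\|_{V_{t-1}^{-1}}=\linkb^{-1}\|\gg_t(\th_{t-1})+M_t-\lambda(\th_*-\th_0)\|_{V_{t-1}^{-1}}$, the first inequality. The second is then immediate from the triangle inequality $\|\gg_t(\th_{t-1})+M_t-\lambda(\th_*-\th_0)\|_{V_{t-1}^{-1}}\le\lambda\|\th_*-\th_0\|_{V_{t-1}^{-1}}+\|\gg_t(\th_{t-1})+M_t\|_{V_{t-1}^{-1}}$.

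I expect the only genuine subtlety to be this norm-transfer step. One must verify the generalized spectral bound $U_{t-1}\succeq\linkb V_{t-1}$ with the correct ridge bookkeeping and route the argument through $U_{t-1}V_{t-1}^{-1}U_{t-1}$ rather than through a naive $\|z\|_{V_{t-1}}\le\linkb^{-1/2}\|z\|_{U_{t-1}}$ estimate: the latter does not even connect to $\|U_{t-1}z\|_{V_{t-1}^{-1}}$ and would suggest the wrong exponent, whereas squaring the Loewner inequality is exactly what produces the sharp factor $1/\linkb$.
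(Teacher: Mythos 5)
Your proof is correct and follows essentially the same route as the paper's: the mean-value identity turns $\gg_t(\th_{t-1})+M_t-\lambda\left(\th_*-\th_{0}\right)$ into $U_{t-1}\left(\th_{t-1}-\th_*+U_{t-1}^{-1}M_t\right)$, and the factor $1/\linkb$ comes from $U_{t-1}\succeq\linkb V_{t-1}$ via $U_{t-1}V_{t-1}^{-1}U_{t-1}\succeq\linkb^2 V_{t-1}$, exactly as in the paper (which leaves the conjugation step implicit). One small caution: your blanket claim that squaring preserves the Loewner order is false in general ($A\succeq B\succeq 0$ does not imply $A^2\succeq B^2$), but the instance you actually use, $B\succeq\linkb I\Rightarrow B^2\succeq\linkb^2 I$, is valid because $B$ is symmetric with all eigenvalues at least $\linkb$, so the argument stands.
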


\begin{proof}
 By using the mean value theorem~\citep{rudin1964principles}, there exists $\overline{\th}_{i,t}=c\th_{t-1} + (1-c)\th_*$ for some $c\in[0,1]$ such that
\begin{align}
    \link\lt(\ff_{i,i}\lt(\th_{t-1}\rt)\rt)-\link\lt(\ff_{i,i}\lt(\th_*\rt)\rt)=\dot{\link}\lt(\ff_{i,i}\lt(\overline{\th}_{i,t}\rt)\rt)\Delta\rep_{i,i}^\intercal\lt(\th_{t-1}-\th_*\rt).
    \nonumber
\end{align}As a result, $\gg_t(\th_{t-1})$ can be expressed as
\begin{align}
    \gg_t(\th_{t-1})=\sum_{i=1}^{t-1}\left[\dot{\link}(\ff_{i,i}(\overline{\th}_{i,t}))\right]\frac{\Delta\rep_{i, i}\Delta\rep_{i, i}^\intercal}{\wi^2}(\th_{t-1}-\th_*) + \lambda'(\th_{t-1}-\th_{0}).\label{eq:gt2}
\end{align} 
It is obvious that \begin{align}
    \gg_t\left(\th_*\right) = \lambda'\left(\th_*-\th_{0}\right),\label{eq:gt0}
\end{align} and
\begin{align}
    \frac{d\ff_{s,t}(\th)}{d\th} = \Delta\rep_{s,t},\nonumber
\end{align} for $s,t\in[T].$
Therefore, we can obtain
\begin{align}
    \gg_t\left(\th_{t-1}\right)-\gg_t\left(\th_{*}\right)+M_t&=\gg_t\left(\th_{t-1}\right) - \lambda'\left(\th_*-\th_{0}\right) + M_{t}\label{eq:ineq0}\\&=\sum_{i=1}^{t-1}\dot{\link}(\ff_{i,i}(\overline{\th}_{i,t}))\frac{\Delta\rep_{i, i}\Delta\rep_{i, i}^\intercal}{\wi^2}(\th_{t-1}-\th_*) + \lambda'(\th_{t-1}-\th_{*})+M_t\nonumber\\&= U_{t-1}\left(\th_{t-1}-\th_* + U_{t-1}^{-1}M_{t}\right).\nonumber
\end{align} where the first and the second equalities result from \cref{eq:gt0} and \cref{eq:gt2}, respectively. Since $V_{t-1}$ is the Gram matrix in \eqref{eq:app-gram_matrix}, we can compute
\begin{align}
    U_{t-1}\geq \linkb V_{t-1}=\linkb\left(\sum_{i=1}^{t-1}\frac{\Delta\rep_{i,i}\Delta\rep^{\intercal}_{i,i}}{\wi^2} + \lambda I\right)\geq\lambda' I.\label{eq:ineq1}
\end{align}  As a result, using \cref{eq:ineq0,eq:ineq1}, we can show the following result:
\begin{align}
    &\|\gg_t\left(\th_{t-1}\right) - \lambda'\left(\th_*-\th_{0}\right) + M_{t}\|_{V_{t-1}^{-1}}^2\\&=\left(\th_{t-1}-\th_* + U_{t-1}^{-1}M_{t}\right)^\intercal U_{t-1} V_{t-1}^{-1}  U_{t-1}\left(\th_{t-1}-\th_* + U_{t-1}^{-1}M_{t}\right)\nonumber\\ &\geq  \linkb^2\left(\th_{t-1}-\th_* + U_{t-1}^{-1}M_{t}\right)^\intercal V_{t-1}\left(\th_{t-1}-\th_* + U_{t-1}^{-1}M_{t}\right)\nonumber\\
    &=\linkb^2\|\th_{t-1}-\th_* + U_{t-1}^{-1}M_{t}\|^2_{V_{t-1}}.\no
\end{align} This completes the proof.
\end{proof}
Due to the mean-value theorem~\citep{rudin1964principles}, $M_t$ can also be expressed as \begin{align}
    M_t=-\sum_{i=1}^{t-1}\dot{\link}(\widetilde{\ff}_{i,t})(\th_{0}^\intercal\nabla \Delta\rep_{i,1}(\ww_{*}-\ww_{0}))\frac{\Delta\rep_{i,i}}{\wi^2}, \label{eq:mt2-2}
\end{align} where  $\widetilde{\ff}_{i,t}=c \ff_{i,t}(\th_{t-1}) + (1-c) \th^\intercal_{t-1}\Delta\rep_{i,t}$ for some $0\leq c \leq 1$.

{\color{black}\textbf{Bias term $U_{t-1}^{-1}M_t$.} The term $M_t$ captures the representation learning bias---the discrepancy between using the current features $\rep_{i,i}$ and the linearized features $\rep_{i,t}$. The following lemma bounds $\|U_{t-1}^{-1}M_t\|_2$, showing it scales as $\bigol(d^{3/2}\epsilon^{-1}\diff^{4/3})$ and vanishes for large $m$.}

We estimate $U_{t-1}^{-1}M_t$ in the following lemma.
\begin{lemma} \label{lemma:um}With probability at least $1-\delta$, the following inequality holds:
    \begin{align}
    \|U_{t-1}^{-1}M_t\|_2&\leq 2\dim \repl \xb \frac{\utilu}{\epsilon}\left(2+\sqrt{\dim^{-1}\log\left(\frac{1}{\delta}\right)}\right)\sqrt{\dim\len}\diff^{4/3}.\label{eq:mt1}\\
    &=\bigol(d^{3/2}\epsilon^{-1}\diff^{4/3}).\no
\end{align} 
\end{lemma}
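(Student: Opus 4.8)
The plan is to exploit the fact that $M_t$ is a linear combination of the vectors $\{\Delta\rep_{i,i}/\wi^2\}_{i=1}^{t-1}$ with uniformly bounded scalar coefficients, while $U_{t-1}$ is, up to the derivative weights $\dot{\link}(\ff_{i,i}(\overline{\th}_{i,t}))$ and the ridge term $\lambda' I$, precisely the Gram matrix built from these same vectors. This matches the hypothesis of \cref{lemma:extraterm}, whose conclusion $2Dd$ is independent of both the ridge parameter and the norms of the defining vectors. Leveraging this scale invariance is exactly what allows the final estimate to avoid any residual dependence on $\lambda$ or on $\|\Delta\rep_{i,i}\|_2$, reducing the problem to a clean uniform bound on the scalar coefficients.

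First I would start from the mean-value representation of $M_t$ in \cref{eq:mt2-2}, so that $M_t = \sum_{i=1}^{t-1}\zeta_i\,\Delta\rep_{i,i}/\wi^2$ with scalar $\zeta_i = -\dot{\link}(\widetilde{\ff}_{i,t})\,\th_0^\intercal\nabla\Delta\rep_{i,1}(\ww_*-\ww_0)$. To put this in the exact shape required by \cref{lemma:extraterm}, I would set $y_i = \sqrt{\dot{\link}(\ff_{i,i}(\overline{\th}_{i,t}))}\,\Delta\rep_{i,i}/\wi$, which gives $U_{t-1} = \lambda' I + \sum_{i=1}^{t-1} y_i y_i^\intercal$ and simultaneously $M_t = \sum_{i=1}^{t-1}\widehat{\zeta}_i\, y_i$ with $\widehat{\zeta}_i = \zeta_i/\bigl(\wi\sqrt{\dot{\link}(\ff_{i,i}(\overline{\th}_{i,t}))}\bigr)$. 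Applying \cref{lemma:extraterm} with this matrix then yields $\|U_{t-1}^{-1}M_t\|_2 \le 2d\,D$, where $D = \max_i|\widehat{\zeta}_i|$.

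It remains to bound $D$ uniformly, which I would do by assembling the available estimates: the derivative bounds $\linkb \le \dot{\link} \le \linku$ from \cref{assumption:app-linkbdd} to control both the $\dot{\link}$ factor in the numerator and the $\sqrt{\dot{\link}}$ factor in the denominator; the lower bound $\wi \ge \epsilon$ from \cref{eq:app-wi} to absorb $1/\wi$ into $1/\epsilon$; the initialization bound $\|\th_0\|_2 \le 2(2+\sqrt{\dim^{-1}\log(1/\delta)})$ from \cref{lemma:inittheta0}; the gradient estimate $\|\nabla\rep(x;\ww_0)\|_F = \bigol(\sqrt{\dim\len\wid})$ of \cref{lemma:app-upper}, together with the Lipschitz-Jacobian hypothesis of \cref{assumption:app-phi_lipschitz}, to bound $\|\nabla\Delta\rep_{i,1}\|$; and finally $\|\ww_*-\ww_0\|_2 \le \diff/\sqrt{\wid}$ from \cref{lemma:app-linear}. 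The crucial point is the cancellation of the width: the gradient norm carries a factor $\sqrt{\wid}$ while $\|\ww_*-\ww_0\|_2$ carries $1/\sqrt{\wid}$, so the two combine to leave exactly the $\sqrt{\dim\len}\,\diff$ term, with the Lipschitz constant $\repl$ and context radius $\xb$ entering through the gradient-difference step. Collecting all factors gives $D = \bigol\bigl(d^{1/2}\epsilon^{-1}\diff\bigr)$ and hence $\|U_{t-1}^{-1}M_t\|_2 = \bigol\bigl(d^{3/2}\epsilon^{-1}\diff\bigr)$.

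The main obstacle I anticipate is the matching in the second step: the derivative weights $\dot{\link}(\ff_{i,i}(\overline{\th}_{i,t}))$ keep $U_{t-1}$ from being directly in the unweighted form demanded by \cref{lemma:extraterm}, so the $\sqrt{\dot{\link}}$ rescaling must be performed so that $U_{t-1}$ stays exactly $\lambda' I + \sum y_i y_i^\intercal$ while $M_t$ is simultaneously expressed through the same $y_i$ (which is what introduces the benign $1/\sqrt{\linkb}$ factor absorbed into the constant). Equally delicate is tracking the precise powers of $\wid$ in the final step, so that the Jacobian and $\ww_*$ estimates cancel exactly and no spurious width dependence survives in the stated bound.
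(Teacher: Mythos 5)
Your proposal is correct and follows essentially the same route as the paper: write $M_t$ as a bounded-coefficient linear combination of the vectors defining the Gram matrix, invoke \cref{lemma:extraterm} to get the factor $2Dd$, and bound the coefficients by combining the $\dot{\link}$ bounds, \cref{lemma:inittheta0}, the Jacobian estimates of \cref{lemma:app-upper} with \cref{assumption:app-phi_lipschitz}, $\wi\ge\epsilon$, and $\|\ww_*-\ww_0\|_2\le\diff/\sqrt{\wid}$ from \cref{lemma:app-linear}, with the same $\sqrt{\wid}$ cancellation. The only (minor, and arguably cleaner) deviation is that you rescale the vectors by $\sqrt{\dot{\link}}$ so that \cref{lemma:extraterm} applies directly to $U_{t-1}$, whereas the paper first passes from $U_{t-1}$ to $V_{t-1}$ via $U_{t-1}\succeq\linkb V_{t-1}$ and applies the lemma to the unweighted Gram matrix.
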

\begin{proof} $M_t$ in \cref{eq:mt2-2} can be expressed as 
\begin{align}
\begin{aligned}
    M_t&=\sum_{i=1}^{t-1}\frac{d_{i,t}}{\wi}\frac{\Delta\rep_{i,i}}{\wi},
\end{aligned}\no
\end{align} where
\begin{align}
    d_{i,t}=-\dot{\link}(\widetilde{\ff}_{i,t})(\th_{0}^\intercal\nabla \Delta\rep_{i,\sti,\ndi,1}(\ww_{*}-\ww_{0})).\nonumber
\end{align} Notice that 
\begin{align}
    \linkb V_{t-1}\leq U_{t-1},\no
\end{align} by \cref{eq:ineq1}. Thus,
    \begin{align}
        \|U_{t-1}^{-1}M_t\|_2\leq \frac{1}{\linkb}\|V_{t-1}^{-1}M_t\|_2. \nonumber
    \end{align}
We need to show that $d_{i,t}$ is bounded. Indeed, we can show that 
    \begin{align}
    \begin{aligned}
        |d_{i,t}|&=\left|\dot{\link}(\widetilde{\ff}_{i,t})(\th_{0}^\intercal\nabla \Delta\rep_{i,\sti,\ndi,1}(\ww_{*}-\ww_{0}))\right|\\&\leq \linku \|\th_{0}\|_2 \|\nabla \Delta\rep_{i,\sti,\ndi,1}\|_F \|\ww_{*}-\ww_{0}\|_2 
        \\
        &\leq \linku\left(2+\sqrt{d^{-1}\log\left(\frac{1}{\delta}\right)}\right)\|\nabla \Delta\rep_{i,\sti,\ndi,1}\|_F\frac{\diff^{4/3}}{\sqrt{m}}\\&\leq 2\repl \xb \utilu\left(2+\sqrt{\dim^{-1}\log\left(\frac{1}{\delta}\right)}\right)\sqrt{\dim\len}\diff^{4/3}.
    \end{aligned} \nonumber
    \end{align} Here, the first inequality follows from the matrix inequality 
and the existence of an upper bound \(\linku\) for \(\dot{\link}\). 
The second inequality uses \cref{lemma:app-linear} and \cref{lemma:inittheta0}, 
and the third follows from \cref{lemma:app-upper}. 
Finally, applying \cref{lemma:extraterm}, we can complete the proof.
\end{proof}

{\color{black}\textbf{Stage 2: Decomposition of $\gg_t(\th_{t-1})+M_t$.} We now decompose $\gg_t(\th_{t-1})+M_t$ into six sub-terms $\gg_{t,1,1}$ through $\gg_{t,1,6}$, plus $\gg_{t,2,1}$, $\gg_{t,2,2}$, and $\gg_{t,3}$. The key insight is that the first-order optimality condition $\frac{\partial \loss_t}{\partial\th}(\th_{t-1},\ww_{t-1})=0$ causes $\gg_{t,1,1}+\gg_{t,2,1}=0$ (Eq.~\ref{eq:a1b1}) and $\gg_{t,1,2}+M_t=0$ (Eq.~\ref{eq:b12}), so these terms vanish exactly. The remaining terms arise from (i) the feature drift $\Delta\rep_{i,t}-\Delta\rep_{i,i}$ between different rounds, (ii) the link function evaluated at different parameter values, and (iii) the martingale noise $\varepsilon_i$. Each is bounded using the neural network approximation guarantees and the elliptic potential lemma.}

Now we estimate $\|\gg_t(\th_{t-1})+M_t\|_{V_{t-1}^{-1}}.$
By \cref{lemma:app-linear2} and \cref{eq:F}, we can obtain
\begin{align}
    \link\left(\ff_{i,i}\left(\th_*\right)\right) = \link\left(\util\left(x_{i,\sti}\right)-\util\left(x_{i,\ndi}\right)\right) = o_i-\varepsilon_i,\label{eq:ot_varepsilon}
\end{align} where $o_i$ is a binary signal and $\varepsilon_i$ is the remaining term at round $i$. Using \cref{eq:ot_varepsilon}, we can decompose $\gg_t(\th_t)+M_t$ as follows:
\begin{align}
    \gg_t(\th_{t-1})+M_t&=\sum_{i=1}^{t-1}u\left(\ff_{i,i}\left(\th_{t-1}\right)\right)-u\left(\ff_{i,i}\left(\th_*\right)\right)\frac{\Delta\rep_{i,i}}{\wi^2} +\lambda'\left(\th_{t-1}-\th_{0}\right) + M_t \nonumber\\&=\gg_{t,1}+\gg_{t,2}+\gg_{t,3} + M_t\nonumber
\end{align} where
\begin{align}
\begin{aligned}
    \gg_{t,1} &= \sum_{i=1}^{t-1}u\left(\ff_{i,i}\left(\th_{t-1}\right)\right)\frac{\Delta\rep_{i,i}}{\wi^2}+\lambda'\left(\th_{t-1}-\th_{0}\right),\\
    \gg_{t,2} &= \sum_{i=1}^{t-1}\left(-o_i\right)\frac{\Delta\rep_{i,i}}{\wi^2},\\
    \gg_{t,3}&= \sum_{i=1}^{t-1}\frac{\varepsilon_i}{\wi}\frac{\Delta\rep_{i,i}}{\wi}.
\end{aligned}\label{eq:b123}
\end{align}
We can further decompose $\gg_{t,1}, \gg_{t,2}, \gg_{t,3}$ into 
\begin{align}
    \gg_{t,1} &= \gg_{t,1,1}+\gg_{t,1,2}+\gg_{t,1,3}+\gg_{t,1,4}+\gg_{t,1,5}+\gg_{t,1,6},\nonumber\\
    \gg_{t,2} &= \sum_{i=1}^{t-1}\frac{-o_i}{\wi^2}\Delta\rep_{i,t} + \sum_{i=1}^{t-1}\frac{o_i}{\wi^2}\left(\Delta\rep_{i,t}-\Delta\rep_{i,i}\right)=\gg_{t,2,1}+\gg_{t,2,2}.\nonumber
\end{align} where
\begin{align}
    \gg_{t,1,1} &= \sum_{i=1}^{t-1}\link(\th_{t-1}^\intercal\Delta\rep_{i,t})\frac{\Delta\rep_{i,t}}{\wi^2} + \lambda\left(\th_{t-1}-\th_{0}\right), \nonumber\\
    \gg_{t,1,2} &= \sum_{i=1}^{t-1}\link(\ff_{i,t}(\th_{t-1})) - \link(\th_{t-1}^\intercal\Delta\rep_{i,t})\frac{\Delta\rep_{i,i}}{\wi^2},\nonumber\\
    \gg_{t,1,3} &= \sum_{i=1}^{t-1}\link(\ff_{i,t}(\th_{t-1})) - \link(\th_{t-1}^\intercal\Delta\rep_{i,t})\frac{(\Delta\rep_{i,t}-\Delta\rep_{i,i})}{\wi^2},\nonumber\\
    \gg_{t,1,4} &= -\sum_{i=1}^{t-1} \link\left(\ff_{i,t}\left(\th_{t-1}\right)\right) - \link\left(\ff_{i,i}\left(\th_{t-1}\right)\right)\frac{\Delta\rep_{i,i}}{\wi^2}, \nonumber\\
    \gg_{t,1,5} &= -\sum_{i=1}^{t-1}\link\left(\ff_{i,t}\left(\th_{t-1}\right)\right) - \link\left(\ff_{i,i}\left(\th_{t-1}\right)\right)\frac{(\Delta\rep_{i,t}-\Delta\rep_{i,i})}{\wi^2}, \nonumber\\
    \gg_{t,1,6}  &= -\sum_{i=1}^{t-1}\link\left(\ff_{i,i}\left(\th_{t-1}\right)\right)\frac{\left(\Delta\rep_{i,t} - \Delta\rep_{i,i}\right)}{\wi^2}.\nonumber
\end{align}

Notice that $\th_{t-1}$ and $\ww_{t-1}$ are updated to satisfy $$\frac{\partial\mathcal{L}_t}{\partial\th}(\th_{t-1},\ww_{t-1})=0,$$ where $\loss_t$ is defined in \cref{eq:app-total_loss}. Then we can obtain
\begin{align}
    \frac{\partial\mathcal{L}_t}{\partial\th}(\th_{t-1},\ww_{t-1})=\gg_{t,1,1}+\gg_{t,2,1} = 0 \label{eq:a1b1}
\end{align} Moreover, by the definition of $M_t$,
it holds that 
\begin{align}
    \gg_{t,1,2}+M_{t}=0.\label{eq:b12}
\end{align} Thanks to \cref{eq:a1b1} and \cref{eq:b12}, we can obtain
\begin{align}
    \|\gg_{t,1,1}+\gg_{t,2,1}\|_{V_{t-1}^{-1}}=0,\quad \|\gg_{t,1,2}+M_t\|_{V_{t-1}^{-1}}=0. \label{eq:fact1}
\end{align}

Now, we estimate the terms $\|\gg_{t,1,3}+\gg_{t,1,5}+\gg_{t,1,6}+\gg_{t,2,2}\|_{V_{t-1}^{-1}}$.  
We can rearrange $\gg_{t,1,3}+\gg_{t,1,5}+\gg_{t,1,6}+\gg_{t,2,2}$ as
\begin{align*}
\gg_{t,1,3}+\gg_{t,1,5}+\gg_{t,1,6}+\gg_{t,2,2} &=\sum_{i=1}^{t-1}o_i\frac{\left(\Delta\rep_{i,t}-\Delta\rep_{i,i}\right)}{\wi^2} -  \sum_{i=1}^{t-1} \link(\th_{t-1}^\intercal\Delta\rep_{i,t})\frac{(\Delta\rep_{i,t}-\Delta\rep_{i,i})}{\wi^2}\\
&= \sum_{i = 1}^{t-1}\lt(o_i-\link(\th_{t-1}^\intercal\Delta\rep_{i, t})\rt)\frac{(\Delta\rep_{i, t} - \Delta\rep_{i, i})}{\wi^2}.
\end{align*}

It is clear that  $0< \link(x) < 1$, and  $0\leq o_i\leq 1.$
Therefore, we can obtain
\begin{align}
    \|\gg_{t,1,3}+\gg_{t,1,5}+\gg_{t,1,6}+\gg_{t,2,2}\|_2 \leq \frac{1}{\epsilon^2}\sum_{i=1}^{t-1}\|\Delta\rep_{i, t} - \Delta\rep_{i, i}\|_2,\no
\end{align} due to the definition of $\wi$ (in \cref{eq:app-wi}).  Thus, applying \cref{lemma:app-diffphiw} to $\|\Delta\rep_{i, t} - \Delta\rep_{i, i}\|_2$, we can show the following inequality: 
\begin{align}
&\|\gg_{t,1,3}+\gg_{t,1,5}+\gg_{t,1,6}+\gg_{t,2,2}\|_{V_{t-1}^{-1}}\leq\sqrt{\frac{1}{\lambda}}\left(\|\gg_{t,1,3}+\gg_{t,1,5}+\gg_{t,1,6}+\gg_{t,2,2}\|_2\right)\label{eq:fact2}\\
    &\leq \frac{C}{\epsilon^2}\sqrt{\frac{1}{\lambda}}t\left(\|\mathbf{\util} - \widetilde{\mathbf{\util}}\|_{\ntk^{-1}}^{4/3}\len^3\dim^{1/2}\sqrt{\log \wid}/(\wid^{1/6}) + 2\xb \repl \frac{\delta^{3/2}}{\sqrt{\wid}T\gs^{9/2}\log^3\wid}\right)\no\\
    &=\bigol\lt(\frac{d^{1/2}t\diff^{4/3}}{m^{1/6}\epsilon^{2}}\rt)\no
\end{align} for some constant $C>0$.

Now we estimate $\|\gg_{t,1,4}\|_{V_{t-1}^{-1}}$. By the mean-value theorem~\citep{rudin1964principles},
the following holds:
\begin{align}
    |\link(\ff_{i,t}(\th_{t-1}))-\link(\ff_{i,i}(\th_{t-1}))| &= |\dot{\link}(\overline{\ff}_{i,t})(\ff_{i,t}(\th_{t-1})-\ff_{i,i}(\th_{t-1}))|\nonumber\\
    &\leq \linku|\ff_{i,t}(\th_{t-1})-\ff_{i,i}(\th_{t-1})|,\no
\end{align} where $\overline{\ff}_{i,t} = c\ff_{i,t}(\th_{t-1})+(1-c)\ff_{i,i}(\th_{t-1})$ for some $c\in[0,1].$ Further, we compute that there exists a constant $C>0$ such that 
\begin{align}
\begin{aligned}
    &|\ff_{i,t}(\th_{t-1})-\ff_{i,i}(\th_{t-1})| = \left|\th_{t-1}^\intercal\Delta\rep_{i,i}-\th_{t-1}^\intercal\Delta\rep_{i,t}\right| \\&\leq 2C\|\th_{t-1}\|_2 
    \left(\|\mathbf{\util} - \widetilde{\mathbf{\util}}\|_{\ntk^{-1}}^{4/3}L^3d^{1/2}\sqrt{\log m}/(m^{1/6}) + 2G \repl \frac{\delta^{3/2}}{\sqrt{m}Tn^{9/2}\log^3m}\right)\\
    &\leq 2C\left(\sqrt{\frac{2t}{\lambda}} + 2\left(2+\sqrt{d^{-1}\log\left(\frac{1}{\delta}\right)}\right)\right)\\
    \s\times\left(\|\mathbf{\util} - \widetilde{\mathbf{\util}}\|_{\ntk^{-1}}^{4/3}\len^3\dim^{1/2}\sqrt{\log \wid}/(\wid^{1/6}) + 2\xb \repl \frac{\delta^{3/2}}{\sqrt{\wid}T\gs^{9/2}\log^3\wid}\right). 
\end{aligned}\label{eq:fit-fii}
\end{align} with probability at least \(1 - \delta\), where the first equality follows from the definition 
of \(\ff_{i,t}\) in \cref{eq:F}, the second inequality uses the Cauchy--Schwarz inequality 
and \cref{lemma:app-diffphiw}, and the third inequality results from \cref{lemma:app_theta_t_bound}. Therefore, there exists a constant $C>0$ such that 
\begin{align}
    &\|\gg_{t,1,4}\|_{V_{t-1}^{-1}} \leq \frac{\linku}{\epsilon}\sqrt{\frac{1}{\lambda}}\sum_{i=1}^{t-1}|\ff_{i,t}(\th_{t-1})-\ff_{i,i}(\th_{t-1})|\|\Delta\rep_{i,i}/\wi\|_{V_{t-1}^{-1}}\label{eq:fact3}\\
    &\leq C\sqrt{td}\frac{\linku}{\epsilon}\sqrt{\frac{1}{\lambda}} \left(\sqrt{\frac{2t}{\lambda}} + 2\left(2+\sqrt{d^{-1}\log\left(\frac{1}{\delta}\right)}\right)\right)\sqrt{\log n\log(TK/\delta)}\no\\
    \s\times\left(\|\mathbf{\util} - \widetilde{\mathbf{\util}}\|_{\ntk^{-1}}^{4/3}\len^3\dim^{1/2}\sqrt{\log \wid}/(\wid^{1/6}) + 2\xb \repl \frac{\delta^{3/2}}{\sqrt{\wid}T\gs^{9/2}\log^3\wid}\right)\no
    \\&=\bigol\lt(\frac{dt}{m^{-1/6}\epsilon}\lt(\diff^{4/3}\rt)\rt)\no
\end{align} where the second inequality is driven by \cref{eq:fit-fii} and the elliptic potential lemma in \cref{lemma:epl} to estimate the summation of $\|\Delta\rep_{i,i}/\wi\|_{V_{t-1}^{-1}}$.

{\color{black}\textbf{Martingale noise term $\gg_{t,3}$.} This term captures the stochastic noise from the Bernoulli preference feedback. Since $\varepsilon_i/\wi$ forms a bounded martingale difference sequence, we apply a Bernstein-type inequality to obtain a $\bigol(\sqrt{d}+\epsilon^{-1})$ bound---this is the variance-aware component that exploits the inverse-variance weighting.}

Finally, we estimate $\gg_{t,3}$. Since $|\varepsilon_i/\wi|\leq\frac{1}{\epsilon}$, $\mathbb{E}\lt[\varepsilon_i/\wi\mid\mathcal{F}_t\rt]=0$, and $\mathbb{E}\lt[(\varepsilon_i/\wi)^2\mid \mathcal{F}_t\rt]\leq 1,$ we can obtain the following bound by \cref{lemma:berstein}:
\begin{align}
    \|\gg_{t,3}\|_{V_{t-1}^{-1}}&\leq \frac{4}{\epsilon}\log(4t^2/\delta) \label{eq:fact4}\\ \s + 4\sqrt{\dim\log\left(1+t\left(2\frac{\sqrt{\dim\log \gs\log(TK/\delta)}}{\epsilon}\right)^2/(\dim\lambda)\right)\log(4t^2/\delta)} \no
    \\
    &=\bigol(\sqrt{d} + \epsilon^{-1})\no
\end{align} by hiding the dependence on all other hyperparameters within the $\bigol$ notation by \cref{lemma:epl}. 
Let us define
\begin{align}\mathcal{J}_1&=\|\gg_{t,3}\|_{V_{t-1}^{-1}} \label{eq:N_1}\\
\mathcal{K}_1(t)&=\|\gg_{t,1,1}+\gg_{t,2,1}+\gg_{t,1,1}+M_t+\gg_{t,1,3}+\gg_{t,1,5}+\gg_{t,1,6}+\gg_{t,2,2}+\gg_{t,1,4}\|_{V_{t-1}^{-1}}.\label{eq:M_1}
\end{align}

Combining the inequalities in \cref{eq:fact1,eq:fact2,eq:fact3,eq:fact4}, we can derive the following lemma.

\begin{lemma}\label{lemma:gt+mt} Let the conditions in \cref{lemma:app-linear}, \cref{lemma:app-upper}, and \cref{lemma:app-linear2} be satisfied. Under~\cref{eq:app-theta_t_bounded1} for $\th_{t-1}$, we can obtain
\begin{align}
    \begin{aligned}
    \|\gg_t(\th_{t-1})+M_t\|_{V_t^{-1}} \leq \mathcal{J}_{1} + \mathcal{K}_1(t)   
    \end{aligned},\no
\end{align} where
\begin{align}
    \mathcal{J}_{1} &\leq \frac{4}{\epsilon}\log(4t^2/\delta)\no\\
    \s + 4\sqrt{\dim\log\left(1+t\left(2\frac{\sqrt{\dim\log \gs\log(TK/\delta)}}{\epsilon}\right)^2/(\dim\lambda)\right)\log(4t^2/\delta)} \no\\
    &=\bigol\lt(\sqrt{d}+\epsilon^{-1}\rt), \no\\
\mathcal{K}_{1}(t)&\leq\frac{4}{\epsilon^2 }\sqrt{\frac{1}{\lambda}}t\left(\|\mathbf{\util} - \widetilde{\mathbf{\util}}\|_{\ntk^{-1}}^{4/3}\len^3\dim^{1/2}\sqrt{\log \wid}/(\wid^{1/6}) + 2\xb \repl \frac{\delta^{3/2}}{\sqrt{\wid}T\gs^{9/2}\log^3\wid}\right)\no\\\s+C\sqrt{t}\frac{\linku}{\epsilon^2}\sqrt{\frac{1}{\lambda}} \left(\sqrt{\frac{2t}{\lambda}} + 2\left(2+\sqrt{d^{-1}\log\left(\frac{1}{\delta}\right)}\right)\right)\sqrt{d\log n\log(TK/\delta)}\no\\        
    \s\times\left(\|\mathbf{\util} - \widetilde{\mathbf{\util}}\|_{\ntk^{-1}}^{4/3}\len^3\dim^{1/2}\sqrt{\log \wid}/(\wid^{1/6}) + 2\xb \repl \frac{\delta^{3/2}}{\sqrt{\wid}T\gs^{9/2}\log^3\wid}\right)\no
    \\
&=
\bigol\lt(\frac{td}{\wid^{1/6}\epsilon^2}\left(\diff^{4/3}\right)\rt)\no
\end{align} by hiding the dependence on all other hyperparameters within the $\bigol$ notation 
for some constant $C>0$ with probability at least $1-\delta$.
\end{lemma}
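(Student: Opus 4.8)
The plan is to prove the bound by exploiting the explicit decomposition of $\gg_t(\th_{t-1})+M_t$ already assembled in \cref{eq:b123} and bounding each group in the $V_{t-1}^{-1}$-norm via the triangle inequality. Concretely, after expanding $\gg_{t,1}$ into $\gg_{t,1,1},\dots,\gg_{t,1,6}$ and $\gg_{t,2}$ into $\gg_{t,2,1},\gg_{t,2,2}$, I would partition the nine resulting summands (together with $M_t$) into three categories: (i) the pairs that cancel exactly, (ii) the representation-learning bias terms, and (iii) the martingale noise term $\gg_{t,3}$. The claim then follows by collecting (iii) into $\mathcal{N}_1$ of \cref{eq:N_1} and (i)--(ii) into $\mathcal{M}_1(t)$ of \cref{eq:M_1}, and invoking $\|a+b\|_{V_{t-1}^{-1}}\le\|a\|_{V_{t-1}^{-1}}+\|b\|_{V_{t-1}^{-1}}$.

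For category (i), the key observation is that $\th_{t-1}$ is a stationary point of $\loss_t$, so $\partial\loss_t/\partial\th(\th_{t-1},\ww_{t-1})=\gg_{t,1,1}+\gg_{t,2,1}=0$ as in \cref{eq:a1b1}, while $\gg_{t,1,2}+M_t=0$ holds by the very definition of $M_t$ (\cref{eq:b12}). Hence these four summands contribute nothing, giving \cref{eq:fact1}; this is the step where the shallow-exploration bias is absorbed into $M_t$ rather than left as an uncontrolled error.

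Category (ii) splits further. The terms $\gg_{t,1,3},\gg_{t,1,5},\gg_{t,1,6},\gg_{t,2,2}$ all carry the factor $\Delta\rep_{i,t}-\Delta\rep_{i,i}$, so using $0\le\link\le1$, $0\le o_i\le1$, $\wi\ge\epsilon$, and the norm transition $\|\cdot\|_{V_{t-1}^{-1}}\le\lambda^{-1/2}\|\cdot\|_2$, I would reduce their bound to $\epsilon^{-2}\sum_i\|\Delta\rep_{i,t}-\Delta\rep_{i,i}\|_2$ and then apply \cref{lemma:app-diffphiw}, yielding \cref{eq:fact2}. The remaining term $\gg_{t,1,4}$ is the delicate one: applying the mean-value theorem and the uniform bound $\dot\link\le\linku$, I would control $|\ff_{i,t}(\th_{t-1})-\ff_{i,i}(\th_{t-1})|=|\th_{t-1}^\intercal(\Delta\rep_{i,t}-\Delta\rep_{i,i})|$ by Cauchy--Schwarz, the loose bound $\|\th_{t-1}\|_2=\bigol(\sqrt t)$ from \cref{lemma:app_theta_t_bound}, and \cref{lemma:app-diffphiw}; summing the residual factors $\|\Delta\rep_{i,i}/\wi\|_{V_{t-1}^{-1}}$ through the elliptic potential lemma \cref{lemma:epl} gives \cref{eq:fact3}.

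Category (iii) is a self-normalized martingale bound: $\gg_{t,3}=\sum_i(\varepsilon_i/\wi)(\Delta\rep_{i,i}/\wi)$ has bounded increments $|\varepsilon_i/\wi|\le\epsilon^{-1}$, conditional mean zero, and conditional second moment at most $1$, so \cref{lemma:berstein} yields \cref{eq:fact4}, i.e. $\mathcal{N}_1=\bigol(\sqrt d+\epsilon^{-1})$. Assembling \cref{eq:fact1,eq:fact2,eq:fact3,eq:fact4} by the triangle inequality completes the proof. The main obstacle is the $\gg_{t,1,4}$ estimate: because only the crude bound $\|\th_{t-1}\|_2=\bigol(\sqrt t)$ is available here, the resulting $\mathcal{M}_1(t)$ scales like $t$ rather than $\sqrt t$, which is exactly the looseness that the later bootstrap refinement on $\|\th_{t-1}\|_2$ must remove; care is also needed so that all the high-probability events invoked from \cref{lemma:app-diffphiw,lemma:app-upper,lemma:berstein} hold simultaneously under a single union bound.
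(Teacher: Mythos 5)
Your proposal is correct and mirrors the paper's own proof essentially step for step: the same decomposition into the exactly-cancelling pairs $\gg_{t,1,1}+\gg_{t,2,1}=0$ and $\gg_{t,1,2}+M_t=0$, the same treatment of the $\Delta\rep_{i,t}-\Delta\rep_{i,i}$ bias terms via \cref{lemma:app-diffphiw}, the same mean-value-theorem plus elliptic-potential argument for $\gg_{t,1,4}$ using the loose $\|\th_{t-1}\|_2=\bigol(\sqrt{t})$ bound, and the same Bernstein-type martingale bound for $\gg_{t,3}$. You also correctly identify that the $\bigol(\sqrt{t})$ crudeness in the $\gg_{t,1,4}$ step is precisely what the subsequent bootstrap refinement is designed to remove, so nothing is missing.
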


To conclude, we can obtain
\begin{align}
\|\gg_t(\th_{t-1})+M_t\|_{V_t^{-1}}\leq \bigol\lt(\frac{t}{m^{1/6}\epsilon^2}\|\mathbf{\util} - \widetilde{\mathbf{\util}}\|_{\ntk^{-1}}^{4/3}+\sqrt{d}+\epsilon^{-2}\rt).\label{eq:app-gm}
\end{align}  
{\color{black}\textbf{Width condition $m=\widetilde{\Omega}(t^6)$.} We now choose $m$ large enough so that the $t$-dependent term $\frac{t}{m^{1/6}\epsilon^2}\diff^{4/3}$ becomes negligible compared to $\sqrt{d}+\epsilon^{-2}$. This requires $m^{1/6}\gtrsim t\cdot(\text{poly}(d,\epsilon,\diff))$, i.e., $m=\widetilde{\Omega}(t^6)$.}

Now we determine the neural network width $\wid$ such that \cref{eq:app-gm} can be expressed as 
\begin{align}
    \|\gg_t(\th_{t-1})+M_t\|_{V_t^{-1}}\leq \bigol\lt(\sqrt{d}+\epsilon^{-2}\rt). \label{eq:app-wid-order}
\end{align}
\cref{eq:app-wid-order} is indeed true if $m\geq Ct^{6}$ for some $C=C(\lambda, \linku, L, 1/\epsilon, \sqrt{d}, \diff)$, i.e.,
\begin{align}
m=\widetilde{\Omega}(t^6), \label{eq:app-m-eq}
\end{align}ignoring all other parameters. 
 Importantly, the required network width $\wid$ in our analysis scales more favorably with respect to $t$ compared to the condition in Eq.~(8) of \citet{ndb}. While their regret bound holds only under a strong overparameterization assumption where $m$ must grow at least on the order of $t^{14}$, our result remains valid under a substantially milder requirement, with $m$ only needing to grow on the order of $t^6$, thereby highlighting the theoretical advantage of our approach.

Because $\lambda_{m}(V_{t-1})\geq \lambda,$ it is obvious that 
\begin{align}
&\lambda^{1/2} \|\th_{t-1}-\th_* + U_{t-1}^{-1}M_{t}\|_2 \leq \|\th_{t-1}-\th_* + U_{t-1}^{-1}M_{t}\|_{V_{t-1}}\no\\
&\leq \frac{1}{\linkb}\left(\lambda\left\|\th_*-\th_{0}\right\|_{V_{t-1}^{-1}}+\lt\|\gg_t(\th_{t-1})+M_t\rt\|_{V_{t-1}^{-1}}\right)\no
\\
&=\bigol\lt(\sqrt{d}+\epsilon^{-2}\rt)\no
\end{align} if \cref{eq:app-m-eq} is satisfied.
The second inequality results from \cref{lem:app-decom}. Then we can obtain
\begin{align}
 \|\th_{t-1}\|_2 &\leq \|\th_*\|_2 + \|U_{t-1}^{-1}M_{t}\|_2 \label{eq:tight_theta_t-1}\\
 \s + \frac{1}{\linkb}\lambda^{-1/2}\left(J
    +2\left(2+\sqrt{d^{-1}\log\left(\frac{1}{\delta}\right)}\right)\right) + \bigol\lt(\sqrt{d}+\epsilon^{-2}\rt)\no\\
    &\leq \bigol(d^{3/2}\|\mathbf{\util} - \widetilde{\mathbf{\util}}\|_{\ntk^{-1}}+\sqrt{d} + \epsilon^{-2})\no
\end{align} if \cref{eq:app-m-eq} is satisfied, where the last inequality results from \cref{lemma:um} and the $L_2$-boundedness of $\th_*$. \cref{eq:tight_theta_t-1} denotes that for a sufficiently large width $\wid$, $\th_{t-1}$ can be bounded by $L_2$ norm.

{\color{black}\textbf{Stage 3: Spectral perturbation analysis for the final refinement.} Having obtained the time-independent bound $\|\th_{t-1}\|_2 = \bigol(d^{3/2}+\epsilon^{-2})$, we now refine the bounds on the remaining terms. The key tools are the Courant--Fischer theorem (Lemma~\ref{lem:courant1}) and a Gram matrix perturbation bound (Lemma~\ref{lem:courant2}). These allow us to relate the $V_{t-1}^{-1}$-norm to a perturbed Gram matrix $\widetilde{U}_{t-1}^{-1}$-norm, absorbing the feature drift into a $(1+\bigol(t m^{-1/3}/\lambda))$ multiplicative factor. This spectral technique is what ultimately improves the bound from $\bigol(t/m^{1/6})$ to $\bigol(\sqrt{t}/m^{1/6})$.}

For a more refined estimation, we provide the following results based on spectral perturbation analysis.
\begin{lemma}\label{lem:courant1}
 Let $V, W \in \mathbb{R}^{d \times d}$ be symmetric positive definite (SPD) matrices. Then for all $x \in \mathbb{R}^d$, the following inequality holds:
\[
\lambda_{m}(W^{-1} V) \|x\|_{W}^2 \le \|x\|_V^2 \le \lambda_{M}(W^{-1} V) \|x\|_{W}^2.
\]
\end{lemma}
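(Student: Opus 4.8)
The plan is to reduce the two-sided bound to the classical Rayleigh--quotient characterization of eigenvalues, after first symmetrizing the (generally non-symmetric) matrix $W^{-1}V$. The whole obstacle to a direct application of the Courant--Fischer theorem is precisely that $W^{-1}V$ need not be symmetric, so its Rayleigh quotient is not immediately meaningful; the argument therefore hinges on converting $W^{-1}V$ into a genuinely symmetric object without disturbing its spectrum.

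First I would invoke the symmetric positive-definite square root $W^{1/2}$ of $W$ (which exists and is invertible because $W$ is SPD) and perform the change of variables $y = W^{1/2}x$, so that $\|x\|_W^2 = x^\intercal W x = y^\intercal y = \|y\|_2^2$. Under the same substitution, $\|x\|_V^2 = x^\intercal V x = y^\intercal\bigl(W^{-1/2} V W^{-1/2}\bigr)y = y^\intercal A y$, where I set $A := W^{-1/2} V W^{-1/2}$. Since $V$ is symmetric, $A$ is symmetric; and since $V$ is positive definite and $W^{-1/2}$ is invertible, $A$ is SPD, so its eigenvalues are real and positive.

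Next I would apply the Rayleigh--quotient (equivalently, Courant--Fischer) bounds to the symmetric matrix $A$: for every nonzero $y$ one has $\lambda_m(A)\,\|y\|_2^2 \le y^\intercal A y \le \lambda_M(A)\,\|y\|_2^2$, with the case $y=0$ holding trivially. Translating back through $\|y\|_2^2 = \|x\|_W^2$ and $y^\intercal A y = \|x\|_V^2$ immediately yields $\lambda_m(A)\,\|x\|_W^2 \le \|x\|_V^2 \le \lambda_M(A)\,\|x\|_W^2$ for all $x$.

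Finally, it remains to replace the eigenvalues of $A$ by those of $W^{-1}V$. I would observe that $A$ is similar to $W^{-1}V$ via the invertible transform $W^{-1/2}$, since $W^{-1/2} A\, W^{1/2} = W^{-1/2}\bigl(W^{-1/2} V W^{-1/2}\bigr)W^{1/2} = W^{-1} V$; as similar matrices share the same spectrum, $\lambda_m(A)=\lambda_m(W^{-1}V)$ and $\lambda_M(A)=\lambda_M(W^{-1}V)$. Substituting these identities into the displayed bound completes the proof. The only delicate step is the symmetrization, but once $W^{1/2}$ is introduced, everything else reduces to standard spectral facts.
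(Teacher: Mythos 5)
Your proof is correct and takes essentially the same route as the paper's: both reduce the claim to the Rayleigh-quotient (Courant--Fischer) characterization of the extreme eigenvalues of the pencil $(V,W)$. Your explicit symmetrization via $W^{1/2}$ and the similarity $W^{-1/2}\bigl(W^{-1/2}VW^{-1/2}\bigr)W^{1/2}=W^{-1}V$ is in fact the more careful version of the step the paper asserts directly (the paper even calls $W^{-1}V$ ``SPD,'' which it generally is not --- it is merely similar to an SPD matrix), so your write-up patches a small imprecision rather than diverging from the argument.
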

\begin{proof}
Since $V$ and $W$ are SPD, the matrix $W^{-1} V$ is also SPD and hence diagonalizable with positive real eigenvalues. Consider the generalized Rayleigh quotient:
\[
R(x) := \frac{x^\intercal V x}{x^\intercal W x} = \frac{\|x\|_V^2}{\|x\|_{W}^2}.
\]
By the Courant–Fischer Minimax theorem~\citep{horn2012matrix}, the following inequality holds for all nonzero $x \in \mathbb{R}^d$:
\[
\lambda_{m}(W^{-1} V) \le R(x) \le \lambda_{M}(W^{-1} V).
\]
Multiplying both sides by $\|x\|_{W}^2 = x^\intercal W x$ gives:
\[
\lambda_{m}(W^{-1} V) \cdot \|x\|_{W}^2 \le \|x\|_V^2 \le \lambda_{M}(W^{-1} V) \cdot \|x\|_{W}^2.
\]
Taking square roots (since all terms are positive), we obtain:
\[
\sqrt{\lambda_{m}(W^{-1} V)} \cdot \|x\|_{W} \le \|x\|_V \le \sqrt{\lambda_{M}(W^{-1} V)} \cdot \|x\|_{W}.
\]
\end{proof}
\begin{lemma}\label{lem:courant2}
Let $y_1, \dots, y_t \in \mathbb{R}^\dim$ be vectors satisfying $\|y_i\|_2 \leq B$. Define $z_i := y_t - y_i$, and assume that $\|z_i\|_2 \leq \bigol(m^{-1/6})$ for all $i$. Let the regularized Gram matrices be
\[
V := \lambda I + \sum_{i=1}^n y_i y_i^\intercal, \qquad W := \lambda I + \sum_{i=1}^n z_i z_i^\intercal,
\]
with $\lambda > 0$. 
Then the following bound holds:
\[
\|W V^{-1}\| = \sup_{x \neq 0} \frac{x^\intercal W x}{x^\intercal V x} \leq 1+\bigol\lt(\frac{tm^{-1/3}}{\lambda}\rt).
\]
\end{lemma}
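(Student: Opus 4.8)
The plan is to bound the generalized Rayleigh quotient directly. The identity $\|WV^{-1}\| = \sup_{x\neq 0}\frac{x^\intercal W x}{x^\intercal V x}$ is the key reduction: since $V$ and $W$ are SPD, the (generally non-symmetric) matrix $WV^{-1}$ is similar to the symmetric PSD matrix $V^{-1/2}WV^{-1/2}$ and hence shares its eigenvalues, and the largest such eigenvalue equals the supremum of the Rayleigh quotient by the Courant--Fischer theorem. This is exactly the mechanism behind \cref{lem:courant1}. Consequently it suffices to exhibit, for every nonzero $x$, a uniform upper bound on the numerator $x^\intercal W x$ together with a uniform lower bound on the denominator $x^\intercal V x$.

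First I would expand both quadratic forms as $x^\intercal W x = \lambda\|x\|_2^2 + \sum_{i}\langle z_i, x\rangle^2$ and $x^\intercal V x = \lambda\|x\|_2^2 + \sum_{i}\langle y_i, x\rangle^2$. For the numerator I apply Cauchy--Schwarz termwise, $\langle z_i, x\rangle^2 \le \|z_i\|_2^2\,\|x\|_2^2$, and invoke the hypothesis $\|z_i\|_2 \le \bigol(m^{-1/6})$, so that $\langle z_i, x\rangle^2 \le \bigol(m^{-1/3})\,\|x\|_2^2$. Summing the (at most $t$) rank-one contributions yields $x^\intercal W x \le \bigl(\lambda + \bigol(t\,m^{-1/3})\bigr)\|x\|_2^2$. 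For the denominator I simply discard the nonnegative terms $\sum_i\langle y_i, x\rangle^2 \ge 0$ to obtain the crude lower bound $x^\intercal V x \ge \lambda\|x\|_2^2$.

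Combining the two estimates gives $\frac{x^\intercal W x}{x^\intercal V x} \le \frac{\lambda + \bigol(t\,m^{-1/3})}{\lambda} = 1 + \bigol\!\bigl(\tfrac{t\,m^{-1/3}}{\lambda}\bigr)$ uniformly in $x$, and taking the supremum over $x\neq 0$ completes the argument. It is worth noting that the only structural input actually used is the smallness of the differences $z_i = y_t - y_i$; the bound $\|y_i\|_2 \le B$ plays no role, since the terms $y_iy_i^\intercal$ are favourable---they only inflate the denominator---and can therefore be dropped.

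I do not expect a genuine obstacle here, as the whole proof is a one-sided comparison of quadratic forms. The single point that merits care is the justification of the identity $\|WV^{-1}\| = \sup_{x}\frac{x^\intercal W x}{x^\intercal V x}$: one must remember that this ``norm'' refers to the largest (real, positive) eigenvalue of $WV^{-1}$ rather than its largest singular value, and that the similarity to $V^{-1/2}WV^{-1/2}$ is what legitimizes the passage to the symmetric Rayleigh quotient. I would cite \cref{lem:courant1} for this step and keep the rest purely computational.
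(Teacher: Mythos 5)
Your proposal is correct and follows essentially the same route as the paper's proof: both reduce to the generalized Rayleigh quotient, bound the numerator by $\lambda + t\cdot\bigol(m^{-1/3})$ using $\|z_i\|_2^2 = \bigol(m^{-1/3})$ (you do this termwise via Cauchy--Schwarz, the paper via the operator norm of $\sum_i z_i z_i^\intercal$, which is the same estimate), and lower-bound the denominator by $\lambda\|x\|_2^2$ by discarding the $y_iy_i^\intercal$ terms. Your added remarks --- that $\|y_i\|_2\le B$ is never used and that the identity $\|WV^{-1}\|=\sup_x x^\intercal Wx/x^\intercal Vx$ rests on similarity to $V^{-1/2}WV^{-1/2}$ --- are accurate refinements of the paper's (slightly typo-laden) write-up, not a different argument.
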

\begin{proof}
Since $\|z_i\|_2 \leq C m^{-1/6}$, we have
\[
\|z_i z_i^\intercal\| = \|z_i\|_2^2 \leq C^2 m^{-1/3}.
\]
Therefore,
\[
\left\| \sum_{i=1}^t x_i x_i^\intercal \right\| \leq \sum_{i=1}^t \|x_i x_i^\intercal\| \leq t C^2 m^{-1/3}.
\]
Let $A := \sum_{i=1}^t y_i y_i^\intercal$ and $B := \sum_{i=1}^t z_i z_i^\intercal$. Then we write
\[
WV^{-1} = (\lambda I + B)^{-1} (\lambda I + A).
\]

We consider the Rayleigh quotient:
\[
\lambda_{M}(WV^{-1}) = \max_{\|x\| = 1} \frac{x^\intercal (\lambda I + B) z}{x^\intercal (\lambda I + A) x}.
\]

Now, for all unit vectors $x \in \mathbb{R}^d$,
\[
x^\intercal (\lambda I + B) x \leq \lambda + \|B\| \leq \lambda + t C^2 m^{-1/3},
\] for some constant $C>0$
and
\[
x^\intercal (\lambda I + A) x \geq \lambda.
\]

Therefore,
\[
\frac{x^\intercal (\lambda I + B) x}{x^\intercal (\lambda I + A) x} \leq \frac{\lambda + t C^2 m^{-1/3}}{\lambda} = 1 + \frac{t C^2}{\lambda m^{1/3}}.
\]

We conclude that
\[
\lambda_{M}(W^{-1}V) \leq 1 + \frac{t C^2}{\lambda m^{1/3}},
\]
which gives the bound:
\[
\lambda_{M}(W^{-1}V) = 1 + \mathcal{O}\left(\frac{t}{m^{1/3}}\right).
\]
This completes the proof.
\end{proof}

{\color{black}\textbf{Refined bound with the time-independent $\|\th_{t-1}\|_2$.} The following lemma replaces the loose $\sqrt{t}$-dependent parameter norm in Lemma~\ref{lemma:gt+mt} with the time-independent bound from Eq.~\eqref{eq:tight_theta_t-1}. This substitution, combined with the spectral perturbation bounds from Lemmas~\ref{lem:courant1}--\ref{lem:courant2}, reduces $\mathcal{K}_1(t)$ from $\bigol(td/(\epsilon^2 m^{1/6}))$ to $\bigol(\sqrt{t}d^2/(\epsilon^2 m^{1/6}))$---the critical improvement that yields sublinear regret.}

Utilizing $\text{\cref{lem:courant1}}$ and $\text{\cref{lem:courant2}}$, we establish the following lemma, which holds for a sufficiently large width $\wid$ satisfying $\text{\cref{eq:app-m-eq}}$,\textbf{ yielding a finer estimation result:}

\begin{lemma}\label{lemma:gt+mt2} Let the conditions in~\cref{lemma:gt+mt} be satisfied. Additionally, if the width $\wid$ satisfies \cref{eq:app-m-eq}, we can obtain
\begin{align}
    \begin{aligned}
    \|\gg_t(\th_{t-1})+M_t\|_{V_t^{-1}} \leq \mathcal{J}_{1} + \mathcal{K}_1(t)   
    \end{aligned},\no
\end{align} where $\mathcal{J}_{1}$ and  $\mathcal{K}_1(t)$ are defined in \cref{eq:N_1} and \cref{eq:M_1}, respectively, such that
\begin{align}
    \mathcal{J}_{1} &= \bigol\lt(\sqrt{d}+\epsilon^{-1}\rt), \no\\
\mathcal{K}_{1}(t)&=\bigol\lt(t^{1/2}\wid^{-1/6}\lt(\frac{d^2}{\epsilon^2}+d^3\rt)\diff^{4/3}+\dim^{1/2}\rt)\no,
\end{align}  with probability at least $1-\delta$.
\end{lemma}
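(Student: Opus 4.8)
The plan is to re-run the term-by-term estimate behind \cref{lemma:gt+mt}, but now feeding in two sharpenings that were unavailable there: (a) the refined a priori bound \eqref{eq:tight_theta_t-1}, which shows that under the width condition \eqref{eq:app-m-eq} one has $\|\th_{t-1}\|_2 = \bigol(d^{3/2}\diff + \sqrt{d} + \epsilon^{-2})$, i.e.\ a bound that is \emph{independent of $t$} (in contrast to the loose $\|\th_{t-1}\|_2 = \bigol(\sqrt{t})$ of \cref{lemma:app_theta_t_bound} used in \cref{lemma:gt+mt}); and (b) the spectral-comparison tools \cref{lem:courant1} and \cref{lem:courant2}. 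The decomposition $\gg_t(\th_{t-1}) + M_t = \gg_{t,1} + \gg_{t,2} + \gg_{t,3} + M_t$, the cancellations $\|\gg_{t,1,1}+\gg_{t,2,1}\|_{V_{t-1}^{-1}} = \|\gg_{t,1,2}+M_t\|_{V_{t-1}^{-1}} = 0$ from \eqref{eq:fact1}, and the identification of $\mathcal{N}_1,\mathcal{M}_1(t)$ all carry over verbatim; only the bounds on $\gg_{t,1,4}$ and on the difference block $\gg_{t,1,3}+\gg_{t,1,5}+\gg_{t,1,6}+\gg_{t,2,2}$ need to be redone.

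First I would revisit $\gg_{t,1,4}$. The factor $\bigl(\sqrt{2t/\lambda} + \text{const}\bigr)$ appearing in the loose estimate \eqref{eq:fit-fii}--\eqref{eq:fact3} originates \emph{solely} from substituting $\|\th_{t-1}\|_2 = \bigol(\sqrt{t})$ into the mean-value bound for $|\ff_{i,t}(\th_{t-1}) - \ff_{i,i}(\th_{t-1})|$. Replacing this by the $t$-independent bound from \eqref{eq:tight_theta_t-1} gives $\max_i|\ff_{i,t}(\th_{t-1})-\ff_{i,i}(\th_{t-1})| = \bigol\bigl((d^{3/2}\diff + \sqrt d + \epsilon^{-2})\,m^{-1/6}\bigr)$, removing one power of $\sqrt{t}$. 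Combining with $\sum_{i}\|\Delta\rep_{i,i}/\wi\|_{V_{t-1}^{-1}} \le \sqrt{t}\,\bigol(\sqrt{d})$ (Cauchy--Schwarz together with the elliptic-potential lemma \cref{lemma:epl}) then yields a leading term of order $d\,\sqrt{t}\,(d/\epsilon^{2}+d^{2})\,m^{-1/6}\,\|\mathbf{\util} - \widetilde{\mathbf{\util}}\|_{\ntk^{-1}}^{4/3}$, which is exactly the first summand of the claimed $\mathcal{M}_1(t)$.

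Next I would treat the difference block $\gg_{t,1,3}+\gg_{t,1,5}+\gg_{t,1,6}+\gg_{t,2,2} = \sum_{i}c_i z_i$, where $z_i := \Delta\rep_{i,t}-\Delta\rep_{i,i}$ and $|c_i|\le \epsilon^{-2}$, abandoning the crude bound $\|\cdot\|_{V_{t-1}^{-1}} \le \lambda^{-1/2}\|\cdot\|_2$ of \eqref{eq:fact2}. Introducing the auxiliary Gram matrix $W := \lambda I + \sum_i z_i z_i^{\intercal}$ as in \cref{lem:courant2}, an application of \cref{lem:courant1} to the inverse matrices gives $\|\sum_i c_i z_i\|_{V_{t-1}^{-1}} \le \sqrt{\lambda_M(W V_{t-1}^{-1})}\;\|\sum_i c_i z_i\|_{W^{-1}}$. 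Since \cref{lemma:app-diffphiw} together with \eqref{eq:app-m-eq} guarantees $\|z_i\|_2 = \bigol(m^{-1/6})$ uniformly in $i$, the hypotheses of \cref{lem:courant2} hold and $\lambda_M(W V_{t-1}^{-1}) = 1 + \bigol(t\,m^{-1/3}/\lambda)$; feeding this perturbation factor through produces the second summand $d\,\tfrac{t}{m^{1/3}\epsilon^{4}}\,\|\mathbf{\util} - \widetilde{\mathbf{\util}}\|_{\ntk^{-1}}^{4/3}$. The stochastic term $\gg_{t,3}$ is untouched, since its Bernstein-type bound \eqref{eq:fact4} (via \cref{lemma:berstein}) never invoked the $\|\th_{t-1}\|_2$ estimate, so it still contributes $\mathcal{N}_1 = \bigol(\sqrt d + \epsilon^{-1})$.

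The main obstacle will be the spectral-comparison step: one must verify that $\|z_i\|_2 = \bigol(m^{-1/6})$ holds \emph{uniformly} in $i \le t-1$ (which is precisely where the width requirement \eqref{eq:app-m-eq} is consumed), so that \cref{lem:courant2} genuinely applies, and then extract the correct joint $(d,\epsilon,m)$-dependence from the factor $\lambda_M(W V_{t-1}^{-1})$ rather than inadvertently collapsing back to the loose $m^{-1/6}$ rate of \eqref{eq:fact2}. Once all blocks are reassembled using \eqref{eq:fact1} for the vanishing pairs, collecting the refined estimates yields $\|\gg_t(\th_{t-1})+M_t\|_{V_{t-1}^{-1}} \le \mathcal{N}_1 + \mathcal{M}_1(t)$ in the stated form, completing the proof.
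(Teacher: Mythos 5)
Your treatment of $\gg_{t,1,4}$ matches the paper's: under the width condition \eqref{eq:app-m-eq} the refined bound \eqref{eq:tight_theta_t-1} makes $\|\th_{t-1}\|_2$ independent of $t$, which removes one factor of $\sqrt{t}$ from \eqref{eq:fit-fii}, and the elliptic potential lemma then yields the first summand of $\mathcal{M}_1(t)$. The skeleton of your argument for the difference block is also the paper's: the auxiliary Gram matrix $W$ built from $z_i=\Delta\rep_{i,t}-\Delta\rep_{i,i}$, and \cref{lem:courant1}--\cref{lem:courant2} to transfer between the $W^{-1}$ and $V_{t-1}^{-1}$ norms at the cost of a factor $1+\bigol(tm^{-1/3}/\lambda)$.

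However, there is a genuine gap in the difference block: you never explain how $\lt\|\sum_i c_i z_i\rt\|_{W^{-1}}$ itself is controlled, and the only property of the coefficients you record, $|c_i|\le\epsilon^{-2}$, is not enough. A deterministic bound via the triangle inequality, Cauchy--Schwarz, and the elliptic potential lemma applied to $W$ gives $\lt\|\sum_i c_i z_i\rt\|_{W^{-1}}\le \epsilon^{-2}\sqrt{t}\bigl(\sum_i\|z_i\|_{W^{-1}}^2\bigr)^{1/2}=\bigol\lt(\epsilon^{-2}t m^{-1/6}\rt)$, which after multiplying by the spectral factor is no better than the crude bound \eqref{eq:fact2} you set out to improve --- exactly the ``collapse back to the $m^{-1/6}$ rate'' you identify as the main obstacle but do not resolve. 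The paper's resolution is to observe that the difference block equals $\sum_{i}\lt(o_i-\link(\th_{t-1}^\intercal\Delta\rep_{i,t})\rt)(\Delta\rep_{i,t}-\Delta\rep_{i,i})/\wi^2$, whose scalar factors are conditionally mean-zero and bounded, so the self-normalized Bernstein inequality (\cref{lemma:berstein}) applied with the Gram matrix $\widetilde{U}_{t-1}$ of the vectors $z_i/\wi^2$ yields a $t$-independent $\bigol(\sqrt{d})$ bound in the $\widetilde{U}_{t-1}^{-1}$ norm; only then does the Courant--Fischer factor $1+\bigol(tm^{-1/3}\epsilon^{-4})$ produce the claimed second summand of $\mathcal{M}_1(t)$. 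Without invoking this martingale structure for the difference block (you invoke it only for $\gg_{t,3}$), the stated bound does not follow.
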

\begin{proof}
It suffices to consider  the terms $\gg_{t,1,4}$ and 
$$
\mathcal{B}_t := \gg_{t,1,3}+\gg_{t,1,5}+\gg_{t,1,6}+\gg_{t,2,2}
$$ which are
appeared in \cref{lemma:gt+mt}. Using~\cref{eq:tight_theta_t-1} by~\cref{eq:app-m-eq}, we can replace the result in \cref{eq:fit-fii} as follows:
\begin{align}
\begin{aligned}
    &|\ff_{i,t}(\th_{t-1})-\ff_{i,i}(\th_{t-1})| = \left|\th_{t-1}^\intercal\Delta\rep_{i,i}+\th_{t-1}^\intercal\Delta\rep_{i,t}\right| \\&\leq C\|\th_{t-1}\|_2 
    \left(\|\mathbf{\util} - \widetilde{\mathbf{\util}}\|_{\ntk^{-1}}^{4/3}L^3d^{1/2}\sqrt{\log m}/(m^{1/6}) + 2G \repl \frac{\delta^{3/2}}{\sqrt{m}Tn^{9/2}\log^3m}\right)\\
    &= \bigol\left((d^2+d\epsilon^{-2})\|\mathbf{\util} - \widetilde{\mathbf{\util}}\|_{\ntk^{-1}}^{4/3}m^{-1/6}\right), \no
\end{aligned}\no
\end{align} for some constant $C>0$ with probability at least \(1 - \delta\). Therefore, we can compute
\begin{align}
\begin{aligned}
    &\|\gg_{t,1,4}\|_{V_{t-1}^{-1}} \leq \frac{\linku}{\epsilon}\sqrt{\frac{1}{\lambda}}\sum_{i=1}^{t-1}|\ff_{i,t}(\th_{t-1})-\ff_{i,i}(\th_{t-1})|\|\Delta\rep_{i,i}/\wi\|_{V_{t-1}^{-1}}\\
    &=\bigol\lt(\sqrt{td}\left((d^2+ d\epsilon^{-2})\|\mathbf{\util} - \widetilde{\mathbf{\util}}\|_{\ntk^{-1}}^{4/3}m^{-1/6}\right)\rt)
\end{aligned}\label{eq:fact6}
\end{align} 
with probability at least $1-\delta$, where the last inequality is driven by \cref{eq:fit-fii} and the elliptic potential lemma in \cref{lemma:epl} to estimate the summation of $\|\Delta\rep_{i,i}\|_{V_{t-1}^{-1}}$.
Besides, note that 
\begin{align*}
\mathcal{B}_t &= \sum_{i = 1}^{t-1}\lt(o_i-\link(\th_{t-1}^\intercal\Delta\rep_{i, t})\rt)\frac{(\Delta\rep_{i, t} - \Delta\rep_{i, i})}{\wi^2}.
\end{align*} 

Combining \cref{lem:courant1} and \cref{lem:courant2}, we can obtain
$$
\|\mathcal{B}_t\|_{V_{t-1}^{-1}} \leq \bigol\lt(\lt(1 +  \left(\frac{t}{\epsilon^4 m^{1/3}}\diff^{4/3}\right)\rt)\|\mathcal{B}_t\|_{\widetilde{U}_{t-1}^{-1}}\rt)
$$ where
$$
\widetilde{U}_{t-1}=\lambda I + \sum_{i=1}^{t-1}\frac{(\Delta\rep_{i, t} - \Delta\rep_{i, i})}{\wi^2}\frac{(\Delta\rep_{i, t} - \Delta\rep_{i, i})^\intercal}{\wi^2}.
$$
Since 
\begin{align}
    \mathbb{E}\lt[\lt(o_i-\link(\th_{t-1}^\intercal\Delta\rep_{i, t})\rt)\mid \mathcal{F}_{t}\rt]=0,\no\\
\mathbb{E}\lt[\lt(o_i-\link(\th_{t-1}^\intercal\Delta\rep_{i, t})\rt)^2\mid \mathcal{F}_{t}\rt]\leq 1,\no
\end{align}
we can apply \cref{lemma:berstein}, resulting in  
\begin{align}
    \|\mathcal{B}_t\|_{\widetilde{U}_{t-1}^{-1}}= \bigol\lt(\lt(1 + \left(\frac{t}{\epsilon^4 m^{1/3}}\right)\rt)d^{1/2}\rt).\label{eq:fact7}
\end{align} Combining \cref{eq:fact6,eq:fact7} and the other estimation results in \cref{lemma:gt+mt}, we can obtain
\begin{align}
\mathcal{K}_{1}(t)&=\bigol\lt(\lt(d\lt(\frac{d}{\epsilon^2}+d^2\rt)\frac{t^{1/2}}{\wid^{1/6}}+\frac{d^{1/2}}{\epsilon^4}\frac{t}{m^3}\rt)\|\mathbf{\util} - \widetilde{\mathbf{\util}}\|_{\ntk^{-1}}^{4/3}+\dim^{1/2}\rt). \label{eq:temp}
\end{align}  Notice that
    \begin{align}
        \frac{t}{m^{1/3}}=\lt(\frac{t^{1/2}}{m^{1/6}}\rt)^2=\lt(\frac{t}{m^{1/6}}\rt)\lt(\frac{t^{1/2}}{m^{1/6}}\rt)\frac{1}{\sqrt{t}}=\bigol\lt(\frac{t^{1/2}}{m^{1/6}}\rt)\label{eq:mixed}
    \end{align} due to \cref{eq:app-width}. Using \cref{eq:mixed}, we can complete the proof. 

\end{proof}

\begin{proof}[Proof of \cref{lem:app-concentration_inequality}] Thanks to \cref{lem:app-decom}, the following holds:
    \begin{align}
\lt\|\th_{t-1}-\th_* + U_{t-1}^{-1} M_{t}\rt\|_{V_{t-1}}&\leq \frac{1}{\linkb}\left(\lambda\left\|\th_*-\th_{0}\right\|_{V_{t-1}^{-1}}+\lt\|\gg_t(\th_{t-1})+M_t\rt\|_{V_{t-1}^{-1}}\right)\nonumber\\
&\leq \mathcal{J}'_1+\mathcal{K}'_1(t)\no
\end{align} where 
\begin{align}
    \mathcal{J}'_1 &= \mathcal{J}_1\frac{1}{\linkb}+\|\th_*-\th_0\|_{V_{t-1}^{-1}}\label{eq:n1'}\\
    \mathcal{K}'_1&= \frac{1}{\linkb}\mathcal{K}_1. \label{eq:m1'}
\end{align} Here, $\mathcal{J}_{1}$ and  $\mathcal{K}_1(t)$ are defined in \cref{eq:N_1} and \cref{eq:M_1}. 
Due to \cref{eq:app-m-eq}, we can apply \cref{lemma:gt+mt2} to estimate $\lt\|\gg_t(\th_{t-1})+M_t\rt\|_{V_{t-1}^{-1}}$. Since $\left\|\th_*-\th_{0}\right\|_{V_{t-1}^{-1}}$ is bounded independently of $T, \wid, \epsilon$ and $\diff$, it is obvious that
\begin{align}
    \mathcal{J}'_{1} &= \bigol\lt(d^{1/2}+\epsilon^{-1}\rt), \label{eq:n1'inequ}\\
\mathcal{K}'_{1}(t)&=\bigol\lt(\lt(\frac{d^2}{\epsilon^2}+d^3\rt)\frac{t^{1/2}}{\wid^{1/6}}\|\mathbf{\util} - \widetilde{\mathbf{\util}}\|_{\ntk^{-1}}^{4/3}+\dim^{1/2}\rt). \label{eq:m1'inequ}
\end{align} 
This completes the proof. 
\end{proof}

Although there exist several theoretical results on concentration inequalities for generalized linear bandits~\citep{kveton2020randomized, olddog, chen1999strong, li2017provably} and contextual linear dueling bandits~\citep{lst, saha2021optimal}, to the best of our knowledge, this is the first work that combines spectral perturbation analysis (the Courant–Fischer Minimax theorem) with a bootstrap argument. This refined approach is crucial for achieving a tight regret bound. A standard analysis that combines a concentration inequality with the elliptic potential lemma typically introduces an extra $\sqrt{T}$ factor, resulting in a suboptimal regret term on the order of $\bigol(T^{3/2}m^{-1/6})$. In contrast, our analysis circumvents this issue, yielding a tighter and more favorable bound that scales as $\bigol(Tm^{-1/6})$.

\section{THEORETICAL ANALYSIS OF UCB FRAMEWORK}\label{sec:UCB}

{\color{black}\textbf{Overview.} This section proves sublinear cumulative regret for all UCB-based arm selection strategies. The proof structure is: (1) derive strategy-specific bounds on $\th_{t-1}^\intercal(\cdot)$ relating the estimated utility gap to the exploration bonus (\cref{sec:UCB-ARM}); (2) decompose the instantaneous regret $r_t^a$ into representation bias, estimated utility gap, and parameter estimation error terms; (3) apply the concentration inequality from \cref{lem:app-concentration_inequality} and the elliptic potential lemma to sum the per-round bounds across $T$ rounds.}

\subsection{Arm Selection Strategies}\label{sec:UCB-ARM}
In this section, we consider three arm-selection strategies: 
Asymmetric Arm Selection (UCB-ASYM), Optimistic Symmetric Arm Selection (UCB-OSYM), 
and Candidate-Based Symmetric Arm Selection (UCB-CSYM).

We begin by analyzing the asymmetric strategy (UCB-ASYM):

\begin{lemma} [Asymmetric Arm Selection (UCB-ASYM)]
    Let the asymmetric arm selection strategy be defined as
\begin{align}
    \stt&=\argmax_{i \in [K]}\th_{t-1}^\intercal\rep(x_{t,i};\ww_{t-1}),\label{eq:app-strategy1-1}\\
    \ndt&=\argmax_{i\in [K]}\left(\th_{t-1}^\intercal\rep(x_{t,i};\ww_{t-1}) +  \cwct\|\Delta\rep_{t,i,\stt}\|_{V_{t-1}^{-1}}\right),\label{eq:app-strategy1-2}
\end{align} for some $\cwct>0$. Then we can obtain
\begin{align}
    \th_{t-1}^\intercal(2\Delta\rep_{t,\opt,\stt}+\Delta\rep_{t,\stt,\ndt}) &\leq \cwct\|\Delta\rep_{t,\stt,\ndt}\|_{V_{t-1}^{-1}}-\cwct\|\Delta\rep_{t,\opt,\stt}\|_{V_{t-1}^{-1}}.\label{eq:unsymmetry}
\end{align}
\end{lemma}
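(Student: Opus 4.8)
The plan is to reduce \cref{eq:unsymmetry} to two elementary optimality conditions read off directly from the two $\argmax$ rules \cref{eq:app-strategy1-1,eq:app-strategy1-2}, and then simply add them. Writing $\rep_k := \rep(x_{t,k};\ww_{t-1})$ as a local shorthand, I would first record the purely algebraic identity
\begin{align}
2\Delta\rep_{t,\opt,\stt}+\Delta\rep_{t,\stt,\ndt} = 2\rep_{\opt}-\rep_{\stt}-\rep_{\ndt} = \lt(\rep_{\opt}-\rep_{\stt}\rt) + \lt(\rep_{\opt}-\rep_{\ndt}\rt),\no
\end{align}
so that the left-hand side of \cref{eq:unsymmetry} splits additively as $\th_{t-1}^\intercal(\rep_{\opt}-\rep_{\stt}) + \th_{t-1}^\intercal(\rep_{\opt}-\rep_{\ndt})$, and it suffices to bound each summand.

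For the first summand, the greedy rule \cref{eq:app-strategy1-1} selects $\stt$ as a maximizer of $\th_{t-1}^\intercal\rep_k$ over $k\in[K]$; since $\opt\in[K]$ is a feasible competitor, this immediately yields $\th_{t-1}^\intercal\rep_{\stt}\geq\th_{t-1}^\intercal\rep_{\opt}$, i.e.\ $\th_{t-1}^\intercal(\rep_{\opt}-\rep_{\stt})\leq 0$. For the second summand, I would evaluate the objective in \cref{eq:app-strategy1-2} at the feasible choice $k=\opt$ and invoke optimality of $\ndt$, giving
\begin{align}
\th_{t-1}^\intercal\rep_{\ndt}+\cwct\lt\|\Delta\rep_{t,\ndt,\stt}\rt\|_{V_{t-1}^{-1}} \geq \th_{t-1}^\intercal\rep_{\opt}+\cwct\lt\|\Delta\rep_{t,\opt,\stt}\rt\|_{V_{t-1}^{-1}}.\no
\end{align}
Using the symmetry of the $V_{t-1}^{-1}$-seminorm under negation of its argument, namely $\|\Delta\rep_{t,\ndt,\stt}\|_{V_{t-1}^{-1}}=\|\Delta\rep_{t,\stt,\ndt}\|_{V_{t-1}^{-1}}$, I would rearrange this to obtain $\th_{t-1}^\intercal(\rep_{\opt}-\rep_{\ndt})\leq \cwct\|\Delta\rep_{t,\stt,\ndt}\|_{V_{t-1}^{-1}}-\cwct\|\Delta\rep_{t,\opt,\stt}\|_{V_{t-1}^{-1}}$.

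Adding the two bounds recombines the split left-hand side and produces exactly \cref{eq:unsymmetry}. There is no substantive obstacle here, since the argument is linear in $\th_{t-1}$ throughout; the only points requiring care are the bookkeeping of which arm plays the role of the feasible competitor in each $\argmax$ (namely $\opt$ in both rules), and the observation that the exploration bonus attached to $\opt$ in the second optimality condition is precisely the $\|\Delta\rep_{t,\opt,\stt}\|_{V_{t-1}^{-1}}$ term appearing on the right-hand side, so that the two pieces align after the seminorm symmetry is applied.
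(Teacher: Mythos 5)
Your proposal is correct and follows essentially the same route as the paper's proof: both decompose $\th_{t-1}^\intercal(2\Delta\rep_{t,\opt,\stt}+\Delta\rep_{t,\stt,\ndt})$ as $\th_{t-1}^\intercal\Delta\rep_{t,\opt,\stt}+\th_{t-1}^\intercal\Delta\rep_{t,\opt,\ndt}$, bound the first term by zero via the greedy rule \cref{eq:app-strategy1-1}, bound the second via the optimality of $\ndt$ in \cref{eq:app-strategy1-2} with $\opt$ as the feasible competitor, and add. The only cosmetic difference is that you make the seminorm symmetry $\|\Delta\rep_{t,\ndt,\stt}\|_{V_{t-1}^{-1}}=\|\Delta\rep_{t,\stt,\ndt}\|_{V_{t-1}^{-1}}$ explicit, which the paper uses silently.
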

\begin{proof}
    Notice that
\begin{align}
    \th_{t-1}^\intercal\left(
    \Delta\rep_{t,\opt,\stt} + \Delta\rep_{t,\opt,\ndt}
    \right)
    &=2\th_{t-1}^\intercal(\Delta\rep_{t,\opt,\stt})+
    \th_{t-1}^\intercal(\Delta\rep_{t,\stt,\ndt}).\nonumber
\end{align}
\cref{eq:app-strategy1-1} results in 
\begin{align}
    \th_{t-1}^\intercal\Delta\rep_{t,\opt,\stt}\leq 0.\label{eq:arm1}
\end{align} 
In addition, \cref{eq:app-strategy1-2} results in 
\begin{align}
     &\th_{t-1}^\intercal\rep\left(x_{t,\opt};\ww_{t-1}\right) +\cwct\|\Delta\rep_{t,\opt,\stt}\|_{V_{t-1}^{-1}} \label{eq:arm2}\\&\leq  \th_{t-1}^\intercal \rep\left(x_{t,\ndt};\ww_{t-1}\right) + \cwct\|\Delta\rep_{t,\ndt,\stt}\|_{V_{t-1}^{-1}}.\no
\end{align}
Adding \cref{eq:arm1} and \cref{eq:arm2} leads to \cref{eq:unsymmetry}. This completes the proof.
\end{proof}
 Now we analyze two symmetric arm selection strategies as follows:
\begin{lemma}[Optimistic Symmetric Arm Selection (UCB-OSYM)]
    Let the optimistic symmetric arm selection strategy be defined as
\begin{align}
    (\stt,\ndt)=\argmax_{(i,j)\in[K]\times[K]}\left[\th_{t-1}^\intercal\rep(x_{t,i};\ww_{t-1}) +\th_{t-1}^\intercal\rep(x_{t,j};\ww_{t-1}) + \cwct\|\Delta\rep_{t,i,j}\|_{V_{t-1}^{-1}}\right],\label{eq:osym}
\end{align} for some $\cwct>0$.  Then,
\begin{align}
    &2\th_{t-1}^\intercal\rep(x_{t,\opt};\ww_{t-1}) - \th_{t-1}^\intercal\rep(x_{t,\stt};\ww_{t-1}) - \th_{t-1}^\intercal\rep(x_{t,\ndt};\ww_{t-1})\label{eq:sym1}\\
    &\leq -\cwct\|\Delta\rep_{t,\opt,\stt}\|_{V_{t-1}^{-1}} -\cwct\|\Delta\rep_{t,\opt,\ndt}\|_{V_{t-1}^{-1}}  + 2\cwct\|\Delta\rep_{t,\stt,\ndt}\|_{V_{t-1}^{-1}}.\no
\end{align} 
\end{lemma}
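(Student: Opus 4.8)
The plan is to leverage the maximality of the chosen pair $(\stt,\ndt)$ in \cref{eq:osym} against two \emph{asymmetric} competitor pairs, $(\opt,\ndt)$ and $(\stt,\opt)$, and then add the two resulting inequalities. The tempting competitor $(\opt,\opt)$ should be avoided: since $\Delta\rep_{t,\opt,\opt}=0$ its exploration bonus vanishes, so that comparison yields only the single term $\cwct\|\Delta\rep_{t,\stt,\ndt}\|_{V_{t-1}^{-1}}$ and none of the negative bonus contributions appearing on the right-hand side of \cref{eq:sym1}.

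First I would instantiate the maximality in \cref{eq:osym} at the feasible pair $(\opt,\ndt)$ and cancel the common summand $\th_{t-1}^\intercal\rep(x_{t,\ndt};\ww_{t-1})$ from both sides, which gives
\begin{align}
\th_{t-1}^\intercal\rep(x_{t,\opt};\ww_{t-1}) - \th_{t-1}^\intercal\rep(x_{t,\stt};\ww_{t-1}) \leq \cwct\|\Delta\rep_{t,\stt,\ndt}\|_{V_{t-1}^{-1}} - \cwct\|\Delta\rep_{t,\opt,\ndt}\|_{V_{t-1}^{-1}}. \nonumber
\end{align}
Instantiating \cref{eq:osym} at $(\stt,\opt)$ and cancelling $\th_{t-1}^\intercal\rep(x_{t,\stt};\ww_{t-1})$ symmetrically yields
\begin{align}
\th_{t-1}^\intercal\rep(x_{t,\opt};\ww_{t-1}) - \th_{t-1}^\intercal\rep(x_{t,\ndt};\ww_{t-1}) \leq \cwct\|\Delta\rep_{t,\stt,\ndt}\|_{V_{t-1}^{-1}} - \cwct\|\Delta\rep_{t,\stt,\opt}\|_{V_{t-1}^{-1}}. \nonumber
\end{align}

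Next I would invoke the invariance of the Mahalanobis norm under negation of its argument: since $\Delta\rep_{t,\stt,\opt}=-\Delta\rep_{t,\opt,\stt}$, we have $\|\Delta\rep_{t,\stt,\opt}\|_{V_{t-1}^{-1}}=\|\Delta\rep_{t,\opt,\stt}\|_{V_{t-1}^{-1}}$, so the last bonus above may be rewritten with index order $(\opt,\stt)$. Summing the two displayed inequalities then reproduces \cref{eq:sym1} verbatim, with $2\cwct\|\Delta\rep_{t,\stt,\ndt}\|_{V_{t-1}^{-1}}$ arising from the paired bonus and $-\cwct\|\Delta\rep_{t,\opt,\stt}\|_{V_{t-1}^{-1}}-\cwct\|\Delta\rep_{t,\opt,\ndt}\|_{V_{t-1}^{-1}}$ from the two single-arm bonuses. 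The argument is otherwise elementary algebra; the one genuinely load-bearing decision—and hence the main obstacle—is the choice of the asymmetric competitor pairs, because it is precisely these negative bonus terms that the downstream regret analysis needs in order to telescope through the elliptic potential lemma (\cref{lemma:epl}).
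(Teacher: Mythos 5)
Your proof is correct and follows essentially the same route as the paper's: both instantiate the maximality of the selected pair against the two competitor pairs containing $\opt$ paired with $\stt$ and with $\opt$ paired with $\ndt$ (your ordering $(\stt,\opt)$ versus the paper's $(\opt,\stt)$ is immaterial since the objective is symmetric in the two indices), and then sum the two resulting inequalities. Your cancellation of the common summand before adding, and your explicit remark on the sign-invariance of the Mahalanobis norm, are only cosmetic refinements of the paper's argument.
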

\begin{proof}
    Using the strategy as in \cref{eq:osym}, we can obtain
\begin{align}
\th_{t-1}^\intercal\rep(x_{t,\opt};\ww_{t-1}) &+ \th_{t-1}^\intercal\rep(x_{t,\stt};\ww_{t-1}) + \cwct\|\Delta\rep_{t,\opt,\stt}\|_{V_{t-1}^{-1}}\nonumber\\
    &\leq \th_{t-1}^\intercal\rep(x_{t,\stt};\ww_{t-1}) + \th_{t-1}^\intercal\rep(x_{t,\ndt};\ww_{t-1})  + \cwct\|\Delta\rep_{t,\stt,\ndt}\|_{V_{t-1}^{-1}},\nonumber\\
    \th_{t-1}^\intercal\rep(x_{t,\opt};\ww_{t-1}) &+ \th_{t-1}^\intercal\rep(x_{t,\ndt};\ww_{t-1}) + \cwct\|\Delta\rep_{t,\opt,\ndt}\|_{V_{t-1}^{-1}}\label{eq:b1}\\
    &\leq \th_{t-1}^\intercal\rep(x_{t,\stt};\ww_{t-1}) + \th_{t-1}^\intercal\rep(x_{t,\ndt};\ww_{t-1})  + \cwct\|\Delta\rep_{t,\stt,\ndt}\|_{V_{t-1}^{-1}}.\label{eq:b2}.
\end{align} Adding \cref{eq:b1} and \cref{eq:b2}, we can derive \cref{eq:sym1}. This completes the proof.
\end{proof}
\begin{lemma}[Candidate-based Symmetric Arm Selection (UCB-CSYM)]
    Let the confidence set be defined as
    \begin{align}
    \cs_t = \left\{i\in[K]:
    \th_{t-1}^\intercal\rep(x_{t,i};\ww_{t-1})- \th_{t-1}^\intercal\rep(x_{t,j};\ww_{t-1}) +\cwct\|
    \Delta\rep_{t,i,j}\|_{V_{t-1}^{-1}} \geq 0, \forall j\in [K]\right\}.\nonumber
\end{align} for some $\cwct$ and let two arms be chosen from $\cs_t$ as
\begin{align}
    (\stt, \ndt)=\argmax_{(i,j)\in \cs_t\times \cs_t}\|\Delta\rep_{t,i,j}\|_{V_{t-1}^{-1}}.\label{eq:strategy3-3}
\end{align} Then, 
\begin{align}
    &2\th_{t-1}^\intercal\rep(x_{t,\opt};\ww_{t-1}) - \th_{t-1}^\intercal\rep(x_{t,\stt};\ww_{t-1}) - \th_{t-1}^\intercal\rep(x_{t,\ndt};\ww_{t-1})\label{eq:sym2}\\
    &\leq -\cwct\|\Delta\rep_{t,\opt,\stt}\|_{V_{t-1}^{-1}}-\cwct\|\Delta\rep_{t,\opt,\ndt}\|_{V_{t-1}^{-1}} + 4\cwct\|\Delta\rep_{t,\stt,\ndt}\|_{V_{t-1}^{-1}}.
\end{align} 
\end{lemma}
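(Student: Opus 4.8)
The plan is to leverage the structural fact that, on the high-probability event guaranteed by the confidence bound, the three arms $\opt$, $\stt$, and $\ndt$ all belong to the candidate set $\cs_t$. Once this is in hand, the claimed inequality follows by combining the set-membership conditions with the max-bonus selection rule in \cref{eq:strategy3-3}, in the same purely algebraic spirit as the UCB-ASYM and UCB-OSYM lemmas, but with one extra probabilistic ingredient. Throughout I use $2\th_{t-1}^\intercal\rep(x_{t,\opt};\ww_{t-1}) - \th_{t-1}^\intercal\rep(x_{t,\stt};\ww_{t-1}) - \th_{t-1}^\intercal\rep(x_{t,\ndt};\ww_{t-1}) = \th_{t-1}^\intercal\Delta\rep_{t,\opt,\stt} + \th_{t-1}^\intercal\Delta\rep_{t,\opt,\ndt}$ and the symmetry $\Delta\rep_{t,i,j} = -\Delta\rep_{t,j,i}$.

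First I would establish $\opt \in \cs_t$. Unpacking the definition, this requires $\th_{t-1}^\intercal\Delta\rep_{t,\opt,j} + \cwct\|\Delta\rep_{t,\opt,j}\|_{V_{t-1}^{-1}} \geq 0$ for every $j \in [K]$. Using the utility decomposition of \cref{lemma:app-linear}, the quantity $\th_*^\intercal\Delta\rep_{t,\opt,j}$ equals $\util(x_{t,\opt}) - \util(x_{t,j})$ up to the representation-learning bias carried by $\th_0^\intercal\nabla\Delta\rep_{\cdot,1}(\ww_*-\ww_0)$, and the utility difference is nonnegative by optimality of $\opt$. I would then control $|\Delta\rep_{t,\opt,j}^\intercal(\th_{t-1}-\th_*)|$ by splitting off the bias term $U_{t-1}^{-1}M_t$, applying Cauchy--Schwarz to get $\|\Delta\rep_{t,\opt,j}\|_{V_{t-1}^{-1}}\,\|\th_{t-1}-\th_*+U_{t-1}^{-1}M_t\|_{V_{t-1}}$, and invoking the concentration inequality of \cref{lem:app-concentration_inequality} together with the bound on $\|U_{t-1}^{-1}M_t\|_2$. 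Choosing $\cwct$ at least as large as the resulting confidence width makes the exploration bonus dominate, yielding $\opt \in \cs_t$ with probability at least $1-\delta$.

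Conditioned on $\opt,\stt,\ndt \in \cs_t$, the algebra is short. Since $\stt \in \cs_t$, its membership condition evaluated at $j=\opt$ reads $\th_{t-1}^\intercal\rep(x_{t,\stt};\ww_{t-1}) - \th_{t-1}^\intercal\rep(x_{t,\opt};\ww_{t-1}) + \cwct\|\Delta\rep_{t,\stt,\opt}\|_{V_{t-1}^{-1}} \geq 0$, which rearranges to $\th_{t-1}^\intercal\Delta\rep_{t,\opt,\stt} \leq \cwct\|\Delta\rep_{t,\opt,\stt}\|_{V_{t-1}^{-1}}$; the identical argument with $\ndt$ gives $\th_{t-1}^\intercal\Delta\rep_{t,\opt,\ndt} \leq \cwct\|\Delta\rep_{t,\opt,\ndt}\|_{V_{t-1}^{-1}}$. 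Summing bounds the left-hand side by $\cwct\|\Delta\rep_{t,\opt,\stt}\|_{V_{t-1}^{-1}} + \cwct\|\Delta\rep_{t,\opt,\ndt}\|_{V_{t-1}^{-1}}$.

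It then remains to flip the signs on the bonus terms. Because $\opt,\stt,\ndt \in \cs_t$, the pairs $(\opt,\stt)$ and $(\opt,\ndt)$ are admissible in the maximization \cref{eq:strategy3-3}, whose maximizer is $(\stt,\ndt)$, so $\|\Delta\rep_{t,\opt,\stt}\|_{V_{t-1}^{-1}} \leq \|\Delta\rep_{t,\stt,\ndt}\|_{V_{t-1}^{-1}}$ and $\|\Delta\rep_{t,\opt,\ndt}\|_{V_{t-1}^{-1}} \leq \|\Delta\rep_{t,\stt,\ndt}\|_{V_{t-1}^{-1}}$. Writing each positive bonus as $\cwct\|\Delta\rep_{t,\opt,\stt}\|_{V_{t-1}^{-1}} = -\cwct\|\Delta\rep_{t,\opt,\stt}\|_{V_{t-1}^{-1}} + 2\cwct\|\Delta\rep_{t,\opt,\stt}\|_{V_{t-1}^{-1}}$ (and analogously for $\ndt$) and bounding the doubled terms by $2\cwct\|\Delta\rep_{t,\stt,\ndt}\|_{V_{t-1}^{-1}}$ each produces precisely the claimed right-hand side $-\cwct\|\Delta\rep_{t,\opt,\stt}\|_{V_{t-1}^{-1}} - \cwct\|\Delta\rep_{t,\opt,\ndt}\|_{V_{t-1}^{-1}} + 4\cwct\|\Delta\rep_{t,\stt,\ndt}\|_{V_{t-1}^{-1}}$. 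The main obstacle is the first step: unlike UCB-ASYM and UCB-OSYM, whose inequalities follow from the selection rule alone, here membership $\opt \in \cs_t$ genuinely requires the concentration bound and the careful absorption of the shallow-exploration bias term $U_{t-1}^{-1}M_t$.
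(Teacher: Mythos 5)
Your proof is correct and follows essentially the same route as the paper's: the membership conditions of $\stt$ and $\ndt$ evaluated at $j=\opt$ give the bound $\cwct\|\Delta\rep_{t,\opt,\stt}\|_{V_{t-1}^{-1}}+\cwct\|\Delta\rep_{t,\opt,\ndt}\|_{V_{t-1}^{-1}}$, and the max-bonus selection rule converts each positive bonus into $-\cwct\|\cdot\|+2\cwct\|\Delta\rep_{t,\stt,\ndt}\|_{V_{t-1}^{-1}}$, yielding the stated right-hand side. The one place you go beyond the paper is in explicitly establishing $\opt\in\cs_t$ via the concentration inequality before invoking the maximization over $\cs_t\times\cs_t$; the paper's proof uses the pairs $(\opt,\stt)$ and $(\opt,\ndt)$ in that maximization without justifying their admissibility, so your added step is a legitimate (and needed) tightening rather than a different approach.
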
 
\begin{proof}
    Thanks to \cref{eq:strategy3-3}, for $\stt, \ndt\in C_t,$ we can get
    \begin{align}
    &\th_{t-1}^\intercal\left(2\rep\left(x_{t,\opt};\ww_{t-1}\right)-\rep\left(x_{t,\stt};\ww_{t-1}\right) - \rep\left(x_{t,\ndt};\ww_{t-1}\right)\right)\label{eq:c0}\\&\leq \cwct \|\Delta\rep_{t,\stt,\opt}\|_{V_{t-1}^{-1}} + \cwct \|\Delta\rep_{t,\ndt,\opt}\|_{V_{t-1}^{-1}}.
\end{align} Due to \cref{eq:strategy3-3},
\begin{align}
     \|\Delta\rep_{t,\stt,\opt}\|_{V_{t-1}^{-1}} &\leq  2\|\Delta\rep_{t,\stt,\ndt}\|_{V_{t-1}^{-1}} - \|\Delta\rep_{t,\stt,\opt}\|_{V_{t-1}^{-1}},\label{eq:c1}\\
     \|\Delta\rep_{t,\ndt,\opt}\|_{V_{t-1}^{-1}} &\leq  2\|\Delta\rep_{t,\stt,\ndt}\|_{V_{t-1}^{-1}} - \|\Delta\rep_{t,\ndt,\opt}\|_{V_{t-1}^{-1}}.\label{eq:c2}
\end{align}
Substituting \cref{eq:c1} and \cref{eq:c2} to \cref{eq:c0} leads to \cref{eq:sym2}, thus we can complete the proof.
\end{proof}

\subsection{Theoretical Cumulative Average Regret Analysis under UCB Regime}
We restate the theoretical results under Upper Confidence Bound (UCB) regime, already stated in the main manuscript.

\begin{theorem}[Variance-aware UCB methods]\label{theorem:app-main_ucb}
    Let us assume \cref{assumption:app-contextbdd}-\cref{assumption:app-lambda_min}. Let the confidence width coefficient  $\cwct$ for the arm selection methods and the step size $\lr$ be chosen to satisfy the following:
    \begin{align}
        \cwct &= \frac{8}{\linkb}\left(\sqrt{\dim\log (\cwct')\log(4t^2/\delta)} + \frac{1}{\epsilon}\log(4t^2/\delta)\right) \no\\\s+ 2\sqrt{\lambda}\left(J
    +2\left(2+\sqrt{d^{-1}\log\left(\frac{1}{\delta}\right)}\right)\right)\no
,\\
        \lr &\leq C\left(\dim^2\wid \gs T^{6}\len^6\log(TK/\delta)\right)^{-1}\no
    \end{align} for some $C>0$ with
    $$
        \cwct'=\left(1+t\left(2\frac{\sqrt{\dim\log \gs\log(TK/\delta)}}{\sqrt{\dim\lambda}\epsilon}\right)^2\right).\nonumber
    $$ If the width $\wid$ of $\nn$ is sufficiently large as in \cref{lem:concentration_inequality}
    and $\epsilon=\bigo(1/\sqrt{d})$, then the cumulative average regret $R_T^a$ under any of our UCB-based arm selection methods is upper-bounded as
     \begin{align}
        R_T^a = \bigol\left(\sqrt{dT}+d\sqrt{\sum_{i=1}^{T}\sigma_i^2}+\frac{Td^{3}}{ m^{1/6}}\|\mathbf{\util} - \widetilde{\mathbf{\util}}\|_{\ntk^{-1}}^{4/3}\right).\label{eq:app-variance-aware}
    \end{align} ignoring all other hyperparameters with probability at least $1-\delta$ over the randomness of the initialization of the neural network $\nn$.
\end{theorem} The following result is about variance-agnostic UCB-based algorithms.
\begin{corollary}[Variance-agnostic UCB methods]\label{theorem:app-main_ucb2}
Assume the conditions in \cref{theorem:main_ucb} hold, but we fix $\wt=1$ (i.e., $\epsilon=1$) for all round $0\leq t \leq T$.  Then the cumulative regret $R^{a}_t$ under any of our UCB-based arm selection methods is bounded as
    \begin{align}
        R_T^a = \bigol\left(d\sqrt{T} + \frac{Td^{3}}{ m^{1/6}}\|\mathbf{\util} - \widetilde{\mathbf{\util}}\|_{\ntk^{-1}}^{4/3}\right). \label{eq:app-variance-agnostic}
    \end{align} ignoring all other hyperparameters with probability at least $1-\delta$ over the randomness of learnable parameters in the neural network $\nn$
\end{corollary}

{\color{black}\textbf{Regret decomposition.} The instantaneous regret $2r_t^a$ is decomposed into three pairs of terms: $\mathcal{A}_{t,1,\cdot}$ captures the \emph{representation learning bias} (difference between the true utility and the linearized approximation around $\ww_0$); $\mathcal{A}_{t,2,\cdot}$ captures the \emph{estimated utility gap} (controlled by the arm selection strategy bounds above); and $\mathcal{A}_{t,3,\cdot}$ captures the \emph{parameter estimation error} $(\th_*-\th_{t-1})$, which is bounded by our concentration inequality. The bias terms $\mathcal{A}_{t,1,\cdot}$ and the inner product terms $\mathcal{K}_3$ vanish as $m\to\infty$, while $\mathcal{A}_{t,2,\cdot}$ and $\mathcal{A}_{t,3,\cdot}$ yield the exploration bonus terms that are summed via the elliptic potential lemma.}

To prove the results above, we decompose the average regret as follows by \cref{lemma:app-linear} with the notations in \cref{eq:not}:
\begin{align}
    2r_t^{a} &= \mathcal{A}_{t,1,1}+\mathcal{A}_{t,1,2} + \mathcal{A}_{t,2,1}+\mathcal{A}_{t,2,2}+\mathcal{A}_{t,3,1} + \mathcal{A}_{t,3,2}\nonumber
\end{align} where
\begin{align}
    \mathcal{A}_{t,1,1}&=\th_{0}^\intercal  \left(\nabla \Delta\rep_{t,\opt,\stt,1} \right)\left(\ww_{*}-\ww_{0}\right),\nonumber
    \\
   \mathcal{A}_{t,1,2}&= \th_{0}^\intercal  \left(\nabla \Delta\rep_{t,\opt,\ndt,1}\right)\left(\ww_{*}-\ww_{0}\right),\nonumber
   \\A_{t,2,1}&=\th_{t-1}^\intercal\Delta\rep_{t,\opt,\stt} ,\nonumber\\
    \mathcal{A}_{t,2,2}&= \th_{t-1}^\intercal\Delta\rep_{t,\opt,\ndt}, \nonumber\\
    \mathcal{A}_{t,3,1}&= (\th_*-\th_{t-1})^\intercal\Delta\rep_{t,\opt,\stt}, \nonumber\\
    \mathcal{A}_{t,3,2}&= (\th_*-\th_{t-1})^\intercal\Delta\rep_{t,\opt,\ndt}.\nonumber
\end{align}

\begin{lemma} With probability at least $1-\delta$, the following holds:
    \begin{align}
    \mathcal{A}_{t,1,1}+\mathcal{A}_{t,1,2}&\leq \repl\|\th_{0}\|_2\|x_{t,\stt}-x_{t,\ndt}\|_2\|\ww_{*}-\ww_{0}\|_2\nonumber\\&\leq  \mathcal{K}_2(\wid)\label{eq:A11-A12}
    \end{align} where
    \begin{align}
        \mathcal{K}_2 &=  4\xb\repl\left(2+\sqrt{\dim^{-1}\log\left(\frac{1}{\delta}\right)}\right) \frac{\|\mathbf{\util} - \widetilde{\mathbf{\util}}\|_{\ntk^{-1}}}{\sqrt{\wid}} =\bigol\lt(\frac{\|\mathbf{\util} - \widetilde{\mathbf{\util}}\|_{\ntk^{-1}}}{\sqrt{m}}\rt).
        \label{eq:M_2}
        \end{align}
\end{lemma}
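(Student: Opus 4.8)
The plan is to recognize $\mathcal{A}_{t,1,1}+\mathcal{A}_{t,1,2}$ as a single bilinear form and then peel off the two vector norms, leaving only a Jacobian-difference operator to control. Expanding $\nabla\Delta\rep_{t,\opt,\stt,1}=\nabla\rep(x_{t,\opt};\ww_{0})-\nabla\rep(x_{t,\stt};\ww_{0})$ and similarly for the $\ndt$ index, the two terms collapse to
\[
\mathcal{A}_{t,1,1}+\mathcal{A}_{t,1,2}=\th_{0}^\intercal\bigl(\nabla\Delta\rep_{t,\opt,\stt,1}+\nabla\Delta\rep_{t,\opt,\ndt,1}\bigr)(\ww_{*}-\ww_{0}).
\]
Applying Cauchy--Schwarz together with sub-multiplicativity of the operator norm isolates $\|\th_{0}\|_2$ and $\|\ww_{*}-\ww_{0}\|_2$, so the whole quantity is at most $\|\th_{0}\|_2\,\bigl\|\nabla\Delta\rep_{t,\opt,\stt,1}+\nabla\Delta\rep_{t,\opt,\ndt,1}\bigr\|_2\,\|\ww_{*}-\ww_{0}\|_2$.

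First I would bound the middle operator-norm factor, which is precisely what \cref{assumption:app-phi_lipschitz} is tailored for: each Jacobian difference $\nabla\rep(x;\ww_{0})-\nabla\rep(x';\ww_{0})$ has operator norm at most $\repl\|x-x'\|_2$. By the triangle inequality the combined operator is bounded by $\repl(\|x_{t,\opt}-x_{t,\stt}\|_2+\|x_{t,\opt}-x_{t,\ndt}\|_2)$, and since \cref{assumption:app-contextbdd} forces every context to satisfy $\|x\|_2\le\xb$, each such gap is at most $2\xb$. This yields the first displayed inequality of the lemma, where $\|x_{t,\stt}-x_{t,\ndt}\|_2\le 2\xb$ serves as the representative bounded context gap; the exact grouping affects only the absolute constant and is immaterial for the stated order.

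The remaining step substitutes the two quantitative estimates already established. From \cref{lemma:inittheta0} the Gaussian initialization obeys $\|\th_{0}\|_2\le 2\bigl(2+\sqrt{\dim^{-1}\log(1/\delta)}\bigr)$ with probability at least $1-\delta$, and \cref{lemma:app-linear} supplies $\|\ww_{*}-\ww_{0}\|_2\le \tfrac{1}{\sqrt{\wid}}\diff$ for a sufficiently wide network. Multiplying the three factors, using $\|x_{t,\stt}-x_{t,\ndt}\|_2\le 2\xb$, and taking a union bound over the two high-probability events reproduces exactly $\mathcal{M}_2=4\xb\repl\bigl(2+\sqrt{\dim^{-1}\log(1/\delta)}\bigr)\,\diff/\sqrt{\wid}=\bigol\!\bigl(\diff/\sqrt{\wid}\bigr)$, which is \cref{eq:M_2}.

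I do not expect a genuine obstacle here: the argument is a routine Cauchy--Schwarz/Lipschitz estimate. The one point worth flagging is that the favorable $\wid^{-1/2}$ decay is not self-evident and comes entirely from the NTK-based bound $\|\ww_{*}-\ww_{0}\|_2=\bigol(\wid^{-1/2}\diff)$ of \cref{lemma:app-linear}, valid only in the overparameterized regime. Conceptually this term is the benign ``representation-learning bias'': it captures the first-order error of linearizing $\rep$ about $\ww_{0}$ and vanishes as $\wid\to\infty$, in sharp contrast to the concentration inequality \cref{lem:app-concentration_inequality}, where the real analytical effort of the paper lies.
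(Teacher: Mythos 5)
Your proposal is correct and follows essentially the same route as the paper's proof: Cauchy--Schwarz plus the operator-norm inequality $\|Mx\|_2\le\|M\|_2\|x\|_2$, the $\repl$-Lipschitz Jacobian assumption combined with $\|x\|_2\le\xb$, the Gaussian bound on $\|\th_0\|_2$, and the NTK bound $\|\ww_*-\ww_0\|_2\le\wid^{-1/2}\diff$. The only difference is cosmetic (you merge the two Jacobian-difference terms before estimating, the paper keeps them separate), and you correctly flag the same constant-level looseness in the intermediate display that the paper glosses over.
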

\begin{proof}
It is obvious
    \begin{align}
    |\mathcal{A}_{t,1,1}+\mathcal{A}_{t,1,2}|&\leq \left\|\th_{0}\right\|_2\left\|\left(\nabla\rep\left(x_{t,\opt};\ww_{0}\right)-\nabla\rep\left(x_{t,\stt};\ww_{0}\right)\right)\left(\ww_{*}-\ww_{0}\right)\right\|_2\nonumber\\
    \s + \left\|\th_{0}\right\|_2\left\|\left(\nabla\rep\left(x_{t,a^*_t};\ww_{0}\right)-\nabla\rep\left(x_{t,\ndt};\ww_{0}\right)\right)\left(\ww_{*}-\ww_{0}\right)\right\|_2
    \nonumber
\end{align}
Notice that 
\begin{align}
    \|Mx\|_2\leq \|M\|_2\|x\|_2\nonumber
\end{align} for a matrix $M$ and vector $x$.  By \cref{eq:theta0}, \cref{assumption:app-contextbdd},~\cref{assumption:app-phi_lipschitz} and~\cref{lemma:app-linear}, we can show~\cref{eq:A11-A12}. This completes the proof.
\end{proof}

$\mathcal{A}_{t,2,1}+\mathcal{A}_{t,2,2}$ can be expressed as
\begin{align}
&\mathcal{A}_{t,2,1}+\mathcal{A}_{t,2,2}\no\\&=\th_{t-1}^\intercal\left(2\rep\left(x_{t,\opt};\ww_{t-1}\right)-\rep\left(x_{t,\stt};\ww_{t-1}\right) - \rep\left(x_{t,\ndt};\ww_{t-1}\right)\right)\no\\
&=\th_{t-1}^\intercal\left(2\left(\rep\left(x_{t,\opt};\ww_{t-1}\right)-\rep\left(x_{t,\stt};\ww_{t-1}\right)\right)\right)\no
 \\\s + \th_{t-1}^\intercal \left(\rep\left(x_{t,\stt};\ww_{t-1}\right)-\rep\left(x_{t,\ndt};\ww_{t-1}\right)\right),\no
\end{align} 
where the first equality applies to symmetric arm selections, while the second equality pertains to the asymmetric arm selection. Using the results in \cref{sec:UCB-ARM}, the following holds:
\begin{align}
    \text{(UCB-ASYM):}\quad\mathcal{A}_{t,2,1}+\mathcal{A}_{t,2,2}&\leq 2\cwct\|\Delta\rep_{t,\stt,\ndt}\|_{V_{t-1}^{-1}}-2\cwct\|\Delta\rep_{t,\opt,\stt}\|_{V_{t-1}^{-1}}.
    \no\\\text{(UCB-OSYM):}\quad
    \mathcal{A}_{t,2,1}+\mathcal{A}_{t,2,2}&\leq 2\cwct\|\Delta\rep_{t,\stt,\ndt}\|_{V_{t-1}^{-1}}\no\\\s-\cwct\|\Delta\rep_{t,\opt,\stt}\|_{V_{t-1}^{-1}}-\cwct\|\Delta\rep_{t,\opt,\ndt}\|_{V_{t-1}^{-1}}.
\no\\ \text{(UCB-CSYM):}\quad
    \mathcal{A}_{t,2,1}+\mathcal{A}_{t,2,2}&\leq 4\cwct\|\Delta\rep_{t,\stt,\ndt}\|_{V_{t-1}^{-1}}\no\\\s-\cwct\|\Delta\rep_{t,\opt,\stt}\|_{V_{t-1}^{-1}}-\cwct\|\Delta\rep_{t,\opt,\ndt}\|_{V_{t-1}^{-1}}.\no
\end{align}

$\mathcal{A}_{t,3,1}+\mathcal{A}_{t,3,2}$ can be decomposed into
\begin{align}
    \mathcal{A}_{t,3,1}+\mathcal{A}_{t,3,2}&=\left(\th_* - \th_{t-1}+U_{t-1}^{-1}M_t\right)^\intercal\Delta\rep_{t,\opt,\stt}  + \left(\th_* - \th_{t-1}+U_{t-1}^{-1}M_t\right)^\intercal\Delta\rep_{t,\opt,\ndt}\nonumber\\ \s
    - (U_{t-1}^{-1}M_t)^\intercal\Delta\rep_{t,\opt,\stt} - (U_{t-1}^{-1}M_t)^\intercal\Delta\rep_{t,\opt,\ndt} \nonumber
    \\ &= 2\left(\th_* - \th_{t-1}+U_{t-1}^{-1}M_t\right)^\intercal\Delta\rep_{t,\opt,\stt}  + \left(\th_* - \th_{t-1}+U_{t-1}^{-1}M_t\right)^\intercal\Delta\rep_{t,\stt,\ndt}\nonumber\\ \s
    - 2(U_{t-1}^{-1}M_t)^\intercal\Delta\rep_{t,\opt,\stt} - (U_{t-1}^{-1}M_t)^\intercal\Delta\rep_{t,\stt,\ndt}.  \nonumber
\end{align} The first inequality is used for symmetric arm selections, while the second equality pertains to the asymmetric arm selection. The last two terms are estimated using the following lemma. 

\begin{lemma} \label{lemma:umphi} With probability at least $1-\delta$, the following holds:
\begin{align}
    |(U_{t-1}^{-1}M_t)^\intercal\Delta\rep_{t,i,j}|\leq \mathcal{K}_3 \nonumber,
\end{align} where
\begin{align}
\begin{aligned}
    \mathcal{K}_3 &=\left(\lt\|\mathbf{\util} - \widetilde{\mathbf{\util}}\rt\|_{\ntk^{-1}}^{4/3}L^3d^{1/2}\sqrt{\log m}/(m^{1/6}) + 2G \repl \frac{\delta^{3/2}}{\sqrt{m}n^{9/2}\log^3m}\right)\\\s\times 2\dim \repl \xb \frac{\linku}{\epsilon}\left(2+\sqrt{\dim^{-1}\log\left(\frac{1}{\delta}\right)}\right)\sqrt{\dim\len}\|\mathbf{\util} - \widetilde{\mathbf{\util}}\|_{\ntk^{-1}}\\
&=\bigol\lt(m^{-1/6}\diff^{4/3}d^{3/2}\epsilon^{-1}\rt).
\end{aligned}\label{eq:M_3}
\end{align} 
\end{lemma}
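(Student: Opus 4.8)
The plan is to recognize that $|(U_{t-1}^{-1}M_t)^\intercal\Delta\rep_{t,i,j}|$ splits, via Cauchy--Schwarz, into the product of two norms that have each already been controlled in the preceding lemmas. First I would write
\begin{align}
|(U_{t-1}^{-1}M_t)^\intercal\Delta\rep_{t,i,j}| \leq \|U_{t-1}^{-1}M_t\|_2 \, \|\Delta\rep_{t,i,j}\|_2. \no
\end{align}
By the notation in \cref{eq:not} we have $\Delta\rep_{t,i,j} = \rep(x_{t,i};\ww_{t-1}) - \rep(x_{t,j};\ww_{t-1})$, so the second factor is precisely the displacement of the feature map under a change of context at the fixed weights $\ww_{t-1}$, which is exactly the quantity bounded in \cref{lemma:diffphix}.

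Next I would substitute the two established estimates. \cref{lemma:um} supplies $\|U_{t-1}^{-1}M_t\|_2 \leq 2\dim\repl\xb\frac{\linku}{\epsilon}\bigl(2+\sqrt{\dim^{-1}\log(1/\delta)}\bigr)\sqrt{\dim\len}\,\diff$, and \cref{eq:diffphi_xx'} of \cref{lemma:diffphix} supplies the bound on $\|\rep(x_{t,i};\ww_{t-1}) - \rep(x_{t,j};\ww_{t-1})\|_2$. Multiplying these two upper bounds term by term reproduces exactly the product defining $\mathcal{M}_3$ in \cref{eq:M_3}; collecting the dominant powers of $\wid$, $\dim$, $\epsilon$, and $\diff$ then yields the stated order.

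The argument involves no genuine obstacle; the only care required is bookkeeping of probabilities and uniformity. Since \cref{lemma:um} and \cref{lemma:diffphix} each hold on an event of probability at least $1-\delta$, I would take a union bound (equivalently, rescale $\delta$ by a constant factor) so that both hold simultaneously, after which the overall $1-\delta$ guarantee follows by absorbing constants into the hidden factors. I would also observe that the linearization estimate \cref{lemma:app-linear2} underlying \cref{lemma:diffphix} is uniform over the finite context set $\{x_{t,\arm}\}_{t\in[T],\arm\in[K]}$, so the resulting bound holds simultaneously for every pair $(i,j)\in[K]\times[K]$ at the cost of only a logarithmic union-bound term that is absorbed into the $\bigol$ notation. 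In short, \cref{lemma:umphi} is a direct corollary of \cref{lemma:um} and \cref{lemma:diffphix} combined through Cauchy--Schwarz, and the main value of the statement is to package the product of those two bounds in the form needed for the regret decomposition of $\mathcal{A}_{t,3,1}+\mathcal{A}_{t,3,2}$.
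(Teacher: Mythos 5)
Your proposal is correct and follows exactly the paper's own argument: apply the Cauchy--Schwarz inequality to split $|(U_{t-1}^{-1}M_t)^\intercal\Delta\rep_{t,i,j}|$ into $\|U_{t-1}^{-1}M_t\|_2\,\|\Delta\rep_{t,i,j}\|_2$, then invoke \cref{lemma:um} for the first factor and \cref{lemma:diffphix} for the second. Your additional remarks on the union bound over the failure events and uniformity over the context pairs are sound bookkeeping that the paper leaves implicit.
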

\begin{proof}
Note that
$$
|(U_{t-1}^{-1}M_t)^\intercal\Delta\rep_{t,i,j}|\leq \|U_{t-1}^{-1}M_t\|_2\|\Delta\rep_{t,i,j}\|_2.
$$ by the Cauchy–Schwarz inequality. By applying \cref{lemma:um} to $\|U_{t-1}^{-1}M_t\|_2$ and \cref{lemma:diffphix} to $\|\Delta\rep_{t,i,j}\|_2,$ we can complete the proof.
\end{proof}
\begin{proof}[Proof of~\cref{theorem:app-main_ucb2}]
Under the asymmetric arm selection,
\begin{align}
2r_t^{a} &\leq \mathcal{K}_2 + 2\mathcal{K}_3 + \cwct\|\Delta\rep_{t,\stt,\ndt}\|_{V_{t-1}^{-1}}-\cwct\|\Delta\rep_{t,\opt,\stt}\|_{V_{t-1}^{-1}}\\
\s+\left(\mathcal{K}'_1(t) + \mathcal{J}_1'\right)\|\Delta\rep_{t,\stt,\ndt}\|_{V_{t-1}^{-1}}+\left(\mathcal{K}'_1(t) + \mathcal{J}_1'\right)\|\Delta\rep_{t,\opt,\stt}\|_{V_{t-1}^{-1}},
\end{align} where the last two terms result from \cref{lem:app-concentration_inequality}. Therefore, choosing $a_t= \mathcal{K}'_1(t) + \mathcal{J}_1'$, we can obtain 
\begin{align}
    2r_t^{a} & =\bigol\lt(\mathcal{K}_2 + \mathcal{K}_3 + \left(\mathcal{K}'_1(t) + \mathcal{J}_1'\right)\|\Delta\rep_{t,\stt,\ndt}\|_{V_{t-1}^{-1}}\rt).
\end{align} where $\mathcal{J}'_1, \mathcal{K}'_1, \mathcal{K}_2, \mathcal{K}_3$ are defined in \eqref{eq:n1'}, \eqref{eq:m1'}, \eqref{eq:M_2}, \eqref{eq:M_3}.

In a similar manner to the asymmetric arm selection, we can obtain equivalent inequalities under the two symmetric arm selections. In other words, there exists $C>0$ such that
\begin{align}
2r_t^{a} & =\bigol\lt(\left(\mathcal{K}_2 + \mathcal{K}_3 + \left(\mathcal{K}'_1(t) + \mathcal{J}_1'\right)\|\Delta\rep_{t,\stt,\ndt}\|_{V_{t-1}^{-1}}\right)\rt),\label{eq:ucb-final}
\end{align} independently of the arm selection strategies, if $\cwct =\mathcal{K}'_1(t) + \mathcal{J}_1'.$  Note  that $\mathcal{J}'_1 = \bigol(\sqrt{d}).$  Therefore, if we set $\wt=1$, which is the variance-agnostic case. By summing up to $T$, and applying \cref{lemma:epl},  we can show that
\begin{align}
     2\sum_{t=1}^{T}r_t^{a} & = 2\sqrt{T}\lt(\sum_{t=1}^{T}|r_t^a|^2\rt)^{1/2}\no\\ &=\bigol\lt(\sqrt{T}\lt(d+d^3\lt(T^{1/2}M^{-1/6}\rt)\diff^{4/3}\rt)\rt) \no\\
    &= \bigol\lt(d\sqrt{T}+d^3\lt(TM^{-1/6}\rt)\diff^{4/3}\rt).
\end{align} Thus, we can prove \cref{theorem:main_ucb2}. 
\end{proof}

\begin{proof}[Proof of \cref{theorem:app-main_ucb}]
We now set $\epsilon = \bigol\left(\frac{1}{\sqrt{d}}\right)$. 
Since 
\[
\Delta\rep_{t,\stt,\ndt} = \wt\left(\frac{\Delta\rep_{t,\stt,\ndt}}{\wt}\right),
\]
summing both sides of~\cref{eq:ucb-final} from $t = 1$ to $T$, we apply the Cauchy–Schwarz inequality and subsequently use the elliptic potential lemma in~\cref{lemma:epl} to obtain the following bound:

\begin{align}
    2\sum_{t=1}^{T}r_t^{a}&\leq
     C\lt(\mathcal{J}'_1+\mathcal{K}'_1\rt) \left(\sum_{t=1}^{T}\lt\|\frac{\Delta\rep_{t,\stt,\ndt}}{\wt}\rt\|_{V_{t-1}^{-1}}^2\right)^{1/2}
    \left(\sum_{t=1}^{T}\wt^2\right)^{1/2}+\mathcal{R}_1(\wid,T)\label{eq:app-ucb-regret}\nonumber
    \\&\leq  C\sqrt{d}\sqrt{2\dim\log\left(1+T\left(\dim \log\gs \log(TK/\delta)\right)^2/\lambda\right)}
    \left(\sum_{t=1}^{T}\wt^2\right)^{1/2}+\mathcal{R}_1(\wid, T)\nonumber
    \\
    &=\bigol\lt(d\left(\sum_{t=1}^{T}\wt^2\right)^{1/2}+\mathcal{R}_1(\wid,T)\rt),\no
\end{align}
where
\begin{align}
\mathcal{R}_1(\wid, T) &= \bigol\lt(d^3\lt(TM^{-1/6}+
\lt(TM^{-1/6}\rt)^{2}/(\sqrt{T})\rt)\diff^{4/3}\rt)\no\\&=
\bigol\lt(d^3\lt(TM^{-1/6}\rt)\diff^{4/3}\rt) \no
\end{align} by hiding the dependence on all other hyperparameters within the $\bigol$ notation. The last term results from \cref{eq:app-m-eq}. Note  that $\mathcal{J}'_1 = \bigol(\sqrt{d}).$  
It is obvious that 
\begin{align}
    \wi^2 = \max\left\{\epsilon^2, \widehat{\sigma}_i^2\right\}\leq \epsilon^2+\widehat{\sigma}_i^2\nonumber.
\end{align} $\widehat{\sigma}_i^2$ is a estimation of variance by $(\th_{i}, \ww_{i})$. Since $\epsilon = \bigol\left(\frac{1}{\sqrt{d}}\right)$ and
\begin{align}
    \sqrt{a+b}\leq \sqrt{a}+\sqrt{b}\no
\end{align} for $a,b>0,$ we can estimate
\begin{align}
    \left(\sum_{i=1}^{T}\wi^2\right)^{\frac{1}{2}}&\leq
    \epsilon\sqrt{T}+\sqrt{\sum_{i=1}^{T}\widehat{\sigma}_i^2}\nonumber\\
    &=\bigol\lt(\sqrt{\frac{T}{\dim}}+\sqrt{\sum_{i=1}^{T}\widehat{\sigma}_i^2}\rt)
\end{align}
As a result, utilizing \cref{eq:app-ucb-regret}, we can estimate cumulative average regret as 
\begin{align}
    2\sum_{i=1}^{T}r_t^{a}&=\bigol\lt(\sqrt{dT}+d\sqrt{\sum_{i=1}^{T}\widehat{\sigma}_i^2}\rt)+\mathcal{R}_1(\wid, T)\no
    \\&=\bigol\lt(\sqrt{dT}+d\sqrt{\sum_{i=1}^{T}\widehat{\sigma}_i^2}+T^2m^{-1/6}d^3\diff^{4/3}\rt).\no
\end{align}

Now we estimate the difference between $\sigma_t^2$ a real variance with respect to $x_{t,\stt}$ and $x_{t,\ndt},$ and the estimated variance $\widehat{\sigma}_t^2$. Let
\begin{align}
    \widehat{\link}_t &= \link(\th_{t-1}^\intercal\Delta\rep_{t})\no\\
    \link_t &= \link(\th_*^\intercal\Delta\rep_t+\th_{0}^\intercal\nabla\Delta\rep_{t,1}(\ww_{*}-\ww_{0})).\no
\end{align}We can compute that
\begin{align}
    |\sigma^2_t - \widehat{\sigma}^2_t|&=|\link_t(1-\link_t)-\widehat{\link}_t(1-\widehat{\link}_t)|=|\link_t-\widehat{\link}_t-\link_t^2+\widehat{\link}_t^2|\no\\&=|(\link_t-\widehat{\link}_t)(1-(\link_t+\widehat{\link}_t))|.\no
\end{align} 
Note that the mean-value theorem leads to
\begin{align}
    |\link_t-\widehat{\link}_t|\leq \utilu|(\th_{t-1}-\th_*)^\intercal\Delta\rep_t + \th_{0}^\intercal\nabla\Delta\rep_{t,1}(\ww_{*}-\ww_{0})|. \label{eq:ucb-eq0}
\end{align} 
Due to \cref{assumption:app-thetastarbdd} and \cref{eq:app-theta_t_bounded1}, 
$$\|\th_{t-1}-\th_*\|_2=\bigol\lt(d^{3/2}\lt(1+\diff^{4/3}\rt)\rt)$$ since \cref{eq:app-m-eq} is satisfied. Moreover, due to \eqref{eq:diffphi_xx'},
\begin{align}
    \|\Delta\rep_t\|_2=\bigol\left(\lt\|\mathbf{\util} - \widetilde{\mathbf{\util}}\rt\|_{\ntk^{-1}}^{4/3}d^{1/2}m^{-1/6}\right). \label{eq:ucb-eq1}
\end{align} Besides, since $\th_{0}$ is bounded with probability at least $1-\delta$ by \cref{lemma:inittheta0} and the Lipschitz condition of $\nabla_W\rep(x,W_0)$ by \cref{assumption:phi_lipschitz}, we can  have
\begin{align}
    |\th_{0}^\intercal\nabla\Delta\rep_{t,1}(\ww_{*}-\ww_{0})|=\bigol(\diff m^{-1/2}) \label{eq:ucb-eq2}
\end{align} as $$|\ww_{*}-\ww_{0}|\leq \frac{\|\widetilde{\mathbf{\util}}-\mathbf{\util}\|_{\ntk^{-1}}}{\sqrt{m}}.$$
Thus, using \cref{eq:ucb-eq0,eq:ucb-eq1,eq:ucb-eq2}, we can show that
\begin{align}
     |\sigma^2_t - \widehat{\sigma}^2_t|\leq \mathcal{R}_2(T,\wid) 
\end{align}
where 
\begin{align}
    \mathcal{R}_2(\wid, T) &=\bigol(d^2m^{-1/6}\diff^{4/3})
\end{align} by hiding the dependence on all other hyperparameters within the $\bigol$ notation for sufficiently large $\wid$ such that \cref{eq:app-m-eq} is satisfied.
As a result, since $\epsilon = \bigol\left(\frac{1}{\sqrt{d}}\right)$, we can obtain
\begin{align}
    2\sum_{i=1}^{T}r_t^{a}&=\bigol\lt(d\epsilon\sqrt{T}+d\sqrt{\sum_{i=1}^{T}\widehat{\sigma}_i^2}\rt)+\mathcal{R}_1(\wid,T)\nonumber\\
    &=\bigol\lt(\sqrt{dT}+d\sqrt{\sum_{i=1}^{T}\sigma_i^2}\rt)+\mathcal{R}_1(\wid,T)+T\mathcal{R}_2(\wid,T)\no
\\&=\bigol\lt(\sqrt{dT}+d\sqrt{\sum_{i=1}^{T}\sigma_i^2}+m^{-1/6}Td^3\diff^{4/3}\rt).\no
\end{align} This completes the proof of \cref{theorem:main_ucb}.
\end{proof}
\section{THEORETICAL CUMULATIVE AVERAGE REGRET ANALYSIS UNDER TS REGIME}\label{sec:ts}

{\color{black}\textbf{Overview.} The TS proof strategy differs from UCB. Instead of directly bounding $r_t^a$, we construct a super-martingale $Y_t = \sum_{i=1}^t (r_i^a I_{E_i} - c_t)$ and apply the Azuma--Hoeffding inequality. The key steps for each arm selection strategy are: (1) define ``good events'' $E_t$ (concentration holds) and $F_t$ (Gaussian samples are well-behaved); (2) define a ``saturation set'' $S_t$ of arm pairs whose utility gap is large enough that they are unlikely to be selected; (3) show that the selected arms $(\stt,\ndt)$ fall outside $S_t$ with positive probability $p$; (4) use this to bound $\mathbb{E}[r_t^a|\mathcal{F}_{t-1}]$ by exploration bonus terms. The TS-OSYM and TS-CSYM strategies are more complex because both arms are drawn simultaneously from a joint posterior, requiring careful probability bounds via independence arguments.}

We restate theoretical results for TS algorithms, already explained in the main manuscript, as follows.

\begin{theorem}[Variance-aware TS methods]\label{theorem:app-main_ts}
   Assume the conditions in \cref{theorem:app-main_ucb} hold. Then  the cumulative regret $R^{a}_t$ under any of TS-based arm selection methods is bounded as \cref{eq:app-variance-aware} with probability at least $1-\delta$ over the randomness of learnable parameters in the neural network $\nn$.
\end{theorem} 
In the case of variance-agnostic TS algorithms, we have the following result.
\begin{corollary}[Variance-agnostic TS methods]\label{theorem:app-main_ts2}
Assume the conditions in \cref{theorem:app-main_ucb2} hold. Then the cumulative regret $R^{a}_t$ under any of our TS-based arm selection methods  is bounded as \eqref{eq:app-variance-agnostic} with probability at least $1-\delta$ over the randomness of learnable parameters in the neural network $\nn$.
\end{corollary}




Let the difference of the utilities resulting from $x_{t,\arm}$ and $x_{t,\arm'}$ for $\arm, \arm'\in[K]$ as
$$\Delta \util_{t,\arm,\arm'}=\util(x_{t,\arm})-\util(x_{t,\arm'}).$$
Due to \cref{lem:app-concentration_inequality} and \cref{lemma:um}, we have the following inequality: 
    \begin{align}
    \begin{aligned}
    |\Delta \util_{t,\arm,\arm'}-\th_{t-1}^\intercal\Delta\rep_{t,\arm,\arm'}| &= |(\th_*-\th_{t-1})^\intercal\Delta\rep_{t,\arm,\arm'} + \th_{0}^\intercal \nabla\Delta\rep_{t,\arm,\arm',1}(\ww_{*}-\ww_{0})|\\
    &\leq \|\th_*-\th_{t-1}+U_{t-1}^{-1}M_t\|_{V_{t-1}}\|\Delta\rep_{t,\arm,\arm'}\|_{V_{t-1}^{-1}}\\
    \s + |(U_{t-1}^{-1}M_t)^\intercal\Delta\rep_{t,\arm,\arm'}| + |\th_{0}^\intercal\nabla\Delta\rep_{t,\arm,\arm',1}(\ww_{*}-\ww_{0})|\\ 
    &\leq (\mathcal{J}'_1+\mathcal{K}'_1)\|\Delta\rep_{t,\arm,\arm'}\|_{V_{t-1}^{-1}} + \mathcal{K}_3 + 2C\xb \frac{\|\widetilde{\mathbf{\util}}-\mathbf{\util}\|_{\ntk^{-1}}}{\sqrt{m}}\\
    &= (\mathcal{J}'_1+\mathcal{K}'_1)\|\Delta\rep_{t,\arm,\arm'}\|_{V_{t-1}^{-1}} + \mathcal{R}(m),
    \end{aligned}
    \label{eq:tsf}
\end{align} for some constant $C>0$ with probability at least $1-\delta$, where $\mathcal{J}'_1$ is defined in \cref{eq:n1'}, $\mathcal{K}'_1$ is defined in \cref{eq:m1'}, $\mathcal{K}_3$ defined in \cref{eq:M_3}. Note that $$\mathcal{R}(m)=\bigol\left(\dim^2 m^{-1/6}\|\widetilde{\mathbf{\util}}-\mathbf{\util}\|_{\ntk^{-1}}^{4/3}\right)$$ with probability at least $1-\delta.$ Using \cref{lemma:diffphix} for $\|\Delta\rep_{t,t}\|_{2}$, we can show that 
$$
|\Delta \util_{t,t}-\th_{t-1}^\intercal\Delta\rep_{t,t}|=\bigol(dm^{-1/6}\diff^{4/3}).
$$ In other words, for sufficiently large $\wid$, $|\Delta \util_{t,t}-\th_{t-1}^\intercal\Delta\rep_{t,t}|$ can be bounded.

We need the following lemma for the theoretical analysis of each arm selection strategy. 
\begin{lemma} \label{lemma:ts-ci}Suppose that $\cref{eq:app-m-eq}$ is satisfied. 
Let $\delta\in[0,1]$ and $X_t$ be a random variable of the form:
    \begin{align}
        X_t = r_t^a I_{E_{t}} - c_t\no
    \end{align} where $E_t$ is a set of events such that $\mathbb{P}(E_t)\geq 1-\delta$  and $c_t$ is of the form:
    \begin{align}
    c_t \leq C\left(c'_t\|\Delta\rep_{t,\stt,\ndt}\|_{V_{t-1}^{-1}} + \mathcal{R}(m) +\frac{1}{t^2}\right)\no
    \end{align} for some $c_t'>0$ such that $c_t'=\bigol(\sqrt{d})$. If the network width $\wid$ satisfies \cref{eq:app-m-eq}, and  
    $Y_t=\sum_{i=1}^{t}X_i$ is a super-martingale with respect to some filtration set $\{\mathcal{F}_t\}$, the following holds:
    \begin{align}
    R_T^a &= \bigol\left(d\epsilon\sqrt{T}+d^3Tm^{-1/6}\right) \quad \text{for $\epsilon=\bigol(1/\sqrt{d})$ (variance-aware) in \cref{eq:app-wi}},\no\\
    R_T^a &= \bigol\left(d\epsilon\sqrt{T} + d\sqrt{\sum_{i=1}^{T}{\sigma}_i^2}+ d^3Tm^{-1/6}\right)\quad\text{for $\epsilon=1$ (variance-agnostic) in \cref{eq:app-wi}}\no.
    \end{align}
\end{lemma}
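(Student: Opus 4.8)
The plan is to treat this lemma as the final aggregation step of the TS analysis: the super-martingale structure of $Y_t$ and the per-round bound on $c_t$ are supplied by the strategy-specific arguments (these encode the anti-concentration/concentration slack, including the $1/t^2$ term), and here I convert them into a cumulative regret bound that mirrors the UCB argument in \cref{theorem:app-main_ucb}. First I would verify that the increments $X_t = r_t^a I_{E_t} - c_t$ are uniformly bounded: $r_t^a$ is bounded by \cref{assumption:app-utilbound} (utilities lie in $[-1,1]$); the term $c_t'\|\Delta\rep_{t,\stt,\ndt}\|_{V_{t-1}^{-1}}$ is controlled since $\|\Delta\rep_{t,\stt,\ndt}\|_{V_{t-1}^{-1}} \le \lambda^{-1/2}\|\Delta\rep_{t,\stt,\ndt}\|_2$ is bounded by \cref{lemma:diffphix}; and $\mathcal{R}(m)$ together with $1/t^2$ are $\bigol(1)$ uniformly. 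Since $Y_t=\sum_{i\le t}X_i$ is a super-martingale with bounded increments, the Azuma--Hoeffding inequality yields $\sum_{t=1}^T X_t \le \bigol(\sqrt{T\log(1/\delta)})$ with probability at least $1-\delta$, hence
\[
\sum_{t=1}^T r_t^a I_{E_t} \;\le\; \sum_{t=1}^T c_t + \bigol\lt(\sqrt{T\log(1/\delta)}\rt),
\]
and this additive $\bigol(\sqrt{T})$ term is dominated by the leading $\sqrt{dT}$ (resp.\ $d\sqrt{T}$) contribution.

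Next I would dispose of the indicator. Writing $R_T^a = \sum_t r_t^a I_{E_t} + \sum_t r_t^a I_{E_t^c}$, the second sum is controlled because $r_t^a$ is uniformly bounded while the failure probabilities are summable: the $1/t^2$ contribution already built into $c_t$ accounts for the anti-concentration losses, so that on the high-probability event (with the per-round failure probability absorbed into $\delta$ via a union bound over the events $E_t$) one has $\sum_t r_t^a I_{E_t^c} = \bigol(1)$. It then remains to bound $\sum_{t=1}^T c_t$. The constant term gives $\sum_t 1/t^2 = \bigol(1)$; the residual gives $\sum_t \mathcal{R}(m) = T\,\mathcal{R}(m) = \bigol(d^2 T m^{-1/6}\diff^{4/3})$, which is absorbed into the $d^3 T m^{-1/6}$ term. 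For the leading term I would split $\Delta\rep_{t,\stt,\ndt} = \wt(\Delta\rep_{t,\stt,\ndt}/\wt)$, apply Cauchy--Schwarz, and invoke the elliptic potential lemma (\cref{lemma:epl}) to get $\sum_t \|\Delta\rep_{t,\stt,\ndt}/\wt\|_{V_{t-1}^{-1}}^2 = \bigol(d)$, so that with $c_t' = \bigol(\sqrt{d})$,
\[
\sum_{t=1}^T c_t'\,\|\Delta\rep_{t,\stt,\ndt}\|_{V_{t-1}^{-1}} \;\le\; \bigol(d)\lt(\sum_{t=1}^T \wt^2\rt)^{1/2}.
\]

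Finally I would reproduce the variance bookkeeping of the UCB proof: using $\wt^2 = \max\{\epsilon^2,\widehat{\sigma}_t^2\} \le \epsilon^2+\widehat{\sigma}_t^2$ and subadditivity of the square root gives $(\sum_t \wt^2)^{1/2} \le \epsilon\sqrt{T} + (\sum_t \widehat{\sigma}_t^2)^{1/2}$, and the variance-estimation error bound $\mathcal{R}_2(\wid,T)=\bigol(d^2 m^{-1/6}\diff^{4/3})$ established in the proof of \cref{theorem:app-main_ucb} lets me replace $\widehat{\sigma}_t^2$ by the true $\sigma_t^2$ at the cost of another term absorbed into $d^3 T m^{-1/6}$. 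Collecting everything yields $R_T^a = \bigol(d\epsilon\sqrt{T} + d\sqrt{\sum_t \sigma_t^2} + d^3 T m^{-1/6})$, and specializing $\epsilon$ (variance-aware $\epsilon=\bigol(1/\sqrt{d})$, variance-agnostic $\epsilon=1$) gives the two stated bounds. The main obstacle I anticipate is not any single estimate but the careful accounting of the indicator $I_{E_t}$: I must ensure that the regret omitted when $E_t$ fails, together with the anti-concentration slack encoded in the $1/t^2$ term, contributes only lower-order ($\bigol(1)$ or $\bigol(\sqrt{T})$) terms, so that the dominant behavior is governed by the elliptic-potential and variance terms exactly as in the UCB case; the nontrivial work of showing $Y_t$ is a super-martingale is deferred to the strategy-specific analysis and assumed here.
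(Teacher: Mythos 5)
Your proposal is correct and follows essentially the same route as the paper's own proof: Azuma--Hoeffding on the super-martingale $Y_t$ with bounded increments, summing $c_t$ via Cauchy--Schwarz and the elliptic potential lemma, and then the same variance bookkeeping ($\wt^2\le\epsilon^2+\widehat{\sigma}_t^2$ plus the $\mathcal{R}_2$ substitution of $\widehat{\sigma}_t^2$ by $\sigma_t^2$) carried over from the UCB argument. Your explicit treatment of the indicator $I_{E_t}$ via a union bound is in fact slightly more careful than the paper, which passes from $\sum_t r_t^a I_{E_t}$ to $R_T^a$ without comment.
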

\begin{proof}
We apply the Azuma-Hoeffding inequality~\citep{zhang2023mathematical} to $Y_t$. With probability at least $1-\delta,$ the following holds:
\begin{align}
    Y_t-Y_0&=\sum_{t=1}^{T} r_t^aI_{E_{t}}-c_t\leq \sqrt{\log(1/\delta)\sum_{i=1}^{t}|Y_i-Y_{i-1}|}.\no
\end{align} Notice that $|Y_i-Y_{i-1}| = |r_t^a|+|c_t|=\bigol(1)$ due to the definition of $\mathcal{R}(m)$ to $c_t$ with~\cref{eq:app-m-eq}. Therefore, there exists a constant $C$ such that 
$|Y_i-Y_{i-1}| \leq C$ and 
\begin{align}
    R_T^a=\sum_{t=1}^{T}r_t\leq \sum_{t=1}^{T}c_t+\sqrt{\log(1/\delta)TC}.    \no
\end{align} with probability at least $1-\delta.$ By the definition of $c_t$, we can compute the following:
\begin{align}
    R_T^a&=\sum_{t=1}^{T}r_t\leq C_1\left(\sum_{t=1}^{T}c'_t\|\Delta\rep_{t,\stt,\ndt}\|_{V_{t-1}^{-1}} + T\mathcal{R}(m) +\sum_{t=1}^{T}\frac{1}{t^2}\right)\no\\
    &\leq C_2\left(\sum_{t=1}^{T}c'_t\|\Delta\rep_{t,\stt,\ndt}\|_{V_{t-1}^{-1}} + T\mathcal{R}(m) +\pi^2/6\right)\no\\
    &\leq C_2\lt(\max_{t\in[0,T]}c'_t\lt(\sum_{t=1}^{T} \|\Delta\rep_{t,\stt,\ndt}\|_{V_{t-1}^{-1}}^2\rt)+ T\mathcal{R}(m) + \pi^2/6
    \rt)\no
\end{align} for some constants $C_1, C_2>0,$
where the second inequality results from $\sum_{t=1}^{\infty}\frac{1}{t^2}=\pi/6$ and the third inequality results from the Cauchy–Schwarz inequality. Notice that $c_t'=\bigol(\sqrt{d})$. In a similar manner to the case of the UCB framework (\cref{sec:UCB}), applying \cref{lemma:epl}, we can obtain the following inequality in both the variance-aware 
\begin{align}
    R_T^a &= \bigol\lt(d\epsilon\sqrt{T}+d\sqrt{\sum_{i=1}^{T}{\sigma}_i^2}+m^{-1/6}d^3T\rt),\label{eq:ts1}
\end{align} and variance-agnostic cases:
\begin{align}
    R_T^a &= \bigol\lt(d\epsilon\sqrt{T}+m^{-1/6}d^3T\rt)\label{eq:ts2},
\end{align} with probability at least $1-\delta$. This completes the proof.
\end{proof}

Accordingly, for each TS-based arm selection, we will show that \cref{lemma:ts-ci} is satisfied. Then  \cref{theorem:app-main_ts} and \cref{theorem:app-main_ts2} are proved.

{\color{black}\textbf{Proof structure for each TS strategy.} For each arm selection method below, the proof follows the same template: (1) define the good events $E_t$ and $F_t$; (2) define the saturation set $S_t$; (3) show the selected arms avoid $S_t$ with positive probability; (4) bound the conditional regret $\mathbb{E}[r_t^a|\mathcal{F}_{t-1}]$; (5) construct the super-martingale $Y_t$ and apply Lemma~\ref{lemma:ts-ci}.}

\subsection{Asymmetry Arm Selection Strategy:}\label{sec:ts-usym}
 We consider the following strategy:
\begin{align}
    \stt &= \argmax_{\arm\in[K]} \th_{t-1}^\intercal\rep(x_{t,\arm};\ww_{t-1}),\no\\
    \ndt&=\argmax_{\arm\in [K]} \Delta\widetilde{\util}_{t,\arm, \stt}\no
\end{align}
where $\Delta\widetilde{\util}_{t,\arm,\stt}\sim \gauss(\th_{t-1}^\intercal\Delta\rep_{t,\arm,\stt}, \cwct^2\|\Delta\rep_{t,\arm,\stt}\|^2_{V_{t-1}^{-1}})$ with the Gaussian distribution $\gauss$.
We define $E_{t}$, and $F_{t}$ as
\begin{align}
    E_{t}&=\{|\Delta \util_{t,\arm,\stt}-\th_{t-1}^\intercal\Delta\rep_{t,\arm,\stt}|\leq \cwct\|\Delta\rep_{t,\arm,\stt}\|_{V_{t-1}^{-1}} + \mathcal{R}(m)\}\label{eq:E_t_1}\\
    F_{t}&=\{|\Delta\widetilde{\util}_{t,\arm,\stt}-\th_{t-1}^\intercal\Delta\rep_{t,\arm,\stt}|\leq \cwct\sqrt{2\log(Kt^2)}\|\Delta\rep_{t,\arm,\stt}\|_{V_{t-1}^{-1}}\}.\label{eq:F_t_1}
\end{align} Moreover, let $b_t=\cwct\left(1+\sqrt{2\log\left(Kt^2\right)}\right)$. We need to show that $E_t$ and $F_t$ in~\cref{eq:E_t_1,eq:F_t_1} are good events.
\begin{lemma} The following inequalities are satisfied:
    \begin{align}
    \mathbb{P}\left(E_t\right)&\geq 1-\delta,\label{eq:ci1}\\
    \mathbb{P}\left(F_t\right)&\geq 1-\frac{1}{t^2}.\label{eq:normal1}
\end{align}
\end{lemma}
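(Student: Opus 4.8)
The plan is to establish the two probability bounds separately, since they are governed by different sources of randomness: \cref{eq:ci1} concerns the randomness of the network initialization and is a direct consequence of the confidence-interval machinery already assembled, whereas \cref{eq:normal1} concerns the fresh Gaussian sampling noise and is a standard tail estimate followed by a union bound over arms.

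First, for \cref{eq:ci1} I would invoke the deterministic decomposition in \cref{eq:tsf}, which gives, with probability at least $1-\delta$ and uniformly over $\arm\in[K]$ (the uniformity being absorbed into the $\log(TK/\delta)$ factors inherited from \cref{lem:app-concentration_inequality}),
\begin{align}
|\Delta\util_{t,\arm,\stt} - \th_{t-1}^\intercal\Delta\rep_{t,\arm,\stt}| \leq (\mathcal{N}'_1 + \mathcal{M}'_1)\|\Delta\rep_{t,\arm,\stt}\|_{V_{t-1}^{-1}} + \mathcal{R}(\wid). \no
\end{align}
It then suffices to observe that the confidence width $\cwct$ fixed in \cref{theorem:app-main_ucb} dominates $\mathcal{N}'_1 + \mathcal{M}'_1$: the closed form of $\cwct$ reproduces the leading $\bigo(\sqrt{\dim}+\epsilon^{-1})$ terms of $\mathcal{N}'_1$ (compare \cref{eq:n1'} and \cref{eq:n1'inequ} with the displayed expression for $\cwct$, using the bound on $\|\th_*-\th_0\|_{V_{t-1}^{-1}}$), while the residual $\mathcal{M}'_1=\bigol(\cdots\wid^{-1/6})$ from \cref{eq:m1'inequ} is rendered negligible once the width obeys \cref{eq:app-m-eq}. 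Hence the right-hand side above is at most $\cwct\|\Delta\rep_{t,\arm,\stt}\|_{V_{t-1}^{-1}}+\mathcal{R}(\wid)$, which is precisely the defining inequality of the event in \cref{eq:E_t_1}, so $\mathbb{P}(E_t)\geq 1-\delta$.

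Second, for \cref{eq:normal1} I would condition on the filtration $\mathcal{F}_{t-1}$ generated by the history through round $t-1$, which fixes $\th_{t-1}$, $\ww_{t-1}$, $V_{t-1}$ and the context set $X_t$. Conditionally, $\Delta\widetilde{\util}_{t,\arm,\stt}$ is Gaussian by construction, so $Z_\arm := (\Delta\widetilde{\util}_{t,\arm,\stt} - \th_{t-1}^\intercal\Delta\rep_{t,\arm,\stt})/(\cwct\|\Delta\rep_{t,\arm,\stt}\|_{V_{t-1}^{-1}})$ is a standard normal variable. Applying the standard Gaussian tail estimate $\mathbb{P}(|Z_\arm|>z)\leq e^{-z^2/2}$ (valid in the relevant range of $z$) with $z=\sqrt{2\log(Kt^2)}$ gives per-arm failure probability at most $1/(Kt^2)$; a union bound over the $K$ arms then yields $\mathbb{P}(F_t^c)\leq 1/t^2$, i.e. $\mathbb{P}(F_t)\geq 1-1/t^2$, establishing \cref{eq:normal1}.

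The Gaussian estimate for $F_t$ is entirely routine; the only place demanding care is the verification for $E_t$ that the prescribed $\cwct$ really upper-bounds $\mathcal{N}'_1+\mathcal{M}'_1$ uniformly in $t$. This reduces to matching the explicit form of $\cwct$ against \cref{eq:n1'inequ,eq:m1'inequ} and checking that \cref{eq:app-m-eq} forces the $\mathcal{M}'_1$ contribution below the $\sqrt{\dim}+\epsilon^{-1}$ scale of $\mathcal{N}'_1$. This is exactly where the bootstrap-refined bound of \cref{lem:app-concentration_inequality} is indispensable: without the refinement the $\wid^{-1/6}$-type term would scale as $t^{3/2}\wid^{-1/6}$ rather than $t\,\wid^{-1/6}$, and the width needed to keep $\cwct=\bigo(\sqrt{\dim}+\epsilon^{-1})$ uniformly in $t$ would be prohibitively large.
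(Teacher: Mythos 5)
Your proof is correct and follows essentially the same route as the paper: the bound on $\mathbb{P}(E_t)$ is read off from \cref{eq:tsf} (with the check that the prescribed $\cwct$ dominates $\mathcal{N}'_1+\mathcal{M}'_1$, which the paper leaves implicit in its choice of $\cwct$), and the bound on $\mathbb{P}(F_t)$ is the Gaussian/Hoeffding tail estimate at radius $\cwct\sqrt{2\log(Kt^2)}\|\Delta\rep_{t,\arm,\stt}\|_{V_{t-1}^{-1}}$ followed by a union bound over the $K$ arms. Your version is in fact more explicit than the paper's one-line argument, but the content is the same.
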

\begin{proof}
\cref{eq:ci1} is obtained by \cref{eq:tsf}. Besides, by the property of the Gaussian distribution, \cref{eq:normal1} is obtained by Hoeffding's inequality~\citep{lattimore2020bandit} with choosing the confidence radius as $\cwct\sqrt{2\log(Kt^2)}\|\Delta\rep_{t,\arm,\stt}\|_{V_{t-1}^{-1}}$ with the sample size 1. 
\end{proof}

\begin{lemma}\label{lemma:normal1}
For any filtration $\mathcal{F}_{t-1}$ and the condition $E_{t}$, the following inequality is satisfied:
    \begin{align}
        \mathbb{P}(\Delta\widetilde{\util}_{t,\arm,\stt}+\mathcal{R}(m)>\Delta \util_{t,\arm,\stt}|\mathcal{F}_{t-1})\geq \frac{1}{4e\sqrt{\pi}}.\no
    \end{align} 
\end{lemma}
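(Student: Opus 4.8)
The plan is to condition on $\mathcal{F}_{t-1}$, reduce the claimed event to a standard-normal tail event whose threshold is pinned down by the good event $E_t$, and then invoke a Gaussian anti-concentration bound. Conditionally on $\mathcal{F}_{t-1}$, I would write $\mu_t := \th_{t-1}^\intercal\Delta\rep_{t,\arm,\stt}$ and $s_t := \cwct\|\Delta\rep_{t,\arm,\stt}\|_{V_{t-1}^{-1}}$, so that by construction of the sampling distribution $\Delta\widetilde{\util}_{t,\arm,\stt}\sim\gauss(\mu_t,s_t^2)$ and $Z := (\Delta\widetilde{\util}_{t,\arm,\stt}-\mu_t)/s_t$ is a standard normal (the degenerate case $s_t=0$, i.e. $\Delta\rep_{t,\arm,\stt}=0$, is trivial and can be set aside). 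The event in the statement is then equivalent to
\begin{align}
\lt\{\Delta\widetilde{\util}_{t,\arm,\stt}+\mathcal{R}(m)>\Delta\util_{t,\arm,\stt}\rt\}=\lt\{Z>\tfrac{\Delta\util_{t,\arm,\stt}-\mathcal{R}(m)-\mu_t}{s_t}\rt\}.\no
\end{align}

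Next I would use the good event $E_t$ from \cref{eq:E_t_1} to control the standardized threshold. On $E_t$ we have $|\Delta\util_{t,\arm,\stt}-\mu_t|\leq \cwct\|\Delta\rep_{t,\arm,\stt}\|_{V_{t-1}^{-1}}+\mathcal{R}(m)=s_t+\mathcal{R}(m)$; keeping the relevant one-sided inequality $\Delta\util_{t,\arm,\stt}-\mu_t\leq s_t+\mathcal{R}(m)$ and rearranging gives $\Delta\util_{t,\arm,\stt}-\mathcal{R}(m)-\mu_t\leq s_t$, so the threshold satisfies $(\Delta\util_{t,\arm,\stt}-\mathcal{R}(m)-\mu_t)/s_t\leq 1$. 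By monotonicity of the Gaussian tail this yields
\begin{align}
\mathbb{P}\lt(\Delta\widetilde{\util}_{t,\arm,\stt}+\mathcal{R}(m)>\Delta\util_{t,\arm,\stt}\,\big|\,\mathcal{F}_{t-1}\rt)\geq \mathbb{P}(Z>1).\no
\end{align}

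Finally I would apply a standard Gaussian anti-concentration lower bound of the form $\mathbb{P}(Z>z)\geq \tfrac{1}{4\sqrt{\pi}}e^{-z^2}$, valid for all $z\geq 0$, evaluated at $z=1$; this gives $\mathbb{P}(Z>1)\geq \tfrac{1}{4\sqrt{\pi}}e^{-1}=\tfrac{1}{4e\sqrt{\pi}}$, which is exactly the constant in the statement, completing the argument.

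The delicate point is the reduction in the middle step rather than the concentration inequality itself. What matters is that the additive slack inside the probability (the subtracted bias $\mathcal{R}(m)$) together with the width $s_t$ exactly absorbs the confidence radius appearing in $E_t$, so that the standardized threshold collapses to the \emph{universal} constant $1$, independent of the scale $s_t$ and of $t$. This is precisely why the resulting lower bound is a fixed positive constant—which is what is needed to make the optimistic-sampling probability $\Omega(1)$ when combining with the super-martingale argument of \cref{lemma:ts-ci}—rather than a quantity decaying with the round index. The choice of anti-concentration constant $\tfrac{1}{4\sqrt\pi}e^{-1}$ matches the target, so no further optimization of the threshold is required.
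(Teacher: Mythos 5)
Your proposal is correct and follows essentially the same route as the paper's proof: standardize the sampled utility, use the good event $E_t$ to show the standardized threshold is at most $1$ after the $\mathcal{R}(m)$ slack absorbs the bias term, and conclude with the standard Gaussian anti-concentration bound $\mathbb{P}(Z>1)\geq \tfrac{1}{4e\sqrt{\pi}}$. The only (harmless) differences are that you use the one-sided form of the $E_t$ inequality directly where the paper passes through the absolute value, and you explicitly set aside the degenerate case $\|\Delta\rep_{t,\arm,\stt}\|_{V_{t-1}^{-1}}=0$, which the paper leaves implicit.
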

\begin{proof} It is straightforward due to the following inequalities:
    \begin{align}
        &\mathbb{P}(\Delta\widetilde{\util}_{t,\arm,\stt}+\mathcal{R}(m)>\Delta \util_{t,\arm,\stt}|\mathcal{F}_{t-1})\no\\&=
        \mathbb{P}\left(\frac{\Delta\widetilde{\util}_{t,\arm,\stt}+\mathcal{R}(m)-\th_{t-1}^\intercal\Delta\rep_{t,\arm,\stt}}{\cwct\|\Delta\rep_{t,\arm,\stt}\|_{V_{t-1}^{-1}}}>\frac{\Delta \util_{t,\arm,\stt}-\th_{t-1}^\intercal\Delta\rep_{t,\arm,\stt}}{\cwct\|\Delta\rep_{t,\arm,\stt}\|_{V_{t-1}^{-1}}}\Bigg|\mathcal{F}_{t-1}\right)\no
        \\&\geq
        \mathbb{P}\left(\frac{\Delta\widetilde{\util}_{t,\arm,\stt}-\th_{t-1}^\intercal\Delta\rep_{t,\arm,\stt}}{\cwct\|\Delta\rep_{t,\arm,\stt}\|_{V_{t-1}^{-1}}}>\frac{|\Delta \util_{t,\arm,\stt}-\th_{t-1}^\intercal\Delta\rep_{t,\arm,\stt}|-\mathcal{R}(m)}{\cwct\|\Delta\rep_{t,\arm,\stt}\|_{V_{t-1}^{-1}}}\Bigg|\mathcal{F}_{t-1}\right)\no
        \\&\geq
        \mathbb{P}\left(\frac{\Delta\widetilde{\util}_{t,\arm,\stt}-\th_{t-1}^\intercal\Delta\rep_{t,\arm,\stt}}{\cwct\|\Delta\rep_{t,\arm,\stt}\|_{V_{t-1}^{-1}}}>1\Bigg|\mathcal{F}_{t-1}\right)\geq\frac{1}{4e\sqrt{\pi}}.\no
    \end{align}
    The last inequality is obtained by the definition of $E_{t}.$ This completes the proof.
\end{proof}

{\color{black}\textbf{Saturation set.} The saturation set $S_t$ contains arms that are ``clearly suboptimal''---their utility gap to the optimal arm exceeds the exploration bonus. Arms in $S_t$ are unlikely to be selected by TS because the Gaussian perturbation is insufficient to make them appear optimal. The key argument is that the selected arm $\ndt$ falls outside $S_t$ with positive probability, so the instantaneous regret can be bounded by the exploration bonus of the non-saturated arm.}

 We define a set of saturated points.
\begin{align}
    S_t = \lt\{\arm: \Delta \util_{t,\opt,\arm}> b_t \|\Delta\rep_{t,\arm,\stt}\|_{V_{t-1}^{-1}}+2\mathcal{R}(m)\rt\}.\label{eq:satur_1}
\end{align}

Then we prove the following lemma.

\begin{lemma}
    Under any filtration $\mathcal{F}_{t-1}$ and $E_{t},$
    \begin{align}
        \mathbb{P}\lt(\ndt\in[K]\setminus S_t|\mathcal{F}_{t-1}\rt)\geq p-1/t^2.\no
    \end{align} where $[K]\setminus S_t$ is a set of elements belonging to $[K]$ but not in $S_t$.
\end{lemma}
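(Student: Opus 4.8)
The plan is to run the standard Thompson-sampling anti-concentration argument~\citep{ndb}, carefully tracking the additional bias offset $\mathcal{R}(m)$ introduced by representation learning. The starting observation is that the optimal arm $\opt$ is always unsaturated: because $\Delta\util_{t,\opt,\opt}=0$, whereas membership in $S_t$ (\cref{eq:satur_1}) requires exceeding the strictly positive quantity $b_t\|\Delta\rep_{t,\opt,\stt}\|_{V_{t-1}^{-1}}+2\mathcal{R}(m)$, we always have $\opt\in[K]\setminus S_t$. It therefore suffices to show that, with probability at least $p-1/t^2$ (with $p=\tfrac{1}{4e\sqrt{\pi}}$ from \cref{lemma:normal1}), the posterior draw $\Delta\widetilde{\util}_{t,\opt,\stt}$ for the optimal arm strictly dominates the draws $\Delta\widetilde{\util}_{t,\arm,\stt}$ of every saturated arm $\arm\in S_t$; on that event the maximizer $\ndt=\argmax_{\arm}\Delta\widetilde{\util}_{t,\arm,\stt}$ cannot be saturated.

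First I would upper-bound the samples of the saturated arms. Working on $E_t$ together with $F_t$ (\cref{eq:F_t_1}, which holds for every $\arm$ with probability at least $1-1/t^2$ by \cref{eq:normal1}), the two concentration inequalities chain to give, for each $\arm\in S_t$,
\begin{align}
\Delta\widetilde{\util}_{t,\arm,\stt}
&\le \th_{t-1}^\intercal\Delta\rep_{t,\arm,\stt}+\cwct\sqrt{2\log(Kt^2)}\,\|\Delta\rep_{t,\arm,\stt}\|_{V_{t-1}^{-1}}\no\\
&\le \Delta\util_{t,\arm,\stt}+b_t\,\|\Delta\rep_{t,\arm,\stt}\|_{V_{t-1}^{-1}}+\mathcal{R}(m),\no
\end{align}
where $b_t=\cwct(1+\sqrt{2\log(Kt^2)})$. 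Substituting the identity $\Delta\util_{t,\arm,\stt}=\Delta\util_{t,\opt,\stt}-\Delta\util_{t,\opt,\arm}$ and then the defining inequality of $S_t$, the $b_t$-bonus cancels and I obtain $\Delta\widetilde{\util}_{t,\arm,\stt}<\Delta\util_{t,\opt,\stt}-\mathcal{R}(m)$ for all $\arm\in S_t$.

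Next I would lower-bound the optimal arm's sample by anti-concentration. Applying \cref{lemma:normal1} with $\arm=\opt$ gives, conditionally on $\mathcal{F}_{t-1}$ and $E_t$, that $\Delta\widetilde{\util}_{t,\opt,\stt}>\Delta\util_{t,\opt,\stt}-\mathcal{R}(m)$ with probability at least $p$. On the intersection of this event with $F_t$, the previous display yields $\Delta\widetilde{\util}_{t,\opt,\stt}>\Delta\util_{t,\opt,\stt}-\mathcal{R}(m)>\Delta\widetilde{\util}_{t,\arm,\stt}$ simultaneously for all $\arm\in S_t$, hence $\ndt\notin S_t$. A union bound, using $\mathbb{P}(F_t^{c}\mid\mathcal{F}_{t-1})\le 1/t^2$, shows this intersection has conditional probability at least $p-1/t^2$, which proves $\mathbb{P}(\ndt\in[K]\setminus S_t\mid\mathcal{F}_{t-1})\ge p-1/t^2$.

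The main obstacle is the precise bookkeeping of the offsets: the argument closes only because the single $\mathcal{R}(m)$ surplus in the anti-concentration bound for $\Delta\widetilde{\util}_{t,\opt,\stt}$ exactly matches the residual $-\mathcal{R}(m)$ that survives after the $b_t$-bonus cancels against the $2\mathcal{R}(m)$ margin baked into the definition of $S_t$. I would also check that $E_t$ is $\mathcal{F}_{t-1}$-measurable, so that conditioning leaves only the Gaussian draws random---this is what makes both \cref{lemma:normal1} (on the sampling randomness) and the Hoeffding bound \cref{eq:normal1} applicable with the stated conditional probabilities.
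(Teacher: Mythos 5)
Your proposal is correct and follows essentially the same route as the paper's proof: reduce the event $\{\ndt\notin S_t\}$ to the event that the optimal arm's posterior draw dominates every saturated arm's draw, upper-bound the saturated draws by $\Delta\util_{t,\opt,\stt}-\mathcal{R}(m)$ via $E_t\cap F_t$ and the $2\mathcal{R}(m)$ margin in the definition of $S_t$, lower-bound the optimal arm's draw by the anti-concentration bound of \cref{lemma:normal1}, and finish with a union bound against $F_t^c$. The offset bookkeeping you describe matches the paper's computation exactly, and your added remark that $\opt$ is always unsaturated is a harmless (and correct) sanity check the paper leaves implicit.
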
 
\begin{proof}
We first show that
\begin{align}
    \mathbb{P}(\ndt\in[K]\setminus S_t|\mathcal{F}_{t-1})\geq \mathbb{P}(\Delta\widetilde{\util}_{t,\opt,\stt} >\Delta\widetilde{\util}_{t,\arm,\stt}, \:\forall \arm\in S_t|\mathcal{F}_{t-1}).\no
\end{align}
    Let us suppose  that $\widetilde{\util}_{t,\opt}>\widetilde{\util}_{t,\arm}$ for all $\arm\in S_t$. Note that the definition of $\ndt=\arg\max_{\arm\in[K]}\Delta \widetilde{\util}_{t,\arm,\stt}$. Then  $\ndt\notin S_t$ due to the property of $S_t$ in \cref{eq:satur_1}. Accordingly, under $F_{t}$, for all $\arm\in S_t,$ we have
    \begin{align}
        \Delta\widetilde{\util}_{t,\arm,\stt}&\leq \Delta \util_{t,\arm,\stt} + b_t\|\Delta\rep_{t,\arm,\stt}\|_{V_{t-1}^{-1}}+2\mathcal{R}(m)-\mathcal{R}(m)\no\\
        &\leq \Delta \util_{t,\arm,\stt}  +  \Delta \util_{t,\opt,k} - \mathcal{R}(m) \textrm{ by \cref{eq:satur_1}}\no\\
        &=\Delta \util_{t,\opt,\stt} -\mathcal{R}(m).\no
    \end{align} 
    Therefore,
    \begin{align}
        \mathbb{P}(\ndt\in[K]\setminus S_t|\mathcal{F}_{t-1})&\geq \mathbb{P}(\Delta\widetilde{\util}_{t,\opt,\stt}>\Delta\widetilde{\util}_{t,\arm,\stt} \forall \arm\in S_t|\mathcal{F}_{t-1})\no\\
        &\geq \mathbb{P}(\Delta\widetilde{\util}_{t,\opt,\stt}>\Delta \util_{t,\opt,\stt} -\mathcal{R}(m) |\mathcal{F}_{t-1})-\mathbb{P}(F_{t}^c|\mathcal{F}_{t-1})\no
        \\
        &\geq \frac{1}{4e\sqrt{\pi}} - \frac{1}{t^2},        \no
    \end{align} where $F_{t}^c$ is the complement of $F_{t}$. This completes the proof.
\end{proof}
\begin{lemma}\label{lemma:d4-usym}
  Under any filtration $\mathcal{F}_{t-1}$ and $E_{t}$,
  \begin{align}
      \mathbb{E}[2r^a_t | \mathcal{F}_{t-1}]=\mathbb{E}\left[\|\Delta\rep_{t,\ndt,\stt}\|_{V_{t-1}^{-1}}|\mathcal{F}_{t-1}\right](4b_t/(p-1/t^2)+3b_t) + 9\mathcal{R}(m) + \frac{4}{t^2}.\no
  \end{align} 
\end{lemma}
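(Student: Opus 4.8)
The plan is to run the standard Thompson-sampling saturated/unsaturated argument (as in the TS-ASYM analysis of \citep{ndb}), while carefully carrying the extra representation-learning error $\mathcal{R}(m)$ through every comparison. Rather than splitting the average regret symmetrically, I would route it through the minimal-width unsaturated arm $\bar{k}_t = \argmin_{\arm\in[K]\setminus S_t}\|\Delta\rep_{t,\arm,\stt}\|_{V_{t-1}^{-1}}$, which is $\mathcal{F}_{t-1}$-measurable because $S_t$ in \cref{eq:satur_1} and all the widths are functions of $\mathcal{F}_{t-1}$ alone; note $\bar{k}_t$ exists since $\opt\in[K]\setminus S_t$ (as $\Delta\util_{t,\opt,\opt}=0$ lies below the positive saturation threshold). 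I would then write
\begin{align}
2r_t^a = 2\Delta\util_{t,\opt,\bar{k}_t} + \Delta\util_{t,\bar{k}_t,\stt} + \Delta\util_{t,\bar{k}_t,\ndt},\no
\end{align}
so that the optimal arm enters only through the unsaturated gap $\Delta\util_{t,\opt,\bar{k}_t}$ and its own confidence width never appears (the $\bar{k}_t$ utilities cancel exactly).

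I would bound the three summands in turn. For $\Delta\util_{t,\opt,\bar{k}_t}$, unsaturatedness of $\bar{k}_t$ and the definition \cref{eq:satur_1} give $\Delta\util_{t,\opt,\bar{k}_t}\leq b_t\|\Delta\rep_{t,\bar{k}_t,\stt}\|_{V_{t-1}^{-1}}+2\mathcal{R}(m)$. For $\Delta\util_{t,\bar{k}_t,\stt}$, the concentration event $E_t$ from \cref{eq:E_t_1} together with the greedy optimality $\th_{t-1}^\intercal\Delta\rep_{t,\bar{k}_t,\stt}\leq 0$ yields $\Delta\util_{t,\bar{k}_t,\stt}\leq b_t\|\Delta\rep_{t,\bar{k}_t,\stt}\|_{V_{t-1}^{-1}}+\mathcal{R}(m)$. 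The key comparison is $\Delta\util_{t,\bar{k}_t,\ndt}=\Delta\util_{t,\bar{k}_t,\stt}-\Delta\util_{t,\ndt,\stt}$: applying $E_t$ to both relative utilities and then the sampling event $F_t$ from \cref{eq:F_t_1} together with the selection rule $\Delta\widetilde{\util}_{t,\ndt,\stt}\geq\Delta\widetilde{\util}_{t,\bar{k}_t,\stt}$ lets me cancel the $\th_{t-1}$-inner-product terms and obtain $\Delta\util_{t,\bar{k}_t,\ndt}\leq b_t\|\Delta\rep_{t,\bar{k}_t,\stt}\|_{V_{t-1}^{-1}}+b_t\|\Delta\rep_{t,\ndt,\stt}\|_{V_{t-1}^{-1}}+2\mathcal{R}(m)$, where the definition $b_t=\cwct(1+\sqrt{2\log(Kt^2)})$ is exactly what absorbs the $E_t$ width ($\cwct$) and the $F_t$ width ($\cwct\sqrt{2\log(Kt^2)}$). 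Summing the three bounds collects every $\bar{k}_t$-width into a single coefficient times $\|\Delta\rep_{t,\bar{k}_t,\stt}\|_{V_{t-1}^{-1}}$ plus one directly-selected $b_t\|\Delta\rep_{t,\ndt,\stt}\|_{V_{t-1}^{-1}}$, with the $\mathcal{R}(m)$ contributions accumulating.

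The final and most delicate step is to trade the $\mathcal{F}_{t-1}$-measurable width $\|\Delta\rep_{t,\bar{k}_t,\stt}\|_{V_{t-1}^{-1}}$ for the conditional expectation of the \emph{selected} width. Because $\bar{k}_t$ has minimal width among unsaturated arms, on the event $\{\ndt\in[K]\setminus S_t\}$ we have $\|\Delta\rep_{t,\bar{k}_t,\stt}\|_{V_{t-1}^{-1}}\leq\|\Delta\rep_{t,\ndt,\stt}\|_{V_{t-1}^{-1}}$; taking conditional expectation and invoking the established lower bound $\mathbb{P}(\ndt\in[K]\setminus S_t\mid\mathcal{F}_{t-1})\geq p-1/t^2$ (which itself rests on the anti-concentration \cref{lemma:normal1}) gives $\|\Delta\rep_{t,\bar{k}_t,\stt}\|_{V_{t-1}^{-1}}\leq (p-1/t^2)^{-1}\,\mathbb{E}[\|\Delta\rep_{t,\ndt,\stt}\|_{V_{t-1}^{-1}}\mid\mathcal{F}_{t-1}]$. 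Substituting this, grouping the width coefficients into $4b_t/(p-1/t^2)+3b_t$, collecting the $\mathcal{R}(m)$ terms into $9\mathcal{R}(m)$, and charging the at most $1/t^2$ probability that $F_t$ fails against the bounded regret ($|2r_t^a|\leq 4$ since $|\util|\leq 1$) to yield the additive $4/t^2$ completes the proof. I expect the main obstacle to be exactly this bookkeeping: keeping $\bar{k}_t$'s width out of the optimal-arm term, ensuring the sign cancellation in the $F_t$ step is valid, and tracking the precise multiplicities of $b_t$ and $\mathcal{R}(m)$ so that the final constants match the claimed $4b_t/(p-1/t^2)+3b_t$, $9\mathcal{R}(m)$, and $4/t^2$.
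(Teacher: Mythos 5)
Your proposal follows essentially the same route as the paper's proof: the same minimal-width unsaturated arm $\overline{\arm}_t$, the same algebraic decomposition of $2r_t^a$ (your three-term split through $\overline{\arm}_t$ is identical to the paper's $2\Delta\util_{t,\opt,\ndt}+\Delta\util_{t,\ndt,\stt}$ after expanding through $\overline{\arm}_t$), the same use of $E_t$, $F_t$, the greedy rule and the TS selection rule, and the same conversion of $\|\Delta\rep_{t,\overline{\arm}_t,\stt}\|_{V_{t-1}^{-1}}$ into $(p-1/t^2)^{-1}\mathbb{E}[\|\Delta\rep_{t,\ndt,\stt}\|_{V_{t-1}^{-1}}\mid\mathcal{F}_{t-1}]$. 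Your bookkeeping actually yields slightly smaller constants ($7\mathcal{R}(m)$ and $4b_t/(p-1/t^2)+b_t$ rather than $9\mathcal{R}(m)$ and $4b_t/(p-1/t^2)+3b_t$), which still implies the stated bound, so the argument is correct.
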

\begin{proof}
    Let us define $\overline{\arm}_t=\arg\min_{\arm\in[K]\setminus S_t}\|\Delta\rep_{t,\arm,\stt}\|_{V_{t-1}^{-1}}.$
    Notice that a regret can be decomposed as follows.
    \begin{align}
        r_t^a = 2\Delta \util_{t,\opt,\ndt} + \Delta \util_{t,\ndt,\stt}.\no
    \end{align}As a result, under $E_{t}\bigcap F_{t}$,
    \begin{align}
        \Delta \util_{t,\opt,\ndt} &= \Delta \util_{t,\opt,\overline{\arm}_t}
        +\Delta \util_{t,\overline{\arm}_t,\stt}+\Delta \util_{t,\stt,\ndt}\label{eq:DTKK}\\&
        =\Delta \util_{t,\opt,\overline{\arm}_t}
        +\Delta \util_{t,\overline{\arm}_t,\stt}-\Delta \util_{t,\ndt,\stt}\no
        \\&\leq\Delta \util_{t,\opt,\overline{\arm}_t} + \Delta\widetilde{\util}_{t,\overline{\arm}_t,\stt}+b_t\|\Delta\rep_{t,\overline{\arm}_t,\stt}\|_{V_{t-1}^{-1}} + \mathcal{R}(m)\no\\
        \s -\Delta\widetilde{\util}_{t,\ndt,\stt}+b_t\|\Delta\rep_{t,\ndt,\stt}\|_{V_{t-1}^{-1}} + \mathcal{R}(m)
        \no\\&\leq \Delta \util_{t,\opt,\overline{\arm}_t} + b_t\|\Delta\rep_{t,\overline{\arm}_t,\stt}\|_{V_{t-1}^{-1}} + b_t\|\Delta\rep_{t,\ndt,\stt}\|_{V_{t-1}^{-1}}+2\mathcal{R}(m).\no
    \end{align} The first inequality is obtained by the definition of $E_{t}$ and $F_{t}$ in \cref{eq:E_t_1} and \cref{eq:F_t_1}, respectively. The asymmetric arm selection to choose $(\stt,\ndt)$ is used to get the second inequality. 
    Since $\overline{\arm}_t$ is not saturated, 
    \begin{align}
 \Delta \util_{t,\opt,\overline{\arm}_t}\leq b_t\|\Delta\rep_{t,\overline{\arm}_t,\stt}\|_{V_{t-1}^{-1}} + \mathcal{R}(m).\label{eq:Delta_tkk}
    \end{align}
    Thus, combining \cref{eq:DTKK} and \cref{eq:Delta_tkk}, we can estimate the following inequality:
    \begin{align}
       \Delta \util_{t,\opt,\ndt} &\leq 2b_t\|\Delta\rep_{t,\overline{\arm}_t,\stt}\|_{V_{t-1}^{-1}} + b_t\|\Delta\rep_{t,\ndt,\stt}\|_{V_{t-1}^{-1}} + 3\mathcal{R}(m). \no
    \end{align}Furthermore, by the definition of $E_{t},$
    \begin{align}
        \Delta \util_{t,\ndt,\stt}&\leq \th_{t-1}^\intercal\Delta\rep_{t,\ndt,\stt}+\cwct\|\Delta\rep_{t,\stt,\ndt}\|_{V_{t-1}^{-1}}+\mathcal{R}(m)\no
        \\&\leq \cwct\|\Delta\rep_{t,\stt,\ndt}\|_{V_{t-1}^{-1}}+\mathcal{R}(m).\no
    \end{align} The second inequality results from $$\th_{t-1}^\intercal\Delta\rep_{t,\ndt,\stt}\leq 0,$$ by the asymmetry arm strategy, i.e., $\stt=\argmax_{\arm\in[K]} \th_{t-1}^\intercal\rep(x_{t,\arm};\ww_{t-1}).$   
    Thus,
    \begin{align}
        r_t^a&\leq 2\left(2b_t\|\Delta\rep_{t,\overline{\arm}_t,\stt}\|_{V_{t-1}^{-1}} + b_t\|\Delta\rep_{t,\ndt,\stt}\|_{V_{t-1}^{-1}} + 4\mathcal{R}(m)\right) \no\\\s + \cwct\|\Delta\rep_{t,\stt,\ndt}\|_{V_{t-1}^{-1}}+\mathcal{R}(m).\no
    \end{align}
    Notice that 
    \begin{align}
        &\mathbb{E}[\|\Delta\rep_{t,\stt,\ndt}\|_{V_{t-1}^{-1}} |\mathcal{F}_{t-1}]\no\\&\geq \mathbb{E}[\|\Delta\rep_{t,\stt,\ndt}\|_{V_{t-1}^{-1}}|\mathcal{F}_{t-1}, \ndt\in [K]\setminus S_t]\times\mathbb{P}(\ndt\in[K]\setminus S_t|\mathcal{F}_{t-1})\no\\
        &\geq \|\Delta\rep_{t,\stt,\overline{\arm}_t}\|_{V_{t-1}^{-1}}(p-1/t^2).\no
    \end{align}
    Thus, we can estimate
    \begin{align}
        \mathbb{E}[2r_t^a|\mathcal{F}_{t-1}]&\leq \mathbb{E}\left[\|\Delta\rep_{t,\ndt,\stt}\|_{V_{t-1}^{-1}}|\mathcal{F}_{t-1}\right](4b_t/(p-1/t^2)+3b_t) + 9\mathcal{R}(m) + \frac{4}{t^2}.\label{eq:ts-asym-1}
    \end{align} due to  $b_t\geq \cwct.$ This completes the proof.
\end{proof}

Finally,  we can show the following:
\begin{lemma}\label{lemma:sm1}
    Let $Y_t=\sum_{i=1}^{t}X_i$ where $X_t=r_t^aI_{E_t}-c_t$ with 
    \begin{align}
        c_t = \frac{7b_t}{2p}\|\Delta\rep_{t,\ndt,\stt}\|_{V_{t-1}^{-1}}  + \frac{9}{2}\mathcal{R}(m) + \frac{2}{t^2}.\label{eq:app-ct_1}
    \end{align}
    Then $Y_t$ is a super-martingale with respect to the filtration $\mathcal{F}_{t}.$
\end{lemma}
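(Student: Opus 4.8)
The plan is to verify the defining drift condition of a super-martingale. Since $\{Y_t\}$ is adapted to $\{\mathcal{F}_t\}$ and $Y_t - Y_{t-1} = X_t = r_t^a I_{E_t} - c_t$, the property $\mathbb{E}[Y_t \mid \mathcal{F}_{t-1}] \le Y_{t-1}$ is equivalent to the one-step inequality $\mathbb{E}[X_t \mid \mathcal{F}_{t-1}] \le 0$, i.e. $\mathbb{E}[r_t^a I_{E_t} \mid \mathcal{F}_{t-1}] \le \mathbb{E}[c_t \mid \mathcal{F}_{t-1}]$, so this is the single quantity I would establish.

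First I would record which ingredients of $c_t$ are $\mathcal{F}_{t-1}$-measurable. The coefficient $b_t$, the residual $\mathcal{R}(m)$, and $1/t^2$ are all fixed given $\mathcal{F}_{t-1}$ together with the context revealed at the start of round $t$, as are the greedy first arm $\stt$ and the event $E_t$ itself, which depends only on quantities frozen before the Thompson randomization. The sole residual randomness enters through the draw of $\ndt$ inside $\|\Delta\rep_{t,\ndt,\stt}\|_{V_{t-1}^{-1}}$, so $\mathbb{E}[c_t \mid \mathcal{F}_{t-1}] = \frac{7b_t}{2p}\,\mathbb{E}[\|\Delta\rep_{t,\ndt,\stt}\|_{V_{t-1}^{-1}} \mid \mathcal{F}_{t-1}] + \frac{9}{2}\mathcal{R}(m) + \frac{2}{t^2} \ge 0$. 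On $E_t^c$ the left side of the drift inequality vanishes (as $I_{E_t}=0$) while the right side is nonnegative, so that case is trivial; it remains to treat the event $E_t$.

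On $E_t$ I would invoke \cref{lemma:d4-usym}, whose conclusion \cref{eq:ts-asym-1}, divided by $2$, gives $\mathbb{E}[r_t^a I_{E_t} \mid \mathcal{F}_{t-1}] \le \frac{1}{2}\bigl(4b_t/(p-1/t^2)+3b_t\bigr)\mathbb{E}[\|\Delta\rep_{t,\ndt,\stt}\|_{V_{t-1}^{-1}} \mid \mathcal{F}_{t-1}] + \frac{9}{2}\mathcal{R}(m) + \frac{2}{t^2}$. The additive residuals $\frac{9}{2}\mathcal{R}(m) + \frac{2}{t^2}$ coincide exactly with those in $\mathbb{E}[c_t \mid \mathcal{F}_{t-1}]$ -- precisely why they were built into $c_t$ -- so the drift inequality reduces to comparing the coefficients multiplying the nonnegative quantity $\mathbb{E}[\|\Delta\rep_{t,\ndt,\stt}\|_{V_{t-1}^{-1}} \mid \mathcal{F}_{t-1}]$, namely to showing $\frac{1}{2}\bigl(4b_t/(p-1/t^2)+3b_t\bigr) \le \frac{7b_t}{2p}$, equivalently $\frac{2}{p-1/t^2} + \frac{3}{2} \le \frac{7}{2p}$.

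The hard part is this scalar inequality, the obstacle being the $1/t^2$ perturbation of the optimism probability $p = \frac{1}{4e\sqrt{\pi}}$ that propagates from the bound $\mathbb{P}(\ndt \in [K]\setminus S_t \mid \mathcal{F}_{t-1}) \ge p - 1/t^2$. Clearing denominators, the inequality is equivalent to $p - 1/t^2 \ge \frac{4p}{7-3p}$; since $\frac{4p}{7-3p} < p$ whenever $p < 1$, there is strictly positive slack $p - \frac{4p}{7-3p}$, so the inequality holds for all $t \ge t_0$ with $t_0$ a constant depending only on $p$. For the finitely many early rounds $t < t_0$ (equivalently those with $p - 1/t^2 \le 0$, where \cref{lemma:d4-usym} does not apply), $r_t^a I_{E_t}$ is deterministically bounded by a constant, so one can start the martingale at $t_0$ or enlarge the constant defining $c_t$ on those rounds, neither of which affects the final $\bigol$ regret bound. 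Combining the two cases yields $\mathbb{E}[X_t \mid \mathcal{F}_{t-1}] \le 0$ for every $t$, which is exactly the super-martingale property.
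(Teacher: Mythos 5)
Your proof follows the same route as the paper's: the super-martingale property reduces to the one-step drift condition $\mathbb{E}[X_t\mid\mathcal{F}_{t-1}]\le 0$, which the paper obtains in a single line from \cref{eq:ts-asym-1}. You are in fact more careful than the paper itself: you correctly observe that the required coefficient comparison $\tfrac{2}{p-1/t^2}+\tfrac{3}{2}\le\tfrac{7}{2p}$ fails for the finitely many early rounds in which $1/t^2$ is comparable to $p=\tfrac{1}{4e\sqrt{\pi}}$ (roughly $t\le 6$), a point the paper's proof silently skips, and your proposed repair (starting the martingale at a constant $t_0$ or enlarging $c_t$ on those rounds) is the standard one and leaves the final regret bound unchanged.
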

\begin{proof} It is obvious that 
    \begin{align}
        \mathbb{E}[Y_t-Y_{t-1}|\mathcal{F}_{t-1}] = \mathbb{E}[X_t|\mathcal{F}_{t-1}] \leq 0, \label{eq:mar1} 
    \end{align} since $\mathbb{E}[r(t)|\mathcal{F}_{t-1}]\leq c_t$ by \cref{eq:ts-asym-1}. Due to \cref{eq:mar1}, $Y_t$ is a super-martingale under the filtration $\mathcal{F}_t$, thus this completes the proof.
\end{proof}
Finally, by applying \cref{lemma:ts-ci} with the help of \cref{lemma:sm1}, we complete the proofs of \cref{theorem:app-main_ts} and \cref{theorem:app-main_ts2} under the asymmetric arm selection strategy.

{\color{black}\textbf{TS-OSYM: Simultaneous arm selection.} Unlike the asymmetric case where $\stt$ is chosen greedily and only $\ndt$ is sampled, the optimistic symmetric strategy draws \emph{both} arms simultaneously from a joint Gaussian posterior. This introduces a key technical challenge: we must show that the pair $(\stt,\ndt)$ avoids the saturation set $S_t\subset [K]^2$ with positive probability, which requires bounding a product of Gaussian tail probabilities. The proof exploits the independence of the two Gaussian draws conditioned on $\mathcal{F}_{t-1}$.}

\subsection{Optimistic Symmetric Arm Selection Strategy}
Now, we consider the following strategy:
\begin{align}
    \stt, \ndt=\argmax_{i,j\in[K]}\left(\widetilde{\util}_{t,i,j}\right).\label{eq:pt_qt}
\end{align}
Here,
\begin{align}
    \widetilde{\util}_{t,i,j}&\sim \gauss\lt(\th_{t-1}^\intercal\rep_{t,i,j}, \cwct^2\|\Delta\rep_{t,i,j}\|^2_{V_{t-1}^{-1}}\rt),\no
\end{align} where $\rep_{t,i,j}=\rep(x_{t,i};\ww_{t-1})+\rep(x_{t,j};\ww_{t-1})$ as in \cref{eq:not}. Let us define 
    \begin{align}\widetilde{\util}_{t,i,j,1}&=\widetilde{\util}_{t,i,j} - \th_{t-1}^\intercal\rep(x_{t,j},\ww_{t-1}),\no\\
    \widetilde{\util}_{t,i,j,2}&=\widetilde{\util}_{t,i,j} - \th_{t-1}^\intercal\rep(x_{t,i},\ww_{t-1}).\no
\end{align}
We define the following sets:
\begin{align}
    S_t &= \lt\{(i,j): \Delta \util_{t,\opt,i}+\Delta \util_{t,\opt,j}> b_t \|\Delta\rep_{t,i,j}\|_{V_{t-1}^{-1}}+\mathcal{R}(m)\rt\} ,\label{eq:S_t_2}\\
    E_{t}&=\lt\{(k_1,k_2)\in[K]^2: |\Delta \util_{t,k_1,k_2}-\th_{t-1}^\intercal\Delta\rep_{t,k_1,k_2}|\leq \cwct\|\Delta\rep_{t,k_1,k_2}\|_{V_{t-1}^{-1}} + \mathcal{R}(m)\rt\},\label{eq:E_t_2}
    \\
    F_{t}&=\lt\{x: x\sim \gauss\lt(\th_{t-1}^\intercal\rep_{t,i,j}, \cwct^2\|\Delta\rep_{t,i,j}\|_{V_{t-1}^{-1}}\rt), \rt.\label{eq:F_t_2} \\ &\qquad\lt.\text{ such that } |x-\mathbb{E}[x]|\leq \cwct\sqrt{2\log(Kt^2)}\|\Delta\rep_{t,i,j}\|_{V_{t-1}^{-1}}) \text{ for all $i,j\in[K]$ }\rt\}, \no
\end{align} with $b_t=\cwct\sqrt{2\log(Kt^2)}$ and $\mathcal{R}(m)$ defined in \cref{eq:tsf}.

    Notice that in the saturation set $S_t$ in \cref{eq:S_t_2}, 
    $(\opt,\opt)\notin S_t$ and $(\arm,\arm)\in S_t$ such that  $\arm\neq \opt$ at round $t$ for sufficiently large $m$ since $\mathcal{R}(m)$ approaches to 0 as $m$ increases. We need to show that $E_t$ and $F_t$ in ~\cref{eq:E_t_2} and~\cref{eq:F_t_2} are good events. 
\begin{lemma} The following inequalities are satisfied:
    \begin{align}
    \mathbb{P}(E_{t})&\geq 1-\delta,\label{eq:ci2}\\
    \mathbb{P}(F_{t})&\geq 1-\frac{1}{t^2}.\label{eq:normal2}
\end{align}
\end{lemma}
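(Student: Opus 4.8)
The plan is to prove the two inequalities separately, since they concern two distinct and independent sources of randomness: the event $E_t$ depends only on the data and initialization (through the learned parameters $\th_{t-1}$), whereas $F_t$ depends only on the Gaussian sampling performed at round $t$ conditionally on $\mathcal{F}_{t-1}$. This separation mirrors the asymmetric case proved for \cref{eq:normal1}, and I would reuse the same two ingredients.

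For the first inequality, I would invoke the concentration bound already established in \cref{eq:tsf}. That bound guarantees, with probability at least $1-\delta$, that $|\Delta \util_{t,\arm,\arm'}-\th_{t-1}^\intercal\Delta\rep_{t,\arm,\arm'}| \leq (\mathcal{N}'_1+\mathcal{M}'_1)\|\Delta\rep_{t,\arm,\arm'}\|_{V_{t-1}^{-1}} + \mathcal{R}(m)$ for every pair $(\arm,\arm')$ simultaneously, because it descends from the single confidence-ellipsoid statement $\lt\|\th_{t-1}-\th_*+U_{t-1}^{-1}M_t\rt\|_{V_{t-1}} = \bigol(\sqrt{d}+\epsilon^{-1})$ of \cref{lem:app-concentration_inequality}, which controls all directions at once. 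Since $\cwct$ is set to $\cwct=\mathcal{N}'_1+\mathcal{M}'_1$ as in \cref{theorem:app-main_ucb}, the right-hand side becomes exactly $\cwct\|\Delta\rep_{t,k_1,k_2}\|_{V_{t-1}^{-1}}+\mathcal{R}(m)$, which is the defining inequality of $E_t$ in \cref{eq:E_t_2}. Hence $\mathbb{P}(E_t)\geq 1-\delta$ follows immediately, with no additional work beyond matching notation.

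For the second inequality, I would use a pure Gaussian tail argument, conditioning on $\mathcal{F}_{t-1}$ so that $\th_{t-1}$, $\ww_{t-1}$, and $V_{t-1}$ are fixed. For a single pair $(i,j)$ the sample $\widetilde{\util}_{t,i,j}$ is Gaussian with mean $\th_{t-1}^\intercal\rep_{t,i,j}$ and standard deviation $\cwct\|\Delta\rep_{t,i,j}\|_{V_{t-1}^{-1}}$, so the sub-Gaussian (Hoeffding) tail bound with sample size one gives
\[
\mathbb{P}\!\left(|\widetilde{\util}_{t,i,j}-\mathbb{E}[\widetilde{\util}_{t,i,j}]| > \cwct\sqrt{2\log(Kt^2)}\,\|\Delta\rep_{t,i,j}\|_{V_{t-1}^{-1}}\right) \leq \exp(-\log(Kt^2)) = \frac{1}{Kt^2}.
\]
I would then take a union bound over the index pairs $i,j\in[K]$, using that $\widetilde{\util}_{t,i,j}$ is symmetric in $(i,j)$ (both the mean $\rep_{t,i,j}$ and the width $\|\Delta\rep_{t,i,j}\|_{V_{t-1}^{-1}}$ are), so that the number of distinct sampled Gaussians is $O(K^2)$; combining this with the $\log(Kt^2)$ scaling of the confidence radius yields $\mathbb{P}(F_t^c)\leq \tfrac{1}{t^2}$, i.e.\ $\mathbb{P}(F_t)\geq 1-\tfrac{1}{t^2}$.

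The main obstacle, relative to the asymmetric case, is precisely this union bound: in \cref{eq:normal1} the union ranges over the $K$ single arms $(\arm,\stt)$, whereas here it must range over all pairs. Consequently one must check that the $\log(Kt^2)$ radius is large enough to absorb the extra index (treating $K$ as constant in $t$, or equivalently strengthening the log to $\log(K^2t^2)$ if the explicit $K$-dependence is to be retained) so that the per-round failure probability still decays as $1/t^2$ — which is all that the downstream analysis in \cref{lemma:ts-ci} requires, since $\sum_t 1/t^2$ converges. The $E_t$ half, by contrast, is essentially immediate once one notes that \cref{eq:tsf} already holds uniformly over directions and that the prescribed $\cwct$ matches $\mathcal{N}'_1+\mathcal{M}'_1$.
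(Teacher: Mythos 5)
Your proposal is correct and follows essentially the same route as the paper: $\mathbb{P}(E_t)\geq 1-\delta$ is obtained by citing the concentration bound in \cref{eq:tsf}, and $\mathbb{P}(F_t)\geq 1-1/t^2$ by a Gaussian (Hoeffding-type) tail bound with confidence radius $\cwct\sqrt{2\log(Kt^2)}\,\|\Delta\rep_{t,i,j}\|_{V_{t-1}^{-1}}$. Your explicit union bound over the $K^2$ pairs --- and your observation that the radius should strictly be $\sqrt{2\log(K^2t^2)}$ to obtain $1/t^2$ rather than $K/t^2$ --- is more careful than the paper's one-line argument, though the downstream analysis is unaffected since only summability of the per-round failure probabilities is needed.
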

\begin{proof}
\cref{eq:ci2} is obtained by \cref{eq:tsf}. In addition,
    since
\begin{align}
    \mathbb{P}(\widehat{\mu}-\mu\geq \epsilon)\leq \frac{1}{2}\exp{\left(-\frac{n\epsilon^2}{2\sigma^2}\right)},\no
\end{align} by setting $\epsilon = \cwct\sqrt{2\log(Kt^2)}\|\Delta\rep_{t,\arm,\stt}\|_{V_{t-1}^{-1}},$ we can show that \cref{eq:normal2}.
\end{proof}

Then we prove the following lemma.

\begin{lemma}
    For any filtration $\mathcal{F}_{t-1},$ under the condition $F_t$, there exists a strictly positive $p$ such that
    \begin{align}
        \mathbb{P}((\stt,\ndt)\in[K]^2\setminus S_t |\mathcal{F}_{t-1})\geq p-1/t^2.\no
    \end{align}
\end{lemma}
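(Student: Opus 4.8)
The plan is to mirror the asymmetric argument of \cref{sec:ts-usym}: I want to exhibit an unsaturated pair whose Thompson sample is, with constant probability, larger than the samples of every saturated pair, so that the joint $\argmax$ in \eqref{eq:pt_qt} cannot land in $S_t$. First I would record that the optimal pair $(\opt,\opt)$ always lies in $[K]^2\setminus S_t$ (as noted after \eqref{eq:E_t_2}, since $\Delta\util_{t,\opt,\opt}=0$ while $\mathcal{R}(m)>0$), so the complement is nonempty. I then reduce the claim to
\[
\mathbb{P}\big((\stt,\ndt)\in[K]^2\setminus S_t\mid\mathcal{F}_{t-1}\big)\ \ge\ \mathbb{P}\big(\widetilde\util_{t,\opt,\overline{k}_t}>\widetilde\util_{t,i,j}\ \,\forall (i,j)\in S_t\mid\mathcal{F}_{t-1}\big),
\]
for an anchor pair $(\opt,\overline{k}_t)\notin S_t$ chosen measurably with respect to $\mathcal{F}_{t-1}$ (and with $x_{t,\overline{k}_t}\neq x_{t,\opt}$, so that $\|\Delta\rep_{t,\opt,\overline{k}_t}\|_{V_{t-1}^{-1}}>0$ and anti-concentration is available).

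For the upper bound on the saturated samples I would invoke $F_t$ from \eqref{eq:F_t_2}, which controls every Gaussian deviation by $b_t\|\Delta\rep_{t,i,j}\|_{V_{t-1}^{-1}}$, giving $\widetilde\util_{t,i,j}\le \th_{t-1}^\intercal\rep_{t,i,j}+b_t\|\Delta\rep_{t,i,j}\|_{V_{t-1}^{-1}}$ for all $(i,j)$; combining this with the definition of $S_t$ in \eqref{eq:S_t_2} and the concentration bound \eqref{eq:tsf} behind $E_t$ in \eqref{eq:E_t_2} should pin these samples below a single threshold expressed through the optimal arm. For the lower bound on the anchor sample I would reuse the Gaussian anti-concentration estimate of \cref{lemma:normal1}, which forces a sample to exceed its mean by one standard deviation with probability at least $1/(4e\sqrt{\pi})$. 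A union bound discarding $F_t^c$, whose probability is at most $1/t^2$ by \eqref{eq:normal2}, then yields exactly the stated form $p-1/t^2$ with $p=1/(4e\sqrt{\pi})$.

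The hard part is that the symmetric rule has no fixed reference arm, so the clean telescoping $\Delta\util_{t,\arm,\stt}+\Delta\util_{t,\opt,\arm}=\Delta\util_{t,\opt,\stt}$ used in \cref{sec:ts-usym} is unavailable, and the saturation threshold is written in the within-pair width $\|\Delta\rep_{t,i,j}\|_{V_{t-1}^{-1}}$ whereas $E_t$ naturally produces the to-optimal widths $\|\Delta\rep_{t,\opt,i}\|_{V_{t-1}^{-1}}$, which do not recombine in the required direction by the triangle inequality. I plan to resolve this by anchoring on a pair that shares an arm with the competitor: comparing $\widetilde\util_{t,\opt,j}$ against $\widetilde\util_{t,i,j}$ collapses the difference of mean-sums into the \emph{single} difference $\th_{t-1}^\intercal\Delta\rep_{t,\opt,i}$ that $E_t$ controls, and by carrying the two coordinates separately through the pseudo-samples $\widetilde\util_{t,i,j,1},\widetilde\util_{t,i,j,2}$ defined above \eqref{eq:S_t_2}, using the OSYM selection inequality \eqref{eq:ucb-arm2} to trade the to-optimal widths back for the within-pair width $\|\Delta\rep_{t,\stt,\ndt}\|_{V_{t-1}^{-1}}$. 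This deterministic bookkeeping, rather than the probabilistic step, is where the genuine difficulty lies; once the saturated samples are bounded below a common optimal-arm-based threshold, the anti-concentration step closes the argument exactly as in the asymmetric case.
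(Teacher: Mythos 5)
Your high-level skeleton (deterministic control of the saturated samples under $F_t$ and $E_t$, Gaussian anti-concentration for an optimal-anchored sample, union bound discarding $F_t^c$ at cost $1/t^2$) matches the paper's, but the specific reduction you propose---a \emph{single} anchor pair $(\opt,\overline{k}_t)$ whose sample must beat every saturated pair's sample---does not close, and the paper's proof is structured precisely to avoid this. Carrying out your deterministic step: for $(i,j)\in S_t$, the events $F_t$ and $E_t$ together with the saturation condition in \cref{eq:S_t_2} give
$\widetilde{\util}_{t,i,j}\le \th_{t-1}^\intercal\rep_{t,i,j}+b_t\lt\|\Delta\rep_{t,i,j}\rt\|_{V_{t-1}^{-1}}< 2\th_{t-1}^\intercal\rep(x_{t,\opt};\ww_{t-1})+\cwct\lt(\lt\|\Delta\rep_{t,\opt,i}\rt\|_{V_{t-1}^{-1}}+\lt\|\Delta\rep_{t,\opt,j}\rt\|_{V_{t-1}^{-1}}\rt)+\mathcal{R}(\wid)$,
so the pair-dependent to-optimal widths survive on the right-hand side. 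A fixed anchor's one-standard-deviation surplus $\cwct\|\Delta\rep_{t,\opt,\overline{k}_t}\|_{V_{t-1}^{-1}}$ bears no relation to $\|\Delta\rep_{t,\opt,i}\|_{V_{t-1}^{-1}}+\|\Delta\rep_{t,\opt,j}\|_{V_{t-1}^{-1}}$ as $(i,j)$ ranges over $S_t$, so the anchor cannot be shown to dominate all saturated samples with constant probability. Your proposed repair via \cref{eq:ucb-arm2} does not apply: that inequality is a consequence of the argmax property of the \emph{UCB} index at the \emph{selected} pair; it is not available for arbitrary saturated pairs, nor under the TS sampling rule, and invoking it for $(\stt,\ndt)$ here would be circular since the distribution of the selected pair is exactly what you are trying to control.

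The paper resolves this by comparing, for each saturated $(i,j)$, the sum $\widetilde{\util}_{t,\opt,i}+\widetilde{\util}_{t,\opt,j}$ against $2\widetilde{\util}_{t,i,j}$. The two comparison samples have standard deviations exactly $\cwct\|\Delta\rep_{t,\opt,i}\|_{V_{t-1}^{-1}}$ and $\cwct\|\Delta\rep_{t,\opt,j}\|_{V_{t-1}^{-1}}$, so the anti-concentration event in which each exceeds its mean by one standard deviation supplies precisely the surplus needed to cancel the surviving widths, uniformly over $S_t$; and if the sum weakly beats $2\widetilde{\util}_{t,i,j}$ for every saturated $(i,j)$, the joint argmax in \cref{eq:pt_qt} cannot be saturated. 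This is also why the paper's constant is $p=\lt(\frac{1}{4e\sqrt{\pi}}\rt)^2$ (two Gaussian coordinates must simultaneously exceed their means) rather than your $\frac{1}{4e\sqrt{\pi}}$---immaterial for the lemma as stated, but symptomatic of the structural difference. You correctly identified the absence of a telescoping identity as the crux; the missing idea is that the anchor must be chosen \emph{per saturated pair}, sharing both of its arms, so that the obstructing widths reappear as the variances of the very Gaussians being anti-concentrated.
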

\begin{proof}
Due to the definition of $\widetilde{\util}_{i,\stt,\ndt}$, the following is satisfied:
\begin{align}
    \mathbb{P}((\stt,\ndt)\in \left([K]\setminus S_t\right)^2|\mathcal{F}_{t-1})\geq \mathbb{P}\left(\widetilde{\util}_{t,\opt,i}+\widetilde{\util}_{t,\opt,j}\geq 2\widetilde{\util}_{t,i,j}, \forall (i,j)\in S_t|\mathcal{F}_{t-1}\right).\label{eq:pp2}
\end{align}
We can rewrite~\cref{eq:pp2} as
\begin{align}
&\mathbb{P}\left(\widetilde{\util}_{t,\opt,i}+\widetilde{\util}_{t,\opt,j}\geq 2\widetilde{\util}_{t,i,j}, \forall (i,j)\in S_t|\mathcal{F}_{t-1}\right)\no\\
&=
\mathbb{P}\left(\widetilde{\util}_{t,\opt,i}+\widetilde{\util}_{t,\opt,j}-2\th_{t-1}^\intercal\Delta\rep_{t,i,j}\geq 2\widetilde{\util}_{t,i,j}-2\th_{t-1}^\intercal\Delta\rep_{t,i,j}, \forall (i,j)\in S_t|\mathcal{F}_{t-1}\right).  \no  
\end{align} and
\begin{align}
    \widetilde{\util}_{t,i,j}-\th_{t-1}^\intercal\rep_{t,i,j}&\leq  b_t\|\Delta\rep_{t,i,j}\|_{V_{t-1}^{-1}}\leq \frac{\Delta \util_{t,\opt,i}+\Delta \util_{t,\opt,j}}{2} \text{ by~\cref{eq:F_t_2} and~\cref{eq:S_t_2}},\no\\
    \widetilde{\util}_{t,\opt,i}+\widetilde{\util}_{t,\opt,j}-2\th_{t-1}^\intercal\rep_{t,i,j} &\sim \gauss(\th_{t-1}^\intercal\left(\Delta\rep_{t,\opt,i}+\Delta\rep_{t,\opt,j}\right),\|\Delta\rep_{t,\opt,i}\|^2_{V_{t-1}^{-1}} +\|\Delta\rep_{t,\opt,j}\|^2_{V_{t-1}^{-1}}), \no
\end{align}  under $F_{t}$. Thus, in a similar manner to \cref{lemma:normal1}, we can show that there is a strictly positive probability $p>0$, e.g. $\lt(\frac{1}{4e\sqrt{\pi}}\rt)^2$,  such that 
\begin{align}
    &\mathbb{P}\left(\widetilde{\util}_{t,\opt,i}+\widetilde{\util}_{t,\opt,j}\geq 2\widetilde{\util}_{t,i,j}, \forall (i,j)\in S_t|\mathcal{F}_{t-1}\right)\no\\&\geq
    \mathbb{P}^2\left(x\geq \mu:x\sim \gauss(\mu, \cwct^2(\|\Delta\rep_{t,\opt,i}\|_{V_{t-1}^{-1}}^2+\|\Delta\rep_{t,\opt,j}\|_{V_{t-1}^{-1}}^2))\right)\geq p >0,\no
\end{align} where $
\mu:=\th_{t-1}^\intercal(\Delta\rep_{t,\opt,i}+\Delta\rep_{t,\opt,j}).
$
As a result,
\begin{align}
    \mathbb{P}((\stt,\ndt)\in [K]\times[K]\setminus S_t|\mathcal{F}_{t-1})&\geq  \mathbb{P}\left(\widetilde{\util}_{t,\opt,i}+\widetilde{\util}_{t,\opt,j}\geq 2\widetilde{\util}_{t,i,j}, \forall (i,j)\in S_t|\mathcal{F}_{t-1}\right) \no\\\s - \mathbb{P}(\overline{F_{t}}|\mathcal{F}_{t-1})\no\\&\geq p-\frac{1}{t^2}.\no
\end{align} This completes the proof.
\end{proof}
\begin{lemma}
  For any filtration $\mathcal{F}_{t-1}$,
  \begin{align}
      \mathbb{E}[2r_t | \mathcal{F}_{t-1}]=b_t(6p+3)\mathbb{E}\lt[\|\Delta\rep_{\stt,\ndt}\|_{V_{t-1}^{-1}}\mid \mathcal{F}_{t-1}\rt] + 4\mathcal{R}(m)+\frac{4}{t^2}.\no
  \end{align} 
\end{lemma}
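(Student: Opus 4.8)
The plan is to mirror the asymmetric analysis of \cref{lemma:d4-usym}, but adapted to the fact that \emph{both} arms are now drawn stochastically. Throughout I would condition on $\mathcal{F}_{t-1}$ and on the good event $E_t$ of \cref{eq:E_t_2}, and freely use the three facts already established: $E_t$ holds with probability at least $1-\delta$, the Gaussian event $F_t$ of \cref{eq:F_t_2} holds with probability at least $1-1/t^2$, and the played pair is unsaturated with probability at least $p-1/t^2$, i.e. $\mathbb{P}((\stt,\ndt)\notin S_t\mid\mathcal{F}_{t-1})\geq p-1/t^2$. First I would introduce the reference pair $(\overline{i},\overline{j})=\argmin_{(i,j)\notin S_t}\|\Delta\rep_{t,i,j}\|_{V_{t-1}^{-1}}$, which is $\mathcal{F}_{t-1}$-measurable, and write $2r_t^a=\Delta\util_{t,\opt,\stt}+\Delta\util_{t,\opt,\ndt}$, splitting it through the reference pair as $2r_t^a=(\Delta\util_{t,\opt,\overline{i}}+\Delta\util_{t,\opt,\overline{j}})+(\Delta\util_{t,\overline{i},\stt}+\Delta\util_{t,\overline{j},\ndt})$.

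The first group is the regret of the reference pair and is bounded by $b_t\|\Delta\rep_{t,\overline{i},\overline{j}}\|_{V_{t-1}^{-1}}+\mathcal{R}(m)$ directly from the definition of $S_t$ in \cref{eq:S_t_2}. For the second group, which equals $U(\overline{i},\overline{j})-U(\stt,\ndt)$ with $U(i,j):=\util(x_{t,i})+\util(x_{t,j})$, I would apply $E_t$ to each arm-wise difference and then exploit the algebraic identity $\th_{t-1}^\intercal(\Delta\rep_{t,\overline{i},\stt}+\Delta\rep_{t,\overline{j},\ndt})=\th_{t-1}^\intercal\rep_{t,\overline{i},\overline{j}}-\th_{t-1}^\intercal\rep_{t,\stt,\ndt}$, i.e. the sum of the cross-differences collapses to the difference of the two \emph{pair-means} used by the sampler. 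Invoking $F_t$ to replace each pair-mean by its sampled value up to $b_t\|\cdot\|_{V_{t-1}^{-1}}$, together with the selection rule $\widetilde{\util}_{t,\overline{i},\overline{j}}\leq\widetilde{\util}_{t,\stt,\ndt}$, bounds this pair-mean gap by $b_t\big(\|\Delta\rep_{t,\overline{i},\overline{j}}\|_{V_{t-1}^{-1}}+\|\Delta\rep_{t,\stt,\ndt}\|_{V_{t-1}^{-1}}\big)$.

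The hard part will be the residual \emph{cross-pair} confidence widths $\|\Delta\rep_{t,\overline{i},\stt}\|_{V_{t-1}^{-1}}$ and $\|\Delta\rep_{t,\overline{j},\ndt}\|_{V_{t-1}^{-1}}$ that $E_t$ produces. Unlike the asymmetric case, where every width is anchored to the fixed first arm $\stt$ and simply cancels, here there is no common anchor; moreover, the UCB-OSYM bound \cref{eq:sym1} contributes \emph{negative} terms $-\cwct\|\Delta\rep_{t,\opt,\stt}\|_{V_{t-1}^{-1}}-\cwct\|\Delta\rep_{t,\opt,\ndt}\|_{V_{t-1}^{-1}}$ that cancel the positive widths from the estimation error, whereas Thompson sampling offers no such explicit bonus, so no cancellation occurs. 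The one mechanism that does survive is the saturation inequality itself: on the unsaturated event it gives $2r_t^a\leq b_t\|\Delta\rep_{t,\stt,\ndt}\|_{V_{t-1}^{-1}}+\mathcal{R}(m)$ outright. I would therefore control the cross widths by the triangle inequality in the $\|\cdot\|_{V_{t-1}^{-1}}$ norm, routing them through the within-pair widths $\|\Delta\rep_{t,\stt,\ndt}\|_{V_{t-1}^{-1}}$ and $\|\Delta\rep_{t,\overline{i},\overline{j}}\|_{V_{t-1}^{-1}}$, and verifying—this is the delicate point flagged in \cref{sec:ra}—that no uncontrolled optimal-relative width leaks through; if it resists a purely deterministic collapse, the fallback is to treat the aggregated cross-difference process by the Bernstein/Azuma machinery of \cref{lemma:berstein}, exactly as $\mathcal{H}_t$ was handled earlier.

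Finally I would take the conditional expectation and convert the reference-pair width via the anti-concentration bound $\mathbb{E}[\|\Delta\rep_{t,\stt,\ndt}\|_{V_{t-1}^{-1}}\mid\mathcal{F}_{t-1}]\geq(p-1/t^2)\|\Delta\rep_{t,\overline{i},\overline{j}}\|_{V_{t-1}^{-1}}$, which holds because on the unsaturated event (probability $\geq p-1/t^2$) the played pair has width at least that of the minimal-width unsaturated pair; note this conversion remains valid even if $\|\Delta\rep_{t,\overline{i},\overline{j}}\|_{V_{t-1}^{-1}}$ degenerates to zero, in which case those terms simply vanish. Collecting the resulting multiples of $b_t\,\mathbb{E}[\|\Delta\rep_{t,\stt,\ndt}\|_{V_{t-1}^{-1}}\mid\mathcal{F}_{t-1}]$ yields the stated coefficient, while $4\mathcal{R}(m)$ gathers the representation-error contributions and $4/t^2$ accounts for $F_t^{c}$, on which the regret is bounded by a constant. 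The inequality then has precisely the form required by \cref{lemma:ts-ci}: setting $c_t$ equal to the right-hand side and $X_t=r_t^aI_{\{E_t\}}-c_t$ makes $Y_t=\sum_{i\leq t}X_i$ a super-martingale as in \cref{lemma:sm1}, so \cref{lemma:ts-ci} delivers the TS-OSYM regret bounds.
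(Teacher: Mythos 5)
Your proposal follows essentially the same route as the paper's proof: you introduce an unsaturated reference pair, split $2r_t^a$ through it exactly as the paper does, bound the reference-pair regret via the definition of $S_t$, handle the remaining cross-differences with $E_t$, $F_t$, and the selection rule $\widetilde{\util}_{t,\overline{i},\overline{j}}\leq\widetilde{\util}_{t,\stt,\ndt}$, route the residual cross-pair widths through the triangle inequality in $\|\cdot\|_{V_{t-1}^{-1}}$, and finish with the same anti-concentration conversion $\mathbb{E}\bigl[\|\Delta\rep_{t,\stt,\ndt}\|_{V_{t-1}^{-1}}\mid\mathcal{F}_{t-1}\bigr]\geq(p-1/t^2)\cdot(\text{reference width})$ before collecting the $\mathcal{R}(m)$ and $1/t^2$ terms. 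The only differences are cosmetic --- your reference pair is the minimal-width unsaturated pair rather than the paper's (somewhat garbled) argmax construction, and the Bernstein fallback you flag is not needed since the deterministic triangle-inequality collapse is exactly what the paper uses.
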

\begin{proof}
    
    Notice that
    \begin{align}
        2r_t = 2h_{t,\opt}-h_{t,\stt}-h_{t,\ndt}. \label{eq:sreg}
    \end{align} 
    Let $(\overline{\arm}_1,\overline{\arm}_2)\in [K]^2\setminus (S_t)$ such that 
\begin{align}
k_1 = \argmax_{\arm\in[K]}\|\Delta\rep_{t,\arm,\stt}\|_{V_{t-1}^{-1}}&\leq \|\Delta\rep_{t,\stt,\ndt}\|_{V_{t-1}^{-1}},\no\\
k_2 = \argmax_{\arm\in[K]}\|\Delta\rep_{t,\arm,\ndt}\|_{V_{t-1}^{-1}}&\leq \|\Delta\rep_{t,\stt,\ndt}\|_{V_{t-1}^{-1}}.\no
\end{align}
Thus,
\begin{align}
    &\mathbb{E}[\|\Delta\rep_{t,\stt,\ndt}\|_{V_{t-1}^{-1}}|\mathcal{F}_{t-1}]\nonumber\\&\geq \mathbb{E}\left[\frac{\|\Delta\rep_{\stt,\ndt}\|_{V_{t-1}^{-1}}+\|\Delta\rep_{\stt,\ndt}\|_{V_{t-1}^{-1}}}{2}|\mathcal{F}_{t-1},(\stt,\ndt
    )\in[K]\setminus S_t\right]\no\\\s\times\mathbb{P}\left[(\stt,\ndt
    )\in[K]\setminus S_t|\mathcal{F}_{t-1}\right]\nonumber\\
    &\geq (\|\Delta\rep_{t,\overline{\arm}_1,\stt}\|_{V_{t-1}^{-1}}+\|\Delta\rep_{t,\overline{\arm}_2,\ndt}\|_{V_{t-1}^{-1}})\left(p-\frac{1}{t^2}\right)\label{eq:prob2}
\end{align}

Note that a tuple $(k_1,k_2)$ exists in $[K]^2\setminus S_t$ since $(\opt,\opt)\in [K]^2\setminus S_t\neq \emptyset.$
    We can decompose \cref{eq:sreg} as
    \begin{align}
        2r_t &= h_{t,\opt}-h_{t,\overline{\arm}_2}+h_{t,\overline{\arm}_2}-h_{t,\stt}\no\\
        \s+h_{t,\opt}-h_{t,\overline{\arm}_1}+h_{t,\overline{\arm}_1}-h_{t,\ndt}.\no
    \end{align}
    The fact $(\overline{\arm}_1,\overline{\arm}_2)\in [K]^2\setminus S_t$ results in
    \begin{align}
        |h_{t,\opt}-h_{t,\overline{\arm}_1}+h_{t,\opt}-h_{t,\overline{\arm}_2}| \leq 2b_t\|\Delta\rep_{\overline{\arm}_1,\overline{\arm}_2}\|_{V_{t-1}^{-1}} + 2\mathcal{R}(m).\no
    \end{align}
    Furthermore,
    \begin{align}
        &\Delta \util_{t,\overline{\arm}_{1},\stt} + \Delta \util_{t,\overline{\arm}_{2},\ndt} \no\\&\leq \th_{t-1}^\intercal\Delta\rep_{t,\overline{\arm}_1,\stt}+\th_{t-1}^\intercal\Delta\rep_{t,\overline{\arm}_2,\ndt}+2b_t \|\Delta\rep_{\overline{\arm}_1,\stt}\|_{V_{t-1}^{-1}}+\|\Delta\rep_{\overline{\arm}_2,\ndt}\|_{V_{t-1}^{-1}}\no\\
        &\leq \widetilde{\util}_{t,\overline{\arm}_1, \overline{\arm}_2} - \widetilde{\util}_{t,\stt,\ndt} \no\\\s + 2b_t\left(\|\Delta\rep_{\overline{\arm}_1,\stt}\|_{V_{t-1}^{-1}}+\|\Delta\rep_{\overline{\arm}_2,\ndt}\|_{V_{t-1}^{-1}}+
        \|\Delta\rep_{\stt,\ndt}\|_{V_{t-1}^{-1}}+\|\Delta\rep_{\overline{\arm}_1,\overline{\arm}_2}\|_{V_{t-1}^{-1}}\right) + 2\mathcal{R}(m).\no
    \end{align} Then we can derive the following by the triangular inequality:
    $$
    \|\Delta\rep_{\overline{\arm}_1,\overline{\arm}_2}\|_{V_{t-1}^{-1}}\leq \|\Delta\rep_{\overline{\arm}_1,\stt}\|_{V_{t-1}^{-1}} + \|\Delta\rep_{\stt,\ndt}\|_{V_{t-1}^{-1}}+\|\Delta\rep_{\ndt,\overline{\arm}_2}\|_{V_{t-1}^{-1}}.
    $$
    By the definition of $\stt, \ndt$, 
    \begin{align}
        \widetilde{\util}_{t,\overline{\arm}_1, \overline{\arm}_2} - \widetilde{\util}_{t,\stt,\ndt} < 0.\no
    \end{align} 
    Thus, we can obtain
    \begin{align}
        2r_t \leq  b_t\left(6\left(\|\Delta\rep_{\overline{\arm}_1,\stt}\|_{V_{t-1}^{-1}}+\|\Delta\rep_{\overline{\arm}_2,\ndt}\|_{V_{t-1}^{-1}}\right)+3\|\Delta\rep_{\stt,\ndt}\|_{V_{t-1}^{-1}}\right)+4\mathcal{R}(m).\no
    \end{align}
    Using \cref{eq:prob2}, we can conclude that under \cref{assumption:utilbound}, 
    \begin{align}
        2\mathbb{E}[r_t|\mathcal{F}_{t-1}]&\leq b_t \left(6\left(p-\frac{1}{t^2}\right)^{-1}+3\right)\mathbb{E}\lt[\|\Delta\rep_{\stt,\ndt}\|_{V_{t-1}^{-1}}\mid \mathcal{F}_{t-1}\rt] + 4\mathcal{R}(m)+\frac{4}{t^4}\label{eq:app-final-2}\\
        &\leq b_t(6p+3)\mathbb{E}\lt[\|\Delta\rep_{\stt,\ndt}\|_{V_{t-1}^{-1}}\mid \mathcal{F}_{t-1}\rt] + 4\mathcal{R}(m)+\frac{4}{t^2}.\no
    \end{align}
    This completes the proof.
\end{proof}

After the above, as the same manner to the case of \cref{lemma:sm1} using \cref{eq:app-final-2}, we can obtain the following lemma.

\begin{lemma}\label{lemma:app-ts-final2}
    Let $Y_t=\sum_{i=1}^{t}X_i$ where $X_t=r_tI_{E}-c_t$ with
$$
c_t = \frac{b_t(6p+3)}{2}\|\Delta\rep_{\stt,\ndt}\|_{V_{t-1}^{-1}}+ 2\mathcal{R}(m)+\frac{2}{t^2}
$$
    Then $(Y_t)$ is a super-martingale with respect to the filtration $\mathcal{F}_{t}.$
\end{lemma}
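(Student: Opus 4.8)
The plan is to follow the template of \cref{lemma:sm1}: since $Y_t$ is by construction adapted to $\{\mathcal{F}_t\}$, it suffices to verify the super-martingale inequality $\mathbb{E}[Y_t - Y_{t-1}\mid\mathcal{F}_{t-1}] = \mathbb{E}[X_t\mid\mathcal{F}_{t-1}] \leq 0$ for every $t$. Writing $X_t = r_t I_{E_t} - c_t$, I would split this as $\mathbb{E}[r_t I_{E_t}\mid\mathcal{F}_{t-1}] - \mathbb{E}[c_t\mid\mathcal{F}_{t-1}]$ and bound the two pieces against the conditional regret estimate already established in \cref{eq:app-final-2}.

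The key step is to match $\mathbb{E}[c_t\mid\mathcal{F}_{t-1}]$ with the right-hand side of \cref{eq:app-final-2}. Here I would stress the measurability subtlety that distinguishes this from a purely deterministic substitution: the pair $(\stt,\ndt)$ is drawn at round $t$ by Thompson sampling, so $c_t$ — through the factor $\|\Delta\rep_{\stt,\ndt}\|_{V_{t-1}^{-1}}$ — is $\mathcal{F}_t$-measurable but \emph{not} $\mathcal{F}_{t-1}$-measurable, whereas $b_t$, $p$, $\mathcal{R}(m)$, and $t$ are all $\mathcal{F}_{t-1}$-measurable (or deterministic). Taking the conditional expectation therefore yields $\mathbb{E}[c_t\mid\mathcal{F}_{t-1}] = \tfrac{b_t(6p+3)}{2}\,\mathbb{E}[\|\Delta\rep_{\stt,\ndt}\|_{V_{t-1}^{-1}}\mid\mathcal{F}_{t-1}] + 2\mathcal{R}(m) + \tfrac{2}{t^2}$, so that $2\,\mathbb{E}[c_t\mid\mathcal{F}_{t-1}]$ reproduces exactly the upper bound on the right-hand side of \cref{eq:app-final-2}.

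It then remains to combine these facts. Dividing \cref{eq:app-final-2} by $2$ gives $\mathbb{E}[r_t\mid\mathcal{F}_{t-1}] \leq \mathbb{E}[c_t\mid\mathcal{F}_{t-1}]$, and since the average regret satisfies $r_t \geq 0$ we have $r_t I_{E_t} \leq r_t$ pointwise, whence $\mathbb{E}[r_t I_{E_t}\mid\mathcal{F}_{t-1}] \leq \mathbb{E}[r_t\mid\mathcal{F}_{t-1}] \leq \mathbb{E}[c_t\mid\mathcal{F}_{t-1}]$. Substituting back yields $\mathbb{E}[X_t\mid\mathcal{F}_{t-1}] \leq 0$, which is precisely the super-martingale property, exactly as in the asymmetric case of \cref{lemma:sm1}.

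I expect the only genuine obstacle to be bookkeeping rather than anything deep: one must ensure that the regret bound in \cref{eq:app-final-2} was indeed derived on the good event $E_t$ so that the indicator $I_{E_t}$ in $X_t$ is consistent with it, and that the conditional-expectation form of the norm appearing in \cref{eq:app-final-2} is correctly aligned with the realized norm inside $c_t$ via the measurability observation above. Once these good-event and measurability matchings are made explicit, the super-martingale claim follows immediately.
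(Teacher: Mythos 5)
Your overall route is the same as the paper's: the paper proves this lemma exactly as it proves \cref{lemma:sm1}, by checking $\mathbb{E}[X_t\mid\mathcal{F}_{t-1}]\le 0$ directly from the conditional regret bound \cref{eq:app-final-2}, and your factor-of-two matching between $c_t$ and the right-hand side of \cref{eq:app-final-2} is correct. Your measurability remark (that $c_t$ is $\mathcal{F}_t$- but not $\mathcal{F}_{t-1}$-measurable through $\|\Delta\rep_{t,\stt,\ndt}\|_{V_{t-1}^{-1}}$, so the comparison must be made with $\mathbb{E}[c_t\mid\mathcal{F}_{t-1}]$) is a genuine point that the paper's one-line proof of \cref{lemma:sm1} glosses over.

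There is, however, one link in your displayed chain that is backwards. You write $\mathbb{E}[r_t I_{E_t}\mid\mathcal{F}_{t-1}]\le\mathbb{E}[r_t\mid\mathcal{F}_{t-1}]\le\mathbb{E}[c_t\mid\mathcal{F}_{t-1}]$, i.e.\ you discard the indicator and then invoke \cref{eq:app-final-2} unconditionally. But \cref{eq:app-final-2} is established only on the good events $E_t\cap F_t$ (its derivation repeatedly uses the defining inequalities of $E_t$ and $F_t$ in \cref{eq:E_t_2} and \cref{eq:F_t_2}); an unconditional bound of that form is not available, and the indicator $I_{E_t}$ is placed in $X_t$ precisely so that it is never needed. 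The correct bookkeeping is the opposite of dropping the indicator: since $E_t$ is determined by $(\th_{t-1},\ww_{t-1},V_{t-1})$ and the round-$t$ contexts but not by the round-$t$ sampling, it is $\mathcal{F}_{t-1}$-measurable, so $\mathbb{E}[r_t^a I_{E_t}\mid\mathcal{F}_{t-1}]=I_{E_t}\,\mathbb{E}[r_t^a\mid\mathcal{F}_{t-1}]$; on $E_t$ you then apply \cref{eq:app-final-2} (whose residual dependence on $F_t^c$ is what produces the $2/t^2$ term inside $c_t$), while on $E_t^c$ the term vanishes and $\mathbb{E}[c_t\mid\mathcal{F}_{t-1}]\ge 0$ gives the inequality for free. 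You flag this issue yourself as a bookkeeping item at the end, but as written the main argument routes through the invalid unconditional inequality, so this step should be repaired rather than merely noted.
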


Combining~\cref{lemma:ts-ci} and~\cref{lemma:app-ts-final2}, we complete the proofs of \cref{theorem:app-main_ts} and \cref{theorem:app-main_ts2} under the optimistic symmetric arm selection strategy.

{\color{black}\textbf{TS-CSYM: Confidence set filtering + stochastic selection.} This strategy first constructs a confidence set $\mathcal{C}_t$ of ``plausibly optimal'' arms, then selects the pair $(\stt,\ndt)$ from $\mathcal{C}_t\times\mathcal{C}_t$ by sampling exploration bonuses from Gaussian distributions. The proof follows the same template as TS-OSYM but restricts attention to arms within $\mathcal{C}_t$, leveraging the property that the optimal arm $\opt\in\mathcal{C}_t$ with high probability (when $\cwct$ is chosen appropriately).}

\subsection{Candidate-based Symmetric Arm Selection Strategy}
Let us define
    \begin{align}
    \mathcal{C}_t = \left\{i:
    \th_{t-1}^\intercal\rep(x_{t,i};\ww_{t-1})- \th_{t-1}^\intercal\rep(x_{t,j};\ww_{t-1}) +\cwct\|
    \Delta\rep_{t,i,j}\|_{V_{t-1}^{-1}} \geq 0, \forall j\in [K]\right\}.\nonumber
\end{align} We choose $(\stt,\ndt)$ as follows:
\begin{align}
    (\stt, \ndt)&=\argmax_{(i,j)\in \mathcal{C}_t\times \mathcal{C}_t}\widehat{\sigma}_{i,j},\no\\
    \widehat{\sigma}_{i,j}&\sim\gauss\left(\|\Delta\rep_{t,i,j}\|^2_{V_{t-1}^{-1}}, \frac{\|\Delta\rep_{t,i,j}\|^4_{V_{t-1}^{-1}}}{4\log(Kt^2)}\right).\no
\end{align}

We define
\begin{align}
    \widetilde{\util}_{t,i,j} &= \th_{t-1}\rep(x_{t,i};\ww_{t-1})+\th_{t-1}\rep(x_{t,j};\ww_{t-1})+\cwct\widehat{\sigma}_{i,j},\no\\
    \widetilde{\util}_{t,i,j,1} &= \widetilde{\util}_{t,i,j} - \th_{t-1}\rep(x_{t,j};\ww_{t-1}),\no\\
    \widetilde{\util}_{t,i,j,2} &= \widetilde{\util}_{t,i,j} - \th_{t-1}\rep(x_{t,i};\ww_{t-1}),\no
\end{align}
and define
\begin{align}
    S_t &= \{i\in \mathcal{C}_t: \Delta \util_{t,\opt,i}> b_t \|\Delta\rep_{t,\opt,i}\|_{V_{t-1}^{-1}}+\mathcal{R}(m)\} ,\no\\
    E_{t}&=\{(k_1,k_2)\in[K]^2: |\Delta \util_{t,k_1,k_2}-\th_{t-1}^\intercal\Delta\rep_{t,k_1,k_2}|\leq \cwct\|\Delta\rep_{t,k_1,k_2}\|_{V_{t-1}^{-1}} + \mathcal{R}(m)\},\no
    \\
    F_{t}&=\{x: x\sim \gauss(\mu, a^2\|\Delta\rep_{i,j}\|_{V_{t-1}^{-1}}), i,j\in[K]:|x-\mathbb{E}[x]|\leq \cwct\sqrt{2\log(Kt^2)}\|\Delta\rep_{t,i,j}\|_{V_{t-1}^{-1}})\}, \no
\end{align} 
where 
$b_t=\cwct(1+\sqrt{2\log(Kt^2)})$. 
Applying the Hoeffding inequality~\citep{lattimore2020bandit}, in $F_{t}$, we can show that the following holds:
\begin{align}
    \frac{1}{2}\|\Delta\rep_{t,i,j}\|_{V_{t-1}^{-1}} \leq \frac{3}{2}\|\Delta\rep_{t,\stt,\ndt}\|_{V_{t-1}^{-1}}\label{eq:cj}.
\end{align} 
We show that $E_t$ and $F_t$ are good events.
\begin{lemma} The following inequalities are satisfied:
    \begin{align}
    \mathbb{P}(E_{t})&\geq 1-\delta,\label{eq:ci3}\\
    \mathbb{P}(F_{t})&\geq 1-\frac{1}{t^2}.\label{eq:normal3}
\end{align}
\end{lemma}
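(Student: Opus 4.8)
The plan is to follow exactly the two-part template already used for the asymmetric (UCB-ASYM) and optimistic-symmetric (UCB-OSYM) cases, since the events $E_t$ and $F_t$ here play the identical roles. The statement splits into two independent claims: the confidence-interval event $\mathbb{P}(E_t)\geq 1-\delta$ in \cref{eq:ci3}, and the Gaussian anti-concentration event $\mathbb{P}(F_t)\geq 1-1/t^2$ in \cref{eq:normal3}. The two are proved by entirely different tools, so I would treat them separately.

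First I would dispatch \cref{eq:ci3}. The event $E_t=\{(k_1,k_2)\in[K]^2: |\Delta \util_{t,k_1,k_2}-\th_{t-1}^\intercal\Delta\rep_{t,k_1,k_2}|\leq \cwct\|\Delta\rep_{t,k_1,k_2}\|_{V_{t-1}^{-1}}+\mathcal{R}(m)\}$ is precisely the uniform confidence bound derived in \cref{eq:tsf}. That bound was established by combining the concentration inequality of \cref{lem:app-concentration_inequality} (via $\|\th_*-\th_{t-1}+U_{t-1}^{-1}M_t\|_{V_{t-1}}\leq \mathcal{N}'_1+\mathcal{M}'_1$), the estimate of $\|U_{t-1}^{-1}M_t\|_2$ from \cref{lemma:um}, and the $L_2$-bound on $\th_0$ from \cref{lemma:inittheta0}; all of these hold simultaneously with probability at least $1-\delta$. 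Setting the confidence-width coefficient $\cwct=\mathcal{N}'_1+\mathcal{M}'_1$ as in \cref{theorem:app-main_ucb} makes the right-hand side of \cref{eq:tsf} coincide with the defining inequality of $E_t$, so \cref{eq:ci3} follows immediately. The quantifier is over all $(k_1,k_2)\in[K]^2$, but since \cref{eq:tsf} is uniform over arm pairs, no extra union bound is needed.

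Next I would handle \cref{eq:normal3}. The event $F_t$ asks that each sampled value $x$ drawn from the relevant Gaussian stays within $\cwct\sqrt{2\log(Kt^2)}\|\Delta\rep_{t,i,j}\|_{V_{t-1}^{-1}}$ of its mean. For a single Gaussian variable, the standard one-sided tail estimate $\mathbb{P}(x-\mathbb{E}[x]\geq \varepsilon)\leq \tfrac12\exp(-\varepsilon^2/(2\sigma^2))$ applies, exactly as invoked in the OSYM proof. Taking $\varepsilon=\cwct\sqrt{2\log(Kt^2)}\|\Delta\rep_{t,i,j}\|_{V_{t-1}^{-1}}$ against the sampling standard deviation $\sigma=\cwct\|\Delta\rep_{t,i,j}\|_{V_{t-1}^{-1}}$ gives a per-pair, two-sided failure probability of at most $\exp(-\log(Kt^2))=1/(Kt^2)$. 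A union bound over the (at most $K$) relevant candidate comparisons then yields total failure probability at most $1/t^2$, establishing \cref{eq:normal3}.

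The only genuinely delicate point — and what I expect to be the main obstacle — is reconciling the variance of the auxiliary sampling variable $\widehat{\sigma}_{i,j}\sim\mathcal{N}(\|\Delta\rep_{t,i,j}\|^2_{V_{t-1}^{-1}},\,\|\Delta\rep_{t,i,j}\|^4_{V_{t-1}^{-1}}/(4\log(Kt^2)))$ with the deviation radius appearing in the definition of $F_t$. The $1/(4\log(Kt^2))$ scaling of the variance is precisely what produces, after the Gaussian tail bound, the $\log(Kt^2)$ factor inside $b_t=\cwct(1+\sqrt{2\log(Kt^2)})$ and feeds the later derivation of \cref{eq:cj}; I would verify that the chosen radius matches this variance so that the subsequent analysis (in particular the factor-$3/2$ comparison $\tfrac12\|\Delta\rep_{t,i,j}\|_{V_{t-1}^{-1}}\leq \tfrac32\|\Delta\rep_{t,\stt,\ndt}\|_{V_{t-1}^{-1}}$) goes through. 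With that matching in place, both inequalities follow by the stated tools, completing the proof.

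\begin{proof}
\cref{eq:ci3} is obtained by \cref{eq:tsf}. In addition, since
\begin{align}
    \mathbb{P}(\widehat{\mu}-\mu\geq \epsilon)\leq \frac{1}{2}\exp{\left(-\frac{n\epsilon^2}{2\sigma^2}\right)},\no
\end{align}
by setting $\epsilon = \cwct\sqrt{2\log(Kt^2)}\|\Delta\rep_{t,i,j}\|_{V_{t-1}^{-1}}$ with the sample size $n=1$ and applying a union bound over the candidate comparisons in $\mathcal{C}_t$, we can show \cref{eq:normal3}.
\end{proof}
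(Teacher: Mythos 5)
Your proof is correct and takes essentially the same route as the paper's: the bound on $\mathbb{P}(E_t)$ is read off from the uniform confidence inequality \cref{eq:tsf}, and the bound on $\mathbb{P}(F_t)$ follows from the one-sided Gaussian tail estimate with deviation radius $\cwct\sqrt{2\log(Kt^2)}\|\Delta\rep_{t,i,j}\|_{V_{t-1}^{-1}}$ and sample size one, followed by a union bound over arm pairs. The only difference is cosmetic: you make the union bound (and the matching of the sampling variance to the deviation radius) explicit where the paper leaves it implicit.
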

\begin{proof}
\cref{eq:ci3} is obtained by \cref{eq:tsf}.
Since
\begin{align}
    \mathbb{P}(\widehat{\mu}-\mu\geq \epsilon)\leq \frac{1}{2}\exp{\left(-\frac{n\epsilon^2}{2\sigma^2}\right)}, 
\end{align} we get \cref{eq:normal3} in a similar manner to \cref{lemma:normal1} by setting the confidence radius as $ \cwct\sqrt{2\log(Kt^2)}\|\Delta\rep_{t,\arm,\stt}\|_{V_{t-1}^{-1}}\}$ with the sample size $1.$
\end{proof}

\begin{lemma}
  For any filtration $\mathcal{F}_{t-1}$, under $F$,
  \begin{align}
      \mathbb{E}[2r_t | \mathcal{F}_{t-1}]= C(b_t\mathbb{E}[\|\Delta\rep_{t,t}\|_{V_{t-1}^{-1}}|\mathcal{F}_{t-1}] + \mathcal{R}(m) +\frac{1}{t^2})
  \end{align} for some constant $C>0$.
\end{lemma}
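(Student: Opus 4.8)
The plan is to establish, under the concentration event $E_t$ and the Gaussian event $F_t$, a \emph{pointwise} upper bound on $2r_t$ of the form $C\,b_t\|\Delta\rep_{t,\stt,\ndt}\|_{V_{t-1}^{-1}}+C'\mathcal{R}(m)$, and then take the conditional expectation while paying an $O(1/t^2)$ price on the complementary event $F_t^c$. Writing the instantaneous regret as $2r_t=\Delta\util_{t,\opt,\stt}+\Delta\util_{t,\opt,\ndt}$, the first step is to control each summand using $\stt,\ndt\in\mathcal{C}_t$. Applying the defining inequality of $\mathcal{C}_t$ to $\stt$ with the competitor index $j=\opt$ (legitimate because the candidate condition holds for \emph{all} $j\in[K]$, so it does not require $\opt\in\mathcal{C}_t$) yields $\th_{t-1}^\intercal\Delta\rep_{t,\opt,\stt}\le \cwct\|\Delta\rep_{t,\opt,\stt}\|_{V_{t-1}^{-1}}$; combining this with the concentration bound \cref{eq:tsf} under $E_t$ gives $\Delta\util_{t,\opt,\stt}\le 2\cwct\|\Delta\rep_{t,\opt,\stt}\|_{V_{t-1}^{-1}}+\mathcal{R}(m)$, and the same argument applied to $\ndt$ gives the analogous bound. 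Unlike the asymmetric and optimistic-symmetric strategies, no positive-probability/saturation argument is needed here, since the selection rule directly targets high uncertainty.

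The second step is to replace the per-arm widths $\|\Delta\rep_{t,\opt,\stt}\|_{V_{t-1}^{-1}}$ and $\|\Delta\rep_{t,\opt,\ndt}\|_{V_{t-1}^{-1}}$ by the width $\|\Delta\rep_{t,\stt,\ndt}\|_{V_{t-1}^{-1}}$ of the \emph{selected} pair, which is exactly what \cref{eq:cj} provides: because $(\stt,\ndt)$ maximizes the sampled $\widehat{\sigma}_{i,j}$ over $\mathcal{C}_t\times\mathcal{C}_t$ and $F_t$ forces each $\widehat{\sigma}_{i,j}$ within a constant factor of its mean $\|\Delta\rep_{t,i,j}\|^2_{V_{t-1}^{-1}}$, every candidate pair obeys $\|\Delta\rep_{t,i,j}\|_{V_{t-1}^{-1}}\le 3\|\Delta\rep_{t,\stt,\ndt}\|_{V_{t-1}^{-1}}$. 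To apply this to $(\opt,\stt)$ and $(\opt,\ndt)$ I must first certify $\opt\in\mathcal{C}_t$; under $E_t$ I would note that for every $j$, $\th_{t-1}^\intercal\Delta\rep_{t,\opt,j}+\cwct\|\Delta\rep_{t,\opt,j}\|_{V_{t-1}^{-1}}\ge \Delta\util_{t,\opt,j}-\mathcal{R}(m)\ge -\mathcal{R}(m)$, so that once $m$ is large enough (via \cref{eq:app-m-eq}) to render $\mathcal{R}(m)$ negligible, $\opt$ satisfies the candidate condition, exactly as in the ``$(\opt,\opt)\notin S_t$ for large $m$'' reasoning used for the optimistic symmetric case. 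Chaining the two steps then gives, on $E_t\cap F_t$, the pointwise estimate $2r_t\le 12\,\cwct\|\Delta\rep_{t,\stt,\ndt}\|_{V_{t-1}^{-1}}+2\mathcal{R}(m)\le 12\,b_t\|\Delta\rep_{t,\stt,\ndt}\|_{V_{t-1}^{-1}}+2\mathcal{R}(m)$, using $\cwct\le b_t$.

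Finally, I would take $\mathbb{E}[\,\cdot\mid\mathcal{F}_{t-1}]$ of $2r_t\,I_{E_t}$, splitting on $F_t$ and $F_t^c$. On $F_t$ the pointwise bound integrates to $12\,b_t\,\mathbb{E}[\|\Delta\rep_{t,\stt,\ndt}\|_{V_{t-1}^{-1}}\mid\mathcal{F}_{t-1}]+2\mathcal{R}(m)$; since $|\util|\le 1$ by \cref{assumption:app-utilbound}, the regret is bounded by a constant, so the $F_t^c$ contribution is at most a constant times $\mathbb{P}(F_t^c\mid\mathcal{F}_{t-1})\le 1/t^2$ by \cref{eq:normal3}. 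Recognizing $\Delta\rep_{t,t}=\Delta\rep_{t,\stt,\ndt}$, this yields $\mathbb{E}[2r_t\mid\mathcal{F}_{t-1}]\le C\big(b_t\,\mathbb{E}[\|\Delta\rep_{t,t}\|_{V_{t-1}^{-1}}\mid\mathcal{F}_{t-1}]+\mathcal{R}(m)+1/t^2\big)$ for a universal constant $C$, as claimed. The main obstacle is the certification $\opt\in\mathcal{C}_t$: unlike the linear setting, the representation-learning bias $\mathcal{R}(m)$ obstructs exact membership, and one must argue via the overparameterization requirement \cref{eq:app-m-eq} that this slack is dominated so that \cref{eq:cj} may legitimately be invoked on the pairs involving $\opt$; relatedly, since \cref{eq:cj} is a statement only about candidate pairs, every appeal to it must be preceded by verifying candidate membership of both indices.
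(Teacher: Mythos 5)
Your proposal is correct and follows essentially the same route as the paper's own proof: decompose $2r_t=\Delta\util_{t,\opt,\stt}+\Delta\util_{t,\opt,\ndt}$, bound each summand via the concentration event $E_t$ together with the candidate-set condition for $\stt,\ndt$ (applied with competitor $j=\opt$), convert the per-pair widths to $\|\Delta\rep_{t,\stt,\ndt}\|_{V_{t-1}^{-1}}$ via \cref{eq:cj}, and pay $O(1/t^2)$ on $F_t^c$ when taking the conditional expectation, arriving at the same $12\,b_t$-type constant. Your explicit certification that $\opt\in\mathcal{C}_t$ (needed before invoking \cref{eq:cj} on pairs containing $\opt$) is a point the paper's proof silently assumes, so your treatment is, if anything, slightly more careful than the original.
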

\begin{proof}
Notice that
    \begin{align}
        2r_t &=\Delta \util_{t,\opt,\stt}+\Delta \util_{t,\opt,\ndt}.\no
    \end{align} We can estimate that 
    \begin{align}
        \Delta\rep_{t,\opt,\stt}=\th_{t-1}\Delta\rep_{t,\opt,\stt}+b_t\|\Delta\rep_{t,\opt,\stt}\|_{V_{t-1}^{-1}}+\mathcal{R}(m),
        \\\Delta\rep_{t,\opt,\ndt}=\th_{t-1}\Delta\rep_{t,\opt,\ndt}+b_t\|\Delta\rep_{t,\opt,\ndt}\|_{V_{t-1}^{-1}}+\mathcal{R}(m),
    \end{align} in $E_{t}$ and $F_{t}$. Since $\stt, \ndt\in \mathcal{C}_t,$
    \begin{align}
        \th_{t-1}\Delta\rep_{t,\opt,\stt}&\leq  b_t\|\Delta\rep_{t,\opt,\stt}\|_{V_{t-1}^{-1}}+\mathcal{R}(m),\no\\
        \th_{t-1}\Delta\rep_{t,\opt,\ndt}&\leq  b_t\|\Delta\rep_{t,\opt,\ndt}\|_{V_{t-1}^{-1}}+\mathcal{R}(m).\no
    \end{align} in $E_{t}$ and $F_{t}$.
    Thus, under \cref{assumption:utilbound}, we have
    \begin{align}
        \mathbb{E}[2r_t|\mathcal{F}_{t-1}]&\leq \frac{4}{t^2}+4\mathcal{R}(m)+2b_t\mathbb{E}[\|\Delta\rep_{t,\opt,\stt}\|_{V_{t-1}^{-1}}\mid\mathcal{F}_t,E_{t}\bigcap F_{t}]\label{eq:ts-final3} \\ \s+2b_t\mathbb{E}[\|\Delta\rep_{t,\opt,\ndt}\|_{V_{t-1}^{-1}}|\mathcal{F}_t,E_{t}\bigcap F_{t}]\nonumber\\
        &\leq \frac{4}{t^2}+4\mathcal{R}(m)+12b_t\|\Delta\rep_{t,\stt,\ndt}\|_{V_{t-1}^{-1}}. \no
    \end{align} The last inequality is obtained by \cref{eq:cj}. This completes the proof.
\end{proof}

Utilizing \cref{eq:ts-final3} above, we can show the following lemma as \cref{lemma:sm1}:
\begin{lemma}\label{lem:ts-final3}
    Let $Y_t=\sum_{i=1}^{t}X_i$ where $X_t=r_tI_{E}-c_t$ with
\begin{align}
    2c_t = \frac{4}{t^2}+4\mathcal{R}(m)+12b_t\|\Delta\rep_{t,\stt,\ndt}\|_{V_{t-1}^{-1}}.\no
\end{align}
    Then $(Y_t)$ is a super-martingale with respect to the filtration $\mathcal{F}_{t}.$
\end{lemma}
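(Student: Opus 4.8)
The plan is to follow exactly the template already used for the asymmetric strategy in \cref{lemma:sm1} and for the optimistic symmetric strategy in \cref{lemma:app-ts-final2}: reduce the super-martingale property to the per-round conditional regret estimate already derived for the candidate-based strategy. First I would record the structural facts that make $(Y_t)$ a legitimate super-martingale candidate, namely that each increment $X_t = r_t I_{E} - c_t$ is $\mathcal{F}_t$-measurable and integrable. This holds because $r_t=r_t^a$, the indicator $I_{E}$ of the good event, and the realized bonus $b_t\|\Delta\rep_{t,\stt,\ndt}\|_{V_{t-1}^{-1}}$ appearing in $c_t$ are all bounded measurable functions of the round-$t$ arm selection and outcome, while $\mathcal{R}(m)$ and $1/t^2$ are deterministic and finite. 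With adaptedness and integrability in hand, only the drift condition $\mathbb{E}[X_t \mid \mathcal{F}_{t-1}] \le 0$ remains to be verified.

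For the drift condition the key input is the conditional regret bound \cref{eq:ts-final3} established just before this lemma, which reads $\mathbb{E}[2r_t \mid \mathcal{F}_{t-1}] \le \tfrac{4}{t^2} + 4\mathcal{R}(m) + 12 b_t\,\mathbb{E}[\|\Delta\rep_{t,\stt,\ndt}\|_{V_{t-1}^{-1}} \mid \mathcal{F}_{t-1}]$. By the very definition $2c_t = \tfrac{4}{t^2}+4\mathcal{R}(m)+12 b_t\|\Delta\rep_{t,\stt,\ndt}\|_{V_{t-1}^{-1}}$, the right-hand side equals $\mathbb{E}[2c_t \mid \mathcal{F}_{t-1}]$, so I obtain $\mathbb{E}[r_t \mid \mathcal{F}_{t-1}] \le \mathbb{E}[c_t \mid \mathcal{F}_{t-1}]$. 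I would then invoke nonnegativity of the average regret, $r_t = r_t^a \ge 0$, which is immediate from $\opt = \argmax_{k} \util(x_{t,k})$, together with $I_{E} \le 1$, to get the pointwise bound $r_t I_{E} \le r_t$ and hence $\mathbb{E}[r_t I_{E} \mid \mathcal{F}_{t-1}] \le \mathbb{E}[r_t \mid \mathcal{F}_{t-1}]$. Chaining these gives $\mathbb{E}[X_t \mid \mathcal{F}_{t-1}] = \mathbb{E}[r_t I_{E} \mid \mathcal{F}_{t-1}] - \mathbb{E}[c_t \mid \mathcal{F}_{t-1}] \le 0$, and therefore $\mathbb{E}[Y_t - Y_{t-1} \mid \mathcal{F}_{t-1}] = \mathbb{E}[X_t \mid \mathcal{F}_{t-1}] \le 0$, which is precisely the super-martingale property.

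The step requiring the most care is the matching between the realized quantity $\|\Delta\rep_{t,\stt,\ndt}\|_{V_{t-1}^{-1}}$ inside $c_t$ and the conditional-expectation form appearing in the regret bound; this is exactly why the derivation of \cref{eq:ts-final3} routed the individual bonuses $\|\Delta\rep_{t,\opt,\stt}\|_{V_{t-1}^{-1}}$ and $\|\Delta\rep_{t,\opt,\ndt}\|_{V_{t-1}^{-1}}$ through the selected pair via the concentration inequality \cref{eq:cj}, which is valid on the good event $F_{t}$ and pins down the numerical constant $12$ and the dependence on $b_t$. All the genuinely probabilistic content—the strictly positive anti-concentration probability and the control $\mathbb{P}(F_t^c)\le 1/t^2$ of the good-event failure—has already been absorbed into that preceding bound, so the present statement is a direct book-keeping consequence. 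Consequently I do not expect a substantive obstacle here beyond verifying that $c_t$ is defined to dominate the regret bound term-by-term after taking $\mathcal{F}_{t-1}$-conditional expectations.
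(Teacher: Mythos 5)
Your proposal is correct and follows essentially the same route as the paper, which proves this lemma exactly as \cref{lemma:sm1}: the drift condition $\mathbb{E}[X_t \mid \mathcal{F}_{t-1}] \le 0$ is read off directly from the per-round conditional regret bound in \cref{eq:ts-final3}, whose right-hand side is $2c_t$ by construction. Your added remarks on measurability, integrability, and the bound $r_t I_{E} \le r_t$ are sound bookkeeping that the paper leaves implicit.
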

Therefore, combining~\cref{lem:ts-final3} and \cref{lemma:ts-ci}, we complete the proofs of \cref{theorem:app-main_ts} and \cref{theorem:app-main_ts2} under the candidate-based symmetric arm selection strategy.

\section{ADDITIONAL EXPERIMENTS AND IMPLEMENTATION DETAILS} \label{sec:addexp}
\subsection{Datasets}
\subsubsection{Synthetic Dataset}
We consider the following three synthetic tasks having nonlinear utilities $u$:
\begin{itemize}
    \item \textbf{Cosine:} $u(\mathbf{x})=\cos(3\boldsymbol{\Theta}^\intercal \mathbf{x})$
    \item \textbf{Square:} $u(\mathbf{x})=10(\boldsymbol{\Theta}^\intercal \mathbf{x})^2$
    \item \textbf{Quadratic:} $u(\mathbf{x})=\mathbf{x}^\intercal (\boldsymbol{\Theta}\boldsymbol{\Theta}^\intercal) \mathbf{x}$
\end{itemize}
with unknown parameters $\boldsymbol{\Theta}\in\mathbb{R}^{d}$.

These tasks are commonly used as synthetic datasets in both dueling bandits and multi-armed bandits~\citep{ndb, zhang2021neuralthompsonsampling, zhou2020neural, dai2022federated}.
For each task, the environment first draws a parameter vector $\boldsymbol{\Theta}$ from the uniform distribution $\mathcal{U}([-1,1]^d)$.
Then, in each round $t$, it samples a set of context vectors
$$ \mathcal{X}_t = \bigl\{ \mathbf{x}_{t,1}, \ldots, \mathbf{x}_{t,K} \bigr\}, $$
where each $\mathbf{x}_{t,k}$ is drawn independently from $\mathcal{U}([-1,1]^{d})$ for $k \in [K]$.
Finally, each $\mathbf{x}_{t,k}$ is normalized by its Euclidean norm, $\|\mathbf{x}_{t,k}\|_2$. The following code provides how to generate $\bigl\{\{\mathbf{x}_{t,k}\}_{k=1}^{K}\bigr\}_{t=1}^{T}.$

\begin{lstlisting}[language=Python, caption=Pseudocode for constructing context sets $X_t$.]
context_arms_list = []
for _ in range(T):
    # Generate all context_arm pairs
    context_arms = np.random.uniform(
        low=-1, high=1, size=(self.arms, self.dim)
    )
    context_arms = context_arms / np.linalg.norm(
        context_arms, 2, axis=1
    ).reshape(-1, 1)
    context_arms_list.append(context_arms)
\end{lstlisting}

\subsubsection{Real-World Dataset}\label{sec:real-world}
To verify the effectiveness of our method, we consider real-world datasets provided by the \textbf{UCI Machine Learning Repository}\footnote{The UCI Machine Learning Repository, developed by the University of California, Irvine, is a widely used collection of datasets for empirical evaluation and benchmarking of machine learning algorithms. Available at \url{https://archive.ics.uci.edu/}.}---Statlog, Magic, and Covertype---each consisting of feature vectors associated with one of multiple possible actions (arms) (see \cref{tab:uci}).

To the best of our knowledge, there is no standard experimental setup for dueling bandits using UCI datasets (unlike for multi-armed bandits~\citep{shallow}); thus, we designed the tasks as follows. For the \textbf{Magic} and \textbf{Covertype} datasets, we defined the pairwise preference probability using a sigmoid function:
$$ P(i \succ j) = \sigma(u_i -u_j) = \frac{1}{1 + e^{-(u_i - u_j)}} $$
where $u_k$ represents the underlying true utility value corresponding to category index $k \in [K]$ in each round, and we set a baseline of $P(i \succ j) = 0.7$ for clarity, though the preference is derived from the utility difference $u_i - u_j$.

Note that variance-aware algorithms are not expected to consistently outperform variance-agnostic algorithms in normal tasks, as their advantages are primarily realized in scenarios with very low variances. Thus, to specifically evaluate performance under \textbf{low-variance conditions}, we set a deterministic preference model for the \textbf{Statlog} dataset:
$$ P(i \succ j) = 1 \quad \text{for } i > j $$
This setup ensures the advantage of a higher-indexed arm is realized with certainty.

\begin{table}[h]
\centering
\caption{UCI benchmark datasets used for contextual dueling bandit experiments.}
\resizebox{0.8\textwidth}{!}{
\begin{tabular}{lccc}
\toprule
Dataset & Number of Attributes & Number of Arms & Number of Instances \\
\midrule
Statlog & 9 & 7 & 58,000 \\
Magic   & 11 & 2 & 19,020 \\
Covertype & 54 & 7 & 581,012 \\
\bottomrule
\end{tabular}}\label{tab:uci}
\end{table}
\subsection{Implementation Details} 
\subsubsection{Hyperparameters.}

We report the hyperparameters used in our main manuscript. To evaluate \term, we employ a fully connected deep neural network (DNN) with ReLU activations. The architecture consists of $L = 2$ hidden layers, each with $W = 32$ units, and an output feature dimension of $D$.

We set the following training parameters:
\begin{itemize}
    \item Regularization parameter: $\lambda = 1.0$
    \item Number of gradient steps per round: $\gs = 20$
    \item Number of episodes: $\epi = 1$
\end{itemize}
We set the \textbf{confidence width coefficient} $\cwct = 1.0$. Note that this constant value is adopted from previous works \citep{ndb, shallow, zhou2020neural, bui2024variance} because computing the exact theoretical value of $\cwct$ is challenging in the context of neural networks. Specifically, this difficulty arises from the non-trivial computation of the differential term $\diff$. Implementing an exact algorithm to find the parameters $\theta_t$ that satisfy the first-order optimality condition $\frac{\partial \loss_t}{\partial \theta}(\theta_t, \ww_t) = 0$ is infeasible. Therefore, we approximate the parameter set $(\theta_t, \ww_t)$ via gradient descent on the loss function $\loss_t$ using a learning rate of $\lr = 0.01$ at each round $t$. Optimization is performed using the Adam optimizer in PyTorch, utilizing its default parameters except for the specified learning rate. Additionally, we reinitialize the parameter set $V_{t-1}$ to its initial state at the beginning of each round, as this was found not to incur significant computational overhead. Each experiment is run across \textbf{20 random seeds}. Unless otherwise stated, the remaining sections in this supplementary material adhere to the hyperparameters described above.



\subsubsection{Pseudocodes of Arm Selection Strategies}
The following are the pseudocodes for the asymmetric, optimistic symmetric, and candidate-based symmetric arm selection strategies, respectively.

\begin{lstlisting}[language=Python, caption=Pseudocode for the asymmetric arm selection in both UCB and TS frameworks as in~\citep{ndb}]
utilities = neural_network(context_actions)

# Selecting the first arm
at_1 = torch.argmax(utilities).item()
at_2 = 1
max_score = -np.inf
for j in range(len(utilities)):
    # Difference of features of the selected arm and the current arm
    zt_j1 = grad_list[j] - grad_list[at_1]
    zt_j1 = zt_j1.to("cpu")
    zt_dot_U = torch.matmul(zt_j1, torch.inverse(self.V))
    zt_dot_U_zt = torch.matmul(zt_dot_U, zt_j1.t())
    conf_term = torch.clamp(zt_dot_U_zt, min=0)
    sigma = self.nu * torch.sqrt(conf_term)

    # Selecting the arm based on the strategy
    utilities_j1 = utilities[j]
    if self.strategy == "ts":
        utilities_j1 = utilities_j1.to("cpu")
        action_score = torch.normal(utilities_j1.view(-1), sigma.view(-1))

        # Alternative: np.random.normal(loc=utilities_j1.item(), scale=sigma.item())

    elif self.strategy == "ucb":
        action_score = utilities_j1.item() + sigma.item()

    else:
        raise RuntimeError("Exploration strategy not set")

    # Selecting the second best arm
    if action_score > max_score:
        max_score = action_score
        at_2 = j
# Keeping the selected arms for model update
return at_1, at_2
\end{lstlisting}
\begin{lstlisting}[language=Python, caption=Pseudocode for the optimistic symmetric arm selection in both UCB and TS frameworks]
utilities = neural_network(context_actions)
at_1, at_2 = 0, 0
max_val = -torch.inf
if self.strategy == "ucb":
    for i in range(len(utilities)):
        for j in range(len(utilities)):
            zt = grad_list[i] - grad_list[j]
            zt_dot_U = torch.matmul(zt, torch.inverse(self.V))
            zt_dot_U_zt = torch.matmul(zt_dot_U, zt.t())
            conf_term = torch.clamp(zt_dot_U_zt, min=0)
            sigma = self.nu * torch.sqrt(conf_term)
            estimated = utilities[i] + utilities[j] + sigma
            if max_val < estimated: # Find best optimistic arms
                at_1, at_2 = i, j
                max_val = estimated
elif self.strategy == "ts":
    mus = torch.zeros((len(utilities), len(utilities)))
    sigmas = torch.zeros((len(utilities), len(utilities)))
    for i in range(len(utilities)):
        for j in range(len(utilities)):
            mus[i, j] = utilities[i] + utilities[j]
            zt = grad_list[i] - grad_list[j]
            zt_dot_U = torch.matmul(zt, torch.inverse(self.V))
            zt_dot_U_zt = torch.matmul(zt_dot_U, zt.t())
            conf_term = torch.clamp(zt_dot_U_zt, min=0)
            sigma = self.nu * torch.sqrt(conf_term)
            sigmas[i, j] = sigma
    sigmas = sigmas + 1e-12
    mus = mus.detach().cpu().numpy()
    sigmas = sigmas.detach().cpu().numpy()

    # Sample best optimistic arms
    sample = np.random.normal(loc=mus, scale=sigmas) 
    row_index, col_index = np.where(sample == np.max(sample))
    at_1, at_2 = row_index.item(), col_index.item()
\end{lstlisting}

\begin{lstlisting}[language=Python, caption=Pseudocode for the candidate-based symmetric arm selection in both UCB and TS frameworks]
utilities = neural_network(context_actions)
at_1, at_2 = 0, 0
if self.strategy == "ucb":
    sigmas = np.zeros((len(context_actions), len(context_actions)))
    mask = np.zeros((len(context_actions), len(context_actions)))
    for i in range(len(utilities)):
        for j in range(len(utilities)):
            if i == j:
                mask[i, j] = 1
                sigmas[i, j] = 0
                continue
            zt = grad_list[i] - grad_list[j]
            zt_dot_U = torch.matmul(zt, torch.inverse(self.V))
            zt_dot_U_zt = torch.matmul(zt_dot_U, zt.t())
            conf_term = torch.clamp(zt_dot_U_zt, min=0)
            sigma = self.nu * torch.sqrt(conf_term)
            sigmas[i, j] = sigma
            if utilities[i] + sigma > utilities[j]:
                mask[i, j] = 1
    # Make a candidate set
    candidate_mask = mask.sum(axis=1) == len(utilities)
    candidate_mask = candidate_mask.reshape(-1, 1) * candidate_mask.reshape(1, -1)
    sigmas = sigmas + 1

    at_1, at_2 = np.unravel_index(
        np.argmax(candidate_mask * sigmas, axis=None),
        (len(context_actions), len(context_actions)),
    )
elif self.strategy == "ts":
    sigmas = torch.zeros((len(context_actions), len(context_actions)))
    mask = torch.zeros((len(context_actions), len(context_actions)))
    for i in range(len(utilities)):
        for j in range(len(utilities)):
            if i == j:
                mask[i, j] = 1
                sigmas[i, j] = 0
                continue
            zt = grad_list[i] - grad_list[j]
            zt_dot_U = torch.matmul(zt, torch.inverse(self.V))
            zt_dot_U_zt = torch.matmul(zt_dot_U, zt.t())
            conf_term = torch.clamp(zt_dot_U_zt, min=0)
            sigma = self.nu * torch.sqrt(conf_term)
            sigmas[i, j] = sigma
            if utilities[i] + sigma > utilities[j]:
                mask[i, j] = 1
                
    # Make a candidate set
    candidate_mask = mask.sum(axis=1) == len(utilities)
    candidate_mask = candidate_mask.reshape(-1, 1) * candidate_mask.reshape(1, -1)
    candidate_mask = candidate_mask.cpu().numpy()
    
    sigmas = sigmas.detach().cpu().numpy()
    mus = sigmas
    sigmas = sigmas * sigmas / 4 + 1e-12
    
    sample = np.random.normal(loc=mus, scale=sigmas)
    sample[candidate_mask == False] = -np.inf
    row_index, col_index = np.where(sample == np.max(sample))
    at_1, at_2 = row_index.item(), col_index.item()
\end{lstlisting}       

\newpage
\subsection{Additional Experiments} 
In this section, we conduct additional experiments supporting our proposed method \term.

\begin{figure*}[h!]
    \centering
    \resizebox{1\textwidth}{!}{
    \begin{tabular}{ccc}
    \includegraphics[width=0.33\columnwidth]{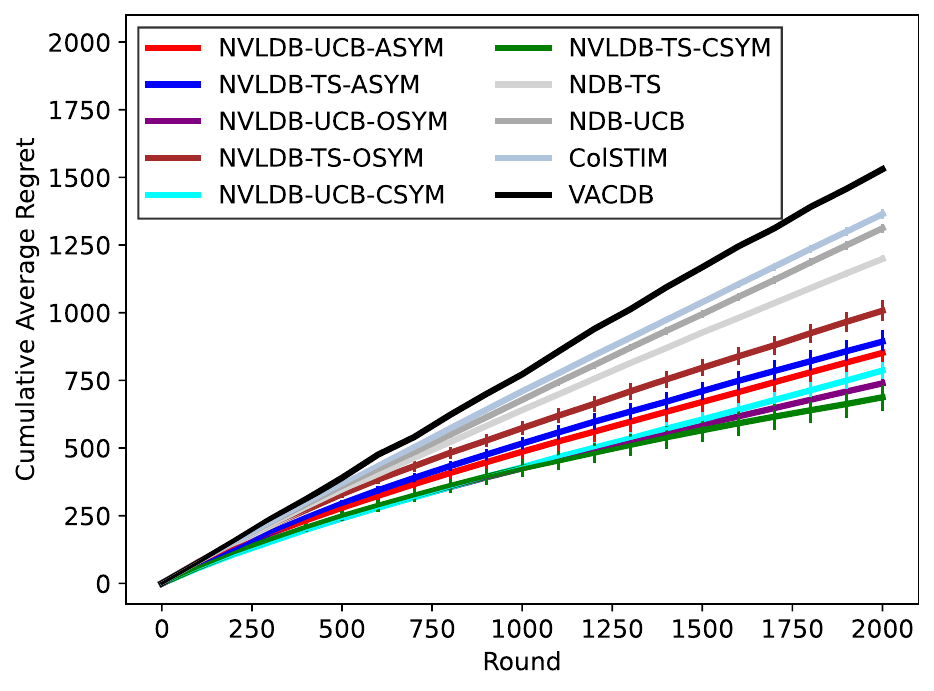}&
    \includegraphics[width=0.33\columnwidth]{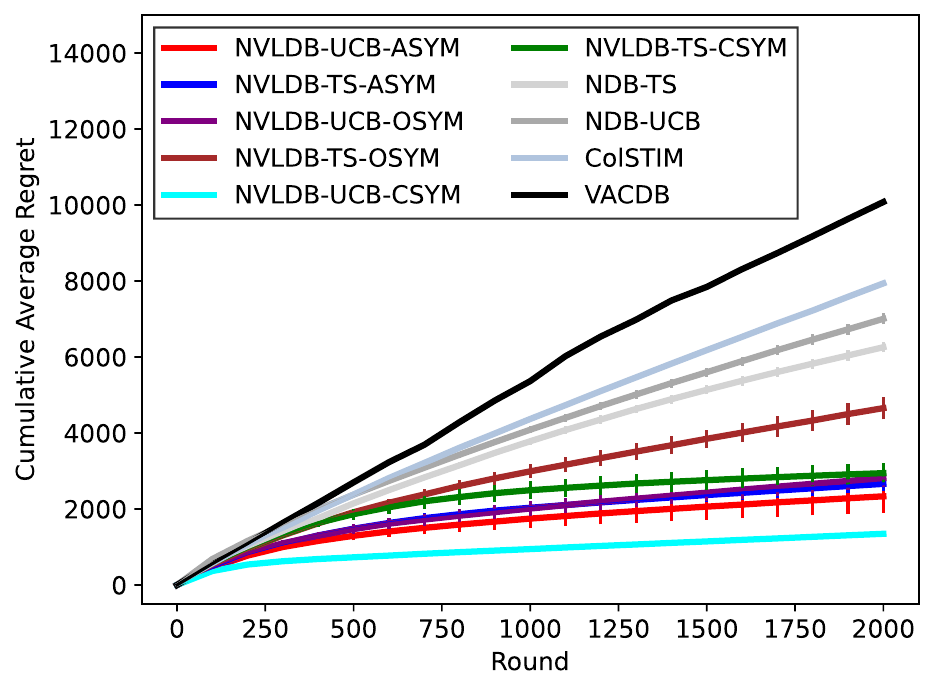}&
    \includegraphics[width=0.33\columnwidth]{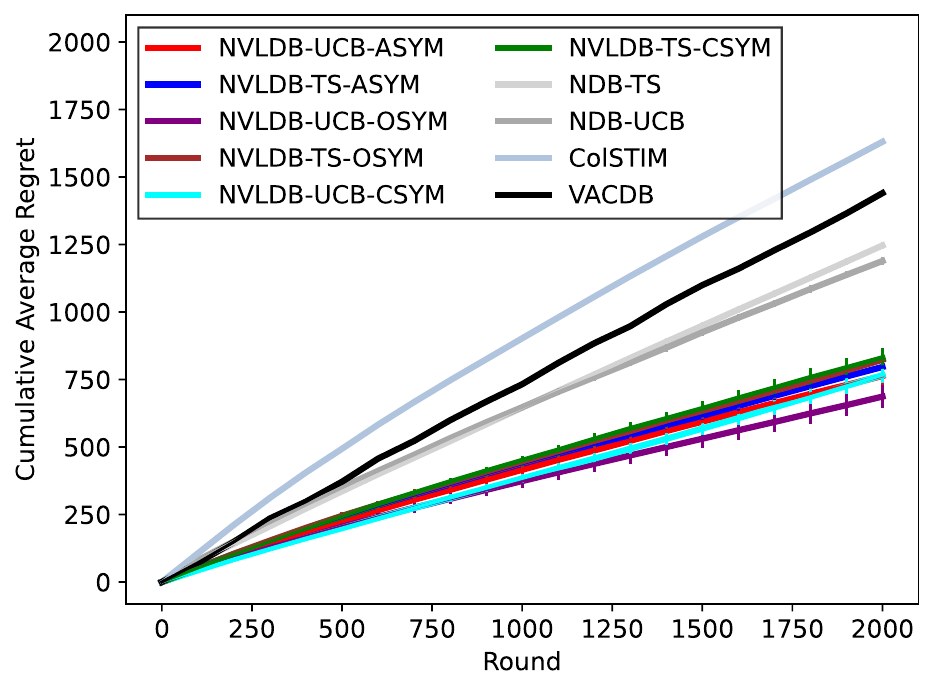}\\
    {\small (a): $\cos(3\Theta^\intercal x)$} &  {\small (b): $10(\Theta^\intercal x)^2$} & {\small (c): $x^\intercal\Theta\Theta^\intercal x$}
    \\
    \includegraphics[width=0.3\columnwidth]{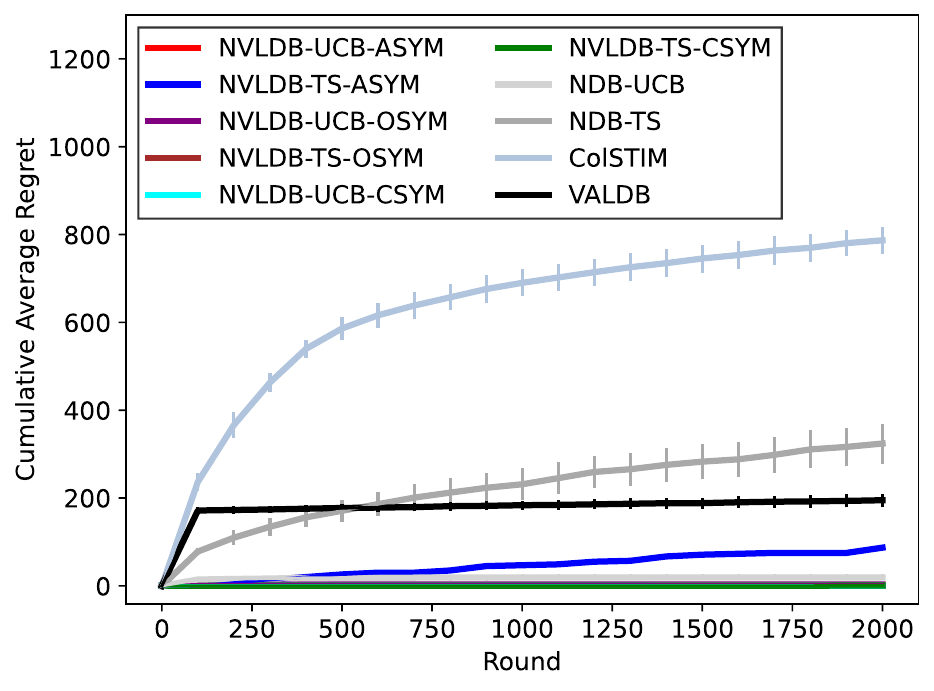}&
    \includegraphics[width=0.3\columnwidth]{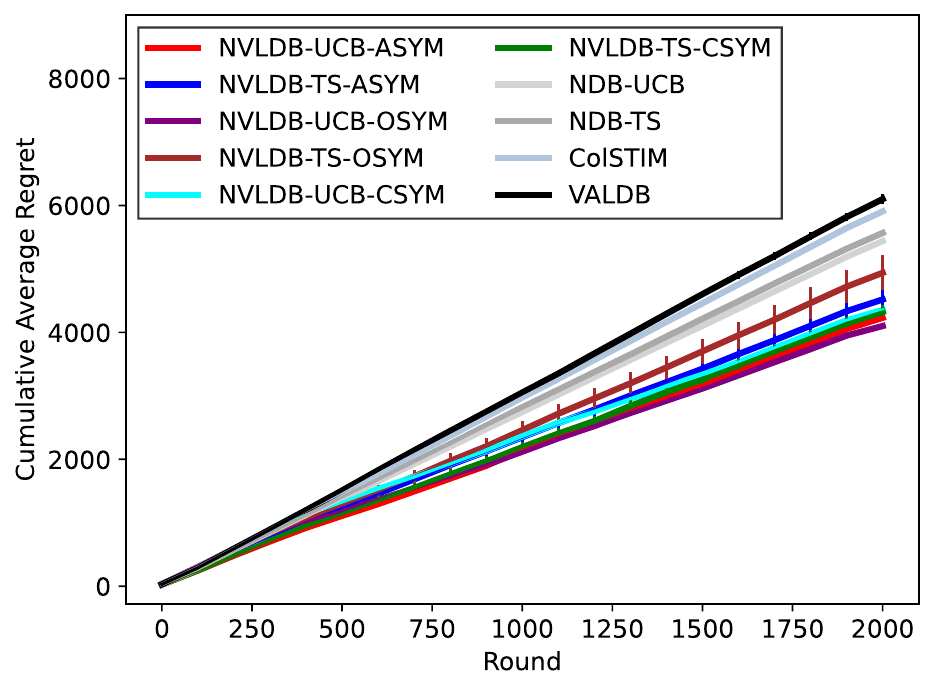}&
    \includegraphics[width=0.3\columnwidth]{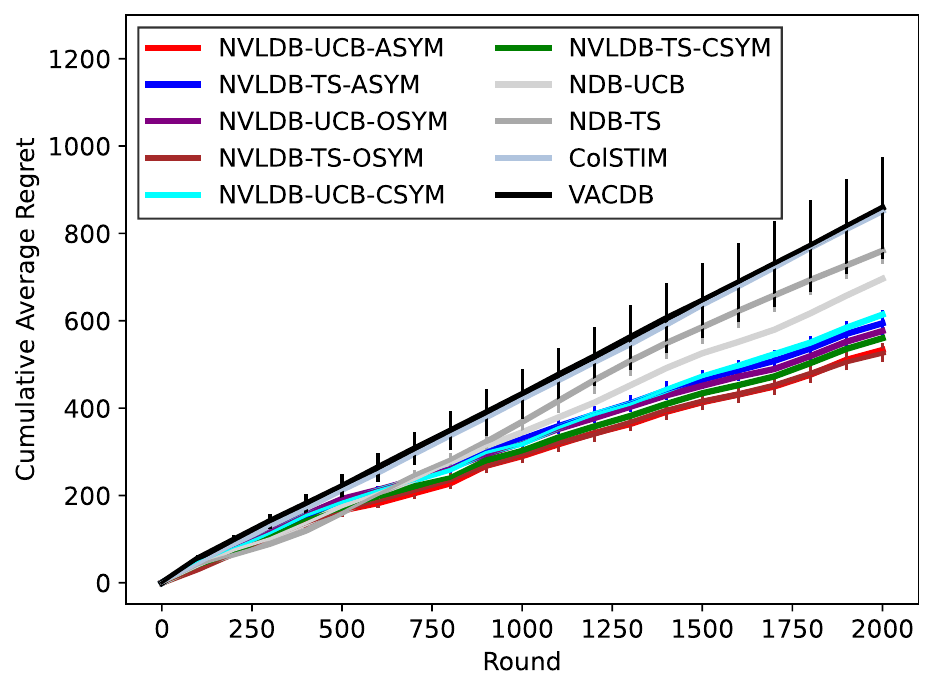}\\
    {\small (d): UCI-statlog} &  {\small (e): UCI-covertype} & {\small (f): UCI-magic}
    \end{tabular}}
    \caption{\small Comparison of cumulative average regret under (a)-(c): Synthetic tasks with context dimension $\dim=5$ and $K=5$ arms. (d)-(f): UC Irvine (UCI) machine learning repository data. All variants of NVLDB mostly outperform other baselines including NDB in both the UCB and TS frameworks. Each experiment is repeated across 20 random seeds.
}
    \label{fig:app-main_result_all}
\end{figure*}
\begin{figure*}[h!]
    \centering
    \begin{tabular}{ccc}
    \includegraphics[width=0.3\columnwidth]{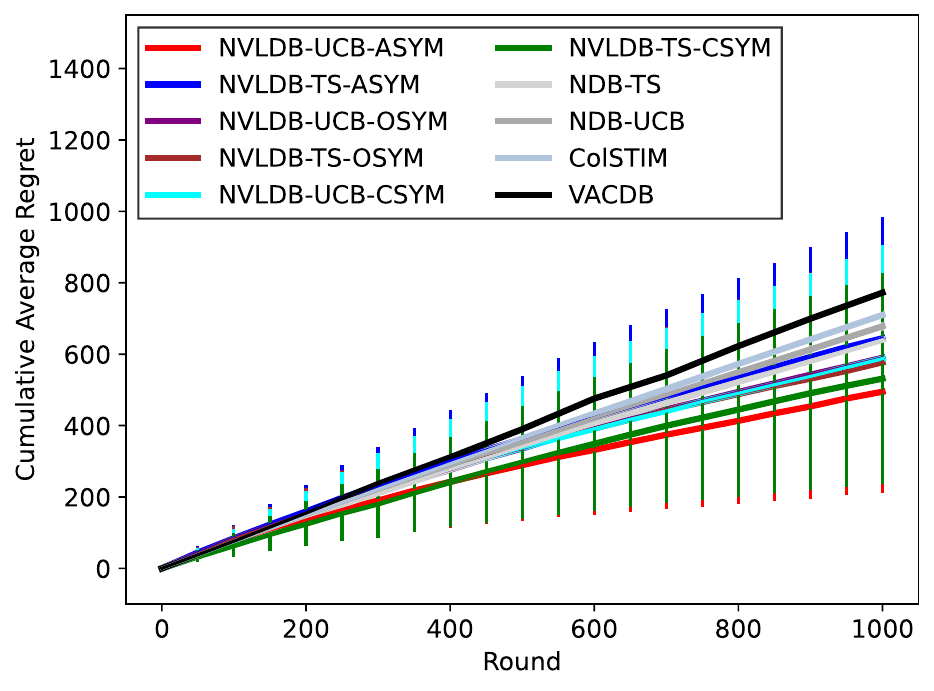}&
    \includegraphics[width=0.3\columnwidth]{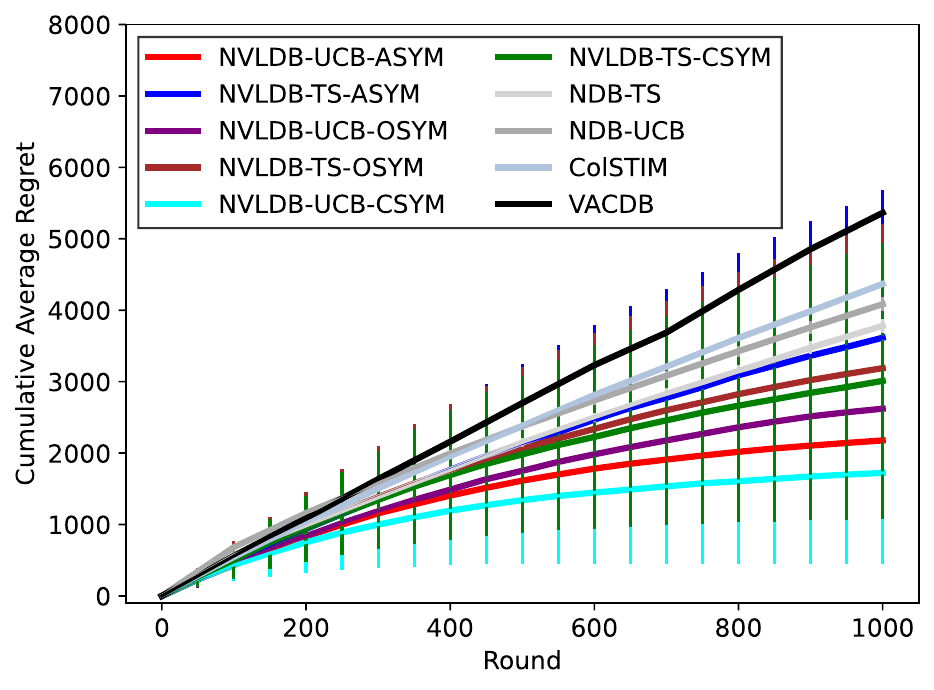}&
    \includegraphics[width=0.3\columnwidth]{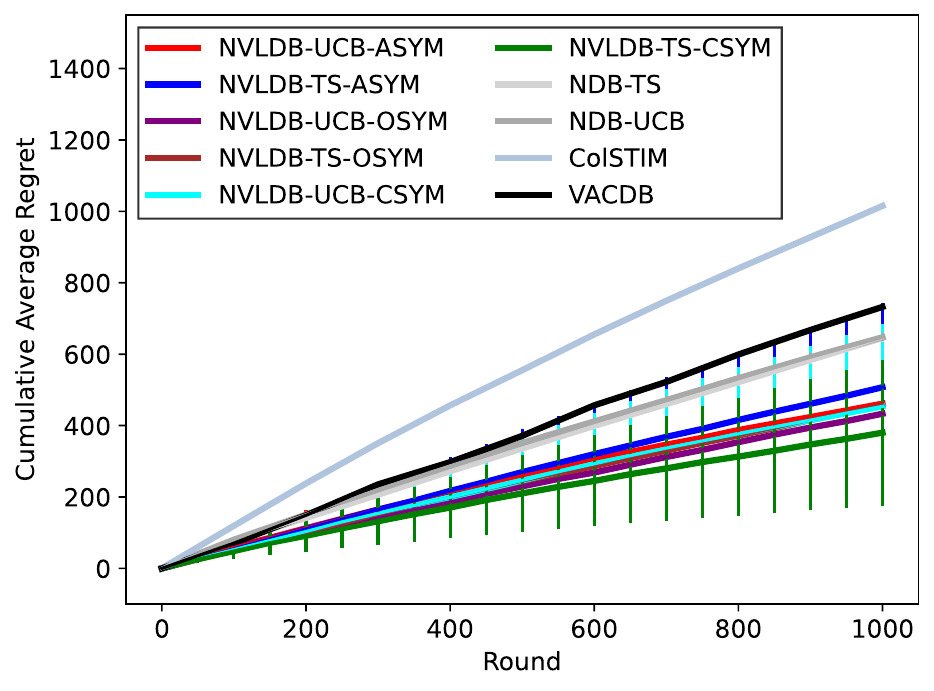}\\
    {\small (a): $\cos(3\Theta^\intercal x)$} &  {\small (b): $10(\Theta^\intercal x)^2$} & {\small (c): $x^\intercal \Theta^\intercal\Theta x$}
    \end{tabular}
    \caption{Comparison of cumulative average regret with increased network width. The experiments utilize an increased neural network width of $\wid=100$ (compared to the standard $\wid=32$ reported elsewhere), while the dimension is set to $\dim=5$ and the number of arms to $K=5$ for all tasks. The results demonstrate that our methods consistently outperform NDB and achieve sublinear cumulative regret. Each experiment is repeated across 20 random seeds.}
    \label{fig:width2}
\end{figure*}
\paragraph{Comparison of Cumulative Average Regret on Synthetic and Real-World Datasets.}
While the main manuscript presents comparisons of cumulative average regret on synthetic datasets with nonlinear utility functions, we additionally provide results on real-world datasets from the UCI Machine Learning Repository, as described in \cref{sec:real-world}.
\cref{fig:app-main_result_all} illustrates the cumulative average regret of \term's variance-aware variants compared to baseline algorithms.In particular, when comparing our methods against established baselines, including linear dueling bandit algorithms (VALDB~\citep{variance-aware} and ColSTIM~\citep{lst}) and the neural dueling bandit (NDB) algorithm~\citep{ndb}, it is noteworthy that all our proposed arm selection strategies consistently outperform these baselines. Furthermore, our methods exhibit sublinear cumulative regrets across both the synthetic and real-world datasets. These empirical results strongly support our theoretical findings, as established in \cref{theorem:app-main_ucb} and \cref{theorem:app-main_ts}. Each experiment was conducted using an average over 20 random seeds.

\paragraph{The Size of Neural Networks.}
To further demonstrate that \term guarantees sublinear cumulative average regret for each task, we increase the neural network width from $\wid=32$ to $\wid=100$. Based on our theoretical results (\cref{theorem:app-main_ucb} and \cref{theorem:app-main_ts}), we anticipate that the average cumulative regret will remain sublinear for the wider network ($\wid=100$), given that it already exhibited sublinear behavior at $\wid=32$. \cref{fig:width2} illustrates the cumulative average regrets of \term  under different arm selection strategies in comparison to the baselines. As theoretically and empirically expected, in all presented cases, the variants of \term consistently outperform the baselines and continue to show sublinear regret behavior. Each experiment was conducted using an average over 20 random seeds.

\paragraph{The Environment Condition.}
As we stated in the main manuscript, \cref{theorem:app-main_ucb} and \cref{theorem:app-main_ts} denote that the cumulative average regret should be sublinear even if the environment parameters $K$ and $\dim$ are changed. Inspired by this, we conduct experiments adjusting these parameters.
First, we increase the context dimension $\dim$ from $5$ to $10$ while keeping all other settings unchanged. \cref{fig:add2} illustrates the performance on tasks with $\dim=10$ and $K=5$, revealing substantial gaps between the \term variants and the baselines. Additionally, \cref{fig:red2} demonstrates the performance on tasks with $\dim=5$ and $K=10$. These results indicate that \term consistently outperforms the baselines across different task configurations, supporting our theoretical results. Each experiment was conducted using an average over 20 random seeds.
\begin{figure*}[h!]
    \centering
    \begin{tabular}{ccc}
    \includegraphics[width=0.3\columnwidth]{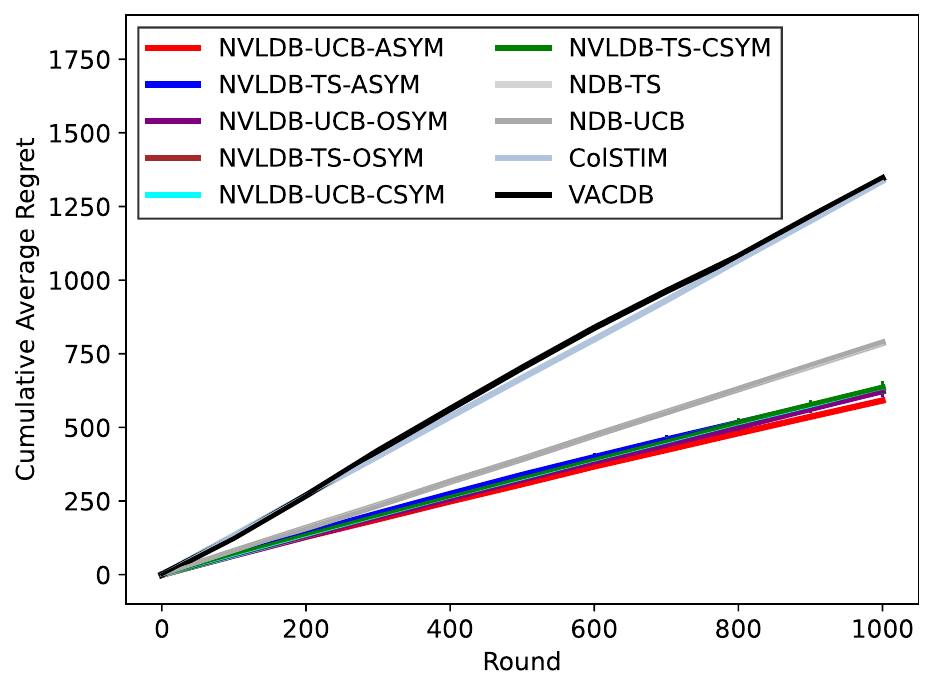}&
    \includegraphics[width=0.3\columnwidth]{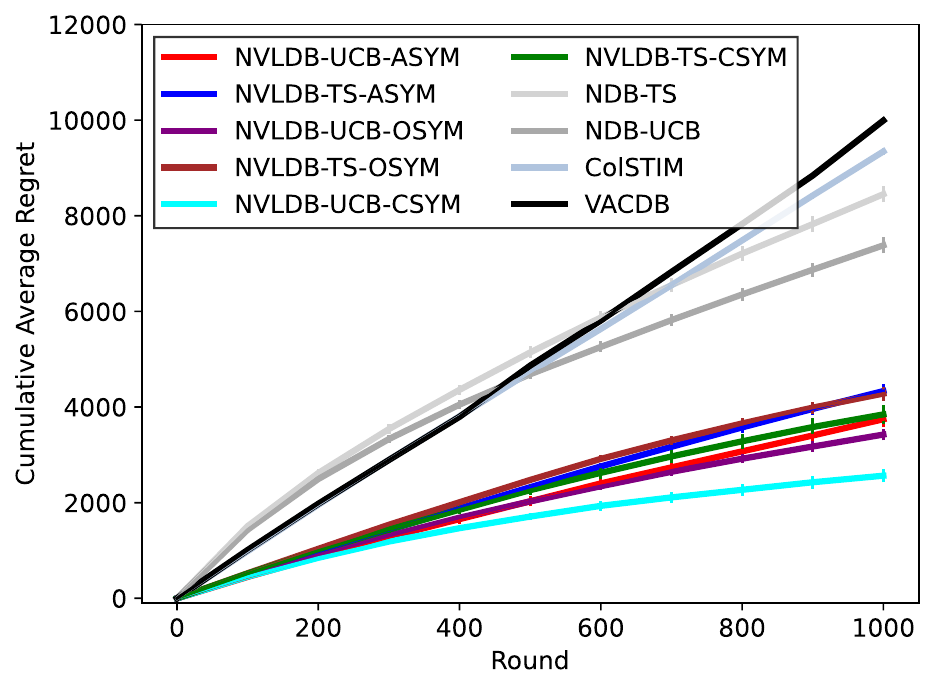}&
    \includegraphics[width=0.3\columnwidth]{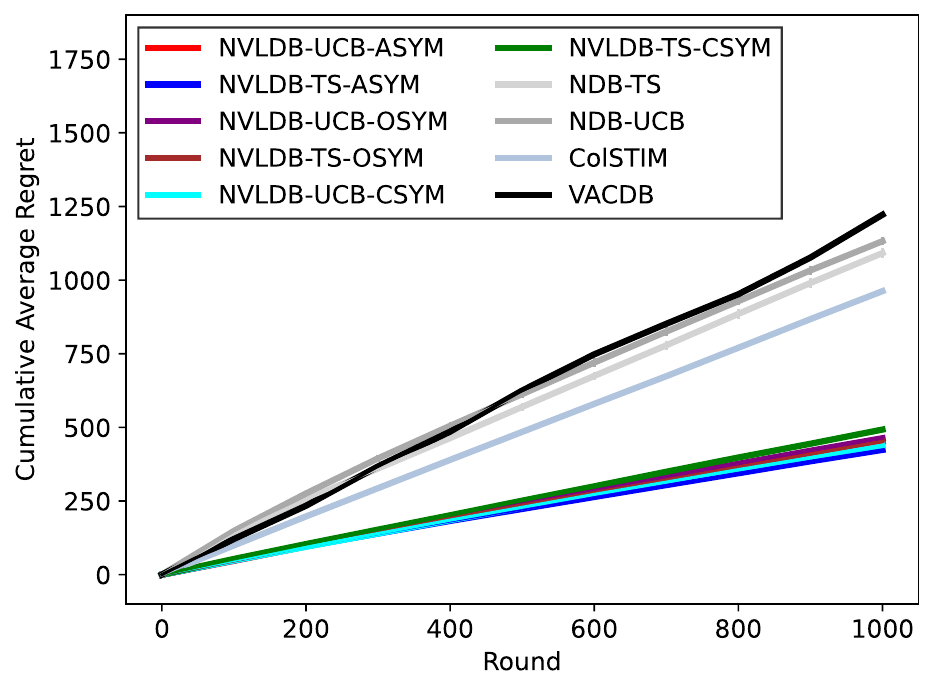}\\
    {\small (a): $\cos(3\Theta^\intercal x)$} &  {\small (b): $10(\Theta^\intercal x)^2$} & {\small (c): $x^\intercal \Theta\Theta^\intercal x$}
    \end{tabular}
    \caption{Comparison of cumulative average regret across our methods (\term variants) and the other baselines. Each task uses $\dim=10$ and $K=5$. It is clear that our variants outperform the other baselines. Each experiment was conducted across 20 random seeds.}
    \label{fig:add2}
\end{figure*}
\begin{figure*}[h!]
    \centering
    \begin{tabular}{ccc}
    \includegraphics[width=0.3\columnwidth]{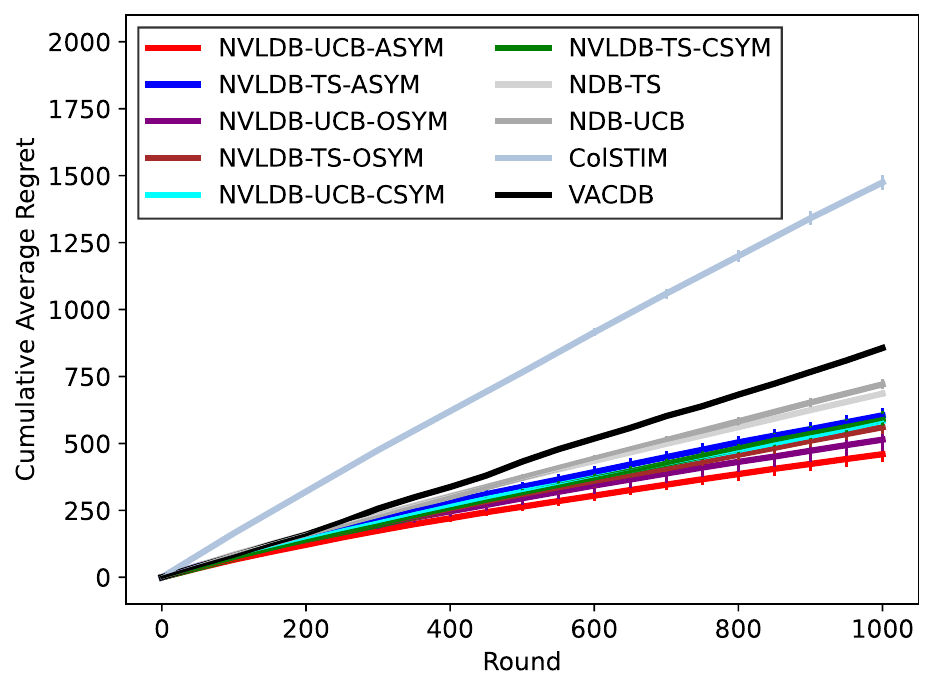}&
    \includegraphics[width=0.3\columnwidth]{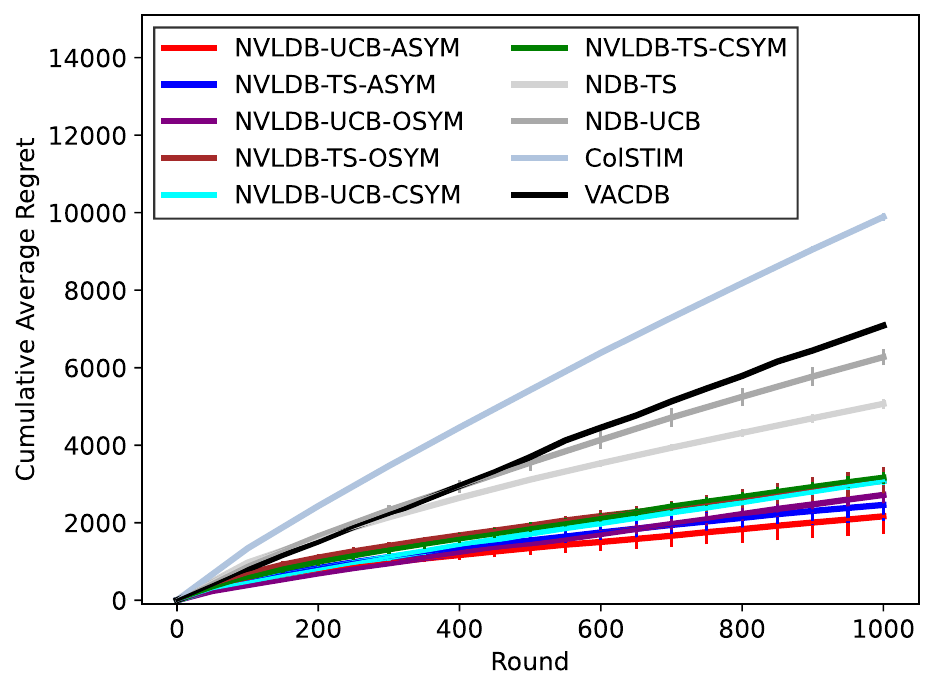}&
    \includegraphics[width=0.3\columnwidth]{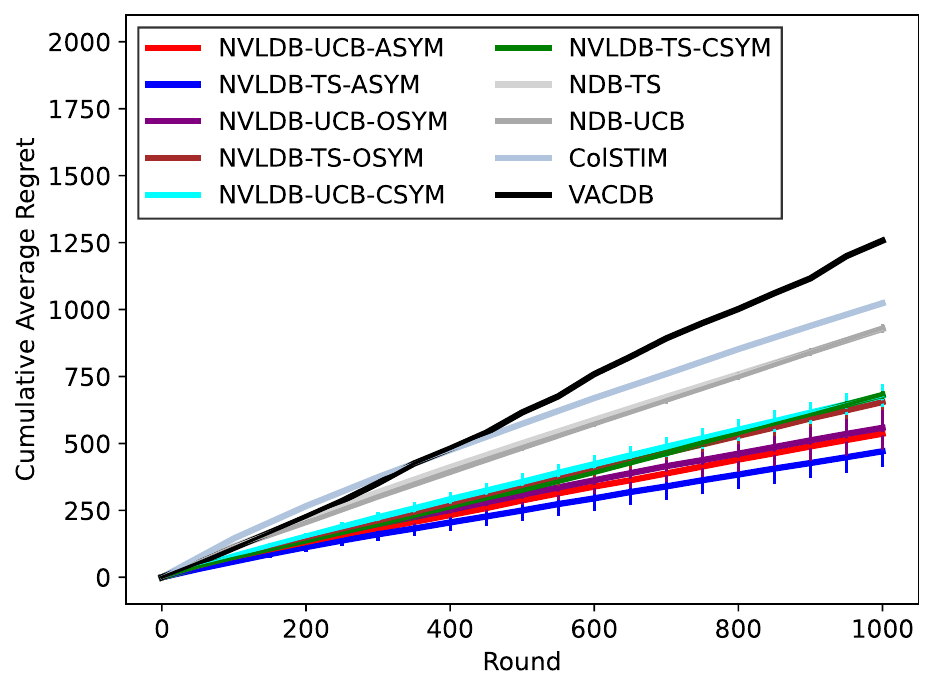}\\
    {\small (a): $\cos(3\Theta^\intercal x)$} &  {\small (b): $10(\Theta^\intercal x)^2$} & {\small (c): $x^\intercal \Theta\Theta^\intercal x$}
    \end{tabular}
    \caption{Comparison of cumulative average regret across our methods (\term variants) and the other baselines. Each task uses $\dim=5$ and $K=10$. It is clear that our variants outperform the other baselines. Each experiment was conducted across 20 random seeds.}
    \label{fig:red2}
\end{figure*}

\paragraph{Variance-agnostic Variants}
As stated in the main manuscript, \cref{theorem:app-main_ucb2} and \cref{theorem:app-main_ts2} indicate that, even without incorporating variance-awareness, the cumulative average regret of our arm selection strategies remains theoretically sublinear. To validate these theoretical results empirically, we conduct experiments comparing the performance of the variance-aware (\term) and variance-agnostic variants (NLDB, which stands for \emph{Neural Linear Dueling Bandit}) of our method. \cref{fig:NV1}--\cref{fig:NV3} compare the cumulative average regrets of these two sets of variants across different synthetic datasets featuring nonlinear utility functions.

As stated in the main manuscript, \cref{theorem:app-main_ucb2} and \cref{theorem:app-main_ts2} indicate that, without variance-awareness, the cumulative average regret of our arm selection strategies remains sublinear. To validate these theoretical results, we conduct experiments under variance-agnostic variants of \term. \cref{fig:NV1}--\cref{fig:NV3} compare the cumulative average regrets of the variance-aware (\term) and variance-agnostic variants (NLDB) of \term across different synthetic datasets with nonlinear utility functions.

We make two primary observations from these comparative experiments. First, the variance-agnostic NLDB (Neural Linear Dueling Bandit) methods exhibit sublinear cumulative average regret regardless of the arm selection strategy employed. This empirical finding is consistent with our theoretical results presented in \cref{theorem:app-main_ucb2} and \cref{theorem:app-main_ts2}, which suggest that a sublinear rate is attainable even without explicit variance-awareness. Second, the variance-aware variants generally outperform the variance-agnostic counterparts (NLDB) in terms of cumulative average regret, highlighting the practical benefit of incorporating variance information into the arm selection strategy. However, in some specific instances, the variance-agnostic variants (NLDB) are observed to perform better (see \cref{fig:NV3}(c)). This observation is still consistent with the theory, as both the variance-aware (\term) and variance-agnostic (NLDB) variants share the same theoretical upper bound on cumulative average regret, namely $\mathcal{O}(\dim\sqrt{T})$, provided that the variances are sufficiently large. Moreover, theoretical guarantees are typically derived under idealized assumptions that may not hold perfectly in practice. Thus, slight deviations in empirical performance between the two approaches are inevitable. \textbf{Crucially, these discrepancies do not undermine} the validity of the theoretical results but rather highlight the inherent gap between idealized theoretical modeling and real-world implementation.

\begin{figure*}[h!]
    \centering
    \begin{tabular}{ccc}
    \includegraphics[width=0.3\columnwidth]{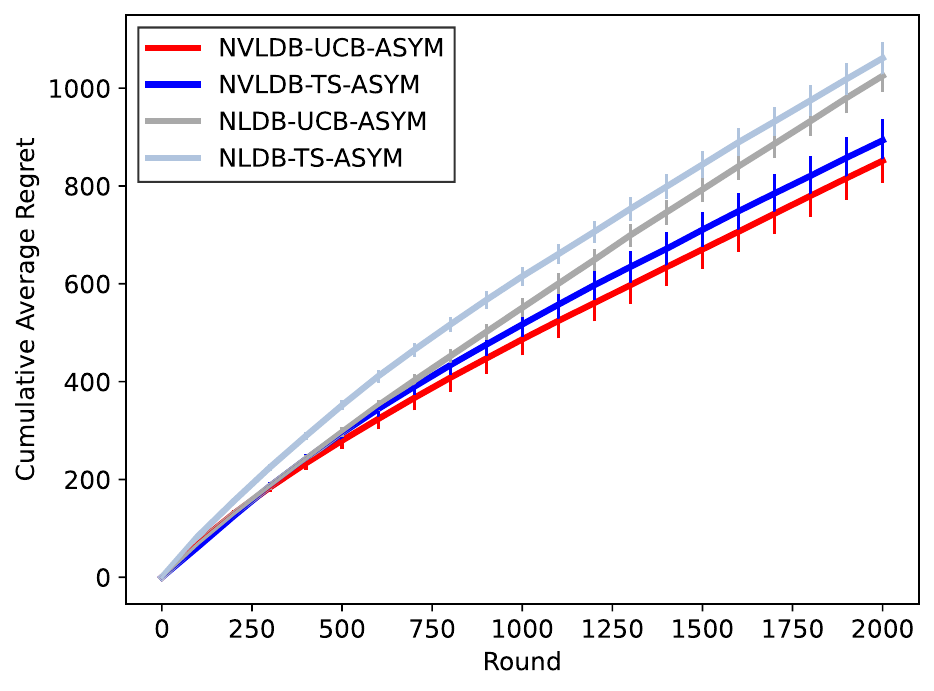}&
    \includegraphics[width=0.3\columnwidth]{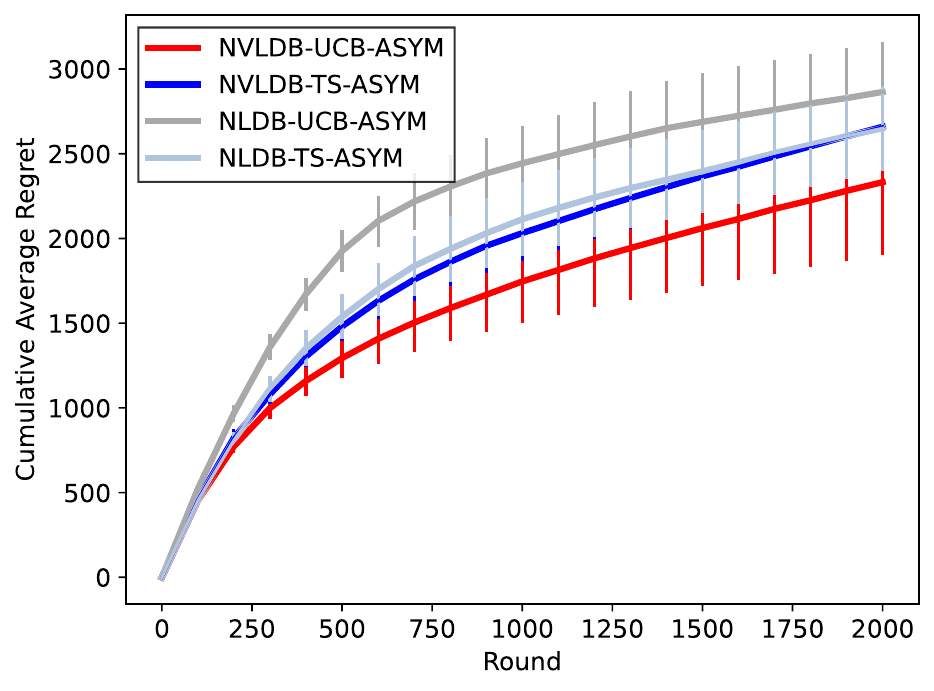}&
    \includegraphics[width=0.3\columnwidth]{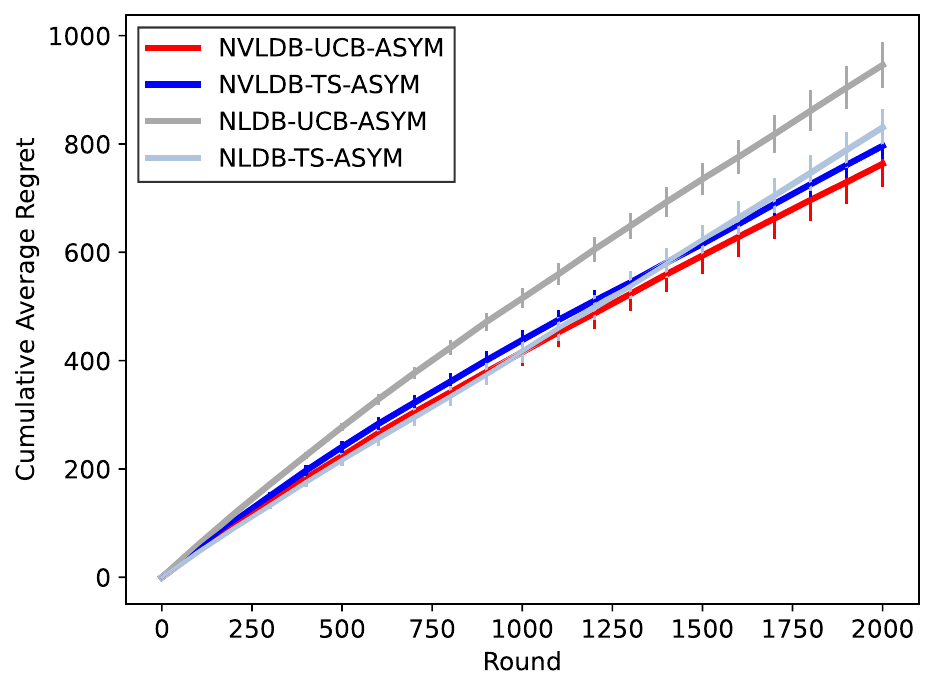}\\
    {\small (a): $\cos(3\Theta^\intercal x)$} &  {\small (b): $10(\Theta^\intercal x)^2$} & {\small (c): $x^\intercal \Theta\Theta^\intercal x$}
    \end{tabular}
    \caption{Comparison of cumulative average regret between variance-aware (NVLDB) and variance-agnostic (NLDB) variants of our proposed method under asymmetric (ASYM) arm selection on synthetic tasks with $\dim=5$ and $K=5$. Each experiment was conducted across 20 random seeds.}
    \label{fig:NV1}
\end{figure*}

\begin{figure*}[h!]
    \centering
    \begin{tabular}{ccc}
    \includegraphics[width=0.3\columnwidth]{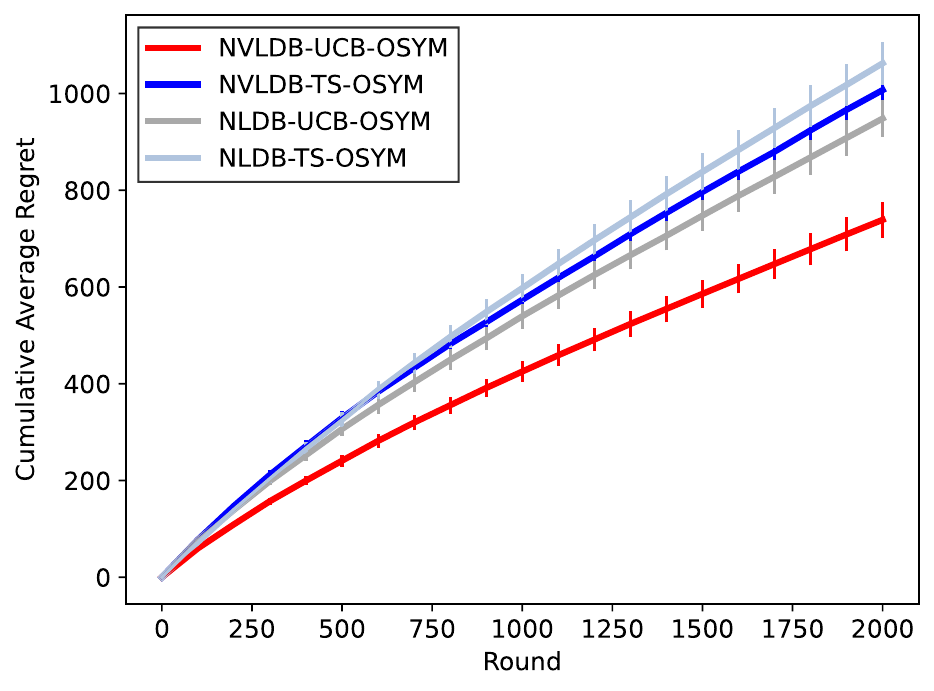}&
    \includegraphics[width=0.3\columnwidth]{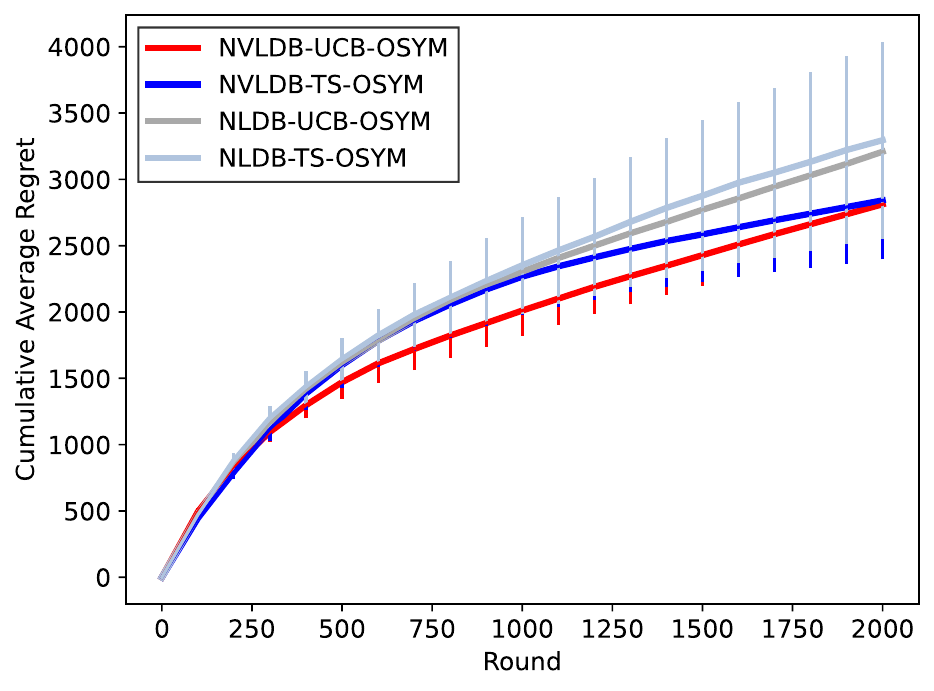}&
    \includegraphics[width=0.3\columnwidth]{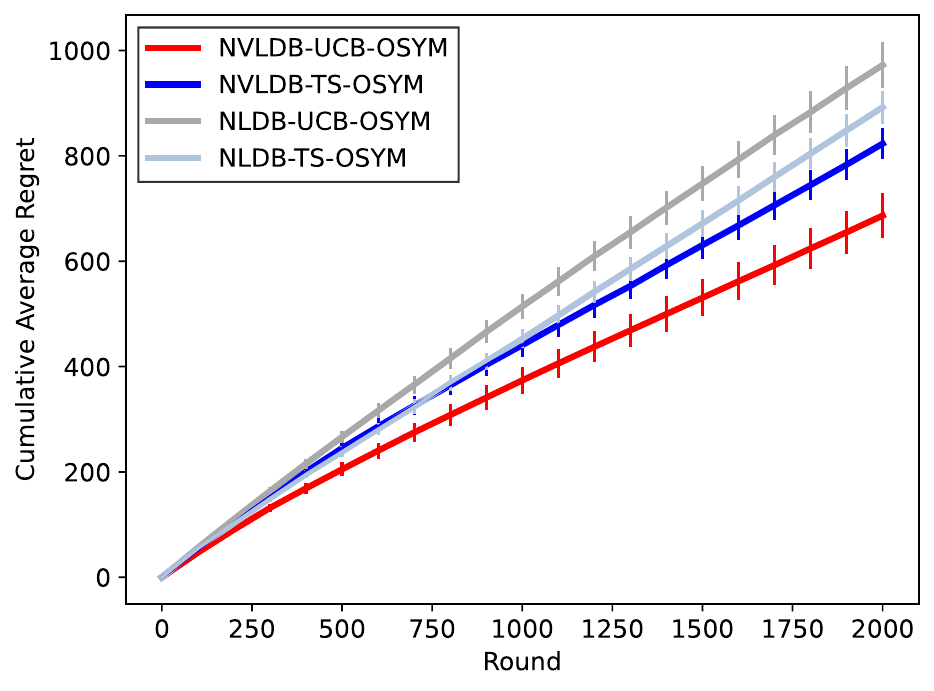}\\
    {\small (a): $\cos(3\Theta^\intercal x)$} &  {\small (b): $10(\Theta^\intercal x)^2$} & {\small (c): $x^\intercal \Theta\Theta^\intercal x$}
    \end{tabular}
    \caption{Comparison of cumulative average regret between variance-aware (NVLDB) and variance-agnostic (NLDB) variants of our proposed method under optimistic symmetric (OSYM) arm selection on synthetic tasks with $\dim=5$ and $K=5$. Each experiment was conducted across 20 random seeds.}
    \label{fig:NV2}
\end{figure*}

\begin{figure*}[h!]
    \centering
    \begin{tabular}{ccc}
    \includegraphics[width=0.3\columnwidth]{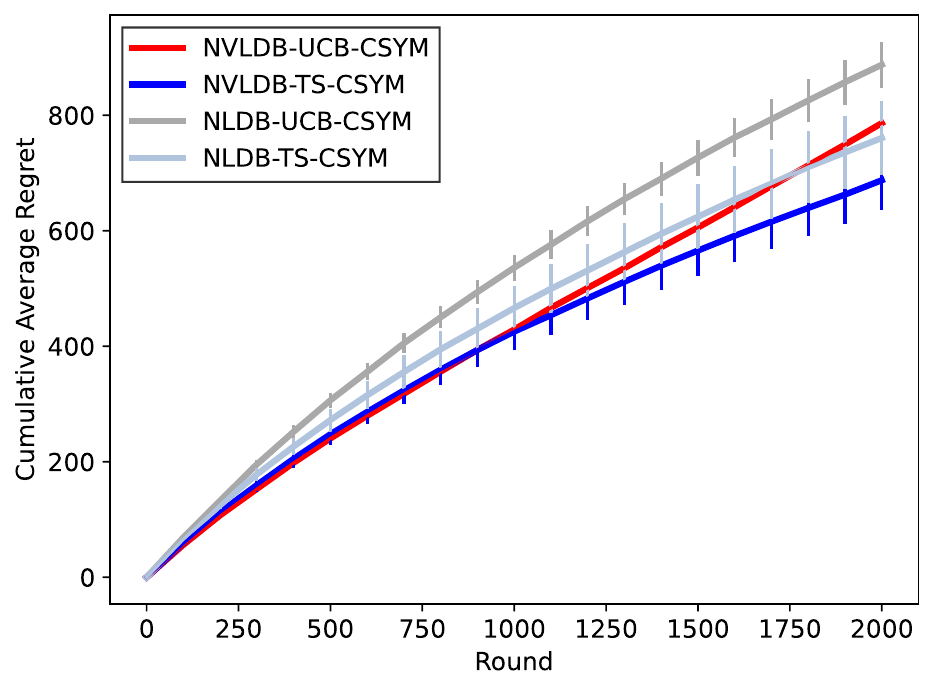}&
    \includegraphics[width=0.3\columnwidth]{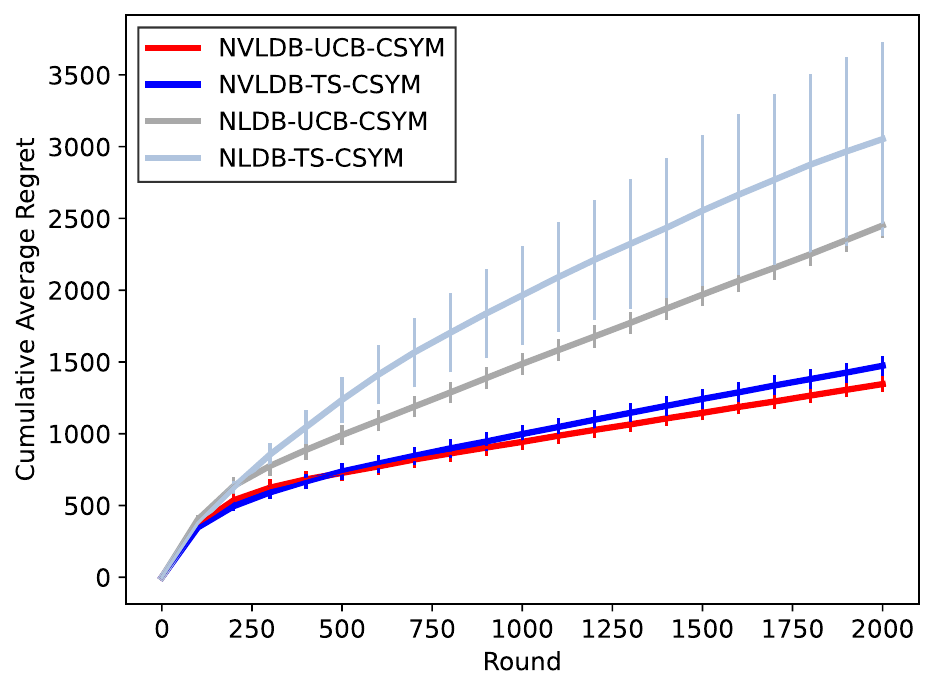}&
    \includegraphics[width=0.3\columnwidth]{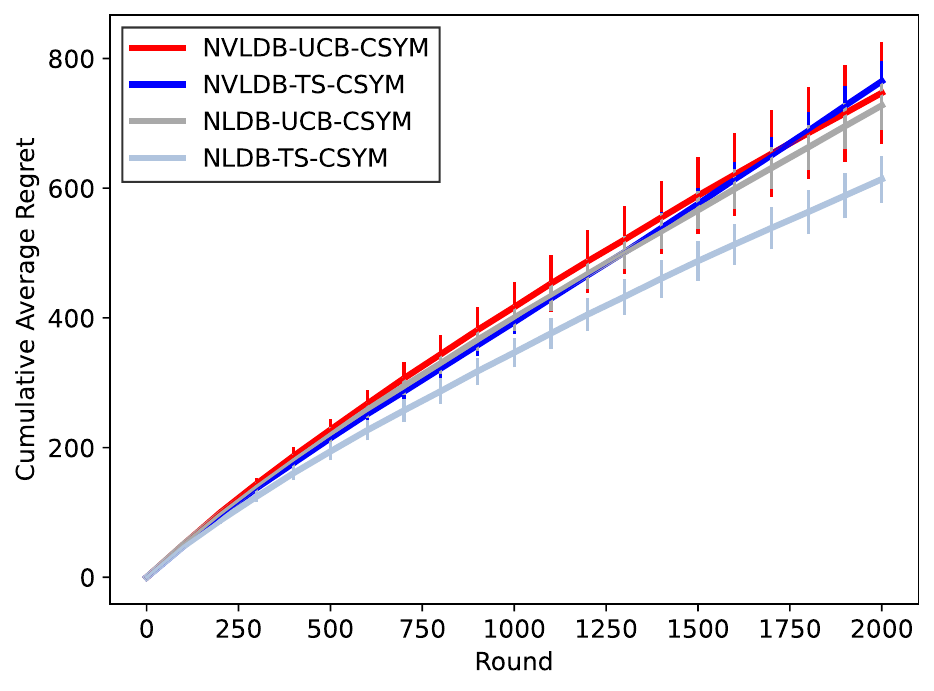}\\
    {\small (a): $\cos(3\Theta^\intercal x)$} &  {\small (b): $10(\Theta^\intercal x)^2$} & {\small (c): $x^\intercal \Theta\Theta^\intercal x$}
    \end{tabular}
    \caption{Comparison of cumulative average regret between variance-aware (NVLDB) and variance-agnostic (NLDB) variants of our proposed method under candidate-based symmetric (CSYM) arm selection on synthetic tasks with $\dim=5$ and $K=5$. Each experiment was conducted across 20 random seeds.}
    \label{fig:NV3}
\end{figure*}

\paragraph{Different Confidence Interval Coefficients}
For practical reasons, we chose and fixed the confidence interval coefficient $\cwct$. The selection of a fixed $\cwct$ has been widely adopted in similar neural bandit studies \citep{shallow, ndb, bui2024variance, zhou2020neural}, as previously stated. Intuitively, the parameter $\cwct$ controls the trade-off between exploration and exploitation. Both excessively small and excessively large values of $\cwct$ are detrimental to establishing sublinear cumulative average regret. A small $\cwct$ leads to weak exploration, potentially trapping the algorithm in suboptimal regions, while a large $\cwct$ conducts excessive and inefficient exploration. Indeed, excessively large or small values of this coefficient result in looser upper bounds during the proof of \cref{theorem:app-main_ucb} through \cref{theorem:app-main_ts2}.
We empirically evaluated the cumulative average regret for various values of $\beta$. \cref{fig:NV4} compares the cumulative average regrets of \term using asymmetric arm selection under the UCB regime. The results show that a moderate value of $\cwct$ provides the best performance, and performance degrades when $\cwct$ is too small or too large. This observation aligns well with our theoretical analysis regarding the exploration-exploitation balance.


\begin{figure*}[h!]
    \centering
    \begin{tabular}{ccc}
    \includegraphics[width=0.3\columnwidth]{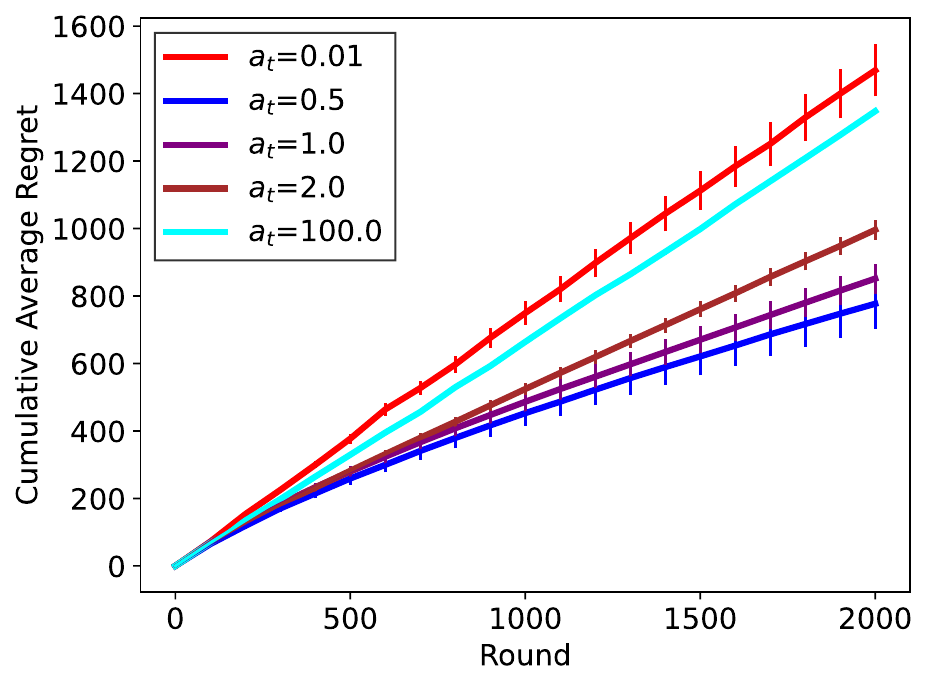}&
    \includegraphics[width=0.3\columnwidth]{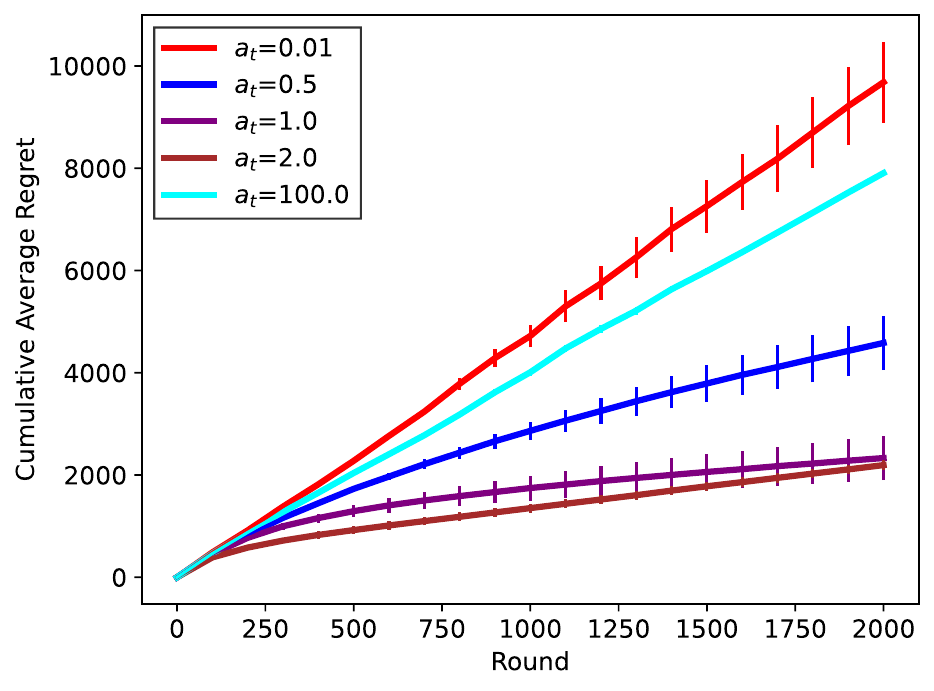}&
    \includegraphics[width=0.3\columnwidth]{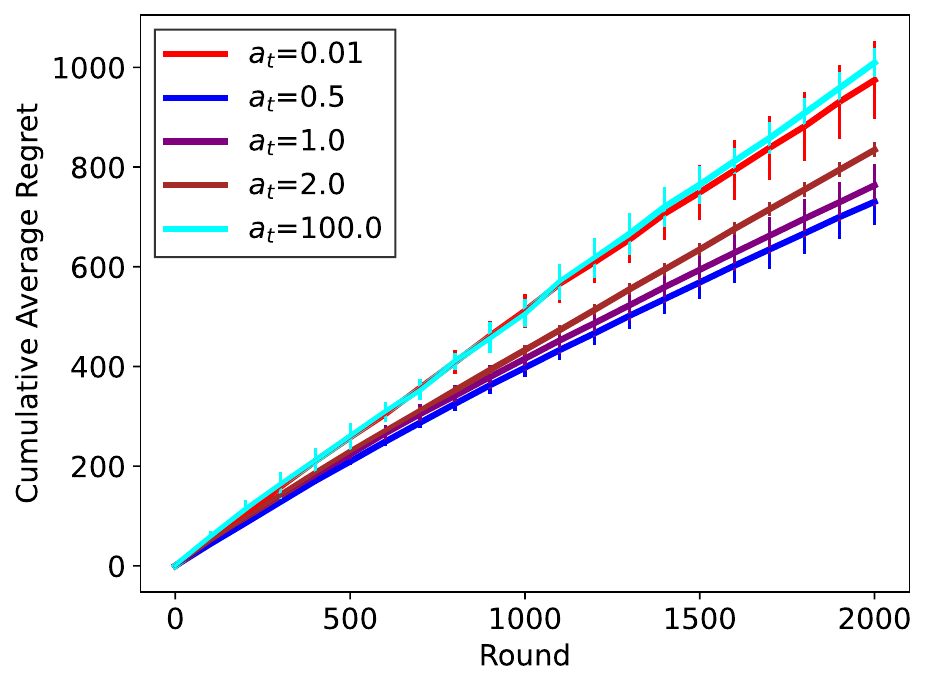}\\
    {\small (a): $\cos(3\Theta^\intercal x)$} &  {\small (b): $10(\Theta^\intercal x)^2$} & {\small (c): $x^\intercal \Theta\Theta^\intercal x$}
    \end{tabular}
    \caption{Comparison of cumulative average regret of \term with asymmetric arm selection strategy arm selection under UCB regime across different confidence interval coefficient $a_t$ on synthetic tasks with $\dim=5$ and $K=5$. Each experiment was conducted across 20 random seeds.}
    \label{fig:NV4}
\end{figure*}

\paragraph{Computational Efficiency.}
We compared the wall-clock times of \term-UCB-ASYM, which is one of our methods, and NDB~\citep{ndb} over 2,000 rounds across 20 random seeds. Measurements were taken on an NVIDIA GeForce RTX 3060 GPU paired with an AMD Ryzen 9 5950X 16-core CPU.

As a result, the proposed method significantly outperforms NDB in terms of computational efficiency. Specifically, \textbf{\term-UCB-ASYM completes 2,000 rounds in an average of $1.71 \pm 0.78$ minutes, while NDB requires $48.8 \pm 15.5$ minutes under the same conditions.} This substantial disparity is justified by the underlying computational processes: NDB utilizes gradients with respect to the full set of learnable parameters in the neural network, whereas our method utilizes gradients with respect to only the learnable parameters of the last layer to construct the Gram matrix.


\newpage

{\color{black}
\subsection{Network Width Sensitivity}\label{sec:width_sensitivity}
To verify scalability with respect to network width, we extended experiments to $\wid \in \{500, 1000\}$, confirming sublinear regrets over $T=10{,}000$ rounds (square utility, $\dim=5$, $K=5$) for all \term variants. \cref{tab:width_sensitivity} summarizes the $R(T)/T$ values, and \cref{fig:width_avg,fig:width_total} visualize the cumulative average and total regret curves, respectively.

\begin{table}[h]
\centering
\caption{$R(T)/T$ values across network width $\wid \in \{500, 1000\}$ on the square utility task ($\dim=5$, $K=5$). All tested algorithms exhibit steadily decreasing $R(T)/T$, following $\widetilde{\mathcal{O}}(T^{-1/2})$ convergence. Each experiment is repeated across 20 random seeds.}
\label{tab:width_sensitivity}
\resizebox{0.8\textwidth}{!}{
\begin{tabular}{lccc}
\toprule
Algorithm & $R(2000)/2000$ & $R(6000)/6000$ & $R(10000)/10000$ \\
\midrule
NVLDB-UCB-ASYM ($\wid=500$) & $1.66 \pm 0.78$ & $0.82 \pm 0.39$ & $0.54 \pm 0.19$ \\
NVLDB-UCB-OSYM ($\wid=500$) & $0.97 \pm 0.17$ & $0.51 \pm 0.07$ & $0.39 \pm 0.04$ \\
NVLDB-UCB-CSYM ($\wid=500$) & $1.13 \pm 0.35$ & $0.61 \pm 0.13$ & $0.46 \pm 0.07$ \\
NVLDB-UCB-ASYM ($\wid=1000$) & $1.37 \pm 0.22$ & $0.65 \pm 0.09$ & $0.42 \pm 0.04$ \\
NVLDB-UCB-OSYM ($\wid=1000$) & $1.73 \pm 0.65$ & $0.81 \pm 0.27$ & $0.52 \pm 0.13$ \\
NVLDB-UCB-CSYM ($\wid=1000$) & $1.54 \pm 0.65$ & $0.73 \pm 0.27$ & $0.51 \pm 0.14$ \\
\bottomrule
\end{tabular}
}
\end{table}

\begin{figure*}[h!]
    \centering
    \begin{tabular}{ccc}
    \includegraphics[width=0.3\columnwidth]{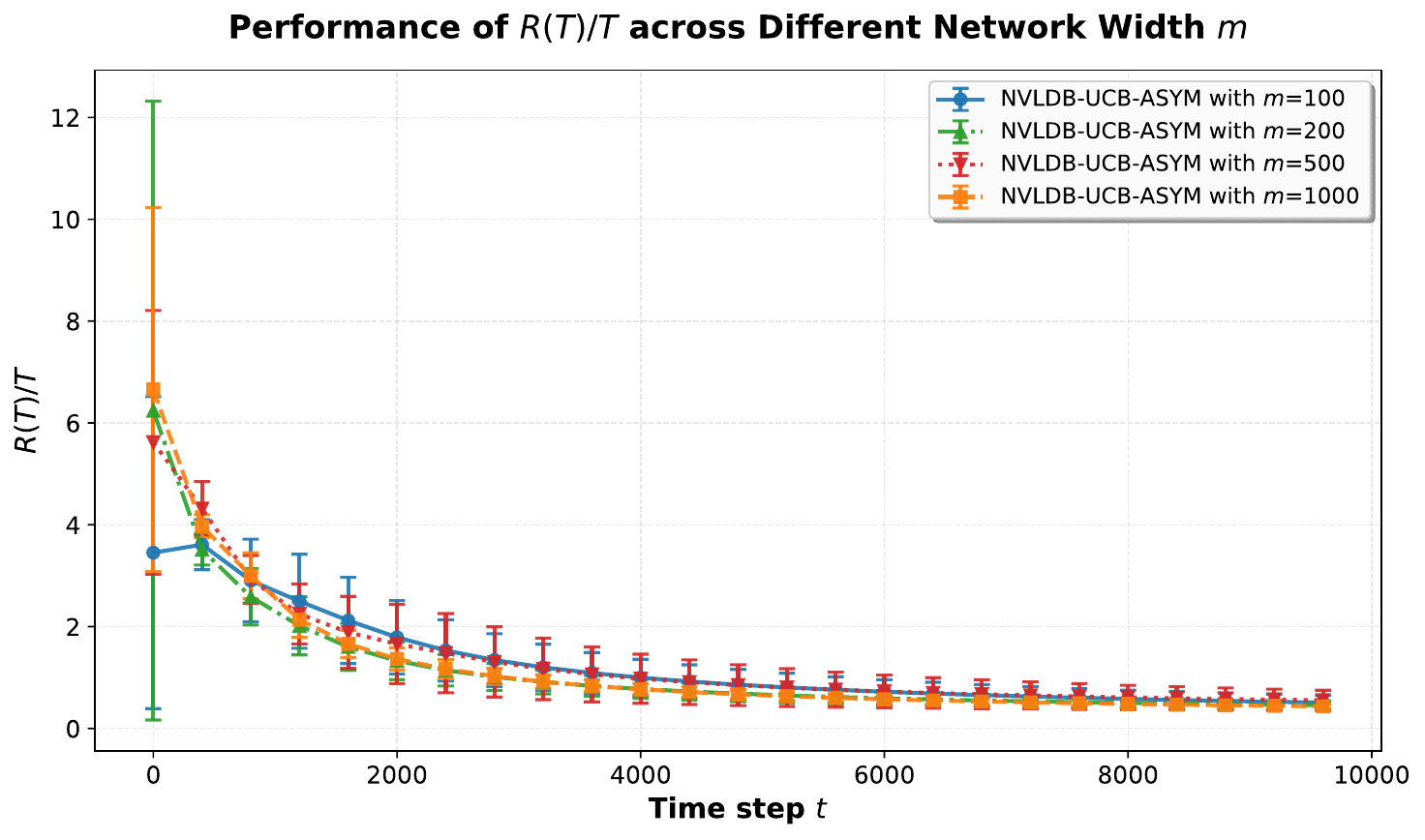}&
    \includegraphics[width=0.3\columnwidth]{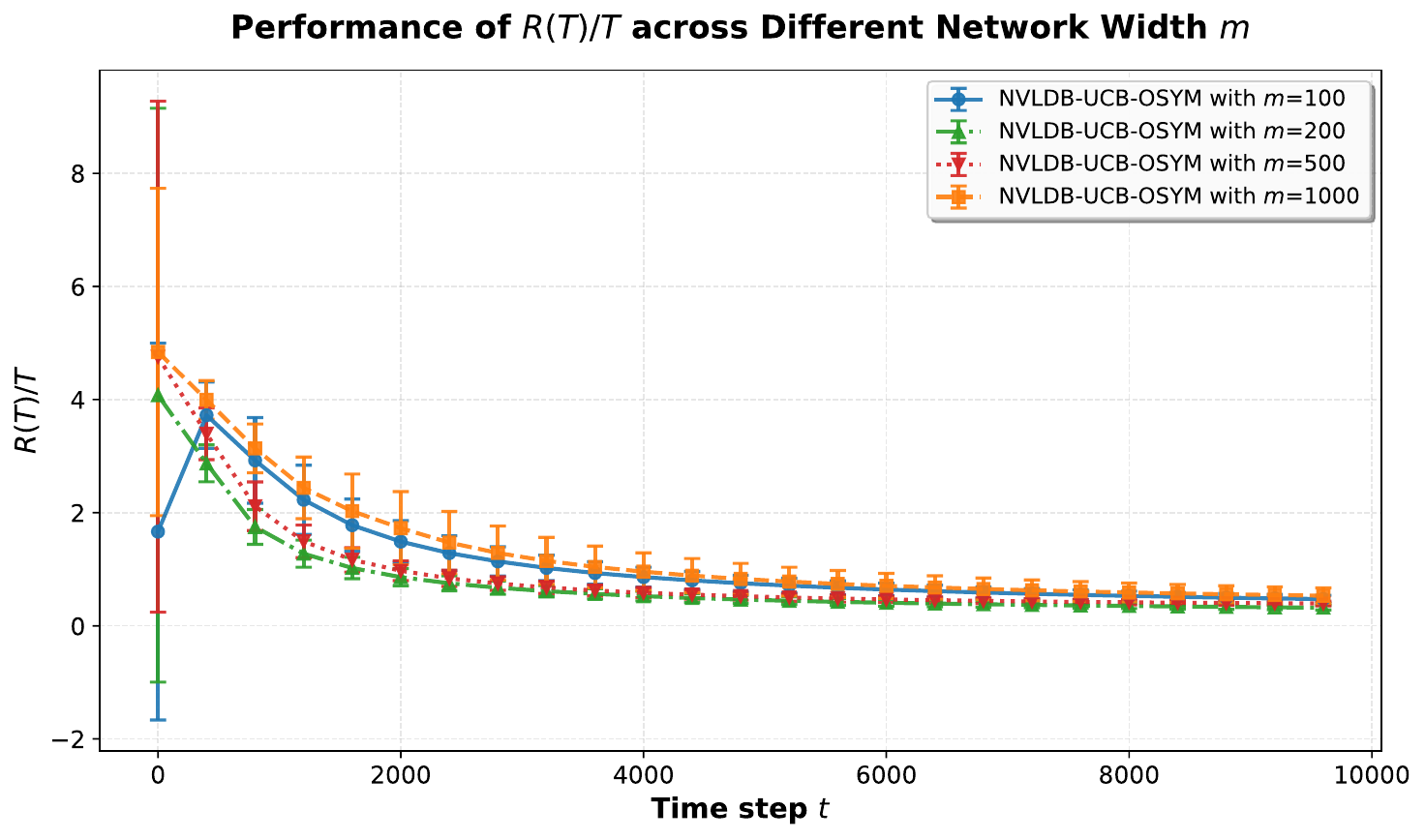}&
    \includegraphics[width=0.3\columnwidth]{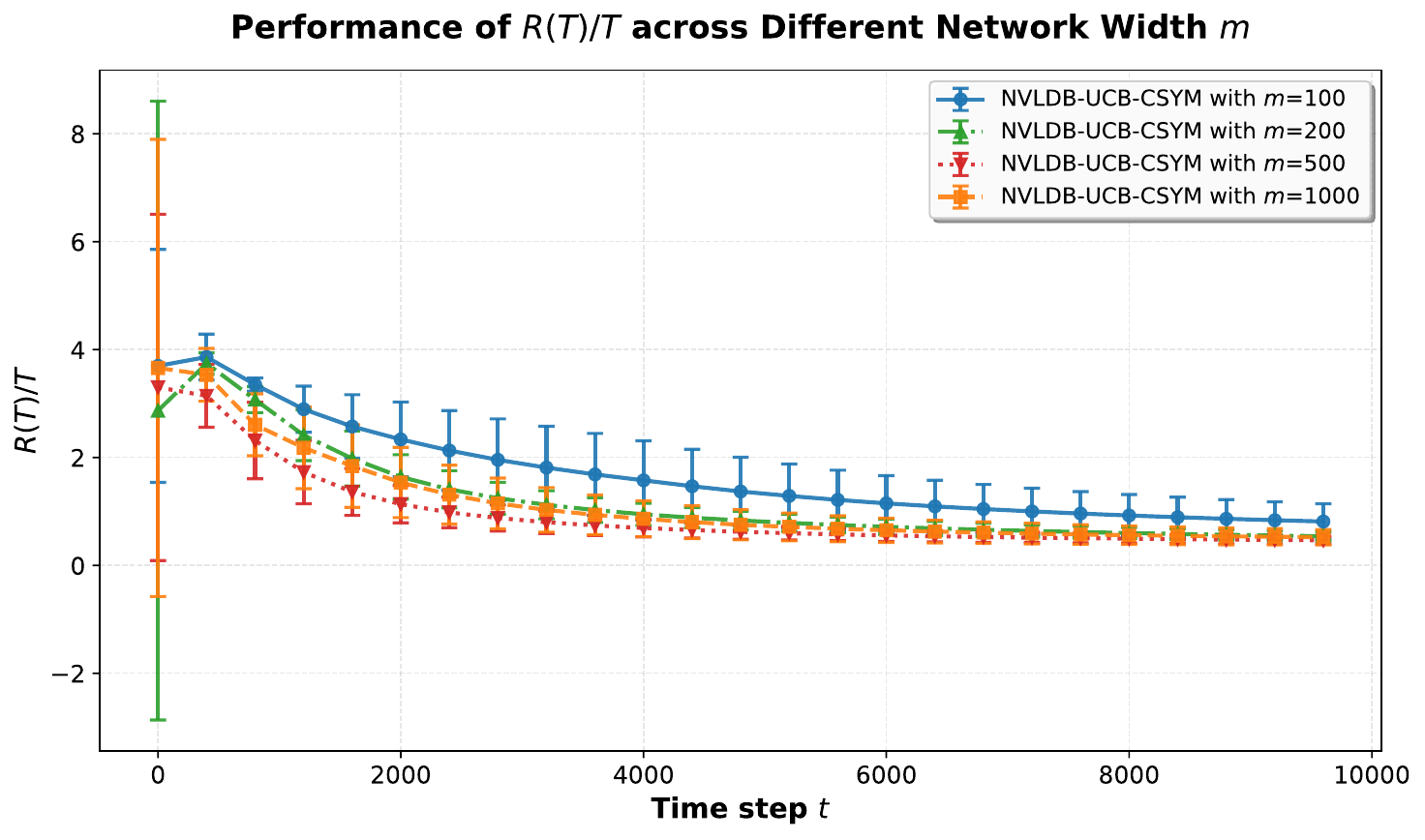}\\
    {\small (a): UCB-ASYM} & {\small (b): UCB-OSYM} & {\small (c): UCB-CSYM} \\[6pt]
    \includegraphics[width=0.3\columnwidth]{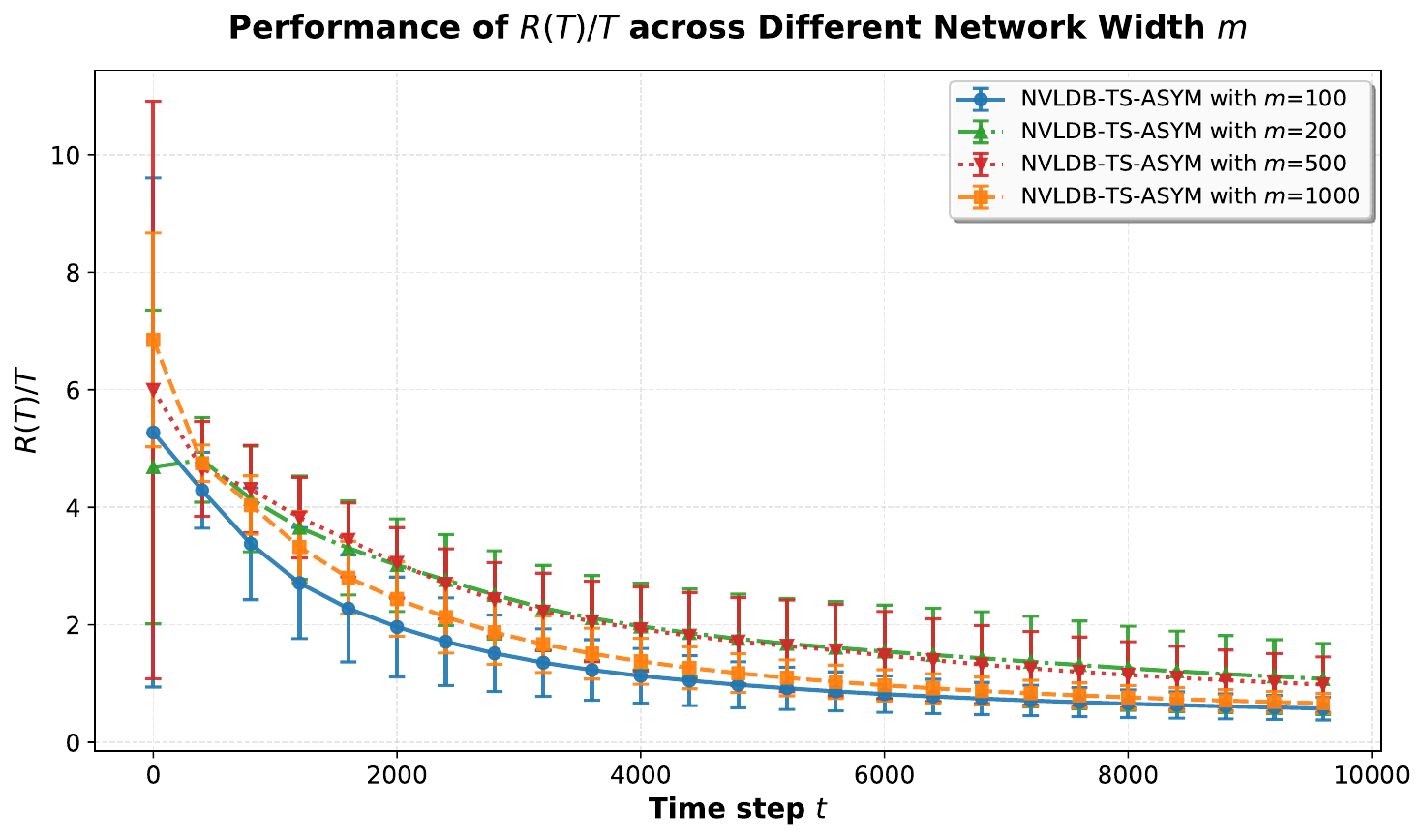}&
    \includegraphics[width=0.3\columnwidth]{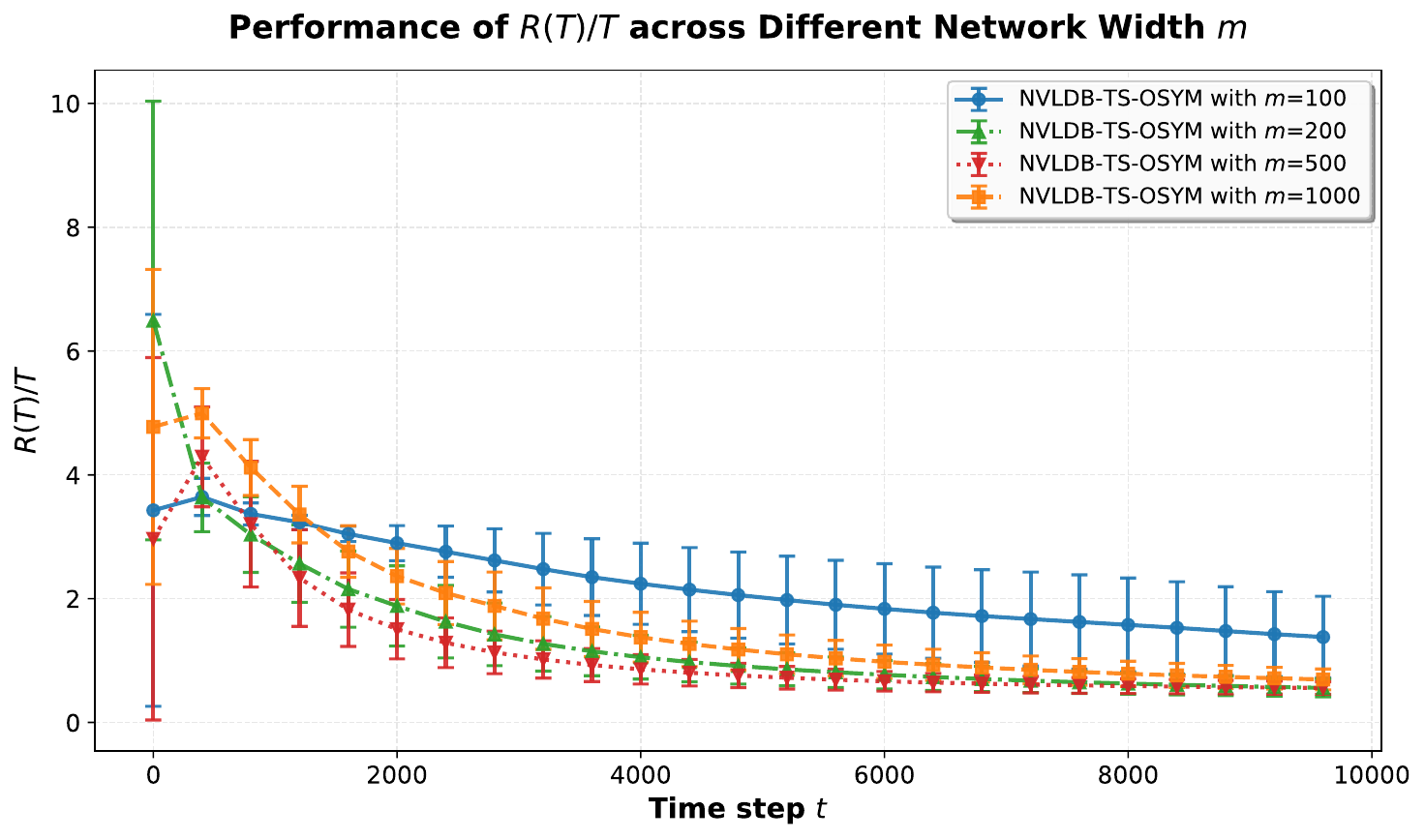}&
    \includegraphics[width=0.3\columnwidth]{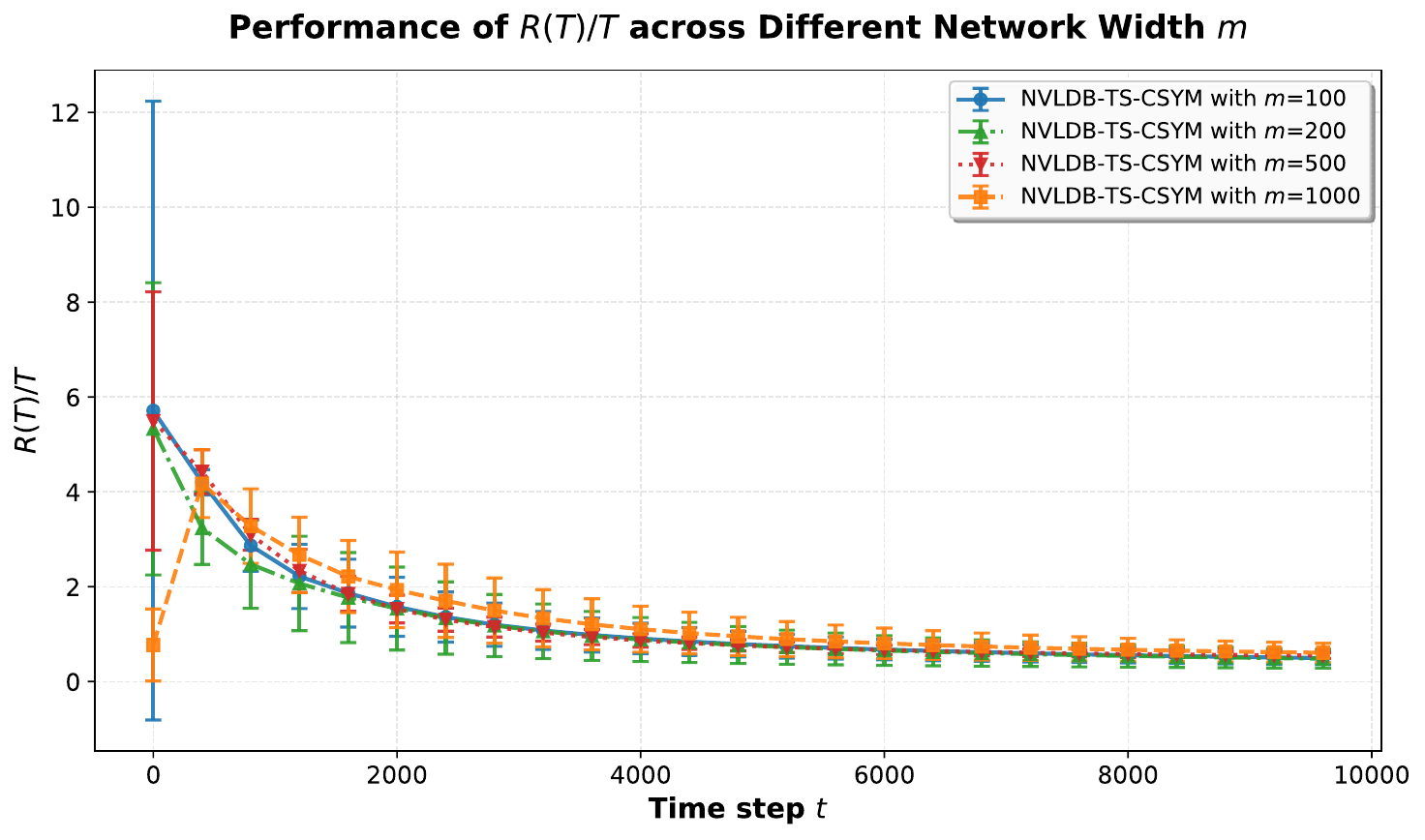}\\
    {\small (d): TS-ASYM} & {\small (e): TS-OSYM} & {\small (f): TS-CSYM}
    \end{tabular}
    \caption{\small Cumulative average regret across different network widths ($\wid \in \{500, 1000\}$) on the square utility task with $\dim=5$ and $K=5$. All \term variants maintain sublinear regret as the network width increases. Each experiment is repeated across 20 random seeds.}
    \label{fig:width_avg}
\end{figure*}

\begin{figure*}[h!]
    \centering
    \begin{tabular}{ccc}
    \includegraphics[width=0.3\columnwidth]{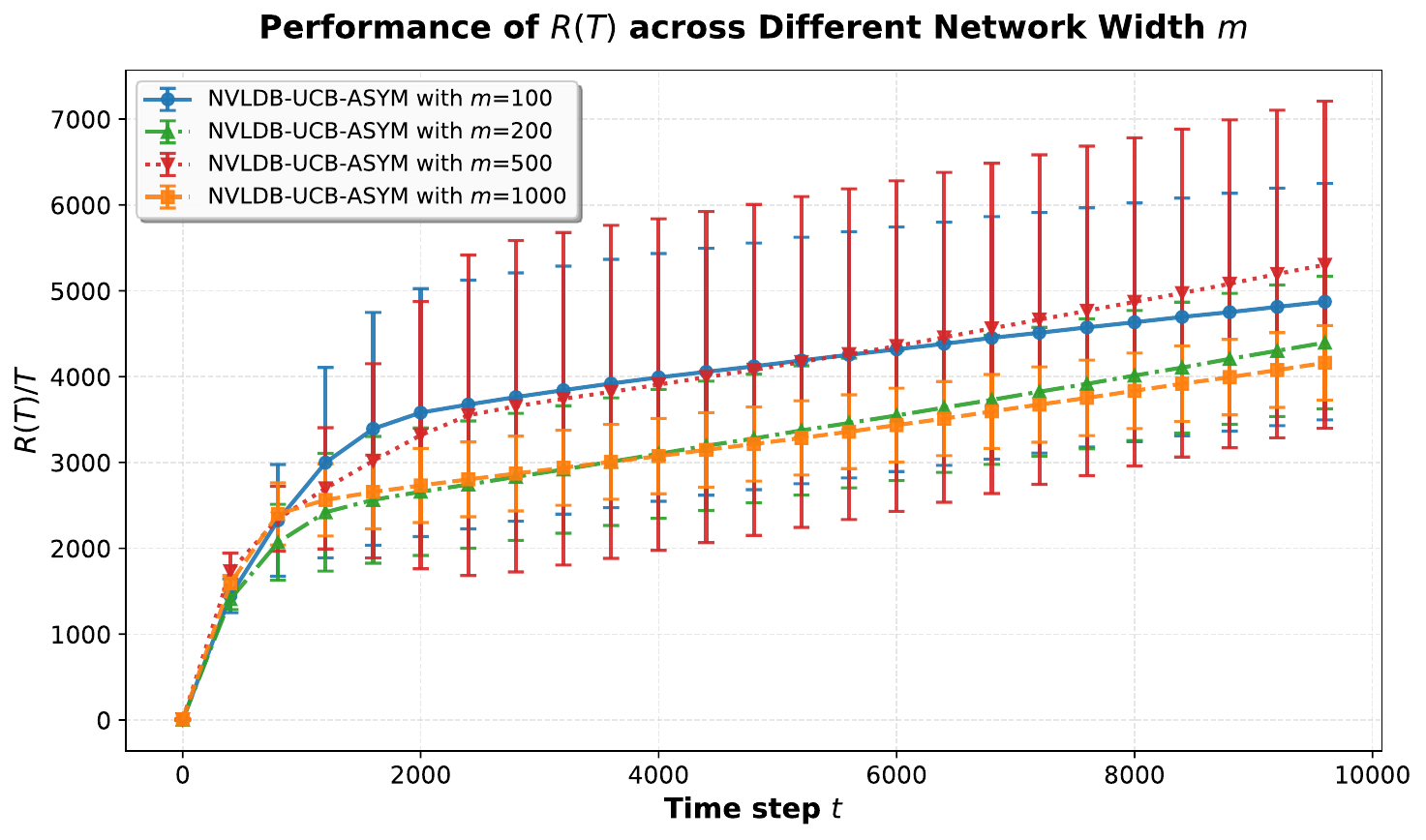}&
    \includegraphics[width=0.3\columnwidth]{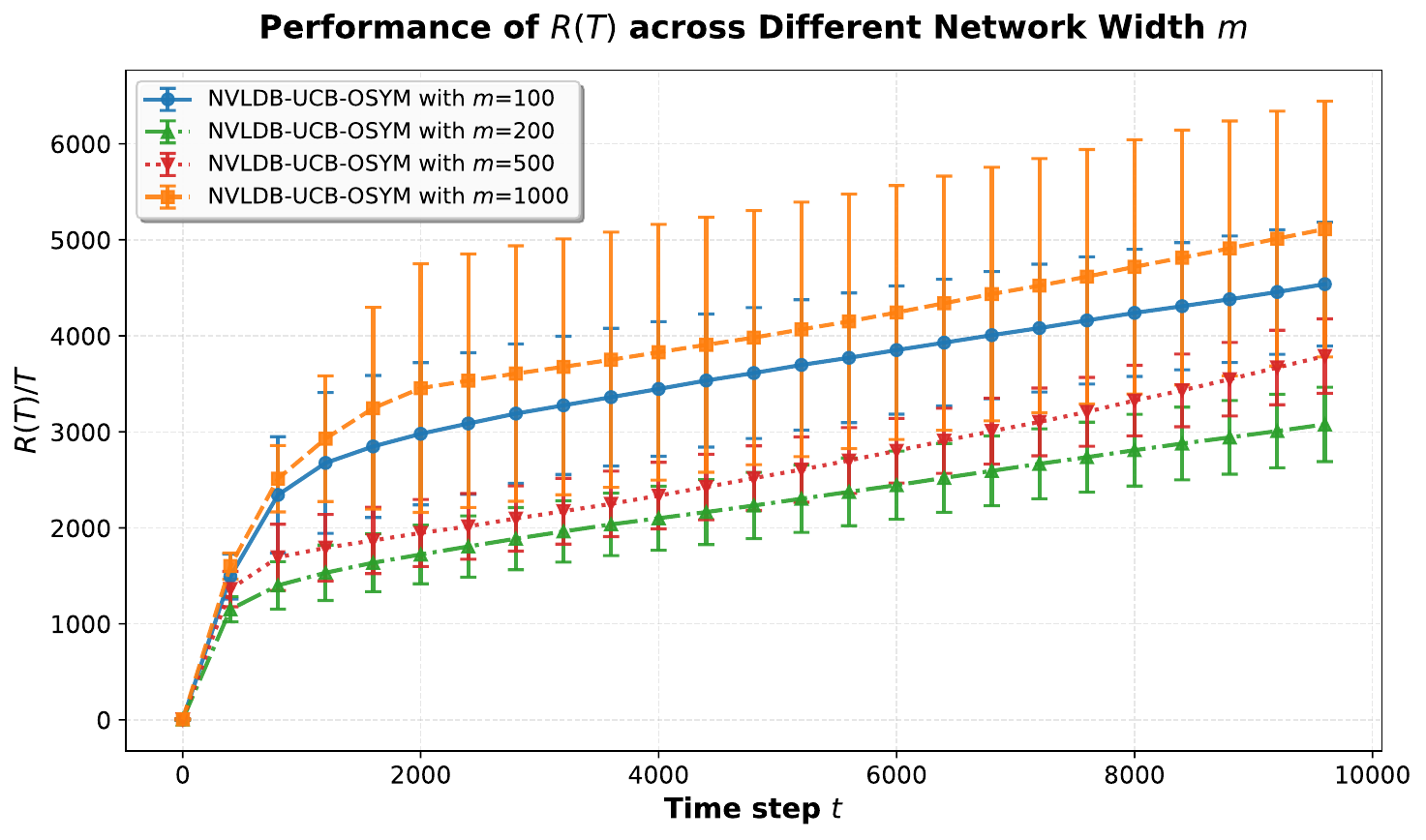}&
    \includegraphics[width=0.3\columnwidth]{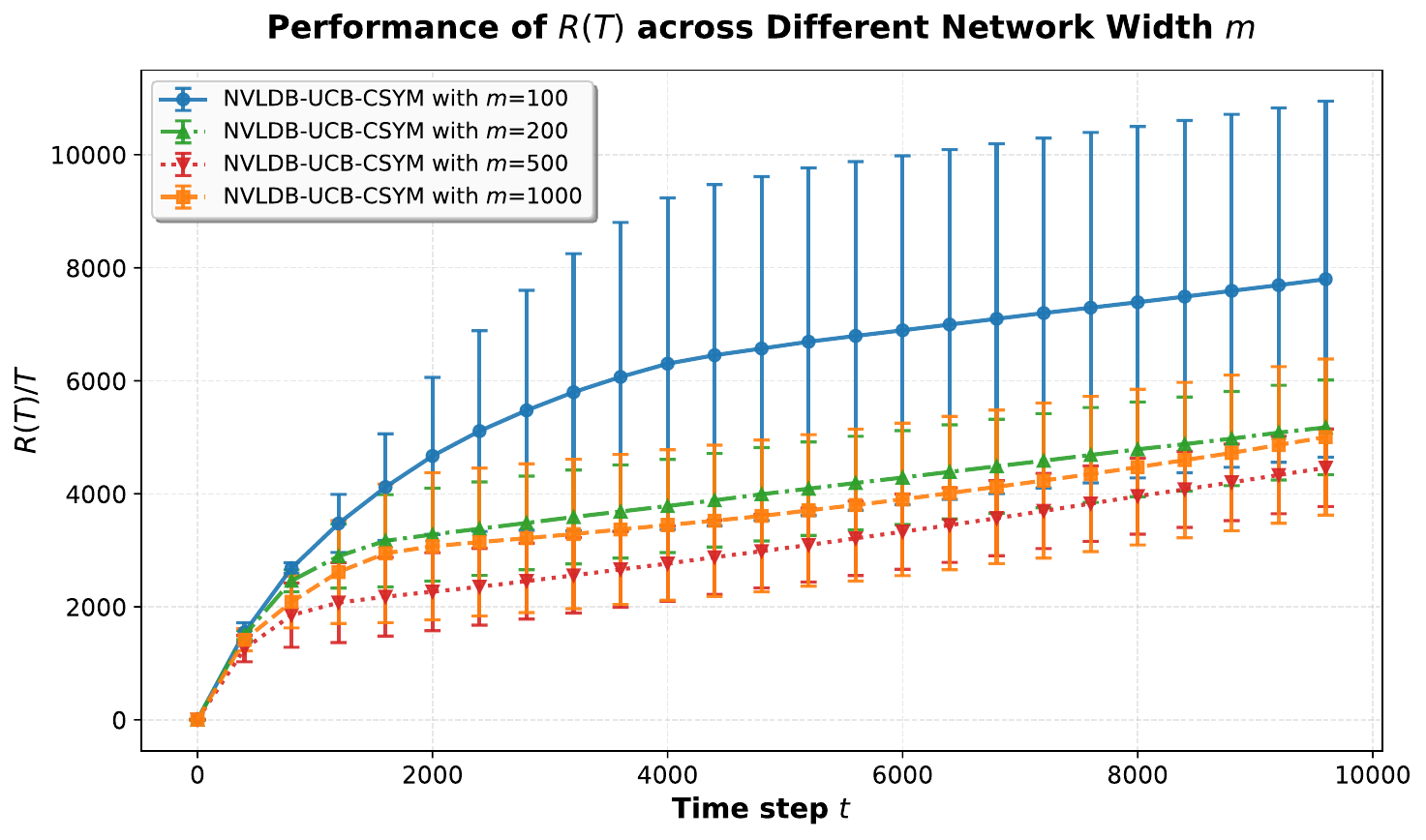}\\
    {\small (a): UCB-ASYM} & {\small (b): UCB-OSYM} & {\small (c): UCB-CSYM} \\[6pt]
    \includegraphics[width=0.3\columnwidth]{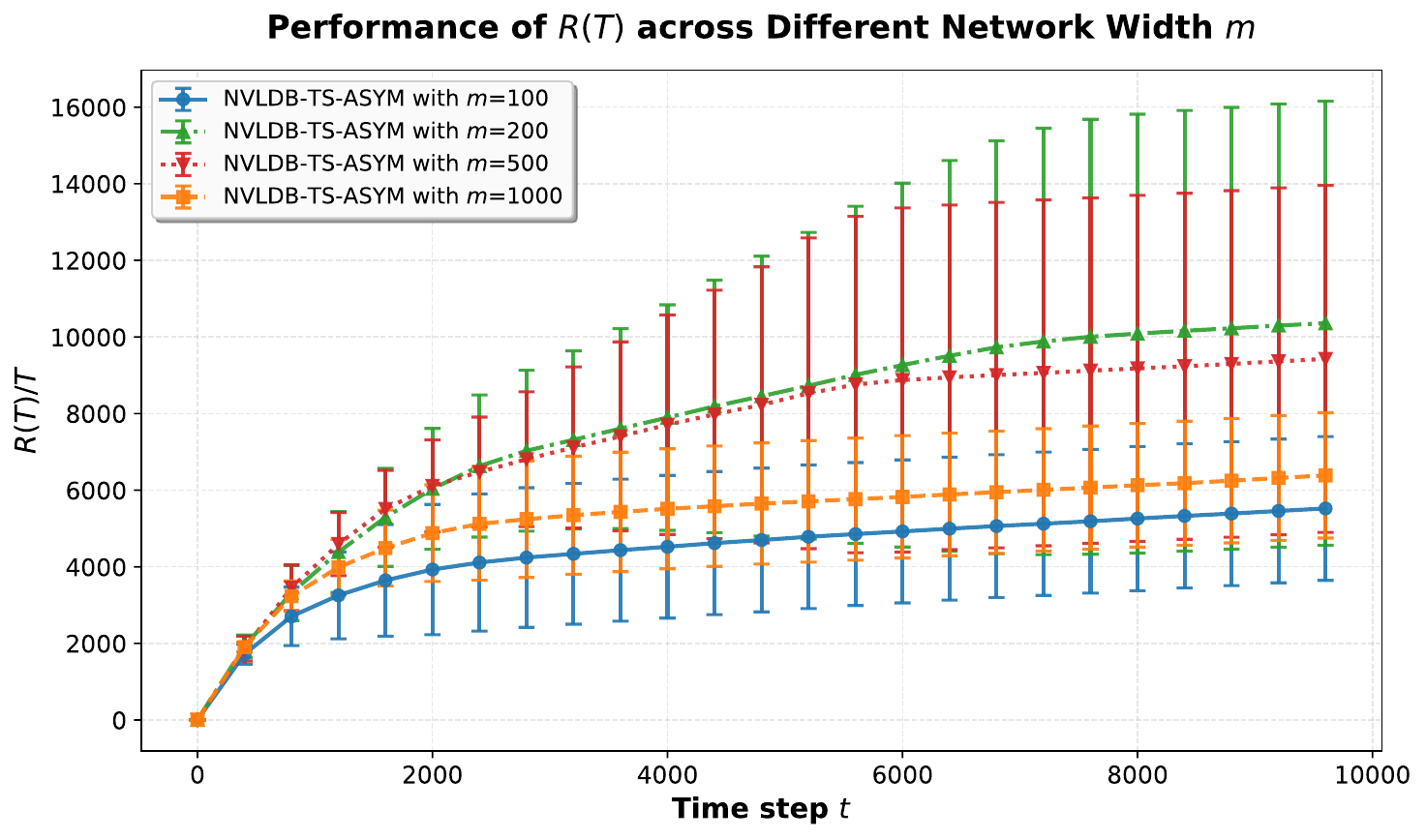}&
    \includegraphics[width=0.3\columnwidth]{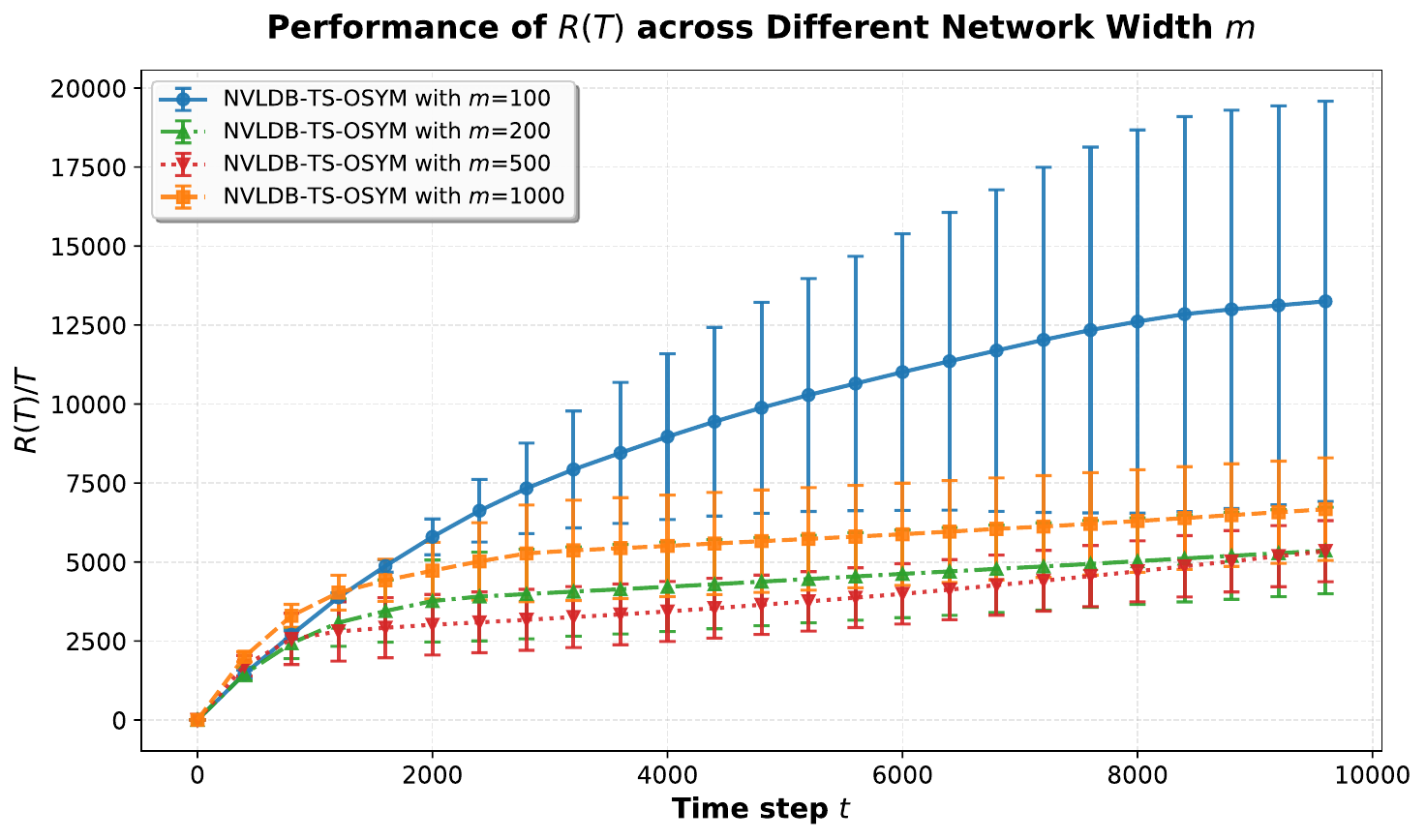}&
    \includegraphics[width=0.3\columnwidth]{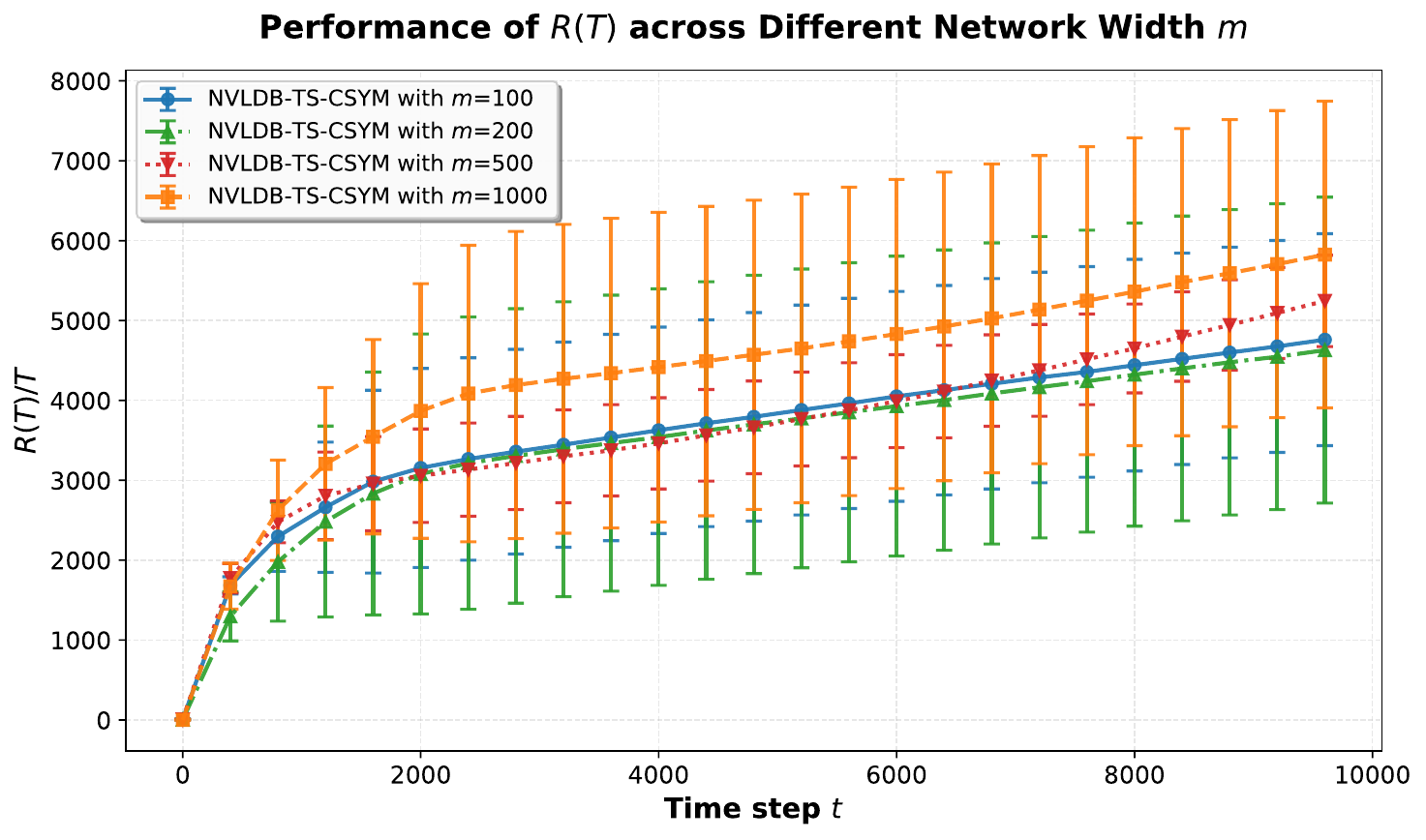}\\
    {\small (d): TS-ASYM} & {\small (e): TS-OSYM} & {\small (f): TS-CSYM}
    \end{tabular}
    \caption{\small Cumulative total regret across different network widths ($\wid \in \{500, 1000\}$) on the square utility task with $\dim=5$ and $K=5$. Each experiment is repeated across 20 random seeds.}
    \label{fig:width_total}
\end{figure*}

\subsection{High-Dimensional Experiments}\label{sec:high_dim}
To evaluate scalability in higher dimensions, we conducted experiments with $\dim=20$ and $K=5$ over $T=10{,}000$ rounds on the square utility task, significantly exceeding the dimensionality used in prior works~\citep{ndb, bui2024variance}. \cref{tab:high_dim} reports the $R(T)/T$ values, and \cref{fig:high_dim} visualizes the cumulative average and total regret.

\begin{table}[h]
\centering
\caption{$R(T)/T$ values with $\dim=20$ and $K=5$ over $T=10{,}000$ rounds on the square utility task. All tested algorithms exhibit steadily decreasing $R(T)/T$, following $\widetilde{\mathcal{O}}(T^{-1/2})$ convergence. Each experiment is repeated across 20 random seeds.}
\label{tab:high_dim}
\resizebox{0.8\textwidth}{!}{
\begin{tabular}{lccc}
\toprule
Algorithm & $R(2000)/2000$ & $R(6000)/6000$ & $R(10000)/10000$ \\
\midrule
NVLDB-UCB-ASYM & $9.33 \pm 1.89$ & $6.79 \pm 1.09$ & $6.05 \pm 0.74$ \\
NVLDB-UCB-OSYM & $9.49 \pm 2.05$ & $6.82 \pm 1.11$ & $6.07 \pm 0.83$ \\
NVLDB-UCB-CSYM & $10.76 \pm 1.29$ & $7.69 \pm 0.80$ & $6.73 \pm 0.58$ \\
\bottomrule
\end{tabular}
}
\end{table}

\begin{figure*}[h!]
    \centering
    \begin{tabular}{cc}
    \includegraphics[width=0.4\columnwidth]{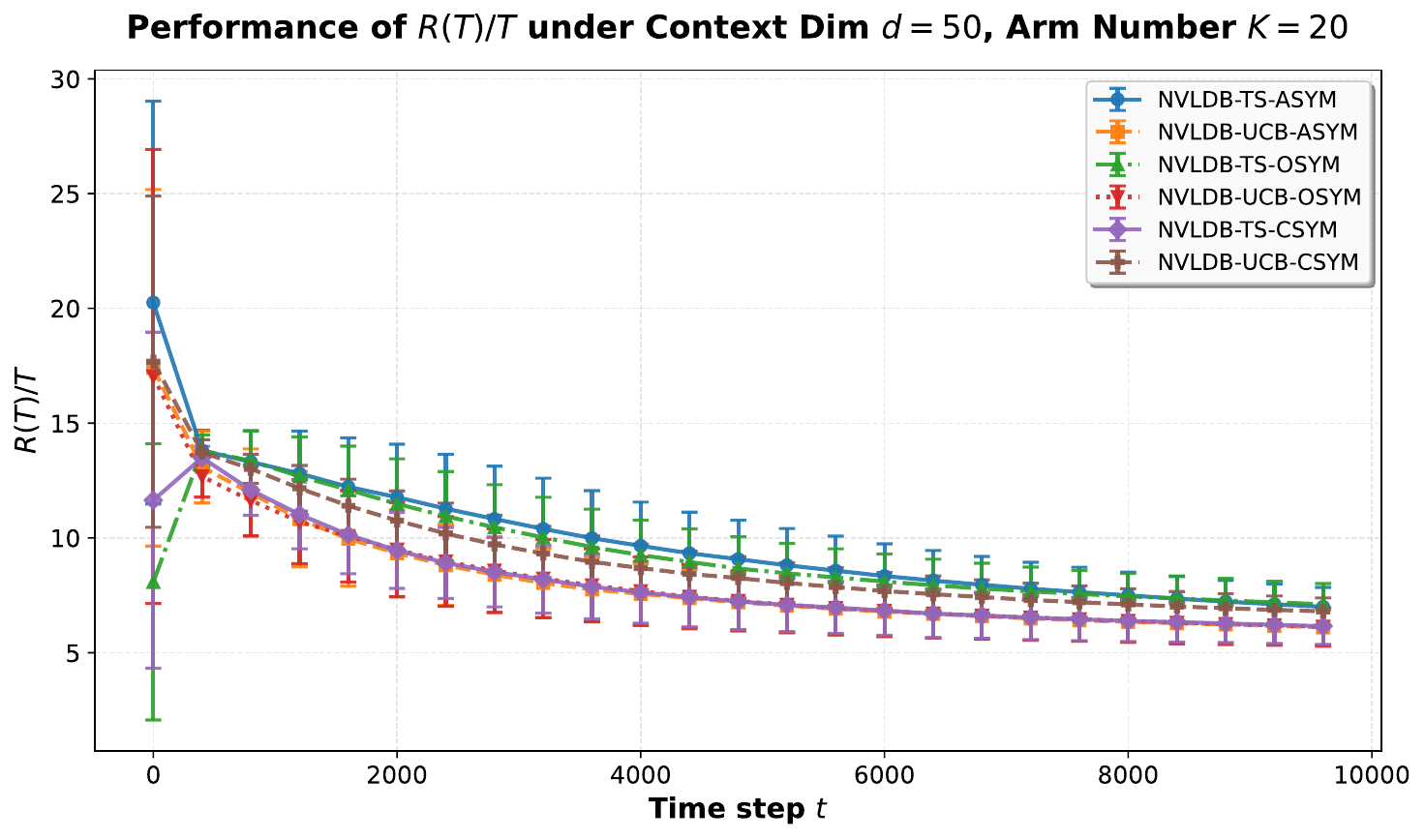}&
    \includegraphics[width=0.4\columnwidth]{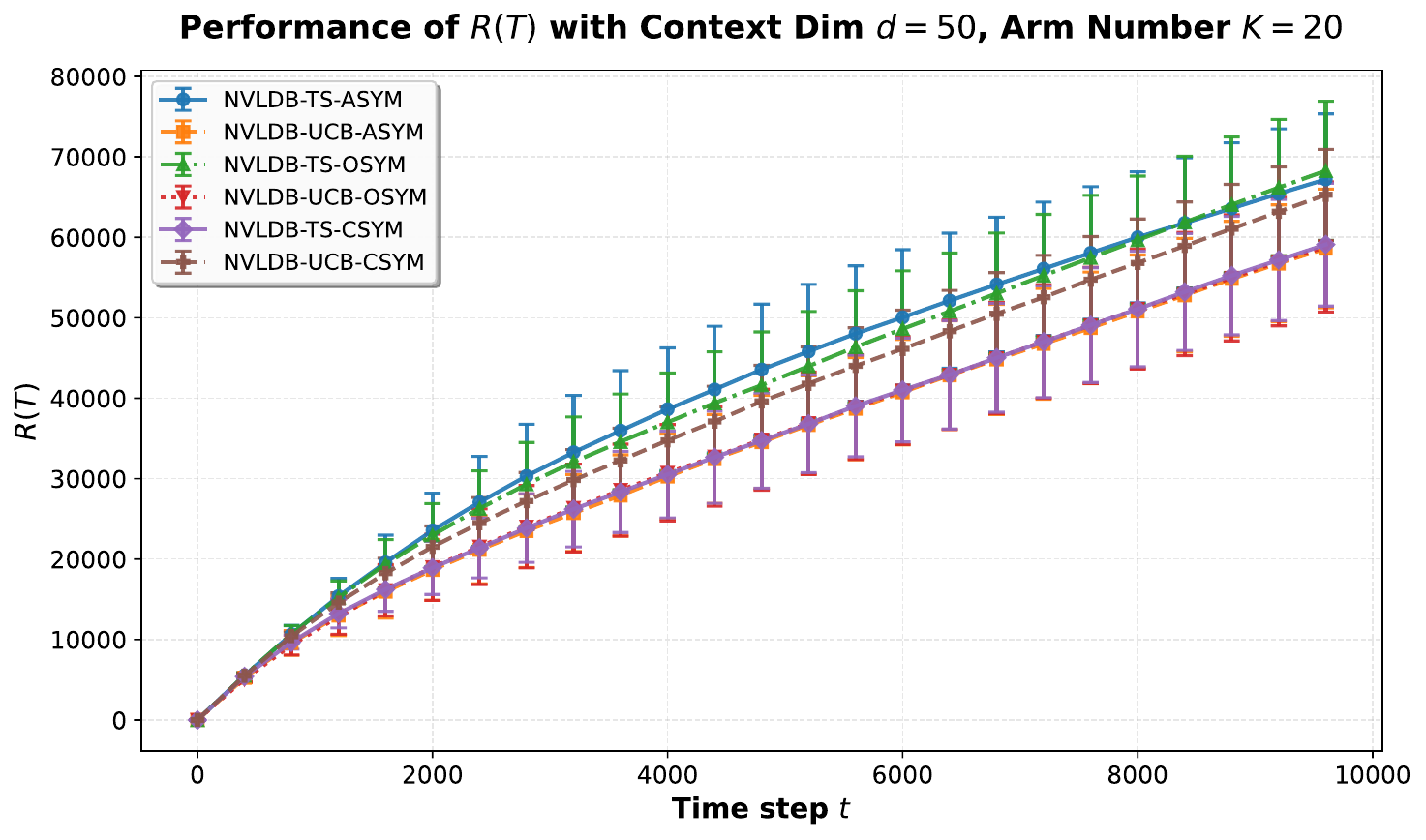}\\
    {\small (a): Cumulative average regret} & {\small (b): Cumulative total regret}
    \end{tabular}
    \caption{\small Cumulative average and total regret with $\dim=20$ and $K=5$ on the square utility task over $T=10{,}000$ rounds. All \term variants maintain sublinear regret in the high-dimensional regime. Each experiment is repeated across 20 random seeds.}
    \label{fig:high_dim}
\end{figure*}
}

\begin{thebibliography}{44}
\providecommand{\natexlab}[1]{#1}
\providecommand{\url}[1]{\texttt{#1}}
\expandafter\ifx\csname urlstyle\endcsname\relax
  \providecommand{\doi}[1]{doi: #1}\else
  \providecommand{\doi}{doi: \begingroup \urlstyle{rm}\Url}\fi

\bibitem[Abbasi-Yadkori et~al.(2011)Abbasi-Yadkori, P{\'a}l, and
  Szepesv{\'a}ri]{linucb}
Yasin Abbasi-Yadkori, D{\'a}vid P{\'a}l, and Csaba Szepesv{\'a}ri.
\newblock Improved algorithms for linear stochastic bandits.
\newblock \emph{Advances in neural information processing systems}, 24, 2011.

\bibitem[Amari and Douglas(1998)]{amari1998natural}
Shun-Ichi Amari and Scott~C Douglas.
\newblock Why natural gradient?
\newblock In \emph{Proceedings of the 1998 IEEE International Conference on
  Acoustics, Speech and Signal Processing, ICASSP'98}, volume~2, pages
  1213--1216. IEEE, 1998.

\bibitem[Ban et~al.(2021)Ban, Yan, Banerjee, and He]{ban2021ee}
Yikun Ban, Yuchen Yan, Arindam Banerjee, and Jingrui He.
\newblock Ee-net: Exploitation-exploration neural networks in contextual
  bandits.
\newblock \emph{arXiv preprint arXiv:2110.03177}, 2021.

\bibitem[Bengs et~al.(2022)Bengs, Saha, and H{\"u}llermeier]{lst}
Viktor Bengs, Aadirupa Saha, and Eyke H{\"u}llermeier.
\newblock Stochastic contextual dueling bandits under linear stochastic
  transitivity models.
\newblock In \emph{International Conference on Machine Learning}, pages
  1764--1786. PMLR, 2022.

\bibitem[Bui et~al.(2024)Bui, Mallada, and Liu]{bui2024variance}
Ha~Manh Bui, Enrique Mallada, and Anqi Liu.
\newblock Variance-aware linear ucb with deep representation for neural
  contextual bandits.
\newblock \emph{arXiv preprint arXiv:2411.05979}, 2024.

\bibitem[Cao and Gu(2019)]{cao2019generalization}
Yuan Cao and Quanquan Gu.
\newblock Generalization bounds of stochastic gradient descent for wide and
  deep neural networks.
\newblock \emph{Advances in neural information processing systems}, 32, 2019.

\bibitem[Chen et~al.(1999)Chen, Hu, and Ying]{chen1999strong}
Kani Chen, Inchi Hu, and Zhiliang Ying.
\newblock Strong consistency of maximum quasi-likelihood estimators in
  generalized linear models with fixed and adaptive designs.
\newblock \emph{The Annals of Statistics}, 27\penalty0 (4):\penalty0
  1155--1163, 1999.

\bibitem[Dai et~al.(2022)Dai, Shu, ~, Fan, Low, and Jaillet]{dai2022federated}
Zhongxiang Dai, Yao Shu, Arun ~, Flint~Xiaofeng Fan, Bryan Kian~Hsiang Low, and
  Patrick Jaillet.
\newblock Federated neural bandits.
\newblock \emph{arXiv preprint arXiv:2205.14309}, 2022.

\bibitem[Deb et~al.(2023)Deb, Ban, Zuo, He, and Banerjee]{deb2023contextual}
Rohan Deb, Yikun Ban, Shiliang Zuo, Jingrui He, and Arindam Banerjee.
\newblock Contextual bandits with online neural regression.
\newblock \emph{arXiv preprint arXiv:2312.07145}, 2023.

\bibitem[Di et~al.(2023)Di, Jin, Wu, Zhao, Farnoud, and Gu]{variance-aware}
Qiwei Di, Tao Jin, Yue Wu, Heyang Zhao, Farzad Farnoud, and Quanquan Gu.
\newblock Variance-aware regret bounds for stochastic contextual dueling
  bandits.
\newblock In \emph{The Twelfth International Conference on Learning
  Representations}, 2023.

\bibitem[Dmochowski et~al.(2010)Dmochowski, Sajda, and
  Parra]{dmochowski2010maximum}
Jacek~P Dmochowski, Paul Sajda, and Lucas~C Parra.
\newblock Maximum likelihood in cost-sensitive learning: Model specification,
  approximations, and upper bounds.
\newblock \emph{Journal of Machine Learning Research}, 11\penalty0 (12), 2010.

\bibitem[Dud{\'\i}k et~al.(2015)Dud{\'\i}k, Hofmann, Schapire, Slivkins, and
  Zoghi]{dudik2015contextual}
Miroslav Dud{\'\i}k, Katja Hofmann, Robert~E Schapire, Aleksandrs Slivkins, and
  Masrour Zoghi.
\newblock Contextual dueling bandits.
\newblock In \emph{Conference on Learning Theory}, pages 563--587. PMLR, 2015.

\bibitem[Gilbarg et~al.(1977)Gilbarg, Trudinger, Gilbarg, and
  Trudinger]{gilbarg1977elliptic}
David Gilbarg, Neil~S Trudinger, David Gilbarg, and NS~Trudinger.
\newblock \emph{Elliptic partial differential equations of second order},
  volume 224.
\newblock Springer, 1977.

\bibitem[Horn and Johnson(2012)]{horn2012matrix}
Roger~A. Horn and Charles~R. Johnson.
\newblock \emph{Matrix Analysis}.
\newblock Cambridge University Press, 2nd edition, 2012.

\bibitem[Hunter(2004)]{btmodel}
David~R Hunter.
\newblock Mm algorithms for generalized bradley-terry models.
\newblock \emph{The annals of statistics}, 32\penalty0 (1):\penalty0 384--406,
  2004.

\bibitem[Hwang et~al.(2023)Hwang, Chai, and Oh]{neuralcomb}
Taehyun Hwang, Kyuwook Chai, and Min-hwan Oh.
\newblock Combinatorial neural bandits.
\newblock In \emph{International Conference on Machine Learning}, pages
  14203--14236. PMLR, 2023.

\bibitem[Jacot et~al.(2018)Jacot, Gabriel, and Hongler]{jacot2018neural}
Arthur Jacot, Franck Gabriel, and Cl{\'e}ment Hongler.
\newblock Neural tangent kernel: Convergence and generalization in neural
  networks.
\newblock \emph{Advances in neural information processing systems}, 31, 2018.

\bibitem[Jain and Natarajan(2016)]{noisemodel}
Prateek Jain and Nagarajan Natarajan.
\newblock Regret bounds for non-decomposable metrics with missing labels.
\newblock \emph{arXiv preprint arXiv:1606.02077}, 2016.

\bibitem[Kim et~al.(2022)Kim, Yang, and Jun]{kim2022improvedvar}
Yeoneung Kim, Insoon Yang, and Kwang-Sung Jun.
\newblock Improved regret analysis for variance-adaptive linear bandits and
  horizon-free linear mixture mdps.
\newblock \emph{Advances in Neural Information Processing Systems},
  35:\penalty0 1060--1072, 2022.

\bibitem[Kveton et~al.(2020)Kveton, Zaheer, Szepesvari, Li, Ghavamzadeh, and
  Boutilier]{kveton2020randomized}
Branislav Kveton, Manzil Zaheer, Csaba Szepesvari, Lihong Li, Mohammad
  Ghavamzadeh, and Craig Boutilier.
\newblock Randomized exploration in generalized linear bandits.
\newblock In \emph{International Conference on Artificial Intelligence and
  Statistics}, pages 2066--2076. PMLR, 2020.

\bibitem[Lattimore and Szepesv{\'a}ri(2020)]{lattimore2020bandit}
Tor Lattimore and Csaba Szepesv{\'a}ri.
\newblock \emph{Bandit algorithms}.
\newblock Cambridge University Press, 2020.

\bibitem[Li et~al.(2017)Li, Lu, and Zhou]{li2017provably}
Lihong Li, Yu~Lu, and Dengyong Zhou.
\newblock Provably optimal algorithms for generalized linear contextual
  bandits.
\newblock In \emph{International Conference on Machine Learning}, pages
  2071--2080. PMLR, 2017.

\bibitem[Li et~al.(2024)Li, Zhao, and Gu]{feelgood}
Xuheng Li, Heyang Zhao, and Quanquan Gu.
\newblock Feel-good thompson sampling for contextual dueling bandits.
\newblock \emph{arXiv preprint arXiv:2404.06013}, 2024.

\bibitem[Luce(2005)]{btmodel2}
R~Duncan Luce.
\newblock \emph{Individual choice behavior: A theoretical analysis}.
\newblock Courier Corporation, 2005.

\bibitem[McCullagh(2019)]{mccullagh2019generalized}
Peter McCullagh.
\newblock \emph{Generalized linear models}.
\newblock Routledge, 2019.

\bibitem[Rudin et~al.(1964)]{rudin1964principles}
Walter Rudin et~al.
\newblock \emph{Principles of mathematical analysis}, volume~3.
\newblock McGraw-hill New York, 1964.

\bibitem[Saha(2021)]{saha2021optimal}
Aadirupa Saha.
\newblock Optimal algorithms for stochastic contextual preference bandits.
\newblock \emph{Advances in Neural Information Processing Systems},
  34:\penalty0 30050--30062, 2021.

\bibitem[Saha et~al.(2021)Saha, Koren, and Mansour]{saha2021adversarial}
Aadirupa Saha, Tomer Koren, and Yishay Mansour.
\newblock Adversarial dueling bandits.
\newblock In \emph{International Conference on Machine Learning}, pages
  9235--9244. PMLR, 2021.

\bibitem[Thurstone(2017)]{thurstone2017law}
Louis~L Thurstone.
\newblock A law of comparative judgment.
\newblock In \emph{Scaling}, pages 81--92. Routledge, 2017.

\bibitem[Ueno et~al.(2012)Ueno, Maeda, Kawanabe, and Ishii]{ueno2012weighted}
Tsuyoshi Ueno, Shin-ichi Maeda, Motoaki Kawanabe, and Shin Ishii.
\newblock Weighted likelihood policy search with model selection.
\newblock In \emph{Advances in Neural Information Processing Systems},
  volume~25, 2012.

\bibitem[Vaswani et~al.(2019)Vaswani, Mehrabian, Durand, and Kveton]{olddog}
Sharan Vaswani, Abbas Mehrabian, Audrey Durand, and Branislav Kveton.
\newblock Old dog learns new tricks: Randomized ucb for bandit problems.
\newblock \emph{arXiv preprint arXiv:1910.04928}, 2019.

\bibitem[Verma et~al.(2024)Verma, Dai, Lin, Jaillet, and Low]{ndb}
Arun Verma, Zhongxiang Dai, Xiaoqiang Lin, Patrick Jaillet, and Bryan
  Kian~Hsiang Low.
\newblock Neural dueling bandits, 2024.
\newblock URL \url{https://arxiv.org/abs/2407.17112}.

\bibitem[Wu and Liu(2016)]{wu2016double}
Huasen Wu and Xin Liu.
\newblock Double thompson sampling for dueling bandits.
\newblock \emph{Advances in neural information processing systems}, 29, 2016.

\bibitem[Xu et~al.(2020)Xu, Wen, Zhao, and Quanquan Gu]{shallow}
Pan Xu, Zheng Wen, Handong Zhao, and Quanquan Gu.
\newblock Neural contextual bandits with deep representation and shallow
  exploration, 2020.
\newblock URL \url{https://arxiv.org/abs/2012.01780}.

\bibitem[Yue et~al.(2012)Yue, Broder, Kleinberg, and Joachims]{firstduel}
Yisong Yue, Josef Broder, Robert Kleinberg, and Thorsten Joachims.
\newblock The k-armed dueling bandits problem.
\newblock \emph{Journal of Computer and System Sciences}, 78\penalty0
  (5):\penalty0 1538--1556, 2012.

\bibitem[Zhang(2023)]{zhang2023mathematical}
Tong Zhang.
\newblock \emph{Mathematical analysis of machine learning algorithms}.
\newblock Cambridge University Press, 2023.

\bibitem[Zhang et~al.(2021{\natexlab{a}})Zhang, Zhou, Li, and
  Gu]{zhang2021neuralthompsonsampling}
Weitong Zhang, Dongruo Zhou, Lihong Li, and Quanquan Gu.
\newblock Neural thompson sampling, 2021{\natexlab{a}}.
\newblock URL \url{https://arxiv.org/abs/2010.00827}.

\bibitem[Zhang et~al.(2021{\natexlab{b}})Zhang, Yang, Ji, and
  Du]{zhang2021improvedvar}
Zihan Zhang, Jiaqi Yang, Xiangyang Ji, and Simon~S Du.
\newblock Improved variance-aware confidence sets for linear bandits and linear
  mixture mdp.
\newblock \emph{Advances in Neural Information Processing Systems},
  34:\penalty0 4342--4355, 2021{\natexlab{b}}.

\bibitem[Zhang et~al.(2022)Zhang, Ji, and Du]{zhang2022horizon}
Zihan Zhang, Xiangyang Ji, and Simon Du.
\newblock Horizon-free reinforcement learning in polynomial time: the power of
  stationary policies.
\newblock In \emph{Conference on Learning Theory}, pages 3858--3904. PMLR,
  2022.

\bibitem[Zhao et~al.(2023)Zhao, He, Zhou, Zhang, and Gu]{zhao2023variance}
Heyang Zhao, Jiafan He, Dongruo Zhou, Tong Zhang, and Quanquan Gu.
\newblock Variance-dependent regret bounds for linear bandits and reinforcement
  learning: Adaptivity and computational efficiency.
\newblock In \emph{The Thirty Sixth Annual Conference on Learning Theory},
  pages 4977--5020. PMLR, 2023.

\bibitem[Zhou and Gu(2022)]{zhou2022computationally}
Dongruo Zhou and Quanquan Gu.
\newblock Computationally efficient horizon-free reinforcement learning for
  linear mixture mdps.
\newblock \emph{Advances in neural information processing systems},
  35:\penalty0 36337--36349, 2022.

\bibitem[Zhou et~al.(2020)Zhou, Li, and Gu]{zhou2020neural}
Dongruo Zhou, Lihong Li, and Quanquan Gu.
\newblock Neural contextual bandits with ucb-based exploration.
\newblock In \emph{International Conference on Machine Learning}, pages
  11492--11502. PMLR, 2020.

\bibitem[Zhou et~al.(2021{\natexlab{a}})Zhou, Gu, and
  Szepesvari]{zhou2021nearlyvar}
Dongruo Zhou, Quanquan Gu, and Csaba Szepesvari.
\newblock Nearly minimax optimal reinforcement learning for linear mixture
  markov decision processes.
\newblock In \emph{Conference on Learning Theory}, pages 4532--4576. PMLR,
  2021{\natexlab{a}}.

\bibitem[Zhou et~al.(2021{\natexlab{b}})Zhou, He, and Gu]{zhou2021provablyvar}
Dongruo Zhou, Jiafan He, and Quanquan Gu.
\newblock Provably efficient reinforcement learning for discounted mdps with
  feature mapping.
\newblock In \emph{International Conference on Machine Learning}, pages
  12793--12802. PMLR, 2021{\natexlab{b}}.

\end{thebibliography}
\end{document}